\def\Diag{\mathrm{Diag}}
\def\bbvec{\mathrm{\mathbf{b}}}
\def\cN{\mathcal{N}}
\newcommand{\la}{\langle}
\newcommand{\ra}{\rangle}
\begin{document}

\title{\huge Stochastic Gradient Descent Optimizes\\ Over-parameterized Deep ReLU Networks}

\author
{
Difan Zou\thanks{Equal contribution}~\thanks{Department of Computer Science, University of California, Los Angeles, CA 90095, USA; e-mail: {\tt knowzou@cs.ucla.edu}}
	~~~and~~~
Yuan Cao\footnotemark[1]~\thanks{Department of Computer Science, University of California, Los Angeles, CA 90095, USA; e-mail: {\tt yuancao@cs.ucla.edu}}
~~~and~~~
	Dongruo Zhou\thanks{Department of Computer Science,
  University of California,
  Los Angeles, CA 90095, USA;
  e-mail: {\tt drzhou@cs.ucla.edu}} 
	~~~and~~~
	Quanquan Gu\thanks{Department of Computer Science,
  University of California,
  Los Angeles, CA 90095, USA; e-mail: {\tt qgu@cs.ucla.edu}}
}
%\date{}

% \author
% {
% Difan Zou\thanks{Equal contribution}~\thanks{Department of Computer Science, University of California, Los Angeles, CA 90095, USA; e-mail: {\tt knowzou@cs.ucla.edu}}
% 	~~~and~~~
% Yuan Cao\footnotemark[1]~\thanks{Department of Computer Science, University of California, Los Angeles, CA 90095, USA; e-mail: {\tt yuancao@cs.ucla.edu}}
% ~~~and~~~
% 	Dongruo Zhou\thanks{Department of Computer Science,
%   University of California,
%   Los Angeles, CA 90095, USA;
%   e-mail: {\tt drzhou@cs.ucla.edu}} 
% 	~~~and~~~
% 	Quanquan Gu\thanks{Department of Computer Science,
%   University of California,
%   Los Angeles, CA 90095, USA; e-mail: {\tt qgu@cs.ucla.edu}}
% }
%\date{May 18, 2018}
\date{}
\maketitle

\begin{abstract}%
We study the problem of training deep neural networks with Rectified Linear Unit (ReLU) activation function using gradient descent and stochastic gradient descent. In particular, we study the binary classification problem and show that for a broad family of loss functions, with proper random weight initialization, both gradient descent and stochastic gradient descent can find the global minima of the training loss for an over-parameterized deep ReLU network, under mild assumption on the training data. The key idea of our proof is that Gaussian random initialization followed by (stochastic) gradient descent produces a sequence of iterates that stay inside a small perturbation region centering around the initial weights, in which the empirical loss function of deep ReLU networks enjoys nice local curvature properties that ensure the global convergence of (stochastic) gradient descent. Our theoretical results shed light on understanding the optimization for deep learning, and pave the way for studying the optimization dynamics of training modern deep neural networks.
%objective function enjoys the XXX achieves good convergence rate
\end{abstract}
% \begin{keywords}
%   deep neural networks, ReLU activation, stochastic gradient descent, over-parameterization, random initialization
% \end{keywords}

\section{Introduction}

Deep neural networks have achieved great success in many applications like image processing \citep{krizhevsky2012imagenet}, speech recognition \citep{hinton2012deep} and Go games \citep{silver2016mastering}. However, the reason why deep networks work well in these fields remains a mystery for long time. Different lines of research try to understand the mechanism of deep neural networks from different aspects. For example, a series of work tries to understand how the expressive power of deep neural networks are related to their architecture, including the width of each layer and depth of the network \citep{telgarsky2015representation, telgarsky2016benefits, lu2017expressive, liang2016deep, yarotsky2017error, yarotsky2018optimal, hanin2017universal, hanin2017approximating}. %\CC{need more references here, as suggested by the reviewers of Lingxiao Wang's AISTATS submission} 
These work shows that multi-layer networks with wide layers can approximate arbitrary continuous function. 

In this paper, we mainly focus on the optimization perspective of deep neural networks. It is well known that without any additional assumption, even training a shallow neural network is an NP-hard problem \citep{blum1989training}. Researchers have made various assumptions to get a better theoretical understanding of training neural networks, such as Gaussian input assumption \citep{brutzkus2017sgd, du2017convolutional, zhong2017learning} and independent activation assumption \citep{choromanska2015loss, kawaguchi2016deep}. A recent line of work tries to understand the optimization process of training deep neural networks from two aspects: over-parameterization and random weight initialization. It has been observed that over-parameterization and proper random initialization can help the optimization in training neural networks, and various theoretical results have been established \citep{safran2017spurious, du2018power, arora2018convergence, allen2018rnn, du2018gradient, li2018learning}. More specifically,  \citet{safran2017spurious} showed that over-parameterization can help reduce the spurious local minima in one-hidden-layer neural networks with Rectified Linear Unit (ReLU) activation function. \citet{du2018power} showed that with over-parameterization, all local minima in one-hidden-layer networks with quardratic activation function are global minima. \citet{arora2018optimization} showed that over-parameterization introduced by depth can accelerate the training process using gradient descent (GD). \citet{allen2018rnn} showed that with over-parameterization and random weight initialization, both gradient descent and stochastic gradient descent (SGD) can find the global minima of recurrent neural networks.

The most related work to ours are \citet{li2018learning} and \citet{du2018gradient}. \citet{li2018learning} showed that for a one-hidden-layer network with ReLU activation function using over-parameterization and random initialization, GD and SGD can find the near global-optimal solutions in polynomial time with respect to the accuracy parameter $\epsilon$, training sample size $n$ and the data separation parameter $\delta$\footnote{More precisely, \citet{li2018learning} assumed that each data point is generated from distributions $\{\cD_{l}\}$, and $\delta$ is defined as $\delta : = \min_{i,j \in [l]}\{\text{dist}(\text{supp}(\cD_i),\text{supp}(\cD_j) )\}$.}. \citet{du2018gradient} showed that under the assumption that the population Gram matrix is not degenerate\footnote{More precisely, \citet{du2018gradient} assumed that the minimal singular value of $\Hb^\infty$ is greater than a constant, where $\Hb^\infty$ is defined as $\Hb^\infty_{i,j}: = \EE_{\wb \sim N(0,\Ib)} [\xb_i^\top\xb_j \ind\{\wb^\top\xb_i \geq 0, \wb^\top \xb_j \geq 0\}]$ and $\{\xb_i\}$ are data points.}, randomly initialized GD converges to a globally optimal solution of a one-hidden-layer network with ReLU activation function %with a linear convergence rate 
and quadratic loss function. However, both \citet{li2018learning} and \citet{du2018gradient} only characterized the behavior of gradient-based method on one-hidden-layer shallow neural networks rather than on deep neural networks that are widely used in practice.

% A central question we would like to address in this work is: 
% \begin{center}
% \emph{For deep nonlinear neural networks, can gradient-based methods converge to the global minima under mild assumptions? }
% \end{center}

In this paper, we aim to advance this line of research by studying the optimization properties of gradient-based methods for deep ReLU neural networks. %We answer the above question in the affirmative way. 
In specific, we consider an $L$-hidden-layer fully-connected neural network with ReLU activation function. Similar to the one-hidden-layer case studied in \citet{li2018learning} and \citet{du2018gradient}, we study binary classification problem and show that both GD and SGD can achieve global minima of the training loss for any $L \geq 1$, with the aid of  over-parameterization and random initialization. At the core of our analysis is to show that Gaussian random initialization followed by (stochastic) gradient descent generates a sequence of iterates within a small perturbation region centering around the initial weights. In addition, we will show that the empirical loss function of deep ReLU networks has very good local curvature properties inside the perturbation region, which guarantees the global convergence of (stochastic) gradient descent.  More specifically, our main contributions are summarized as follows: 
\begin{itemize}
    \item We show that with Gaussian random initialization on each layer, when the number of hidden nodes per layer is at least $\tilde \Omega\big(\text{poly}(n,\phi^{-1},L)\big)$, GD can achieve  zero training error within $\tilde O\big(\text{poly}(n,\phi^{-1}, L)\big)$ iterations, where $\phi$ is the data separation distance, $n$ is the number of training examples, and $L$ is the number of hidden layers. Our result can be applied to a broad family of loss functions, as opposed to cross entropy loss studied in \citet{li2018learning} and quadratic loss considered in \citet{du2018gradient}.
    \item We also prove a similar convergence result for SGD. We show that with Gaussian random initialization on each layer, when the number of hidden nodes per layer is at least $\tilde \Omega\big(\text{poly}(n,\phi^{-1}, L)\big)$, SGD can also achieve zero training error within  $\tilde O\big(\text{poly}(n,\phi^{-1}, L)\big)$ iterations. 
    %We want to emphasize that this result is highly nontrivial, since in general case, linear convergence rate cannot be achieved by SGD  even in the convex optimization setting.
    \item 
    %Our proof only makes the so-called data separation assumption on input data, which is more general than the assumption made in \cite{du2018gradient} and 
%    \CC{XXX}.
%Our theory relies on the so-called data separation assumption on input data, which is more general than the assumption made in \cite{du2018gradient}, and is weaker than the distribution assumption made in \citet{li2018learning}.
    In terms of data distribution, we only make the so-called data separation assumption, which is more realistic than the assumption on the gram matrix made in \cite{du2018gradient}. The data separation assumption in this work is similar, but slightly milder, than that in \citet{li2017convergence} in the sense that %it only requires that 
    it holds as long as the data are sampled from a distribution with a constant margin separating different classes. %cite{allen2018convergence} requires that each data example is separated from others, which can potentially introduce additional dependency on the sample size $n$ when the data are independently sampled. 
    %In terms of assumptions on data distribution, we only make a data separation assumption, which is more realistic than the assumptions based on the smallest eigenvalue of the gram matrix made in \cite{du2018gradient}.
    %Our proof only makes data separation assumption on input data, 
    % The data separation assumption in this work is also much milder, and much more realistic than that in \citet{li2017convergence}. More precisely, our result only requires that the distance between data instances from different classes is lower bounded by a positive constant $\phi$ and has no requirement on the data instances from the sample class. In comparison, \citet{li2017convergence} assumes the data distribution consists of multiple disjoint 
    % data instances from each class consist of  not only requires that the data instances from different classes are separated, but also assume that the size of each cluster of data instances from the same class is small enough, which may not be realistic in real-world applications.
\end{itemize}

%When we were preparing this manuscript, we were informed that two concurrent work \citet{allen2018convergence} and \citet{du2018gradientdeep} have appeared on-line very recently.
When we were preparing this manuscript, we were informed that two concurrent work \citep{allen2018convergence,du2018gradientdeep} has appeared on-line very recently.
%Compared with \citet{allen2018convergence}, our work bears a similar proof idea at a high level, by extending the results for two-layer ReLU networks in \citet{li2018learning} to deep ReLU networks. 
Our work bears a similarity to \citet{allen2018convergence} in the high-level proof idea, which is to extend the results for two-layer ReLU networks in \citet{li2018learning} to deep ReLU networks. However, while \citet{allen2018convergence} mainly focuses on the regression problems with least square loss, we study the classification problems for a broad class of loss functions based on a milder data distribution assumption. 
\citet{du2018gradientdeep} also studies the regression problem. % with \citet{du2018gradientdeep},
Compared to their work, our work is based on a different assumption on the training data and is able to deal with the nonsmooth ReLU activation function. %Moreover, \citet{du2018gradientdeep} studies the regression problem while we study classification problem.

% We are aware of two concurrent work \citep{allen2018convergence, du2018gradientdeep}.
% %Compared with \citet{allen2018convergence}, 
% Our work bears a similarity to \citet{allen2018convergence} in the high-level proof idea, which is to extend the results for two-layer ReLU networks in \citet{li2018learning} to deep ReLU networks. However, while \citet{allen2018convergence} mainly focuses on regression problems with least square loss, we study the classification problems based on a broad class of loss functions. Compared with \citet{du2018gradientdeep}, our work is based on a different assumption on the training data and is able to deal with the nonsmooth ReLU activation function.

The remainder of this paper is organized as follows: In Section \ref{sec:related work}, we discuss the literature that are most related to our work. In Section \ref{sec:preliminaries}, we introduce the problem setup and preliminaries of our work. In Sections \ref{sec:main theory} and \ref{sec:proof of main theory}, we present our main theoretical results and their proofs respectively. We conclude our work and discuss some future work in Section \ref{sec:conclusions}.

\section{Related Work}\label{sec:related work}
Due to the huge amount of literature on deep learning theory, we are not able to include all papers in this big vein here. Instead, we review the following three major lines of research, which are most related to our work.

\noindent\textbf{One-hidden-layer neural networks with ground truth parameters} 
Recently a series of work \citep{tian2017analytical,brutzkus2017globally, li2017convergence, du2017convolutional, du2017gradient, zhang2018learning} study a specific class of shallow two-layer (one-hidden-layer) neural networks, whose training data are generated by a ground truth network called ``teacher network". This series of work aim to provide recovery guarantee for gradient-based methods to learn the teacher networks based on either the population or empirical loss functions. More specifically, \citet{tian2017analytical} proved that for two-layer ReLU networks with only one hidden neuron, GD with arbitrary initialization on the population loss is able to recover the hidden teacher network. \citet{brutzkus2017globally} proved that GD can learn the true parameters of a two-layer network with a convolution filter. \citet{li2017convergence} proved that SGD can recover the underlying parameters of a two-layer residual network in polynomial time. Moreover, \citet{du2017convolutional,du2017gradient} proved that both GD and SGD can recover the teacher network of a two-layer CNN with ReLU activation function. \citet{zhang2018learning} showed that GD on the empirical loss function can recover the ground truth parameters of one-hidden-layer ReLU networks at a linear rate. 
    
\noindent\textbf{Deep linear networks} 
Beyond shallow one-hidden-layer neural networks, a series of recent work \citep{hardt2016identity,kawaguchi2016deep,bartlett2018gradient, arora2018convergence, arora2018optimization} focus on the optimization landscape of deep linear networks. More specifically, \citet{hardt2016identity} showed that deep linear residual networks have no spurious local minima. \citet{kawaguchi2016deep} proved that all local minima are global minima in deep linear networks. \citet{arora2018optimization} showed that depth can accelerate the optimization of deep linear networks. \citet{bartlett2018gradient} proved that with identity initialization and proper regularizer, GD can converge to the least square solution on a residual linear network with quadratic loss function, while \citet{arora2018convergence} proved the same properties for general deep linear networks.

\noindent\textbf{Generalization bounds for deep neural networks}
The phenomenon that deep neural networks generalize better than shallow neural networks have been observed in practice for a long time \citep{langford2002not}. Besides classical VC-dimension based results \citep{vapnik2013nature, anthony2009neural}, a vast literature have recently studied the connection between the generalization performance of deep neural networks and their architectures \citep{neyshabur2015norm, neyshabur2017exploring, neyshabur2017pac, bartlett2017spectrally, golowich2017size, arora2018stronger, allen2018generalization}. More specifically, \citet{neyshabur2015norm} derived Rademacher complexity for a class of norm-constrained feed-forward neural networks with ReLU activation function. \citet{bartlett2017spectrally}  derived margin bounds for deep ReLU networks based on Rademacher complexity and covering number.  \citet{neyshabur2017exploring, neyshabur2017pac} also derived similar spectrally-normalized margin bounds for deep neural networks with ReLU activation function using PAC-Bayes approach.  \citet{golowich2017size} studied size-independent sample complexity of deep neural networks and showed that the sample complexity can be independent of both depth and width under additional assumptions. \citet{arora2018stronger} proved generalization bounds via compression-based framework. \citet{allen2018generalization} showed that an over-parameterized one-hidden-layer neural network can learn a one-hidden-layer neural network with fewer parameters using SGD up to a small generalization error, while similar results also hold for over-parameterized two-hidden-layer neural networks.

\section{Problem Setup and Preliminaries}\label{sec:preliminaries}

\subsection{Notation}
%\CC{need to revise this part based on the notations used in the paper}
We use lower case, lower case bold face, and upper case bold face letters to denote scalars, vectors and matrices respectively. For a positive integer $n$, we denote $[n] = \{1,\dots,n\}$. For a vector $\xb = (x_1,\dots,x_d)^\top$, we denote by $\|\xb\|_p=\big(\sum_{i=1}^d |x_i|^p\big)^{1/p}$ the $\ell_p$ norm of $\xb$, $\|\xb\|_\infty = \max_{i=1,\dots,d} |x_i|$ the $\ell_\infty$ norm of $\xb$, and $\|\xb\|_0 = |\{x_i:x_i\neq 0,i=1,\dots,d\}|$ the $\ell_0$ norm of $\xb$. We use $\text{Diag}(\xb)$ to denote a square diagonal matrix with the elements of vector $\xb$ on the main diagonal. For a matrix $\Ab = (A_{ij})\in \RR^{m\times n}$, we use $\|\Ab\|_F$ to denote the Frobenius norm of $\Ab$, $\|\Ab\|_2$ to denote the spectral norm (maximum singular value), and $\|\Ab\|_0$ to denote the number of nonzero entries. We denote by $S^{d-1} = \{ \xb\in\RR^d:\| \xb\|_2 =1\}$ the unit sphere in $\RR^d$.

For two sequences $\{a_n\}$ and $\{b_n\}$, we use $a_n = O(b_n)$ to denote that $a_n\le C_1 b_n$ for some absolute constant $C_1> 0$, and use $a_n = \Omega (b_n)$ to denote that $a_n\ge C_2 b_n$ for some absolute constant $C_2>0$. In addition, we also use $\tilde O(\cdot)$ and $\tilde \Omega(\cdot)$ to hide some logarithmic terms in Big-O and Big-Omega notations.
% For a matrix $\Ab = [A_{ij}] \in \RR^{d\times d}$, we define $\|\Ab\|_{1,1} = \sum_{i,j=1}^d |A_{ij}|$. 
% Given two sequences $\{a_n\}$ and $\{b_n\}$, we write $a_n = O(b_n)$ if there exists a constant $0 < C < +\infty$ such that $a_n \leq C\, b_n$, and we write $a_n = \Omega(b_n)$ if there exists a constant $0 < c < +\infty$ such that $a_n \geq c\, b_n$.  We use notation $\tilde{O}(\cdot)$ and $\tilde{\Omega}(\cdot)$ to hide logarithmic factors in Big-O and Big-Omega notations.
We also use the following matrix product notation. For indices $l_1,l_2$ and a collection of matrices $\{\Ab_r\}_{r\in \ZZ_+}$, we denote
\begin{align}\label{eq:matrixproductnotation}
    \prod_{r = l_1}^{l_2} \Ab_r :=\left\{
    \begin{array}{ll}
        \Ab_{l_2}\Ab_{l_2-1} \cdots  \Ab_{l_1} & \text{if }l_1\leq l_2 \\
        \Ib & \text{otherwise.}
    \end{array}
    \right.
\end{align}

\subsection{Problem Setup}
Let $\{(\xb_1,y_1),\ldots,(\xb_n,y_n)\} \in (\RR^d\times \{-1,1\})^n$ be a set of $n$ training examples. %and $y_1,\ldots,y_n\in \{0,1\}$ be the corresponding labels. 
Let $m_0 = d$. We consider $L$-hidden-layer neural networks as follows:
\begin{align*}
    f_{\Wb}(\xb) = \vb^\top \sigma ( \Wb_{L}^{\top} \sigma ( \Wb_{L-1}^{\top} \cdots \sigma( \Wb_{1}^{\top} \xb )\cdots)),
\end{align*}
where $\sigma(\cdot)$ is the entry-wise ReLU activation function, $\Wb_{l} = (\wb_{l,1},\ldots,\wb_{l,m_l}) \in \RR^{m_{l-1}\times m_{l}}$, $l=1,\ldots,L$ are the weight matrices, and $\vb\in \{-1,+1\}^{m_L}$ is the fixed output layer weight vector with half $1$ and half $-1$ entries. Let $\Wb=\{\Wb_l\}_{l=1,\dots,L}$ be the collection of matrices $\Wb_1,\dots,\Wb_L$, we consider solving the following empirical risk minimization problem:
\begin{align}\label{eq:minimizationproblemdefinition}
    \min_{\Wb} L_S(\Wb) = \frac{1}{n} \sum_{i=1}^n \ell(y_i \hat{y}_i), 
\end{align}
where $\hat{y}_i = f_{\Wb}(\xb_i)$. Regarding the loss function $\ell(\xb)$, we make the following assumptions.

\begin{assumption}\label{assump:loss}
The loss function $\ell(\cdot)$ is continuous, and satisfies $\ell'(x)\le 0$, $\lim_{x\rightarrow\infty}\ell(x)=0$ as well as $\lim_{x\rightarrow\infty}\ell'(x)=0$.
\end{assumption}
Assumption~\ref{assump:loss} has been widely made in the studies of training binary classifiers \citep{soudry2017implicit,nacson2018stochastic,ji2018gradient}. In addition, we require the following assumption which provides an upper bound on the derivative of $\ell(\cdot)$.
\begin{assumption}\label{assump:derivative_loss}
There exists positive constants $\alpha_0$ and $\alpha_1$, such that for any $x\in\RR$ we have
\begin{align*}
-\ell'(x)\ge \min\{\alpha_0,\alpha_1\ell^p(x)\}
\end{align*}
where $p\le 1$ is a positive constant. Note that $a_0$ can be $+\infty$.
\end{assumption}
This assumption holds for a large class of loss functions including hinge loss, cross-entropy loss and exponential loss. It is worthy noting that when $\alpha_0 = +\infty$ and $p=1/2$, this reduces to Polyak-{\L{}uk}ojasiewicz (PL) condition \citep{polyak1963gradient}.

% This assumption holds for most of the popular loss functions, such as hinge loss $\ell(x) =\max\{0,1-x\}$, squared hinge loss $\ell(x) = \big(\max\{0,1-x\}\big)^2$, exponential loss $\ell(x) = \exp(-x)$, and cross entropy loss $\ell(x) = \log\big(1+\exp(-x)\big)$.

% \begin{assumption}\label{assump:Lipschitz}
% The loss function $\ell(\cdot)$ is $\rho$-Lipschitz, i.e., $|\ell'(x)|\le \rho $ for all $x$.
% \end{assumption}
 
\begin{assumption}\label{assump:smooth}
The loss function $\ell(\cdot)$ is $\lambda$-smooth, i.e., $|\ell''(x)|\le \lambda $ for all $x$.
\end{assumption}
% The above assumption is general for analyzing the convergence property of (stochastic) gradient descent, which has also been made in \cite{li2018learning}. 

In addition, we make the following assumptions on the training data.
\begin{assumption}\label{assump:normalizeddata}
$\| \xb_i \|_2 = 1$ and $(\xb_i)_d = \mu$ for all $i\in \{1,\ldots,n\}$, where $\mu \in ( 0, 1)$ is a constant.
\end{assumption}
As is shown in the assumption above, the last entry of input $\xb$ is considered to be a constant $\mu$, which introduces the bias term in the input layer of the network. 

\begin{assumption}\label{assump:separateddata}
For all $i,i'\in \{1,\ldots,n\}$, if $y_i \neq y_{i'}$, then $\| \xb_i - \xb_{i'} \|_2  \geq \phi$ for some $\phi>0$.
\end{assumption}
Assumption~\ref{assump:separateddata} is a weaker version of Assumption~2.1 in \cite{allen2018convergence}, which assumes that every two different data points are separated by a constant. In comparison, Assumption~\ref{assump:separateddata} only requires that inputs with different labels are separated, which is a much more practical assumption since it holds for all data distributions with margin $\phi$, while the data separation distance in \cite{allen2018convergence} is usually dependent on the sample size $n$ when the examples are generated independently.% \CC{XX}

%Assumption \ref{assump:separateddata} was inspired by the assumption made in \citet{li2018learning}, where they assumed that the training data in each class are supported on a mixture of components, and any two components are separated by a certain distance. In this assumption, we treat each training examples as a component and define by $\phi$ the minimum distance between any two examples. The same assumption was also made in \citet{allen2018convergence}.

\begin{assumption}\label{assump:m_scaling}
Define $M = \max\{ m_1,\ldots, m_L\}$, $m = \min\{ m_1,\ldots, m_L\}$. We assume that $M \leq 2m$.
\end{assumption}
Assumption~\ref{assump:m_scaling} states that the number of nodes at all layers are of the same order. The constant $2$ is not essential and can be replaced with an arbitrary constant greater than or equal to $1$.

\subsection{Optimization Algorithms}
In this paper, we consider training a deep neural network with Gaussian initialization followed by gradient descent/stochastic gradient descent. 

\noindent\textbf{Gaussian initialization.} We say that the weight matrices $\Wb_{1}, \ldots, \Wb_{L}$ are generated from Gaussian initialization if each column of $\Wb_{l}$ is generated independently from the Gaussian distribution $N(\mathbf{0},2/m_l \Ib)$ for all $l=1,\ldots,L$.

\noindent\textbf{Gradient descent.} We consider solving the empirical risk minimization problem \eqref{eq:minimizationproblemdefinition} with gradient descent  with Gaussian initialization: let $ \Wb_{1}^{(0)},\ldots,\Wb_{L}^{(0)} $ be weight matrices generated from Gaussian initialization, we consider the following gradient descent update rule:
\begin{align*}
    \Wb_{l}^{(k)} = \Wb_{l}^{(k-1)} - \eta \nabla_{\Wb_l} L_{S}(\Wb_{l}^{(k-1)}),~l=1,\ldots,L, 
\end{align*}
where $\eta>0$ is the step size (a.k.a., learning rate).

\noindent\textbf{Stochastic gradient descent.} We also consider solving \eqref{eq:minimizationproblemdefinition} using stochastic gradient descent with Gaussian initialization. Again, let $\{ \Wb_{l}^{(0)} \}_{l=1}^L$ be generated from Gaussian initialization. At the $k$-th iteration, a minibatch $\cB^{(k)}$ of training examples with batch size $B$ is sampled from the training set, and the  stochastic gradient is calculated as follows:
\begin{align*}
    & \Gb_l^{(k)} = \frac{1}{B} \sum_{i \in \cB^{(k)}} \nabla_{\Wb_{l}} \ell \big[y_i f_{\Wb^{(k)}}(\xb_i)\big], ~l=1,\ldots,L.
\end{align*}
The update rule for stochastic gradient descent is then defined as follows:
\begin{align*}
    \Wb_{l}^{(k+1)} = \Wb_{l}^{(k)} - \eta \Gb_{l}^{(k)},~l=1,\ldots,L,
\end{align*}
where $\eta>0$ is the step size.

\subsection{Preliminaries}
Here we briefly introduce some useful notations and provide some basic calculations regarding the neural network under our setting.
\begin{itemize}
    \item \textbf{Output after the $l$-th layer:} Given an input $\xb_i$, the output of the neural network after the $l$-th layer is
    \begin{align*}
        \xb_{l,i} &= \sigma ( \Wb_{l}^{\top} \sigma ( \Wb_{l-1}^{\top} \cdots \sigma( \Wb_{1}^{\top} \xb_i )\cdots))=\bigg(\prod_{r=1}^l\bSigma_{r,i}\Wb_{r}^{\top}\bigg)\xb_i,
    \end{align*}
    where $\bSigma_{1,i} = \Diag\big( \ind\{ \Wb_1^{\top} \xb_i > 0 \} \big)$\footnote{Here we slightly abuse the notation and denote $\ind\{\ab >0\} = (\ind\{\ab_1 >0\},\ldots,\ind\{\ab_m >0\})^\top$ for a vector $\ab\in \RR^m$.}, and $\bSigma_{l,i} = \Diag [\ind \{ \Wb_{l}^{\top} (\prod_{r=1}^{l-1}\bSigma_{r,i}\Wb_{r}^{\top}) \xb_i > 0  \} ]$ for $l=2,\ldots,L$.
    \item \textbf{Output of the neural network: } The output of the neural network with input $\xb_i$ is as follows:
    \begin{align*}
    f_{\Wb}(\xb_i) = \vb^\top \sigma ( \Wb_{L}^{\top} \sigma ( \Wb_{L-1}^{\top} \cdots \sigma( \Wb_{1}^{\top} \xb_i )\cdots)) = \vb^\top\Bigg(\prod_{r=l}^L\bSigma_{r,i}\Wb_{r}^{\top}\Bigg)\xb_{l-1,i},
    \end{align*}
where we define $\xb_{0,i} = \xb_i$ and the last equality holds for any $l\ge 1$.
    \item \textbf{Gradient of the neural network:} The partial gradient of the training loss $L_S(\Wb)$ with respect to $\Wb_l$ is as follows:
    \begin{align*}
        \nabla_{\Wb_l}L_S(\Wb) = \frac{1}{n}\sum_{i=1}^n\ell'(y_i\hat y_i)\cdot y_i\cdot \nabla_{\Wb_l}[f_\Wb(\xb_i)],
    \end{align*}
    where
    \begin{align*}
    \nabla_{\Wb_l}[ f_{\Wb}(\xb_i)] =\xb_{l-1,i}\vb^\top \bigg(\prod_{r=l+1}^L\bSigma_{r,i}\Wb_{r}^\top\bigg)\bSigma_{l,i}.
\end{align*}

\end{itemize}

\section{Main Theory}\label{sec:main theory}

In this section, we show that with random Gaussian initialization, over-parameterization helps gradient based algorithms, including gradient descent and stochastic gradient descent, converge to the global minimum, i.e., find some points with arbitrary small training loss.  

\subsection{Gradient Descent}
We provide the following theorem which characterizes the required numbers of hidden nodes and iterations such that the gradient descent can attain the global minimum of the empirical training loss function.
\begin{theorem}\label{thm:GD}
Suppose $\Wb_1^{(0)}, \dots, \Wb_L^{(0)}$ are generated by Gaussian initialization. Then under Assumptions \ref{assump:loss}-\ref{assump:m_scaling}, if set the step size $\eta = O(n^{-3}L^{-9}m^{-1})$, the number of hidden nodes per layer 
\begin{align*}
m=\left\{\begin{array}{ll}
\tilde \Omega\big(\text{poly}(n,\phi^{-1},L)\big)& 0\le p <\frac{1}{2}\\
\tilde \Omega\big(\text{poly}(n,\phi^{-1},L)\big)\cdot \Omega\big(\log(1/\epsilon)\big)& p=\frac{1}{2}\\
\tilde \Omega\big(\text{poly}(n,\phi^{-1},L)\big)\cdot \Omega(\epsilon^{1-2p})& \frac{1}{2}<p \le 1,
\end{array} \right.
\end{align*}
and the maximum number of iteration
\begin{align*}
K=\left\{\begin{array}{ll}
 \tilde O\big(\text{poly}(n,\phi^{-1},L)\big)& 0\le p <\frac{1}{2}\\
\tilde O\big(\text{poly}(n,\phi^{-1},L)\big)\cdot O\big(\log(1/\epsilon)\big)& p=\frac{1}{2}\\
\tilde O\big(\text{poly}(n,\phi^{-1},L)\big)\cdot O(\epsilon^{1-2p})& \frac{1}{2}<p \le 1,
\end{array} \right.
\end{align*}
then with high probability, gradient descent can find a point $\Wb^{(K)}$ such that  $L_S(\Wb^{(K)})\le \epsilon$.
\end{theorem}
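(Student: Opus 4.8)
The plan is to follow the now-standard three-ingredient recipe for over-parameterized networks: (i) high-probability control of all relevant quantities at the Gaussian initialization, (ii) a \emph{local} gradient lower bound valid throughout a small perturbation region around the initialization, driven by the data-separation parameter $\phi$, and (iii) a \emph{semi-smoothness} inequality for $L_S$ that substitutes for the (nonexistent) Lipschitz-gradient property of ReLU networks; given these, an induction shows the GD iterates never leave the region and the loss decays at the advertised rate. For the initialization step I would show that with probability at least $1-\delta$, simultaneously for all $i\in[n]$ and all layers $l$: the forward activations satisfy $\|\xb_{l,i}^{(0)}\|_2=\Theta(1)$; the partial products obey $\big\|\prod_{r=a}^{b}\bSigma_{r,i}^{(0)}\Wb_r^{(0)\top}\big\|_2=O(\sqrt{L})$; and $|f_{\Wb^{(0)}}(\xb_i)|=\tilde O(1)$. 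These follow from $\chi^2$-concentration (the $2/m_l$ scaling makes each layer norm-preserving in expectation) combined with an $\epsilon$-net argument over the $2^{m}$ possible sign patterns, which is what forces $m=\tilde\Omega(\mathrm{poly}(n,L))$.

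The heart of the argument is the gradient lower bound. I would prove that there is an absolute constant $c>0$ such that for every $\Wb$ with $\|\Wb_l-\Wb_l^{(0)}\|_F\le\tau:=\tilde O(\mathrm{poly}(n,\phi^{-1},L))\cdot m^{-1/2}$,
$$
\|\nabla_{\Wb_L}L_S(\Wb)\|_F\ \ge\ \frac{c\sqrt{m}\,\phi}{n\,\mathrm{poly}(n,L)}\cdot\frac1n\sum_{i=1}^n\big|\ell'(y_i\hat y_i)\big|.
$$
The mechanism: because inputs with distinct labels are $\phi$-separated (Assumption~\ref{assump:separateddata}) and $(\xb_i)_d=\mu$ (Assumption~\ref{assump:normalizeddata}), at initialization the last-hidden-layer features $\{\xb_{L-1,i}^{(0)}\}$ are sufficiently spread out that for a non-negligible fraction of the $m$ hidden units $j$, unit $j$ is active for exactly one example and the corresponding entries of $\vb$ do not cancel; a perturbation of size $\le\tau$ flips only $o(m)$ of these activation patterns, so the bound survives on the whole region. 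One then converts $\frac1n\sum_i|\ell'(y_i\hat y_i)|$ into a function of $L_S(\Wb)$ via Assumption~\ref{assump:derivative_loss}: when $-\ell'\ge\alpha_1\ell^p$ with $p\le 1/2$, power-mean/Jensen inequalities give $\frac1n\sum_i|\ell'|\ge\alpha_1 L_S(\Wb)^p$, yielding a Polyak–\L ojasiewicz-type inequality $\|\nabla L_S\|_F^2\gtrsim m\cdot\mathrm{poly}(n^{-1},\phi,L^{-1})\cdot L_S(\Wb)^{2p}$, while the $\min\{\alpha_0,\cdot\}$ clause covers the large-loss phase.

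Next I would establish, on the same region, the matching upper bound $\|\nabla_{\Wb_l}L_S(\Wb)\|_F\le\tilde O(\sqrt m)\cdot\frac1n\sum_i|\ell'(y_i\hat y_i)|$ for every $l$, together with the semi-smoothness inequality, valid for $\Wb,\Wb'$ both in the region,
$$
L_S(\Wb')\le L_S(\Wb)+\sum_{l=1}^L\langle\nabla_{\Wb_l}L_S(\Wb),\Wb_l'-\Wb_l\rangle+\tilde O(\sqrt m)\Big(\textstyle\sum_l\|\Wb_l'-\Wb_l\|_2\Big)\cdot\frac1n\sum_i|\ell'|+\tilde O(m)\sum_l\|\Wb_l'-\Wb_l\|_2^2.
$$
The extra first-order error term (absent in the smooth case) arises precisely from ReLU activation flips; it is bounded by controlling, through anti-concentration of the Gaussian pre-activations, the number of units whose sign changes between $\Wb$ and $\Wb'$ by roughly $\tilde O(m\cdot\|\Wb-\Wb'\|_2^{2/3})$ — this counting is the delicate ReLU-specific piece.

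Finally I would run the descent induction: assuming $\Wb^{(0)},\dots,\Wb^{(k)}$ all lie in the region, substitute $\Wb^{(k+1)}-\Wb^{(k)}=-\eta\nabla L_S(\Wb^{(k)})$ into the semi-smoothness bound; with $\eta=O(n^{-3}L^{-9}m^{-1})$ the quadratic and activation-flip errors are dominated by half the first-order decrease, giving $L_S(\Wb^{(k+1)})\le L_S(\Wb^{(k)})-\frac{\eta}{2}\sum_l\|\nabla_{\Wb_l}L_S(\Wb^{(k)})\|_F^2$. Combined with the Step-2 lower bound this yields geometric decay when $p<1/2$, and the analogous sublinear rates when $p=1/2$ or $p>1/2$, reaching $\epsilon$ after the stated $K$. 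To close the induction I must check the trajectory length stays $\le\tau$: $\sum_k\eta\|\nabla L_S(\Wb^{(k)})\|_F\le\tilde O(\sqrt m)\sum_k\eta\sqrt{L_S(\Wb^{(k)})}\cdot(\cdots)$ telescopes geometrically and is $\le\tau$ once $m$ is as prescribed — which is exactly what produces the three different $m$-requirements (the $\log(1/\epsilon)$ and $\epsilon^{1-2p}$ factors come from summing the slower decay when $p\ge 1/2$). I expect Step 2 together with the activation-flip counting of Step 3 to be the main obstacle: reconciling the ReLU geometry with only the data-separation assumption (rather than Gram-matrix nondegeneracy) while keeping every perturbation argument inside a region small enough that only $o(m)$ signs flip.
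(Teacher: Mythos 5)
Your overall decomposition — initialization concentration, a $\phi$-driven gradient lower bound on a small perturbation ball, a semi-smoothness inequality with an activation-flip error term scaling as $\tilde O(m\|\cdot\|_2^{2/3})$, and a closed-loop induction over the GD trajectory — is exactly the route the paper takes (Theorems \ref{thm:randinit} and \ref{thm:perturbation}, Lemmas \ref{lemma:gd_converge} and \ref{lemma:upperbound_tau}); what you call semi-smoothness of $L_S$ appears there in unwound form as $\lambda$-smoothness of $\ell$ plus explicit two-sided bounds on $\Delta_i^{(k)}=y_i(\hat y_i^{(k+1)}-\hat y_i^{(k)})$ in Lemma \ref{lemma:bound_delta_i}, and the sign-flip count is Theorem \ref{thm:perturbation}\ref{item:difference_sigmali}. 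However, two of your key mechanisms would not go through as stated.

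First, you cannot establish the initialization bounds ``by an $\epsilon$-net argument over the $2^m$ possible sign patterns'': union-bounding an $\exp(-cm\xi^2)$ tail over $2^m$ (or $2^{mL}$) patterns forces $\xi$ to a constant independent of $m$, which kills the $\sqrt{\log(\cdot)/m}$-level concentration required — no polynomial choice of $m$ rescues it. The paper sidesteps this entirely by the observation that for any \emph{fixed} direction $\balpha$, $\EE\big(\|\bSigma_{l,i}\Wb_l^\top\balpha\|_2^2\,\big|\,\xb_{l-1,i}\big)=\|\balpha\|_2^2$ even though $\bSigma_{l,i}$ is determined by $\xb_{l-1,i}$ rather than $\balpha$ (split $\balpha$ into components parallel and orthogonal to $\xb_{l-1,i}$); the covering is then over the unit sphere, whose log-size is linear in $m$ (Lemma \ref{lemma:lipschitzcontinuity_vector}). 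Second, the gradient lower bound does not come from units ``active for exactly one example.'' With the bias coordinate $(\xb_i)_d=\mu$, all last-hidden-layer features satisfy $\overline{\xb}_{L-1,i}^\top\overline{\xb}_{L-1,i'}\ge\mu^2/2$ (Corollary \ref{corollary:innerproductlowerbound}), so a random unit is active on a positive fraction of the examples; units active on exactly one are vanishingly rare and far too few to supply an $\Omega(\phi/n)$-fraction. The actual argument (Lemma \ref{lemma:li2018}) decomposes $\wb = u_1\zb_1 + \Qb'\ub'$, conditions on the event that $|u_1|$ is small while $|\langle\Qb'\ub',\zb_i\rangle|$ is large for all $\phi$-separated $\zb_i$ (an event of probability $\Omega(\phi/n)$), and compares the two gradient values obtained by flipping the sign of $u_1$: same-label contributions all move in the same direction, so the difference has norm at least $\|\ab\|_\infty$. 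It is this conditioning-and-sign-flip structure that lets the bound hold uniformly over all coefficient vectors $\ab\in\RR^n_+$ (Lemma \ref{lemma:li2018_uniform}) and survive the $\tau$-perturbation (Lemma \ref{lemma:grad_norm_nonlinear}); the ``one-example-per-unit'' heuristic does not provide either.
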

\begin{remark}
Theorem \ref{thm:GD} suggests that the required number of hidden nodes and the number of iterations are both polynomial in the number of training examples $n$, and the separation parameter $\phi$. This is consistent with the recent work on the global convergence in training neural networks \citep{li2017convergence,du2018gradient,allen2018rnn,du2018gradientdeep,allen2018convergence}. Moreover, we prove that the dependence on the number of hidden layers $L$ is also polynomial, which is similar to \citet{allen2018convergence} and strictly better than \citet{du2018gradientdeep}, where the dependence on $L$ is proved to be $e^{\Omega(L)}$.
Regarding different loss functions (depending on $\alpha_0$ and $p$ according to Assumption \ref{assump:derivative_loss}), the dependence in $\epsilon$ ranges from $\tilde O(\epsilon^{-1})$ to $\tilde O(1)$. 
%It is worth noting that, both $m$ and $K$ have an exponential dependence on the number of hidden layers $L$. The same dependence on $L$ has also appeared in \citep{du2018gradientdeep} for training  deep neural networks with smooth activation functions. For deep ReLU networks, \citet{allen2018convergence} proved that the number of hidden nodes and the number of iterations that guarantee global convergence only have a polynomial dependence on $L$. We will improve the dependence of our results on $L$ in the future.
\end{remark}

Based on the results in Theorem \ref{thm:GD}, we are able to characterize the required number of hidden nodes per layer that gradient descent can find a point with zero training error in the following corollary.

\begin{corollary}\label{coro:gd}
Under the same assumptions as in Theorem \ref{thm:GD}, if $\ell(0) > 0$, then gradient descent can find a point with zero training error if the number of hidden nodes per layer is at least $m =\tilde \Omega\big(\text{poly}(n,\phi^{-1},L)\big) $.
\end{corollary}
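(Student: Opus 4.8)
The plan is to derive this as a direct consequence of Theorem~\ref{thm:GD} by running gradient descent to a target accuracy $\epsilon$ that is small enough to force correct classification of every training example. First I would record two elementary consequences of Assumption~\ref{assump:loss}: since $\ell$ is continuous, nonincreasing (because $\ell'\le 0$), and $\ell(x)\to 0$ as $x\to\infty$, we have $\ell(x)\ge 0$ for all $x$, and moreover $\ell(x)\ge \ell(0)$ whenever $x\le 0$. Consequently, if a weight configuration $\Wb$ satisfies $L_S(\Wb) < \ell(0)/n$, then for each $i$ we get $\ell(y_i\hat y_i) \le \sum_{j=1}^n \ell(y_j\hat y_j) = n\,L_S(\Wb) < \ell(0)$, which by the second observation rules out $y_i\hat y_i \le 0$. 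Hence $\mathrm{sign}(\hat y_i) = y_i$ for all $i\in[n]$, i.e.\ the training error is exactly zero.

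Next I would invoke Theorem~\ref{thm:GD} with the choice $\epsilon := \ell(0)/(2n)$, which is a fixed positive quantity since $\ell(0)>0$. The theorem guarantees that with high probability gradient descent finds $\Wb^{(K)}$ with $L_S(\Wb^{(K)})\le \epsilon < \ell(0)/n$, so by the previous paragraph $\Wb^{(K)}$ has zero training error. The only point requiring care is that for $p\ge 1/2$ the bounds on $m$ (and $K$) in Theorem~\ref{thm:GD} carry an $\epsilon$-dependent factor, which I would check collapses into the stated orders: for $p=1/2$ the factor is $\Omega(\log(1/\epsilon)) = \Omega(\log(2n/\ell(0)))$, logarithmic in $n$ and hence absorbed into $\tilde\Omega(\mathrm{poly}(n,\phi^{-1},L))$; for $1/2<p\le 1$ it is $\Omega(\epsilon^{1-2p}) = \Omega\big((2n/\ell(0))^{2p-1}\big) = O(n)$ because $2p-1\le 1$, which is again polynomial in $n$. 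Thus in all three regimes $m = \tilde\Omega(\mathrm{poly}(n,\phi^{-1},L))$ still suffices, and likewise $K$ remains $\tilde O(\mathrm{poly}(n,\phi^{-1},L))$.

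I do not expect a genuine obstacle here: the corollary is essentially a restatement of Theorem~\ref{thm:GD} specialized to the constant accuracy level $\epsilon\asymp \ell(0)/n$, combined with the monotonicity of $\ell$. The only mild subtlety is bookkeeping --- justifying that $\ell(0)$ may be treated as an absolute constant (legitimate, since the loss $\ell$ is fixed in advance and not part of the problem instance scaling), and verifying as above that the residual $\epsilon$-dependent factors in the width and iteration bounds are at most polynomial (indeed logarithmic or linear) in $n$, so that they are swallowed by the $\tilde\Omega$ and $\tilde O$ notation.
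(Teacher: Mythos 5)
Your proposal is correct, and the paper in fact gives no explicit proof of Corollary~\ref{coro:gd} --- it is treated as an immediate consequence of Theorem~\ref{thm:GD} along exactly the lines you describe. The key reduction (use that $\ell$ is nonnegative and nonincreasing with $\ell(0)>0$ to conclude that $L_S(\Wb)<\ell(0)/n$ forces $y_i\hat y_i>0$ for every $i$, then invoke the theorem with $\epsilon\asymp\ell(0)/n$) is the intended argument, and your bookkeeping that the $\epsilon$-dependent factors become $O(\log n)$ when $p=1/2$ and $O(n^{2p-1})=O(n)$ when $p>1/2$ correctly verifies that they are absorbed into $\tilde\Omega(\mathrm{poly}(n,\phi^{-1},L))$.
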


% \begin{lemma}[Gradient upper bound]
% \begin{align*}
% \|\nabla_{\Wb_l}L_S(\Wb)\|_F\le \frac{-C_l}{n}\sum_{i=1}^{n}\ell'(y_i\hat y_i)
% \end{align*}
% \end{lemma}

\subsection{Stochastic Gradient Descent}

Regarding stochastic gradient descent, we make the following additional assumption on the derivative of the loss function $\ell(x)$, which is necessary to control the optimization trajectory of SGD.
\begin{assumption}\label{assump:derivative_loss_upper}
There exist positive constants $\alpha_0,\alpha_1,\rho_0,\rho_1$ with $\alpha_0 < \rho_0$, $\alpha_1 < \rho_1$ such that for any $x\in \RR$, we have
\begin{align*}
\min\{\alpha_0,\alpha_1\ell^p(x)\}\le|\ell'(x)|\le \min\{\rho_0,\rho_1\ell^p(x)\},
\end{align*}
where $p\le 1$ is a positive constant and $\rho_0/\alpha_0 = O(1)$.
\end{assumption}
Apparently, this assumption is stronger than Assumption \ref{assump:derivative_loss}, since in addition to the lower bound of $|\ell'(x)|$, we also require that $|\ell'(x)|$ is upper bounded by a function of the loss $\ell(x)$ with the same order $p$ as the lower bound. Moreover, if $\alpha_0 = \infty$, it follows that $\rho_0 = \infty$, and the assumption reduces to $\alpha_1\ell^p(x)\le |\ell'(x)|\le \rho_1\ell^p(x)$. If $\alpha_0<\infty$, we have $\rho_0<\infty$, which implies that $\ell(x)$ is $\rho_0$-Lipschitz.

\begin{theorem}\label{thm:SGD}
Suppose $\Wb_1^{(0)}, \dots, \Wb_L^{(0)}$ are generated by Gaussian random. Then under Assumptions \ref{assump:loss}-\ref{assump:m_scaling} and \ref{assump:derivative_loss_upper}, if the step size $\eta = O(n^{-3}L^{-9}m^{-1})$, the number of hidden nodes per layer satisfies
\begin{align*}
m=\left\{\begin{array}{ll}
\tilde \Omega\big(\text{poly}(n,\phi^{-1},L)\big)& 0\le p <\frac{1}{2}\\
\tilde \Omega\big(\text{poly}(n,\phi^{-1},L)\big)\cdot \Omega\big(\log^2(1/\epsilon)\big)& p=\frac{1}{2}\\
\tilde \Omega\big(\text{poly}(n,\phi^{-1},L)\big)\cdot \Omega(\epsilon^{2-4p})& \frac{1}{2}<p \le 1,
\end{array} \right.
\end{align*}
and the number of iteration
\begin{align*}
K=\left\{\begin{array}{ll}
 \tilde O\big(\text{poly}(n,\phi^{-1},L)\big)& 0\le p <\frac{1}{2}\\
\tilde O\big(\text{poly}(n,\phi^{-1},L)\big)\cdot O\big(\log(1/\epsilon)\big)& p=\frac{1}{2}\\
\tilde O\big(\text{poly}(n,\phi^{-1},L)\big)\cdot O(\epsilon^{1-2p})& \frac{1}{2}<p \le 1,
\end{array} \right.
\end{align*}
then with high probability, stochastic gradient descent can find a point $\Wb^{(K)}$ such that  $L_S(\Wb^{(K)})\le \epsilon$.
\end{theorem}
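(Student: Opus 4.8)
The plan is to establish Theorem~\ref{thm:SGD} by the same perturbation-region strategy used for gradient descent, adapted to absorb the minibatch noise. The first ingredient is a package of structural estimates that hold with high probability over the Gaussian initialization: (i) at initialization the forward signals $\|\xb_{l,i}\|_2$ are of order $1$ and the outputs $f_{\Wb^{(0)}}(\xb_i)$ are bounded, so $L_S(\Wb^{(0)}) = O(1)$; (ii) for any $\Wb$ with $\|\Wb_l - \Wb_l^{(0)}\|_2\le\tau$ for all $l$, the diagonal activation matrices $\bSigma_{l,i}$ differ from their initial values in only a small fraction of coordinates, which yields (iii) a gradient lower bound of the form $\|\nabla_{\Wb_L}L_S(\Wb)\|_F^2 \ge c\,m\cdot\text{poly}(n^{-1},\phi,L^{-1})\cdot\min\{\alpha_0^2,\alpha_1^2 L_S(\Wb)^{2p}\}$ obtained from the data separation assumption (Assumption~\ref{assump:separateddata}) together with the derivative lower bound (Assumption~\ref{assump:derivative_loss}); (iv) a matching magnitude bound $\|\Gb_l^{(k)}\|_F^2 \le C\,m\cdot\text{poly}(n,\phi^{-1},L)\cdot\min\{\rho_0^2,\rho_1^2 L_S(\Wb^{(k)})^{2p}\}$ on the stochastic gradient, using the upper bound of Assumption~\ref{assump:derivative_loss_upper} --- this is precisely where Assumption~\ref{assump:derivative_loss_upper}, and in particular $\rho_0/\alpha_0 = O(1)$, is needed, since for SGD one cannot control $\sum_k\eta\|\nabla L_S(\Wb^{(k)})\|_F$ via Cauchy--Schwarz as in the GD proof; and (v) a semi-smoothness inequality $L_S(\Wb') \le L_S(\Wb) + \la\nabla L_S(\Wb),\Wb'-\Wb\ra + \epsilon_{\mathrm{err}}(\tau) + C\|\Wb'-\Wb\|_F^2$ replacing ordinary smoothness (unavailable for ReLU), with an additive error $\epsilon_{\mathrm{err}}(\tau)$ vanishing as $\tau\to 0$.

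Next I would run an induction over $k = 0,1,\dots,K$ establishing simultaneously that (a) every iterate stays in the perturbation region and (b) the expected loss contracts at the advertised rate until it drops below $\epsilon$. For (a), the point is that as long as $\Wb^{(0)},\dots,\Wb^{(k-1)}$ lie in the region, estimate~(iv) bounds $\|\Gb_l^{(j)}\|_F$ deterministically, so
\begin{align*}
\big\|\Wb_l^{(k)} - \Wb_l^{(0)}\big\|_F \le \eta\sum_{j=0}^{k-1}\big\|\Gb_l^{(j)}\big\|_F \le \eta K\cdot O\big(\sqrt{m}\cdot\text{poly}(n,\phi^{-1},L)\big),
\end{align*}
and with $\eta = O(n^{-3}L^{-9}m^{-1})$ and $K$ as in the theorem this is at most $\tau$ exactly when $m$ meets the stated lower bound; the extra $\Omega(\epsilon^{2-4p})$ (resp.\ $\Omega(\log^2(1/\epsilon))$) factor relative to the GD requirement is the square of the $\epsilon$-dependent factor $O(\epsilon^{1-2p})$ (resp.\ $O(\log(1/\epsilon))$) in $K$, because for SGD the trajectory length enters through $\eta K$ rather than $\sqrt{\eta K}$. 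For (b), I would substitute the SGD update into the semi-smoothness inequality, take $\EE[\,\cdot\mid\mathcal{F}_k]$, use that $\Gb_l^{(k)}$ is an unbiased estimate of $\nabla_{\Wb_l}L_S(\Wb^{(k)})$, and absorb the quadratic term $C\eta^2\EE[\|\Gb^{(k)}\|_F^2\mid\mathcal{F}_k]$ into $-\eta\|\nabla L_S(\Wb^{(k)})\|_F^2$: since $\eta = O(n^{-3}L^{-9}m^{-1})$ and $\rho_0/\alpha_0 = O(1)$, the quadratic term and the deterministic error $\epsilon_{\mathrm{err}}(\tau)$ are both lower-order once $m$ (hence $\tau^{-1}$) is large. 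This gives, on $\{L_S(\Wb^{(k)}) > \epsilon\}$,
\begin{align*}
\EE\big[L_S(\Wb^{(k+1)})\mid\mathcal{F}_k\big] \le L_S(\Wb^{(k)}) - c\,\eta\, m\cdot\text{poly}(n^{-1},\phi,L^{-1})\cdot\min\big\{\alpha_0^2,\alpha_1^2 L_S(\Wb^{(k)})^{2p}\big\}.
\end{align*}

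Finally I would convert this one-step contraction into the iteration count via a stopping-time argument: with $T = \min\{k : L_S(\Wb^{(k)})\le\epsilon\}$, the stopped process $L_S(\Wb^{(k\wedge T)})$ is a nonnegative supermartingale, and the size of the drift yields the three regimes --- for $0\le p<1/2$ a fixed drift $\Omega(\eta m\cdot\text{poly})$ per step ($T=\tilde O(\text{poly})$), for $p=1/2$ a geometric drift ($T = \tilde O(\text{poly})\cdot O(\log(1/\epsilon))$), and for $1/2<p\le 1$ solving $a_{k+1}\le a_k - c\,a_k^{2p}$ ($T = \tilde O(\text{poly})\cdot O(\epsilon^{1-2p})$). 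A Markov bound (or Azuma-type concentration for the martingale part) then gives $L_S(\Wb^{(K)})\le\epsilon$ with high probability, and a union bound with the initialization events finishes the proof. The main obstacle --- and the place where the $\text{poly}(n,\phi^{-1},L)$ bookkeeping must all line up --- is proving the local curvature package (iii)--(v) across $L$ layers with only polynomial, not exponential, dependence on $L$, while keeping $\tau$ and $\eta$ small enough that the SGD noise never ejects an iterate from the region; the tension between these competing requirements is exactly what pins down the final lower bound on $m$.
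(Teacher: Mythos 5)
Your overall plan mirrors the paper's proof closely: Gaussian-initialization properties, a perturbation-region analysis giving a gradient lower bound and a stochastic-gradient magnitude bound, a conditional one-step contraction, and an Azuma-type concentration to translate the expected-loss drift into a high-probability statement. The three-regime accounting for $K$ and the explanation for why SGD pays an extra factor in $m$ (trajectory length scales like $\eta K$, not $\sqrt{\eta K}$) also match the paper.

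There is, however, one concrete gap in your step (a). You assert that once $\Wb^{(0)},\dots,\Wb^{(k-1)}$ lie in the perturbation region, estimate (iv) bounds $\|\Gb_l^{(j)}\|_F$ \emph{deterministically}, and you sum these deterministic bounds to keep $\Wb^{(k)}$ in the region. This is only true when $\rho_0<\infty$. When $\alpha_0 = \rho_0 = \infty$ (e.g.\ exponential loss), the stochastic gradient bound takes the form $\|\Gb_l^{(j)}\|_2 \lesssim L^2 M^{1/2}\,\rho_1\, L_S^p(\Wb^{(j)})$, and being inside the $\tau$-ball does \emph{not} deterministically keep $L_S(\Wb^{(j)})$ of order one: the perturbation estimate only gives $|\hat y_i^{(j)} - \hat y_i^{(0)}| \lesssim L^2 m_L^{1/2}\tau$, and under the parameter choices in the theorem ($\tau = \tilde O(n^{-9}L^{-17}\phi^3)$ while $m^{-1/2}$ is much smaller) the quantity $m_L^{1/2}\tau$ is in fact $\gg 1$. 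So the loss, and hence the per-step gradient, can a priori spike along the SGD trajectory, and your induction for (a) breaks. The paper handles exactly this point in Lemma~\ref{lemma:upperbound_tau_sgd}: it first treats $\rho_0<\infty$ as trivial, and in the $\rho_0=\infty$ case invokes the Azuma bound of Lemma~\ref{lemma:azuma} to establish $L_S(\Wb^{(k)}) = \tilde O(1)$ with high probability \emph{before} bounding the trajectory length. You only invoke Azuma at the very end, for the final convergence-to-$\epsilon$ step; you need it already inside the stay-in-region induction. (Relatedly, the GD proof of Lemma~\ref{lemma:upperbound_tau} sidesteps this by a Cauchy--Schwarz plus telescoping argument that converts $\sum_t\|\nabla L_S(\Wb^{(t)})\|_2^2$ into $L_S(\Wb^{(0)}) - L_S(\Wb^{(k)})$; that trick does not transfer to SGD because the stochastic gradient is not the full gradient, which is precisely why the Azuma step is unavoidable here.) Once you insert the Azuma-based high-probability control on $L_S(\Wb^{(k)})$ into step (a), your proposal aligns with the paper's argument.
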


% \begin{remark}
% Compared with \citet{zou2018stochastic}, Theorem \ref{thm:sgd} improves the dependency of $L$ from $e^{\Omega(L)}$ to $\Omega \big(\text{poly}(L)\big)$, which is consistent with \citet{allen2018convergence}. In addition, similar to gradient descent, if $\ell(0)>0$, by setting $\epsilon = O(n^{-1})$, the stochastic gradient can find a point with zero training error within $\tilde O\big(\text{poly}(n,\phi^{-1},L)\big)$ iterations if the neural network has $m = \tilde\Omega\big(\text{poly}(n,\phi^{-1},L)\big)$ nodes per hidden layer. Moreover, as it can not be directly observed in Theorems \ref{thm:GD} and \ref{thm:sgd}, we remark here that compared with gradient descent, the required hidden nodes and iterations of stochastic gradient to achieve zero training error is worse by a factor ranging from $O(n^2)$ to $O(n^4)$.
% \end{remark}

Similar to gradient descent, the following corollary characterizes the required number of hidden nodes per layer that stochastic gradient descent can achieve zero training error.
\begin{corollary}\label{coro:sgd}
Under the same assumptions as in Theorem \ref{thm:SGD}, if $\ell(0) > 0$, then stochastic gradient descent can find a point with zero training error if the number of hidden nodes per layer is at least $m =\tilde \Omega\big(\text{poly}(n,\phi^{-1},L)\big) $.
\end{corollary}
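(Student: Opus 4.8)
The plan is to derive this corollary directly from Theorem~\ref{thm:SGD} by choosing the target accuracy $\epsilon$ small enough that a training loss below $\epsilon$ already forces correct classification of every training example, and then checking that this choice does not inflate the requirements on $m$ and $K$ beyond $\text{poly}(n,\phi^{-1},L)$.

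First I would make precise what ``zero training error'' means: the network $f_{\Wb^{(K)}}$ classifies all training points correctly, i.e.\ $y_i f_{\Wb^{(K)}}(\xb_i) > 0$ for all $i\in[n]$. The bridge to the loss is elementary. By Assumption~\ref{assump:loss}, $\ell$ is non-increasing, so whenever $y_i f_{\Wb}(\xb_i)\le 0$ we have $\ell\big(y_i f_{\Wb}(\xb_i)\big)\ge \ell(0)>0$. Consequently, if $L_S(\Wb) < \ell(0)/n$ then every summand obeys $\ell\big(y_i f_{\Wb}(\xb_i)\big)\le n\,L_S(\Wb) < \ell(0)$, which rules out $y_i f_{\Wb}(\xb_i)\le 0$; hence $L_S(\Wb)<\ell(0)/n$ already implies zero training error.

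Next I would invoke Theorem~\ref{thm:SGD} with $\epsilon := \ell(0)/(2n)$. Since $\ell(0)$ depends only on the fixed loss function and not on $n$, $\phi$, or $L$, it is an absolute constant, so $\epsilon = \Theta(1/n)$ and $\log(1/\epsilon)=\Theta(\log n)$. With high probability SGD then returns $\Wb^{(K)}$ with $L_S(\Wb^{(K)})\le \epsilon < \ell(0)/n$, which by the previous paragraph has zero training error, provided $m$ and $K$ meet the bounds in Theorem~\ref{thm:SGD} for this $\epsilon$. It remains only to simplify those bounds. For $0\le p<1/2$ there is no $\epsilon$-dependence and the claim is immediate. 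For $p=1/2$, the extra factor $\log^2(1/\epsilon)=\Theta(\log^2 n)$ in $m$ and $\log(1/\epsilon)=\Theta(\log n)$ in $K$ are logarithmic in $n$ and are absorbed into $\tilde\Omega(\cdot)$ and $\tilde O(\cdot)$. For $1/2<p\le 1$, the factor $\epsilon^{2-4p}=\Theta(n^{4p-2})$ in $m$ and $\epsilon^{1-2p}=\Theta(n^{2p-1})$ in $K$ are polynomial in $n$ (exponents at most $2$ and $1$ respectively) and are absorbed into $\text{poly}(n,\phi^{-1},L)$. Hence in every case $m=\tilde\Omega(\text{poly}(n,\phi^{-1},L))$ hidden nodes per layer and $K=\tilde O(\text{poly}(n,\phi^{-1},L))$ iterations suffice, which proves the corollary.

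The argument is short and there is no genuine technical obstacle beyond Theorem~\ref{thm:SGD} itself: the only point requiring care is confirming that substituting $\epsilon=\Theta(1/n)$ into the $p$-dependent factors of the width and iteration bounds still yields a polynomial in $n$ (with a polylogarithmic correction in the boundary case $p=1/2$), so that the over-parameterization requirement remains $\tilde\Omega(\text{poly}(n,\phi^{-1},L))$. The same reasoning yields Corollary~\ref{coro:gd} from Theorem~\ref{thm:GD} verbatim, the only difference being the milder $\log(1/\epsilon)$ rather than $\log^2(1/\epsilon)$ factor at $p=1/2$.
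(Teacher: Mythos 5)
Your proposal is correct, and since the paper leaves Corollary~\ref{coro:sgd} without an explicit proof, your argument is precisely the intended one: use the non-increasing property of $\ell$ (Assumption~\ref{assump:loss}) together with its nonnegativity (which follows from $\ell$ being non-increasing with $\lim_{x\to\infty}\ell(x)=0$) to conclude that $L_S(\Wb)<\ell(0)/n$ forces $y_i f_{\Wb}(\xb_i)>0$ for all $i$, then instantiate Theorem~\ref{thm:SGD} with $\epsilon=\ell(0)/(2n)=\Theta(1/n)$ and verify that the resulting $\epsilon$-dependent factors ($\log^2(1/\epsilon)$ at $p=1/2$; $\epsilon^{2-4p}=\Theta(n^{4p-2})$ with $4p-2\le 2$ for $1/2<p\le 1$) remain polynomial in $n$ up to polylogarithmic factors. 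This is exactly the bookkeeping the remark following Theorem~\ref{thm:SGD} alludes to when comparing the hidden-node requirements of SGD and GD at zero training error.
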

\begin{remark}
Theorem \ref{thm:SGD} suggests that, to find the global minimum, both the required number of hidden nodes and the number of iterations for stochastic gradient descent are also polynomial in $n$, $\phi$ and $L$, which matches the result in \citet{allen2018convergence} for the regression problem. In addition, as it cannot be directly observed in Corollaries \ref{coro:gd} and \ref{coro:sgd}, we remark here that compared with gradient descent, the required numbers of hidden nodes and iterations of stochastic gradient to achieve zero training error is worse by a factor ranging from $O(n^2)$ to $O(n^4)$.
The detailed comparison can be found in the proofs of Theorems \ref{thm:GD} and \ref{thm:SGD}.
\end{remark}

% \begin{lemma}\label{lemma:lip_GD_sparse}
% With high probability
% \begin{align*}
% \bigg\|\vb^\top\bigg(\prod_{r=l+1}^L\bSigma_{r,i}^{(k)}\Wb_{r}^{(k)\top}\bigg)\tilde \bSigma_{l,i}^{(k+1)}\bigg\|_2\le \sqrt{s\log(M)}.
% \end{align*}
% \end{lemma}

\section{Proof of the Main Theory}\label{sec:proof of main theory}
In this section, we provide the proof of the main theory, including Theorems \ref{thm:GD} and \ref{thm:SGD}. Our proofs for these two theorems can be decomposed into the following five steps:
\begin{enumerate}
    \item We prove the basic properties for Gaussian random matrices $\{\Wb_l\}_{l,\dots,L}$ in Theorem~\ref{thm:randinit}, which constitutes a basic structure of the neural network after Gaussian random initialization.
    \item Based on Theorem \ref{thm:randinit}, we analyze the effect of $\|\cdot\|_2$-perturbations on Gaussian initialized weight matrices within a perturbation region with radius $\tau$, 
    and show that the neural network enjoys good local curvature properties in Theorem~\ref{thm:perturbation}.
    %with perturbation level parameter $\tau$. We perform $\|\cdot\|_2$-perturbation on each matrix $\Wb_l$, which formulates an $\ell_2$ perturbation region. Then based on Theorem \ref{thm:randinit}, we  within the perturbation region in Theorem \ref{thm:perturbation}. The goal of this step is to establish a perturbation region to cover the potential optimization trajectory in the training process.
     
    %We derive an upper bound, denoted by $T$, and show that for any iteration number $k$ and step size $\eta$ satisfying $k\eta\le T$, after $k$-step (stochastic) gradient descent the iterate $\Wb^{k}$ remains in the perturbation region centering at $\Wb^{(0)}$.
    \item Based on the assumption that all iterates are within the perturbation region centering at $\Wb^{(0)}$ with radius $\tau$, we establish the convergence results in Lemmas \ref{lemma:gd_converge} and \ref{lemma:sgd_converge}, and derive conditions on the product of iteration number $k$ and step size $\eta$ that guarantees convergence.  
    
    \item We show that as long as the product of iteration number $k$ and step size $\eta$ is smaller than some quantity $T$, (stochastic) gradient descent with $k$ iterations remains in the perturbation region centering around the Gaussian initialization $\Wb^{(0)}$, which justifies the application of Theorem~\ref{thm:perturbation} to the iterates of (stochastic) gradient descent.
    \item We finalize the proof by ensuring that (stochastic) gradient descent converges before $k\eta$ exceeds $T$ by setting on the  number of hidden nodes in each layer $m$ to be large enough.
\end{enumerate}

The following theorem summarizes some high probability results of neural networks with Gaussian random initialization, which is pivotal to establish the subsequent theoretical analyses.
\begin{theorem}\label{thm:randinit}  
%Suppose that for any $l=1,\ldots, L$, $\wb_{l,1},\ldots,\wb_{l,m_{l}}$ are generated independently from $N(\mathbf{0},2/m_l\Ib)$. 
Suppose that $\Wb_{1},\ldots,\Wb_{L}$ are generated by Gaussian initialization. Then under Assumptions~\ref{assump:normalizeddata}, \ref{assump:separateddata} and \ref{assump:m_scaling}, there exist absolute constants $\overline{C},\overline{C}',\underline{C},\underline{C}',\underline{C}''> 0 $ such that
for any $\delta >0$, $\beta>0$ and positive integer $s$, as long as 
\begin{align} \label{eq:randinit_m_conditions}
m \geq \overline{C} \max\{  L^4\phi^{-4}\log(n^2L/\delta) , [\beta^{-1}\log(nL/\delta)]^{2/3}, n^2 \phi^{-1} \log(n^2/\phi) \},~ s\geq \overline{C}\log(nL^2/\delta), 
\end{align}
and $\phi\leq \underline{C} \min\{L^{-1}, \mu \}$, with probability at least $1-\delta$, all the following results hold:
\begin{enumerate}[label=(\roman*)]
    \item $ \big| \| \xb_{l,i} \|_2 - 1\big| \leq \overline{C}'L\sqrt{\log(nL/\delta)/m}, \| \Wb_{l} \|_2\leq \overline{C}'$ for all $l=1\ldots,L$ and $i=1,\ldots,n$. \label{ThmResult:randinit_normbounds}
    \item $ \big\| \| \xb_{l,i} \|_2^{-1} \xb_{l,i}  - \| \xb_{l,i'} \|_2^{-1}\xb_{l,i'}   \big\|_2 \geq \phi/2 $ for all $l=1,\ldots,L$ and $i, i' \in \{1,\ldots,n\}$ such that $y_i \neq y_{i'}$.\label{ThmResult:randinit_dataseparationbound}
    \item $ |\hat{y}_i| \leq  \overline{C}'\sqrt{\log( n / \delta)}$ for all $i=1,\ldots,n$. \label{ThmResult:randinit_outputbound}
    \item $\big|\{j\in[m_l]: |\la\wb_{l,j},\xb_{l-1,i}\ra|\le \beta \} \big| \leq 2 m_l^{3/2}\beta $ for all $l=1,\ldots,L$ and $i=1,\ldots,n$.\label{ThmResult:randinit_activationthreshold}
    \item $\big\| \Wb_{l_2}^\top \big( \prod_{r = l_1}^{l_2 - 1} \bSigma_{r,i}\Wb_{r}^\top \big)\big\|_2 \leq \overline{C}' L$ for all $1\leq l_1 < l_2 \leq L$ and $i=1,\ldots,n$.\label{ThmResult:randinit_matproductnorm}
    \item $\vb^\top \big( \prod_{r = l}^{L } \bSigma_{r,i}\Wb_{r}^\top \big) \ab \leq \overline{C}' L\sqrt{s\log(M)}$ for all $l=1,\ldots,L$, $i=1,\ldots,n$ and all $\ab\in S^{m_{l-1} -1} $ with $ \| \ab \|_0 \leq s$ .\label{ThmResult:randinit_vecmatproductsparsenorm}
    \item $ \mathrm{\mathbf{b}}^\top \Wb_{l_2}^\top \big(\prod_{r= l_1 }^{l_2 - 1} \bSigma_{r,i}\Wb_{r}^\top\big) \ab \leq \overline{C}' L\sqrt{s\log(M)/m}$ for all $l=1,\ldots,L$, $i=1,\ldots,n$ and all $\ab\in S^{m_{l_1 - 1} - 1}, \mathrm{\mathbf{b}} \in S^{m_{l_2} - 1}$ with $ \| \ab \|_0,\| \mathrm{\mathbf{b}} \|_0\leq s$.\label{ThmResult:randinit_matproductdoublesparsenorm}
    \item For any $\ab = (a_1,\ldots,a_n)^\top \in \RR_+^n$, there exist at least $ \underline{C}' m_L \phi / n$ nodes satisfying \label{ThmResult:randinit_gradientuniformlowerbound}
    \begin{align*}
    \Bigg\|\frac{1}{n}\sum_{i=1}^n a_i\sigma'(\la\wb_{L,j},\xb_{L-1,i}\ra)\xb_{L-1,i}\Bigg\|_2\ge \underline{C}''\|\ab \|_\infty/n.
\end{align*}
\end{enumerate}
\end{theorem}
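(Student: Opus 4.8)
Every item of Theorem~\ref{thm:randinit} is a high-probability statement about the Gaussian weights, and I would establish each one by a concentration estimate followed by a union bound over the $n$ data points and, where needed, over the $L$ layers or over an $\epsilon$-net of (sparse) vectors. The backbone is a set of \emph{forward-propagation} estimates obtained by conditioning on lower layers: since $\xb_{l-1,i}$ is a function of $\Wb_1,\dots,\Wb_{l-1}$ only, conditionally on these the vector $\Wb_l^\top\xb_{l-1,i}$ is Gaussian with covariance $(2/m_l)\|\xb_{l-1,i}\|_2^2\,\Ib$, and every other quantity appearing in the theorem is a (random) function of such conditionally-Gaussian vectors together with the fixed sign matrices $\bSigma_{r,i}$. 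For item~\ref{ThmResult:randinit_normbounds}, $\|\xb_{l,i}\|_2^2=\sum_{j}\sigma((\Wb_l^\top\xb_{l-1,i})_j)^2$ is conditionally a sum of $m_l$ i.i.d.\ sub-exponential variables with mean $\tfrac1{m_l}\|\xb_{l-1,i}\|_2^2$ each, hence concentrates around $\|\xb_{l-1,i}\|_2^2$ up to a multiplicative factor $1\pm O(\sqrt{\log(nL/\delta)/m})$; multiplying these relative errors over the $L$ layers and taking a union bound over $i,l$ gives the claim with an extra factor $L$, while $\|\Wb_l\|_2\le\overline{C}'$ follows from the standard Gaussian operator-norm bound $\|\Wb_l\|_2\lesssim\sqrt{2/m_l}(\sqrt{m_l}+\sqrt{m_{l-1}})$ and $M\le 2m$. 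For item~\ref{ThmResult:randinit_outputbound}, conditionally on $\Wb_1,\dots,\Wb_{L-1}$ the map $\Wb_L\mapsto\vb^\top\sigma(\Wb_L^\top\xb_{L-1,i})$ is $\|\vb\|_2$-Lipschitz in the Gaussian $\Wb_L^\top\xb_{L-1,i}$, so by Gaussian Lipschitz concentration $\hat y_i$ lies within $O(\sqrt{\log(n/\delta)})$ of its mean, and that mean is $0$ because $\vb$ has equally many $\pm1$ entries while the coordinates of $\Wb_L^\top\xb_{L-1,i}$ are i.i.d. For item~\ref{ThmResult:randinit_activationthreshold}, each pre-activation is conditionally $N(0,\Theta(1/m_l))$, so the probability that it is $\beta$-small is $O(\beta\sqrt{m_l})$; the count over $j$ is a sum of independent Bernoullis with mean $\le O(m_l^{3/2}\beta)$, which a Chernoff bound keeps below $2m_l^{3/2}\beta$ provided $m^{3/2}\beta\gtrsim\log(nL/\delta)$ --- the second condition in \eqref{eq:randinit_m_conditions} --- followed by a union bound over $i,l$.

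\textbf{Item~\ref{ThmResult:randinit_dataseparationbound}.} The point is that one random ReLU layer barely distorts angles. Conditioning on $\Wb_{l-1}$ and writing $\theta_{l-1}$ for the angle between $\xb_{l-1,i}$ and $\xb_{l-1,i'}$, the inner product $\langle\sigma(\Wb_l^\top\xb_{l-1,i}),\sigma(\Wb_l^\top\xb_{l-1,i'})\rangle$ concentrates around its mean $\tfrac{\|\xb_{l-1,i}\|_2\|\xb_{l-1,i'}\|_2}{2\pi}(\sin\theta_{l-1}+(\pi-\theta_{l-1})\cos\theta_{l-1})$, the arc-cosine kernel. Together with item~\ref{ThmResult:randinit_normbounds} this yields $\cos\theta_l=\tfrac1\pi(\sin\theta_{l-1}+(\pi-\theta_{l-1})\cos\theta_{l-1})+O(\sqrt{\log/m})$, and a short analysis shows this recursion shrinks the angle by at most a factor $1-O(\theta_{l-1})$ per layer and keeps it away from $\pi$ (using $(\xb_i)_d=\mu>0$ so that $\theta_0\le\pi-\Omega(\mu)$); iterating over $L$ layers, and using $\|\ub-\ub'\|_2\asymp\theta$ in the relevant range, the initial separation $\phi$ degrades to at least $\phi/2$ once $\phi\le\underline{C}\min\{L^{-1},\mu\}$ and $m=\tilde\Omega(L^4\phi^{-4})$. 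A union bound over the $O(n^2)$ label-different pairs and the $L$ layers finishes it.

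\textbf{Items~\ref{ThmResult:randinit_matproductnorm}--\ref{ThmResult:randinit_matproductdoublesparsenorm} (the main obstacle).} These norm bounds on the random products of ReLU layers are where the work concentrates. For a fixed vector $\wb$ and one layer, condition on $\Wb_{<r}$ so that $\bSigma_{r,i}$ and $\xb_{r-1,i}$ are fixed, decompose $\wb$ along $\hat\xb_{r-1,i}:=\xb_{r-1,i}/\|\xb_{r-1,i}\|_2$ and its orthogonal complement $\wb^\perp$, and use that $\bSigma_{r,i}\Wb_r^\top\hat\xb_{r-1,i}=\sigma(\Wb_r^\top\hat\xb_{r-1,i})$ while $\Wb_r^\top\wb^\perp$ is independent of $\bSigma_{r,i}$; a direct second-moment computation then gives the exact identity $\EE_{\Wb_r}\|\bSigma_{r,i}\Wb_r^\top\wb\|_2^2=\|\wb\|_2^2$, and sub-exponential concentration upgrades this to $\|\bSigma_{r,i}\Wb_r^\top\wb\|_2=\|\wb\|_2(1\pm O(\sqrt{\log/m}))$. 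Chaining across layers $l_1,\dots,l_2-1$ and peeling off $\Wb_{l_2}^\top$ (fresh randomness, contributing at most $\|\Wb_{l_2}\|_2\le\overline{C}'$), the accumulated blow-up is only $(1+O(\sqrt{\log/m}))^L=O(1)$ once $m=\tilde\Omega(L^2)$ — a fortiori under \eqref{eq:randinit_m_conditions} — which gives item~\ref{ThmResult:randinit_matproductnorm} for a fixed input vector; an $\epsilon$-net over the sphere, in which the extra factor $L$ in the stated bound leaves room to absorb the union bound over the net, upgrades this to the operator norm. For item~\ref{ThmResult:randinit_vecmatproductsparsenorm} one additionally contracts with the fixed balanced vector $\vb$: peeling off the last layer, $\vb^\top\bSigma_{L,i}\Wb_L^\top\zb$ splits via the same orthogonal decomposition of $\zb$ into $\langle\zb,\hat\xb_{L-1,i}\rangle\,\vb^\top\sigma(\Wb_L^\top\hat\xb_{L-1,i})$, which is mean-zero and $O(1)$-sub-Gaussian as in item~\ref{ThmResult:randinit_outputbound}, plus $\vb^\top\bSigma_{L,i}\Wb_L^\top\zb^\perp$, which conditionally on $\bSigma_{L,i}$ is $N(0,O(1))$; restricting $\ab$ to be $s$-sparse limits the $\epsilon$-net to $\binom{m}{s}$ supports, i.e.\ log-cardinality $O(s\log M)$, the source of the $\sqrt{s\log M}$ factor. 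Item~\ref{ThmResult:randinit_matproductdoublesparsenorm} is the same, except the left factor $\bbvec^\top\Wb_{l_2}^\top$ is also hit by a sparse vector: conditionally on $\zb$, $\bbvec^\top\Wb_{l_2}^\top\zb\sim N(0,(2/m_{l_2})\|\zb\|_2^2\|\bbvec\|_2^2)$ has scale $O(1/\sqrt m)$, and nets over both sparse $\ab$ and sparse $\bbvec$ yield the $\sqrt{s\log M/m}$ bound. The delicate point throughout is that the per-layer multiplicative fluctuations must be controlled tightly enough that their product over $L$ layers stays polynomial in $L$ rather than exponential; this is exactly what forces $m=\tilde\Omega\big(L^4\phi^{-4}\log(n^2L/\delta)\big)$, and I expect this --- ruling out a naive $2^{\Theta(L)}$ blow-up --- to be the hardest part of the proof.

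\textbf{Item~\ref{ThmResult:randinit_gradientuniformlowerbound}.} Here I would exploit that ReLU outputs are entrywise non-negative: $\xb_{L-1,i}\in\RR_+^{m_{L-1}}$ for all $i$, hence $\langle\xb_{L-1,i},\xb_{L-1,i'}\rangle\ge0$. Given $\ab\in\RR_+^n$, let $i^\star=\arg\max_i a_i$ (so $a_{i^\star}=\|\ab\|_\infty$) and $A_j=\{i:\langle\wb_{L,j},\xb_{L-1,i}\rangle>0\}$. For any node $j$ with $i^\star\in A_j$, projecting onto $\hat\xb_{L-1,i^\star}$ gives
\begin{align*}
\bigg\|\frac1n\sum_{i=1}^n a_i\sigma'(\langle\wb_{L,j},\xb_{L-1,i}\rangle)\xb_{L-1,i}\bigg\|_2
&\ge\frac1n\bigg\langle\sum_{i\in A_j}a_i\xb_{L-1,i},\ \hat\xb_{L-1,i^\star}\bigg\rangle\\
&\ge\frac{a_{i^\star}}{n}\|\xb_{L-1,i^\star}\|_2\ \ge\ \frac{\|\ab\|_\infty}{2n},
\end{align*}
since every cross term is non-negative and $\|\xb_{L-1,i^\star}\|_2\ge1/2$ by item~\ref{ThmResult:randinit_normbounds}. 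It thus suffices to lower bound, uniformly over $i^\star\in[n]$, the number of nodes $j$ with $i^\star\in A_j$ (and, for robustness under later perturbations, with $\langle\wb_{L,j},\xb_{L-1,i^\star}\rangle$ bounded away from $0$, controlled by items~\ref{ThmResult:randinit_activationthreshold} and~\ref{ThmResult:randinit_dataseparationbound}); conditionally on $\Wb_{<L}$ this event has probability $\tfrac12-o(1)$ for each $j$ independently, so a Chernoff bound gives $\Omega(m_L)$ such nodes with overwhelming probability, and a union bound over $i^\star\in[n]$ yields the claim (the stated count $\underline{C}'m_L\phi/n$ being comfortably implied).
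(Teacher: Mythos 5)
The overall architecture (concentration plus union bound for \ref{ThmResult:randinit_normbounds}--\ref{ThmResult:randinit_activationthreshold}, orthogonal decomposition and $\epsilon$-nets with a sparsity restriction for \ref{ThmResult:randinit_matproductnorm}--\ref{ThmResult:randinit_matproductdoublesparsenorm}, and the arc-cosine-kernel contraction for \ref{ThmResult:randinit_dataseparationbound}) matches the paper in spirit, and your computation that the per-layer conditional expectation of $\|\bSigma_{r,i}\Wb_r^\top\wb\|_2^2$ is exactly $\|\wb\|_2^2$ is the same identity the paper relies on.

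The genuine gap is in item~\ref{ThmResult:randinit_gradientuniformlowerbound}. You have proved the statement \emph{as literally printed}, where all coefficients $a_i\ge 0$ and all $\xb_{L-1,i}$ lie in the nonnegative orthant, so projecting onto $\hat\xb_{L-1,i^\star}$ makes every cross term nonnegative and the bound falls out in two lines. But the version actually proved and used downstream (Lemma~\ref{lemma:li2018}/\ref{lemma:li2018_uniform}, fed into Lemma~\ref{lemma:grad_norm_nonlinear} to lower-bound $\|\nabla_{\Wb_L}L_S\|_F$) carries an extra $y_i\in\{\pm1\}$ inside the sum: the quantity is $\|\tfrac1n\sum_i a_i y_i\,\sigma'(\langle\wb_{L,j},\xb_{L-1,i}\rangle)\xb_{L-1,i}\|_2$. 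With mixed signs your monotonicity argument collapses: the cross terms $a_i y_i\langle\xb_{L-1,i},\hat\xb_{L-1,i^\star}\rangle$ can be negative and can cancel the $i^\star$ term entirely. The paper instead conditions on a rotation $\wb = u_1\zb_1 + \Qb'\ub'$, restricts to the event $\cE$ that $|u_1|\le\gamma$ while $|\langle\Qb'\ub',\zb_i\rangle|\ge\gamma$ for every label-different $\zb_i$ (an event of probability $\Theta(\tilde\phi/n)$ after tuning $\gamma$), and then compares the two sign choices of $u_1$: on $\cE$, flipping $u_1$ only activates/deactivates same-label, nearby points, all of whose contributions reinforce the $\zb_1$ term, so $\max_{\pm u_1}\|\hb\|_2\ge\tfrac12\|\hb(\wb_1)-\hb(\wb_2)\|_2\ge\tfrac12\|\ab\|_\infty$. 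This is where the data-separation bound of item~\ref{ThmResult:randinit_dataseparationbound} and the inner-product lower bound $\overline\xb_{l,i}^\top\overline\xb_{l,i'}\ge\mu^2/2$ (Corollary~\ref{corollary:innerproductlowerbound}) are actually used --- not in item~\ref{ThmResult:randinit_dataseparationbound} where you invoke $\mu$ to keep angles away from $\pi$; that is unnecessary there. A net over $S^{n-1}_{\infty,+}$ then gives uniformity in $\ab$ (Lemma~\ref{lemma:li2018_uniform}).

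Your own output should have flagged this: you obtain $\Omega(m_L)$ qualifying nodes, whereas the theorem asserts only $\Theta(m_L\phi/n)$. The $\phi/n$ shrinkage is precisely the price the correct argument pays by routing through the rare event $\cE$, and its absence from your bound is the signal that you have proved a strictly weaker (and, as it turns out, insufficient) statement.
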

\begin{remark} Theorem~\ref{thm:randinit} summarizes all the properties we need for Gaussian initialization. In the sequel, we always assume that results \ref{ThmResult:randinit_normbounds}-\ref{ThmResult:randinit_gradientuniformlowerbound} hold for the Gaussian initialization. The parameters $\beta$ and $s$ in Theorem~\ref{thm:randinit} are introduced to characterize the activation pattern of the ReLU activation functions in each layer. Their values that directly help the final convergence proof is derived during the proof of Theorem~\ref{thm:perturbation} as $\beta = O(L^{4/3} \tau^{2/3} m^{-1/2})$ and $s = O(L^{4/3}\tau^{2/3} m)$, where $\tau = \tilde O(n^{-9}L^{-17}\phi^3)$ is the perturbation level. Therefore, the condition on the rate of $m_1,\ldots,m_L$ given by \eqref{eq:randinit_m_conditions} is satisfied under the final assumptions on $m$ given in Theorem~\ref{thm:GD} and Theorem~\ref{thm:SGD}.
%The parameter $\beta$ in Theorem~\ref{thm:randinit} is chosen in later proofs as $\beta = 4^L \tau^{2/3} m_l^{-1/2}$. With $\tau = \tilde O(n^{-15}\phi^9)\cdot e^{-O(L)}$, it is easy to check that the condition \eqref{eq:randinit_m_conditions} is satisfied under the conditions of Theorem~\ref{thm:GD} and Theorem~\ref{thm:SGD}.
\end{remark}

%%%%%%%%%%%%%%%%%%%%%%%%%%%%%%%%%%%%%%%%%%%%%%%%%%%%%%%%%%%%%%%%%%%%%%%%%%%%%%

% \begin{proof}[Proof of Lemma~\ref{lemma:lip_GD_sparse0}]
% The proof of Lemma~\ref{lemma:lip_GD_sparse0} is almost the same as Lemma~\ref{lemma:lipschitzcontinuity_sparsevector}: instead of applying an $\epsilon$-net argument for the vector $\mathrm{\mathbf{b}}$, we directly prove the result for the fixed vector $\vb$. The result follows since $\| \vb \|_2 = \sqrt{m}$. 
% \end{proof}

% For the rest of the proof, we always assume that the results of Lemmas~ \ref{lemma:lipschitzcontinuity_vector}, \ref{lemma:lipschitzcontinuity_sparsevector}, \ref{lemma:lip_GD_sparse0} and Corollaries~\ref{cor:randinit_normfinalbound}, \ref{cor:randinit_distancelowerbound} hold. 

We perform $\|\cdot\|_2$-perturbation on the collection of random matrices $\{\Wb_l\}_{l=1,\dots,L}$ with perturbation level $\tau$, which formulates a perturbation region centering at $\{\Wb_l\}_{l=1,\dots,L}$ with radius $\tau$.
Let $\tilde{\Wb} = \{ \tilde{\Wb}_1,\ldots,\tilde{\Wb}_L \}$ and $\hat{\Wb} = \{ \hat{\Wb}_1,\ldots,\hat{\Wb}_L \}$ be two collections of weight matrices. For $l=1,\ldots,L$, denote $\tilde{\xb}_{l,i}$, $\hat{\xb}_{l,i}$ be the output of the $l$-th hidden layer of the ReLU network with input $\xb_i$ and weight matrices $\tilde{\Wb}$ and $\hat{\Wb}$ respectively. Define $\tilde{\xb}_{0,i}=\tilde{\xb}_{0,i} = \xb_i$, and
\begin{align*}
\tilde{\bSigma}_{l,i} = \Diag\big(\ind\{ \tilde{\Wb}_{l,1}^\top \tilde{\xb}_{l-1,i} \},\ldots,\ind\{ \tilde{\Wb}_{l,L}^\top \tilde{\xb}_{l-1,i} \}\big),
\hat{\bSigma}_{l,i} = \Diag\big(\ind\{ \hat{\Wb}_{l,1}^\top \hat{\xb}_{l-1,i} \},\ldots,\ind\{ \hat{\Wb}_{l,L}^\top \hat{\xb}_{l-1,i} \}\big)
\end{align*}
for all $l=1,\ldots,L$.
% For the perturbed matrices $\{\tilde\Wb_l\}_{l=1,\dots,L}$ within such perturbation region, let $\tilde \bSigma_{1,i} = \text{Diag}\big(\ind\big\{\tilde\Wb_1\xb_i>0\big\}\big)$, $\tilde \bSigma_{l,i} = \text{Diag}\big[\ind\big\{\tilde \Wb_l^\top(\prod_{r=1}^{l-1}\tilde \bSigma_{r,i}\tilde \Wb_r^\top)\big\}\xb_i>0\big]$, and $\tilde \xb_{l,i} = \big(\prod_{r=1}^{l}\tilde \bSigma_{r,i}\tilde \Wb_r^\top\big)\xb_i$.
We summarize their properties in the following theorem. 
\begin{theorem}\label{thm:perturbation} Suppose that $\Wb_1,\ldots,\Wb_L$ are generated via Gaussian initialization, and all results \ref{ThmResult:randinit_normbounds}-\ref{ThmResult:randinit_gradientuniformlowerbound} in Theorem \ref{thm:randinit} hold. Let $\{\tilde \Wb_l\}_{l=1,\dots,L}$, $\{\hat \Wb_l\}_{l=1,\dots,L}$ be perturbed weight matrices satisfying $\|\tilde \Wb_l-\Wb_l\|_2, \|\hat \Wb_l-\Wb_l\|_2\le \tau$, $l=1,\ldots,L$. Then under Assumptions \ref{assump:separateddata} and \ref{assump:m_scaling}, there exist absolute constants $\overline{C},\overline{C}',\underline{C},\underline{C}'$, such that as long as $\tau \leq \underline{C} \{ [ L^{-11} (\log(M))^{-3/2} ] \land (\phi^{3/2}n^{-3}L^{-2}) \}$, the following results hold:
\begin{enumerate}[label=(\roman*)]
    \item $\|\tilde \Wb_l\|_2\le \overline{C}$ for all $l\in[L]$.\label{item:bound_tilde_W}
    \item $\| \hat \xb_{l,i} - \tilde \xb_{l,i} \|_2\le  \overline{C} L \cdot \sum_{r=1}^l \| \hat{\Wb}_{r} - \tilde{\Wb}_{r} \|_2$ for all $l\in[L]$ and $i\in[n]$.\label{item:difference_xli}
    \item $ \| \hat{\bSigma}_{l,i} - \tilde{\bSigma}_{l,i} \|_0 \leq \overline{C} L^{4/3}\tau^{2/3} m_l$ for all $l\in[L]$ and $i\in[n]$.\label{item:difference_sigmali}
    \item $\big| \{j\in[m_L]: \text{there exists } i\in[n] \mbox{ such that } (\tilde\bSigma_{L,i} - \bSigma_{L,i})_{jj}\neq 0 \}\big| \leq \overline{C}n L^{4/3}\tau^{2/3} m_L$.\label{item:difference_sigmali_forall}
    % \item $\big|\{j\in[m_l]: |\la\wb_{l,j},\xb_{l-1,i}\ra|\le 4^L\tau^{2/3}/m_l^{1/2} \} \big| \leq 2\cdot 4^L m_l \tau^{2/3}$ for all $i=1,\ldots,L$ and $i=1,\ldots,n$.
    % \item $\big\| \Wb_{l_2}^\top \big( \prod_{r = l_1}^{l_2 - 1} \bSigma_{r,i}\Wb_{r}^\top \big)\big\|_2 \leq CL$ for all $1\leq l_1 < l_2 \leq L$ and $i=1,\ldots,n$.
    \item $ \big\| \prod_{r=l_1}^{l_2}\tilde \bSigma_{r,i}\tilde \Wb_r^\top \big\|_2 \leq \overline{C} L $ for all $1 \leq l_1 < l_2 \leq L $.\label{item:perturb_matrixnormbound}
    \item $ \vb^\top \big(\prod_{r=l}^{L}\tilde \bSigma_{r,i}\tilde \Wb_r^\top \big) \ab \leq \overline{C}'  L^{5/3} \tau^{1/3}\sqrt{M\log(M)} $ for all $\ab\in \RR^{m_{l - 1}}$ satisfying $\| \ab \|_2 = 1$, $\| \ab \|_0 \leq \overline{C} L^{4/3}\tau^{2/3} m_l$ and any $1 \leq  l \leq L $.
    \label{item:perturbe_lip_sparse}
    \item The squared Frobenius norm of the partial gradient with respect to the weight matrix in the last hidden layer has the following lower bound:
    \begin{align*}
\|\nabla_{\Wb_L}[ L_S(\tilde \Wb)]\|_F^2\ge \underline{C}' \frac{m_L\phi}{n^5}\bigg(\sum_{i=1}^n\ell'(y_i\tilde y_i)\bigg)^2.
\end{align*}
where $\tilde y_i = f_{\tilde \Wb}(\xb_i)$.\label{item:grad_lowerbound}
    \item The spectral norms of gradients and stochastic gradients at each layer have the following upper bounds:
    \begin{align*}
    \big\|\nabla_{\Wb_l}[L_S(\tilde \Wb)]\big\|_2 \le -\frac{\overline{C}L^2M^{1/2}}{n}\sum_{i=1}^n\ell'(y_i\tilde y_i) \mbox{ and } \big\|\tilde \Gb_l\big\|_2 \le -\frac{\overline{C}L^2M^{1/2}}{B}\sum_{i\in\cB}\ell'(y_i\tilde y_i),
    \end{align*}
where $\tilde y_i = f_{\tilde \Wb}(\xb_i)$, $B = |\cB|$ denotes the minibatch size in SGD.
%     \item
%     For all $l\in[L]$, the Frobenius norm of the partial gradient of $L_S(\tilde \Wb)$ with respect to the weight matrix in the $l$-th hidden layer is upper bounded by
%         \begin{align*}
%     \big\|\nabla_{\Wb_l}[L_S(\tilde \Wb)]\big\|_F \le -\frac{4^LM^{1/2}}{n}\sum_{i=1}^n\ell'(y_i\tilde y_i).
% \end{align*}
% Moreover, the Frobenius norm of the stochastic partial gradient $\tilde \Gb_l$ is upper bounded by
% \begin{align*}
%     \big\|\tilde \Gb_l\big\|_F \le -\frac{4^LM^{1/2}}{B}\sum_{i\in\cB}\ell'(y_i\tilde y_i),
% \end{align*}
% where $\cB$ denotes the mini-batch of indices queried for computing stochastic gradient, and $B=|\cB|$ denotes the minibatch size.
\label{item:grad_upperbound}
\end{enumerate}
\end{theorem}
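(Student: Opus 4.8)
The plan is to establish items \ref{item:bound_tilde_W}--\ref{item:grad_upperbound} in the listed order, since each relies on the earlier ones. Item \ref{item:bound_tilde_W} is immediate: $\|\tilde\Wb_l\|_2\le\|\Wb_l\|_2+\tau\le\overline{C}$ by Theorem~\ref{thm:randinit}\ref{ThmResult:randinit_normbounds} and the bound on $\tau$. Before \ref{item:difference_xli} I would record the auxiliary estimate $\|\tilde\xb_{l,i}-\xb_{l,i}\|_2\le\overline{C}L^2\tau$ for all $l,i$ (and the same for $\hat\xb_{l,i}$): using that $\sigma$ is $1$-Lipschitz together with the splitting $\tilde\Wb_l^\top\tilde\xb_{l-1,i}-\Wb_l^\top\xb_{l-1,i}=\tilde\Wb_l^\top(\tilde\xb_{l-1,i}-\xb_{l-1,i})+(\tilde\Wb_l-\Wb_l)^\top\xb_{l-1,i}$, one obtains a recursion $\|\tilde\xb_{l,i}-\xb_{l,i}\|_2\le\overline{C}\|\tilde\xb_{l-1,i}-\xb_{l-1,i}\|_2+2\tau$ (using \ref{item:bound_tilde_W} and $\|\xb_{l-1,i}\|_2\le 2$ from Theorem~\ref{thm:randinit}\ref{ThmResult:randinit_normbounds}), which unrolls to $O(L^2\tau)$ and in particular gives $\|\tilde\xb_{l,i}\|_2\le 2$, a bound reused throughout. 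Item \ref{item:difference_xli} then follows from the identical recursion applied to $\hat\xb_{l,i}-\tilde\xb_{l,i}$, unrolled over layers $r=1,\dots,l$.

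For \ref{item:difference_sigmali}, the key observation is that a neuron $j$ at layer $l$ can have $(\tilde\bSigma_{l,i})_{jj}\neq(\bSigma_{l,i})_{jj}$ only if $|\la\wb_{l,j},\xb_{l-1,i}\ra|\le\delta_j$, where $\delta:=\tilde\Wb_l^\top\tilde\xb_{l-1,i}-\Wb_l^\top\xb_{l-1,i}$ satisfies $\|\delta\|_2\le\overline{C}L^2\tau$ by the auxiliary estimate, \ref{item:bound_tilde_W}, and Theorem~\ref{thm:randinit}\ref{ThmResult:randinit_normbounds}. Splitting the coordinates at a threshold $\beta$: those with $\delta_j\le\beta$ number at most $2m_l^{3/2}\beta$ by Theorem~\ref{thm:randinit}\ref{ThmResult:randinit_activationthreshold}, while those with $\delta_j>\beta$ number at most $\|\delta\|_2^2/\beta^2$; choosing $\beta=\Theta\big((L^2\tau)^{2/3}m_l^{-1/2}\big)$ balances the two and yields $\|\tilde\bSigma_{l,i}-\bSigma_{l,i}\|_0\le\overline{C}L^{4/3}\tau^{2/3}m_l$ (this $2/3$-power optimization is the source of the exponents in the statement and of the choices $\beta=O(L^{4/3}\tau^{2/3}m^{-1/2})$, $s=O(L^{4/3}\tau^{2/3}m)$ mentioned in the remark after Theorem~\ref{thm:randinit}). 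Applying the same bound to $\hat\bSigma_{l,i}$ and using subadditivity of $\|\cdot\|_0$ gives \ref{item:difference_sigmali}; item \ref{item:difference_sigmali_forall} is then a union bound of \ref{item:difference_sigmali} (with $\hat\Wb=\Wb$, layer $L$) over the $n$ data points.

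Items \ref{item:perturb_matrixnormbound} and \ref{item:perturbe_lip_sparse} are the heart of the theorem. I would write each perturbed factor as $\tilde\bSigma_{r,i}\tilde\Wb_r^\top=\bSigma_{r,i}\Wb_r^\top+\tilde\bSigma_{r,i}(\tilde\Wb_r-\Wb_r)^\top+(\tilde\bSigma_{r,i}-\bSigma_{r,i})\Wb_r^\top$, where the middle term has spectral norm $\le\tau$ and the last term has, by \ref{item:difference_sigmali}, at most $s:=\overline{C}L^{4/3}\tau^{2/3}m$ nonzero rows. One first peels off the $\|\cdot\|_2$-small middle terms (each costs only a factor $\tau$, handled by a short perturbation argument), reducing to mask-only perturbations, and then expands the product over the subsets of layers at which the row-sparse term is chosen. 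The decisive estimate is that a run of consecutive unperturbed factors squeezed between two row-sparse terms becomes a matrix that is simultaneously row- and column-sparse, hence of spectral norm $O\big(L\sqrt{s\log M/m}\big)=O\big(L^{5/3}\tau^{1/3}\sqrt{\log M}\big)$ by the double-sparse bound Theorem~\ref{thm:randinit}\ref{ThmResult:randinit_matproductdoublesparsenorm}, whereas any run not so flanked contributes only $O(L)$ by Theorem~\ref{thm:randinit}\ref{ThmResult:randinit_matproductnorm}; summing the resulting geometric series over all configurations (it converges because $\tau$ is polynomially small in $n$, $L^{-1}$, $\phi$) shows the perturbed product differs from the unperturbed one by a lower-order term, which gives \ref{item:perturb_matrixnormbound}. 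For \ref{item:perturbe_lip_sparse} the input $\ab$ is already supported on only $O(s)$ coordinates and the output is contracted with the fixed $\vb$, so the leading unperturbed term is bounded directly by the sparse bound Theorem~\ref{thm:randinit}\ref{ThmResult:randinit_vecmatproductsparsenorm}, which is already of the target order $O\big(L^{5/3}\tau^{1/3}\sqrt{M\log M}\big)$, and the error terms are handled as above. \emph{I expect this term-by-term control of the product expansion --- verifying that every configuration of small/sparse error factors genuinely gains enough powers of $\tau^{1/3}\sqrt{\log M}$ to defeat the combinatorially many terms while keeping the dependence on $L$ polynomial --- to be the main obstacle; the remaining items are comparatively routine.}

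Finally, since $\nabla_{\Wb_L}[f_{\tilde\Wb}(\xb_i)]=\tilde\xb_{L-1,i}\,\vb^\top\tilde\bSigma_{L,i}$ is an outer product, the $j$-th column of $\nabla_{\Wb_L}[L_S(\tilde\Wb)]$ equals $\tfrac1n\sum_i \ell'(y_i\tilde y_i)\,y_i\,v_j\,\sigma'(\la\tilde\wb_{L,j},\tilde\xb_{L-1,i}\ra)\,\tilde\xb_{L-1,i}$. Taking $a_i=-\ell'(y_i\tilde y_i)\ge 0$ (legitimate since $\ell'\le 0$ by Assumption~\ref{assump:loss}) and noting $\|\ab\|_\infty\ge\tfrac1n\|\ab\|_1$, I would apply Theorem~\ref{thm:randinit}\ref{ThmResult:randinit_gradientuniformlowerbound} at the unperturbed weights (the $\pm1$ factors $v_j,y_i$ being immaterial, as that lower bound is driven by a single, well-separated data point), and transfer to the perturbed network: by \ref{item:difference_sigmali_forall} all but $O(nL^{4/3}\tau^{2/3}m_L)$ of the $m_L$ neurons keep their activation pattern, and on those $\tilde\xb_{L-1,i}$ differs from $\xb_{L-1,i}$ by only $O(L^2\tau)$; since $\tau\le\underline{C}\phi^{3/2}n^{-3}L^{-2}$, both perturbations are negligible against the $\underline{C}'m_L\phi/n$ good neurons and the $\underline{C}''\|\ab\|_\infty/n$ column-norm floor, so at least $\tfrac12\underline{C}'m_L\phi/n$ columns retain squared norm $\gtrsim(\|\ab\|_\infty/n)^2$, giving $\|\nabla_{\Wb_L}[L_S(\tilde\Wb)]\|_F^2\gtrsim \tfrac{m_L\phi}{n^5}\big(\sum_{i=1}^n\ell'(y_i\tilde y_i)\big)^2$ --- which is precisely where the constraint $\tau\le\underline{C}\phi^{3/2}n^{-3}L^{-2}$ is used. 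Item \ref{item:grad_upperbound} is then easy: $\nabla_{\Wb_l}[f_{\tilde\Wb}(\xb_i)]=\tilde\xb_{l-1,i}\,\vb^\top\big(\prod_{r=l+1}^{L}\tilde\bSigma_{r,i}\tilde\Wb_r^\top\big)\tilde\bSigma_{l,i}$ is rank one, so its spectral norm equals $\|\tilde\xb_{l-1,i}\|_2$ times $\big\|\vb^\top\big(\prod_{r=l+1}^{L}\tilde\bSigma_{r,i}\tilde\Wb_r^\top\big)\tilde\bSigma_{l,i}\big\|_2\le\sqrt{m_L}\cdot O(L)$ by \ref{item:perturb_matrixnormbound}, which together with $\|\tilde\xb_{l-1,i}\|_2\le 2$ and averaging over $i$ (resp.\ over $\cB$), using $|\ell'(y_i\tilde y_i)|=-\ell'(y_i\tilde y_i)$, gives the stated $O(L^2M^{1/2})$ upper bounds on $\|\nabla_{\Wb_l}[L_S(\tilde\Wb)]\|_2$ and $\|\tilde\Gb_l\|_2$ (the extra power of $L$ in the statement being slack).
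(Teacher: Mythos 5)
Your plan is close to the paper's structure for most items, and you correctly identify the combinatorial expansion controlling \ref{item:perturb_matrixnormbound}--\ref{item:perturbe_lip_sparse} as the technical core. But there is a genuine gap at the very first nontrivial step, and it propagates.

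For the auxiliary estimate $\|\tilde\xb_{l,i}-\xb_{l,i}\|_2\le\overline{C}L^2\tau$ and for item \ref{item:difference_xli}, you propose the Lipschitz recursion $a_l\le\overline{C}\,a_{l-1}+2\tau$ with $a_l=\|\tilde\xb_{l,i}-\xb_{l,i}\|_2$ and claim it unrolls to $O(L^2\tau)$. It does not: unrolling gives $a_L\le 2\tau\sum_{j<L}\overline{C}^j=\Theta(\overline{C}^L\tau)$, and the constant $\overline{C}$ here is the spectral-norm bound from Theorem~\ref{thm:randinit}\ref{ThmResult:randinit_normbounds}, which for Gaussian initialization is on the order of $2\sqrt{2}>1$. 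Thus the naive recursion yields $e^{\Omega(L)}\tau$, not $O(L^2\tau)$, and the exponential loss poisons items \ref{item:difference_sigmali}, \ref{item:difference_sigmali_forall}, \ref{item:perturbe_lip_sparse} and \ref{item:grad_lowerbound} downstream, each of which needs the $\|\delta\|_2=O(L^2\tau)$ input to get $\tau^{2/3}$ and $\tau^{1/3}$ scalings that are polynomial in $L$. The paper avoids this by \emph{not} bounding layer by layer: it writes $\hat\xb_{l,i}-\tilde\xb_{l,i}$ exactly as a telescoping sum $\sum_{r=1}^l\big[\prod_{t=r+1}^l(\hat\bSigma_{t,i}+\check\bSigma_{t,i})\hat\Wb_t^\top\big](\hat\bSigma_{r,i}+\check\bSigma_{r,i})(\hat\Wb_r-\tilde\Wb_r)^\top\tilde\xb_{r-1,i}$, where $\check\bSigma_{l,i}$ is a carefully chosen "fractional mask" (entries in $[-1,1]$) that turns the ReLU difference $\sigma(\hat\Wb_l^\top\hat\xb_{l-1,i})-\sigma(\tilde\Wb_l^\top\tilde\xb_{l-1,i})$ into an exact linear identity, and then each of the $l$ summands is bounded via the perturbed-product bound of item \ref{item:perturb_matrixnormbound} (Lemma~\ref{lemma:perturbationmatrixnormbound}), which delivers $O(L)$ for the whole $l$-fold product rather than $\overline{C}^{\,l}$. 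In other words, items \ref{item:difference_xli} and \ref{item:difference_sigmali} depend on item \ref{item:perturb_matrixnormbound}, and the latter must be stated for arbitrary $\pm$-bounded, $s$-row-sparse masks (not only the actual activation patterns) precisely so it can be invoked with $\hat\bSigma+\check\bSigma$. Your proposed ordering \ref{item:difference_xli}$\to$\ref{item:difference_sigmali}$\to$\ref{item:perturb_matrixnormbound} therefore cannot be made to work as written; you would need to reorder and run the two-way induction (sparsity bound at layer $l$ from hidden-state bound at layers $<l$, then hidden-state bound at layer $l$ from the just-proved sparsity bound plus the product lemma), and you need the $\check\bSigma$ device, which is absent from your sketch.

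The rest of the proposal matches the paper in spirit: the $\beta$-threshold balancing for item \ref{item:difference_sigmali}, the union bound for item \ref{item:difference_sigmali_forall}, the factor-by-factor expansion $\tilde\bSigma_r\tilde\Wb_r^\top=\bSigma_r\Wb_r^\top+(\tilde\bSigma_r-\bSigma_r)\Wb_r^\top+\tilde\bSigma_r(\tilde\Wb_r-\Wb_r)^\top$ together with the double-sparse bound Theorem~\ref{thm:randinit}\ref{ThmResult:randinit_matproductdoublesparsenorm} for items \ref{item:perturb_matrixnormbound}--\ref{item:perturbe_lip_sparse}, the transfer of Theorem~\ref{thm:randinit}\ref{ThmResult:randinit_gradientuniformlowerbound} from $\Wb$ to $\tilde\Wb$ using the flipped-mask count and the hidden-state perturbation for item \ref{item:grad_lowerbound}, and the rank-one decomposition for item \ref{item:grad_upperbound} (where your $\|\tilde\xb_{l-1,i}\|_2\le 2$ bound is indeed tighter than the paper's $O(L)$, confirming the extra power of $L$ in the statement is slack).
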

The gradient lower bound provided in \ref{item:grad_lowerbound} implies that within the perturbation region, the empirical loss function of deep neural network enjoys good local curvature properties, which play an essential role in the convergence proof of (stochastic) gradient descent. The gradient upper bound in \ref{item:grad_upperbound} %will be utilized to 
quantifies how much the weight matrices of the neural network would change during (stochastic) gradient descent, which is utilized to guarantee that the weight matrices won't escape from the perturbation region during the training process. 

\subsection{Proof of Theorem \ref{thm:GD}}\label{subsec:convergence_gd}
We organize our proof as the following three steps: (1) we first assume that during gradient descent, each iterate is in the preset perturbation region centering at $\Wb^{(0)}$ with radius $\tau$, and use the results in Theorem \ref{thm:perturbation} to establish the convergence guarantee; (2) we prove the upper bound of the number of iteration such that the distance between the iterate and the initial point does not exceed $\tau$; (3) we compute the minimum number of hidden nodes such that gradient descent achieves the target accuracy before exceeding the upper bound derived in step (2). 

For step (1), the following lemma provides the convergence guarantee of gradient descent while assuming all iterates are in the preset perturbation region, i.e., $\|\Wb_l^{(k)}-\Wb_l^{(0)}\|_2\le \tau$ for all $l=1,\ldots,L$ and $k=1,\ldots,K$.

\begin{lemma}\label{lemma:gd_converge}
Suppose that $\Wb_1^{(0)},\dots,\Wb_L^{(0)}$ are generated via Gaussian initialization, and all results \ref{ThmResult:randinit_normbounds}-\ref{ThmResult:randinit_gradientuniformlowerbound} in Theorem~\ref{thm:randinit} hold. Under Assumptions \ref{assump:loss}-\ref{assump:m_scaling}, if $\|\Wb_l^{(k)}-\Wb_l^{(0)}\|_2\le \tau$ for all $l=1,\ldots,L$ and $k=1,\ldots,K$ with perturbation level $\tau = O(n^{-9}L^{-17}\phi^3)$, the step size $\eta = O( n^{-3}L^{-9}m^{-1}\phi)$ and
\begin{align*}
K\eta=\left\{\begin{array}{ll}
\tilde \Omega\big(n^{5-2p}/(m\phi)\big)& 0\le p <\frac{1}{2}\\
\tilde \Omega(n^4/(m\phi)\big)\cdot \Omega\big(\log(1/\epsilon))& p=\frac{1}{2}\\
\tilde \Omega\big(n^{5-2p}\epsilon^{1-2p}/(m\phi)\big)& \frac{1}{2}<p \le 1
\end{array} \right.  
\end{align*}
when $\alpha_0 = \infty$,
\begin{align*}
K\eta=\left\{\begin{array}{ll}
\tilde \Omega\big(n^5/(m\phi)\big)& 0\le p <\frac{1}{2}\\
\tilde \Omega\big(n^5/(m\phi)\big)+\tilde \Omega(n^4/(m\phi)\big)\cdot \Omega\big(\log(1/\epsilon))& p=\frac{1}{2}\\
\tilde \Omega\big(n^5/(m\phi)\big)+\tilde \Omega\big(n^{5-2p}/(m\phi)\big)\cdot\Omega\big(\epsilon^{1-2p}\big)& \frac{1}{2}<p \le 1
\end{array} \right.  
\end{align*}
when $\alpha_0 < \infty$, then gradient descent is able to find a point $\Wb^{(K)}$ such that $L_S(\Wb^{(K)})\le \epsilon$.
\end{lemma}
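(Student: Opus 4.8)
The plan is to establish a one-step descent inequality that holds whenever the current iterate lies in the $\tau$-perturbation region, to convert it through the gradient lower bound in Theorem~\ref{thm:perturbation}\ref{item:grad_lowerbound} into a scalar recursion on the loss values $a_k := L_S(\Wb^{(k)})$, and then to solve that recursion by a case analysis on the exponent $p$ (and on whether $\alpha_0$ is finite) in Assumption~\ref{assump:derivative_loss}. The perturbation-region hypothesis of the lemma is exactly what lets us invoke Theorem~\ref{thm:perturbation} at every iterate $\Wb^{(k)}$ and also at the pair $(\Wb^{(k)},\Wb^{(k+1)})$.

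For the descent inequality, I would start from the $\lambda$-smoothness of $\ell$ (Assumption~\ref{assump:smooth}):
\begin{align*}
\ell\big(y_i\hat y_i^{(k+1)}\big) \le \ell\big(y_i\hat y_i^{(k)}\big) + \ell'\big(y_i\hat y_i^{(k)}\big)\,y_i\big(\hat y_i^{(k+1)}-\hat y_i^{(k)}\big) + \frac{\lambda}{2}\big(\hat y_i^{(k+1)}-\hat y_i^{(k)}\big)^2,
\end{align*}
where $\hat y_i^{(k)} = f_{\Wb^{(k)}}(\xb_i)$. Averaging over $i$ and writing $\hat y_i^{(k+1)}-\hat y_i^{(k)} = \la\nabla f_{\Wb^{(k)}}(\xb_i),\Wb^{(k+1)}-\Wb^{(k)}\ra + e_i^{(k)}$, the first-order term sums to $\la\nabla L_S(\Wb^{(k)}),\Wb^{(k+1)}-\Wb^{(k)}\ra = -\eta\|\nabla L_S(\Wb^{(k)})\|_F^2 \le -\eta\|\nabla_{\Wb_L}L_S(\Wb^{(k)})\|_F^2$, and the residuals $e_i^{(k)}$ must be controlled by a semismoothness estimate coming from Theorem~\ref{thm:perturbation}: since $\Wb^{(k)}$ and $\Wb^{(k+1)}$ both lie within $\tau$ of $\Wb^{(0)}$, parts \ref{item:difference_xli}, \ref{item:difference_sigmali} and \ref{item:perturbe_lip_sparse} yield $\max_i|e_i^{(k)}| \le \tilde O\big(\poly(L)\,\tau^{1/3}\sqrt m\big)\sum_{l=1}^L\|\Wb_l^{(k+1)}-\Wb_l^{(k)}\|_2$, while the quadratic term is bounded using part \ref{item:difference_xli} and $\|\vb\|_2 = \sqrt{m_L}$. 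Substituting $\|\Wb_l^{(k+1)}-\Wb_l^{(k)}\|_2 = \eta\|\nabla_{\Wb_l}L_S(\Wb^{(k)})\|_2$ and the gradient upper bound \ref{item:grad_upperbound}, the residual and quadratic contributions become at most constant multiples of $\tilde O(\poly(L)\tau^{1/3}n^3\phi^{-1})$ and $\tilde O(\poly(L)\eta m n^3\phi^{-1})$ times $\eta\,\frac{m\phi}{n^5}\big(\sum_i\ell'(y_i\hat y_i^{(k)})\big)^2$; the choices $\tau = O(n^{-9}L^{-17}\phi^3)$ and $\eta = O(n^{-3}L^{-9}m^{-1}\phi)$ in the hypotheses are precisely what make both at most $\tfrac14\eta\|\nabla_{\Wb_L}L_S(\Wb^{(k)})\|_F^2$ after invoking Theorem~\ref{thm:perturbation}\ref{item:grad_lowerbound}. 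This yields
\begin{align*}
a_{k+1} \le a_k - \tfrac12\eta\big\|\nabla_{\Wb_L}L_S(\Wb^{(k)})\big\|_F^2 \le a_k - c\,\eta\,\frac{m\phi}{n^5}\Big(\sum_{i=1}^n\ell'\big(y_i\hat y_i^{(k)}\big)\Big)^2
\end{align*}
for an absolute constant $c>0$, using Theorem~\ref{thm:perturbation}\ref{item:grad_lowerbound} and $m_L\ge m$.

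Next I would close the recursion using Assumption~\ref{assump:derivative_loss}. Since $\ell'\le 0$, $\big(\sum_i\ell'(y_i\hat y_i^{(k)})\big)^2 = \big(\sum_i|\ell'(y_i\hat y_i^{(k)})|\big)^2 \ge \big(\sum_i\min\{\alpha_0,\alpha_1\ell^p(y_i\hat y_i^{(k)})\}\big)^2$. When $\alpha_0=\infty$, subadditivity of $t\mapsto t^p$ ($p\le 1$) gives $\sum_i\ell^p(y_i\hat y_i^{(k)})\ge\big(\sum_i\ell(y_i\hat y_i^{(k)})\big)^p = (na_k)^p$, hence $a_{k+1}\le a_k - c'\eta\frac{m\phi}{n^{5-2p}}a_k^{2p}$. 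I then solve this scalar recursion: for $p=\tfrac12$ it is geometric, $a_{k+1}\le(1-\Theta(\eta m\phi/n^4))a_k$ with contraction factor in $(0,1)$ by the step-size choice, so $a_K\le\epsilon$ once $K\eta = \tilde\Omega(n^4/(m\phi))\cdot\Omega(\log(1/\epsilon))$, using $a_0 := L_S(\Wb^{(0)}) = \tilde O(1)$ (Theorem~\ref{thm:randinit}\ref{ThmResult:randinit_outputbound} and continuity of $\ell$); for $p<\tfrac12$, concavity of $t\mapsto t^{1-2p}$ gives $a_{k+1}^{1-2p}\le a_k^{1-2p}-\Theta(\eta m\phi/n^{5-2p})$, so $a_K\le\epsilon$ once $K\eta = \tilde\Omega(n^{5-2p}/(m\phi))$ with no $\epsilon$-dependence; for $p>\tfrac12$, convexity of $t\mapsto t^{1-2p}$ on $(0,\infty)$ gives $a_K^{1-2p}\ge\Theta(K\eta m\phi/n^{5-2p})$, so $a_K\le\epsilon$ once $K\eta = \tilde\Omega(n^{5-2p}\epsilon^{1-2p}/(m\phi))$. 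When $\alpha_0<\infty$ I would split into two phases: as long as $\max_i\ell(y_i\hat y_i^{(k)})$ exceeds the crossover value $(\alpha_0/\alpha_1)^{1/p}$ we only get $\sum_i|\ell'|\ge\alpha_0$, so $a_{k+1}\le a_k-\Theta(\eta m\phi\alpha_0^2/n^5)$ decreases by a fixed amount, and since $\max_i\ell(y_i\hat y_i^{(k)})\le n a_k$ this phase lasts at most $\tilde O(n^5/(m\phi))$ iterations; afterwards all per-example losses stay below the crossover (as $a_k$ is non-increasing), so $\min\{\alpha_0,\alpha_1\ell^p\}=\alpha_1\ell^p$ and the analysis reduces exactly to the $\alpha_0=\infty$ case, adding the corresponding term. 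Comparing the total phase cost with the two displayed formulas for $K\eta$ finishes the proof.

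The step I expect to be the main obstacle is the semismoothness estimate bounding the residuals $e_i^{(k)}$ for the \emph{nonsmooth} ReLU network. One has to show that even though the diagonal activation matrices $\bSigma_{l,i}$ differ between $\Wb^{(k)}$ and $\Wb^{(k+1)}$, only $\tilde O(L^{4/3}\tau^{2/3}m_l)$ of their entries flip per layer (Theorem~\ref{thm:perturbation}\ref{item:difference_sigmali}), and the output change caused by these flipped coordinates is tiny because the relevant \emph{sparse} vector–matrix products through the remaining layers are small (Theorem~\ref{thm:perturbation}\ref{item:perturbe_lip_sparse}); propagating this through all $L$ layers together with the hidden-layer perturbation bound \ref{item:difference_xli}, and then bookkeeping the resulting $\poly(n,L,\phi^{-1})$ factors so that they are absorbed by the prescribed $\tau$ and $\eta$, is the delicate part. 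Once the descent inequality is in place, the scalar-recursion analysis and the two-phase argument for $\alpha_0<\infty$ are elementary.
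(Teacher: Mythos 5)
Your proposal is correct and reproduces the paper's argument in all essentials: the $\lambda$-smoothness one-step expansion, the decomposition of the output increment $\hat y_i^{(k+1)}-\hat y_i^{(k)}$ into a gradient inner-product term plus ReLU semismoothness residuals (this is exactly what the paper's Lemma~\ref{lemma:bound_delta_i} packages as $u_{l,i}$ together with the error terms $I^1_{l,i},I^3_{l,i}$), the algebraic identity turning $\sum_i \ell'(y_i\hat y_i^{(k)})\, y_i u_{l,i}$ into $-n\eta\|\nabla_{\Wb_l}L_S(\Wb^{(k)})\|_F^2$, the retention of only the layer-$L$ term combined with Theorem~\ref{thm:perturbation}\ref{item:grad_lowerbound}, and the solution of the scalar recursion via the concavity/convexity of $t\mapsto t^{1-2p}/(1-2p)$, which is the paper's Lemma~\ref{lemma:support1}. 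The one place you depart from the paper is the $\alpha_0<\infty$ case: you propose a two-phase argument keyed on whether $\max_i\ell(y_i\hat y_i^{(k)})$ exceeds the crossover $(\alpha_0/\alpha_1)^{1/p}$, whereas the paper lower-bounds $-\sum_i\ell'$ by $\min\{\alpha_0,n^p L_S^p\}$ and applies the elementary inequality $\min\{a,b\}\ge (1/a+1/b)^{-1}$ to produce a \emph{single} one-step potential inequality whose telescoping yields both summands of the $K\eta$ bound simultaneously. The paper's route is slightly cleaner, since it sidesteps the bookkeeping you would otherwise need for the fact that your ``phase~1'' iterations (indexed by $\max_i\ell_i$ rather than by $n a_k$) are not a priori a contiguous prefix of $\{0,\ldots,K\}$; but your two-phase argument, once that detail is tightened (e.g.\ by redefining the transition in terms of $n a_k$ and using monotonicity of $a_k$), produces the same rates.
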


The following lemma provides the upper bound of the iteration number such that the distance between the iterate and the initial point does not exceed the perturbation radius $\tau$.

\begin{lemma}\label{lemma:upperbound_tau}
Suppose that $\Wb_1^{(0)},\dots,\Wb_L^{(0)}$ are generated by Gaussian initialization, and all results \ref{ThmResult:randinit_normbounds}-\ref{ThmResult:randinit_gradientuniformlowerbound} in Theorem~\ref{thm:randinit} hold. Then there exist a constant $T = O(L^{-4}n^{-3}\tau^2\phi)$ such for all iteration number $k$ and step size $\eta$ satisfying $k\eta \le T$, it holds that $\|\Wb_l^{(k)}-\Wb_l^{(0)}\|_2\le \tau$ for all $l\in[L]$.
\end{lemma}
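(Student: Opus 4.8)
I will prove by induction on $k$ that for every $k$ with $k\eta\le T$ all the iterates $\Wb^{(0)},\dots,\Wb^{(k)}$ lie within spectral distance $\tau$ of $\Wb^{(0)}$; the base case $k=0$ is trivial. For the inductive step, assume the statement for some $k$ with $(k+1)\eta\le T$. Then $\Wb^{(0)},\dots,\Wb^{(k)}$ all lie in the perturbation region of radius $\tau$, so Theorem~\ref{thm:perturbation} applies at each of them. Writing $\hat y_i^{(t)}:=f_{\Wb^{(t)}}(\xb_i)$ and $a_t:=-\frac1n\sum_{i=1}^n\ell'(y_i\hat y_i^{(t)})\ge0$, the gradient descent update together with the triangle inequality and the gradient upper bound in Theorem~\ref{thm:perturbation} gives, for each $l\in[L]$,
\begin{align*}
\big\|\Wb_l^{(k+1)}-\Wb_l^{(0)}\big\|_2\le \eta\sum_{t=0}^{k}\big\|\nabla_{\Wb_l}L_S(\Wb^{(t)})\big\|_2\le \overline{C}L^2M^{1/2}\eta\sum_{t=0}^{k}a_t,
\end{align*}
so it suffices to bound $\sum_{t=0}^{k}a_t$.

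The bound on $\sum_{t=0}^{k}a_t$ comes from a descent inequality combined with the gradient lower bound. Using the $\lambda$-smoothness of $\ell$ (Assumption~\ref{assump:smooth}) and the norm estimates in Theorems~\ref{thm:randinit} and~\ref{thm:perturbation}, one obtains a semi-smoothness estimate yielding $L_S(\Wb^{(t+1)})\le L_S(\Wb^{(t)})-\frac{\eta}{2}\sum_{l=1}^L\|\nabla_{\Wb_l}L_S(\Wb^{(t)})\|_F^2$ once $\eta$ is sufficiently small (as it is in the regime of Theorem~\ref{thm:GD}); crucially this only requires $\Wb^{(t)}$ to lie in the perturbation region and the single-step displacement $\|\Wb^{(t+1)}-\Wb^{(t)}\|_2$ to be small, which is automatic since $\eta$ is tiny, so it is available for every $t\in\{0,\dots,k\}$. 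Retaining only the last layer and invoking the gradient lower bound in Theorem~\ref{thm:perturbation}, namely $\|\nabla_{\Wb_L}L_S(\Wb^{(t)})\|_F^2\ge\underline{C}'m\phi\, a_t^2/n^3$, then telescoping over $t=0,\dots,k$ gives
\begin{align*}
\sum_{t=0}^{k}a_t^2\le \frac{2n^3\big(L_S(\Wb^{(0)})-L_S(\Wb^{(k+1)})\big)}{\underline{C}'\eta m\phi}\le \frac{2n^3 L_S(\Wb^{(0)})}{\underline{C}'\eta m\phi},
\end{align*}
and Cauchy--Schwarz yields $\sum_{t=0}^{k}a_t\le\sqrt{(k+1)\sum_{t=0}^{k}a_t^2}\le\sqrt{2n^3(k+1)L_S(\Wb^{(0)})/(\underline{C}'\eta m\phi)}$.

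Substituting back into the displacement bound and using $M\le 2m$ (Assumption~\ref{assump:m_scaling}) to cancel the width, for every $l\in[L]$,
\begin{align*}
\big\|\Wb_l^{(k+1)}-\Wb_l^{(0)}\big\|_2\le \overline{C}L^2M^{1/2}\eta\cdot\sqrt{\frac{2n^3(k+1)L_S(\Wb^{(0)})}{\underline{C}'\eta m\phi}}\le C'L^2\sqrt{\frac{(k+1)\eta\, n^3 L_S(\Wb^{(0)})}{\phi}}
\end{align*}
for an absolute constant $C'$. Since $L_S(\Wb^{(0)})=O(1)$ --- which follows from the output bound $|\hat y_i^{(0)}|\le\overline{C}'\sqrt{\log(n/\delta)}$ in Theorem~\ref{thm:randinit} together with the continuity of $\ell$ --- taking $T=O(L^{-4}n^{-3}\tau^2\phi)$ with a sufficiently small absolute constant forces the right-hand side to be at most $\tau$ whenever $k\eta\le T$. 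This closes the induction and proves the lemma.

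The main obstacle is the semi-smoothness (descent) inequality for the non-smooth ReLU network: there is no Hessian to bound, so one must expand $L_S(\Wb^{(t+1)})-L_S(\Wb^{(t)})$ directly, identify the first-order term with $-\eta\|\nabla L_S(\Wb^{(t)})\|_F^2$, and control the remainder via Assumption~\ref{assump:smooth} together with Theorem~\ref{thm:perturbation}'s bounds on how much the hidden representations and ReLU activation patterns can move under one step --- this is the analytic heart shared with Lemma~\ref{lemma:gd_converge}. A minor point is the induction-boundary bookkeeping: Theorem~\ref{thm:perturbation} is applied on the radius-$\tau$ region, which is legitimate because a single gradient step changes each weight matrix by only $\tilde O(\eta L^2\sqrt m)\ll\tau$ under the step size considered. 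Finally, in contrast to Lemma~\ref{lemma:gd_converge}, this argument does not branch on the loss-function parameters $\alpha_0$ and $p$, since it uses only the gradient lower bound and not the finer rate at which $L_S$ decreases.
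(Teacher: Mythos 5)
Your proof is correct and follows essentially the same route as the paper's: both bound the cumulative displacement by $\eta\sum_t\|\nabla_{\Wb_l}L_S(\Wb^{(t)})\|_2$, use the gradient upper bound from Theorem~\ref{thm:perturbation}(viii), telescope the one-step descent inequality \eqref{eq:decrease_onestep3} (established in the proof of Lemma~\ref{lemma:gd_converge} via the gradient lower bound) to control $\sum_t a_t^2$, and finish with Cauchy--Schwarz. The only cosmetic difference is the ordering of Cauchy--Schwarz relative to the gradient-norm bound, which yields the same rate; your explicit remark about the single-step-displacement bookkeeping needed to apply Theorem~\ref{thm:perturbation} across the induction boundary is a point the paper leaves implicit.
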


Now we are ready to prove Theorem~\ref{thm:GD} based on Lemmas \ref{lemma:gd_converge} and \ref{lemma:upperbound_tau}.
\begin{proof}[Proof of Theorem \ref{thm:GD}]
The proof is straightforward. By Lemma \ref{lemma:upperbound_tau}, it suffices to show that the lower bound of $K\eta$ derived in Lemma \ref{lemma:gd_converge} is smaller than $T = O(L^{-4}n^{-3}\tau^2\phi) = O(L^{-38}n^{-21}\phi^7)$, where we plug in the assumption that $\tau = O(n^{-9}L^{-17}\phi^3)$. Therefore, we can derive the following lower bound on the number of hidden nodes per layer, i.e., $m$: 
\begin{itemize}
    \item $\alpha_0=\infty$: \begin{align*}
m=\left\{\begin{array}{ll}
\tilde \Omega\big(n^{26-2p}L^{38}/\phi^8\big)& 0\le p <\frac{1}{2}\\
\tilde \Omega(n^{25}L^{38}/\phi^8\big)\cdot \Omega\big(\log(1/\epsilon)\big)& p=\frac{1}{2}\\
\tilde \Omega\big(n^{26-2p}L^{38}/\phi^8\big)\cdot\Omega\big(\epsilon^{1-2p}\big)& \frac{1}{2}<p \le 1.
\end{array} \right.  
\end{align*}
\item $\alpha_0 < \infty$:
\begin{align*}
m=\left\{\begin{array}{ll}
\tilde \Omega\big(n^{26}L^{38}/\phi^8\big)& 0\le p <\frac{1}{2}\\
\tilde \Omega\big(n^{26}L^{38}/\phi^8\big)+\tilde \Omega\big(n^{25}L^{38}/\phi^8\big)\cdot \Omega\big(\log(1/\epsilon)\big)& p=\frac{1}{2}\\
\tilde \Omega\big(n^{26-2p}L^{38}/\phi^8\big)+\tilde \Omega \big(n^{26}L^{38}/\phi^8\big)\cdot\Omega(\epsilon^{1-2p})& \frac{1}{2}<p \le 1.
\end{array} \right.  
\end{align*}

\end{itemize}
Moreover, the required number of iterations, i.e., $K$ can be directly derived by combining the results of $K\eta$ in Lemma \ref{lemma:gd_converge} and the choice of the step size $\eta = O(n^{-3}L^{-9}m^{-1}\phi)$, thus we omit the detail here.
\end{proof}

\subsection{Proof of Theorem \ref{thm:SGD}}\label{subsec:convergence_sgd}
Similar to the proof for gradient descent, we first deliver the following lemma which characterizes the convergence of stochastic gradient descent for the training of ReLU network under the assumption that all iterates are in the preset perturbation region.

\begin{lemma}\label{lemma:sgd_converge}
Suppose that $\Wb_1^{(0)},\dots,\Wb_L^{(0)}$ are generated by Gaussian initialization, and all results \ref{ThmResult:randinit_normbounds}-\ref{ThmResult:randinit_gradientuniformlowerbound} in Theorem~\ref{thm:randinit} hold. Under Assumptions \ref{assump:loss}-\ref{assump:m_scaling} and \ref{assump:derivative_loss_upper}, if $\|\Wb_l^{(k)}-\Wb_l^{(0)}\|_2\le \tau$ for all $l=1,\ldots,L$ and $k=1,\ldots,K$ with perturbation level $\tau = O(n^{-9}L^{-17}\phi^3)$, the step size $\eta = O( Bn^{-4}L^{-9}m^{-1}\phi)$, and
\begin{align*}
K\eta=\left\{\begin{array}{ll}
\tilde \Omega\big(n^{5-2p}/(m\phi)\big)& 0\le p <\frac{1}{2}\\
\tilde \Omega(n^4/(m\phi)\big)\cdot \Omega\big(\log(1/\epsilon))& p=\frac{1}{2}\\
\tilde \Omega\big(n^{5-2p}\epsilon^{1-2p}/(m\phi)\big)& \frac{1}{2}<p \le 1
\end{array} \right.  
\end{align*}
when $\alpha_0 = \infty$, 
\begin{align*}
K\eta=\left\{\begin{array}{ll}
\tilde \Omega\big(n^5/(m\phi)+n^{5-2p}/(m\phi)\big)& 0\le p <\frac{1}{2}\\
\tilde \Omega\big(n^5/(m\phi)\big)+\tilde \Omega(n^4/(m\phi)\big)\cdot \Omega\big(\log(1/\epsilon))& p=\frac{1}{2}\\
\tilde \Omega\big(n^5/(m\phi)+n^{5-2p}\epsilon^{1-2p}/(m\phi)\big)& \frac{1}{2}<p \le 1
\end{array} \right.  
\end{align*}
when $\alpha_0 < \infty$, 
then stochastic gradient descent is able to find a point $\Wb^{(K)}$ such that $L_S(\Wb^{(K)})\le \epsilon$.
\end{lemma}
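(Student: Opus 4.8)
The plan is to mirror the structure of the gradient-descent analysis (Lemma~\ref{lemma:gd_converge}) but carefully account for the randomness introduced by minibatch sampling. Throughout I will condition on the event that all results \ref{ThmResult:randinit_normbounds}--\ref{ThmResult:randinit_gradientuniformlowerbound} of Theorem~\ref{thm:randinit} hold, and on the hypothesis that every iterate stays in the perturbation region $\|\Wb_l^{(k)}-\Wb_l^{(0)}\|_2\le\tau$, so that all conclusions of Theorem~\ref{thm:perturbation} are available at each step.

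First I would derive a one-step descent inequality. Writing $\ell_i(\Wb) = \ell(y_i f_{\Wb}(\xb_i))$ and using a Taylor expansion of $L_S$ around $\Wb^{(k)}$ along the SGD direction $-\eta\Gb^{(k)}$, the ReLU nonsmoothness is handled as in the proof of Lemma~\ref{lemma:gd_converge}: the change in activation patterns across the small step is controlled by item \ref{item:difference_sigmali} / \ref{item:difference_sigmali_forall} of Theorem~\ref{thm:perturbation}, and the effective ``smoothness'' contribution is bounded by the product of the sparse-Lipschitz bound \ref{item:perturbe_lip_sparse} and the gradient upper bound \ref{item:grad_upperbound}. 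Taking conditional expectation over the minibatch $\cB^{(k)}$, the stochastic gradient is unbiased for $\nabla_{\Wb_l} L_S(\Wb^{(k)})$, so the first-order term yields $-\eta\sum_l\|\nabla_{\Wb_l}L_S(\Wb^{(k)})\|_F^2$, while the second-order term is bounded using $\EE\|\Gb_l^{(k)}\|_2^2$; here Assumption~\ref{assump:derivative_loss_upper} is essential, because it lets me bound the per-example $|\ell'(y_i f(\xb_i))|$ above by $\min\{\rho_0,\rho_1\ell^p\}$ and hence relate the stochastic-gradient variance to $L_S(\Wb^{(k)})$ itself (up to the $\rho_0/\alpha_0=O(1)$ ratio). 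Combining with the gradient lower bound \ref{item:grad_lowerbound}, which gives $\sum_l\|\nabla_{\Wb_l}L_S(\Wb^{(k)})\|_F^2 \ge \underline{C}'m_L\phi n^{-5}\big(\sum_i\ell'(y_i\hat y_i^{(k)})\big)^2$, and then lower-bounding $\big(\sum_i\ell'(y_i\hat y_i^{(k)})\big)^2$ in terms of $\min\{\alpha_0^2 n^2, \alpha_1^2 n^{2-2p} L_S^{2p}\}$ via Assumption~\ref{assump:derivative_loss} (convexity of $x\mapsto x^{2p}$ when needed, or Jensen), I obtain a recursion of the form $\EE[L_S(\Wb^{(k+1)})\mid \cF_k] \le L_S(\Wb^{(k)}) - c\,\eta\, m\phi n^{-5}\cdot g\big(L_S(\Wb^{(k)})\big)$ for an appropriate increasing function $g$ that degenerates depending on whether $p<1/2$, $p=1/2$, or $p>1/2$, plus a positive error term $O(\eta^2 m^2 \cdot \text{poly}(L)\cdot(\text{stuff}))$ from the second-order expansion.

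The main obstacle --- and the step that differs most from GD --- is controlling this cumulative error term so that SGD both (a) decreases the loss on average and (b) does not drift outside the perturbation region. For (a), I would choose $\eta$ small enough (the stated $\eta = O(Bn^{-4}L^{-9}m^{-1}\phi)$, which is a factor $B/n$ smaller than the GD step size) that the second-order term is dominated by half the first-order descent term; the extra $1/n$ relative to GD is exactly the price for the minibatch variance. Then a standard supermartingale / potential argument on $L_S(\Wb^{(k)})$, combined with solving the deterministic recursion $a_{k+1}\le a_k - c'\eta m\phi n^{-5} g(a_k)$, yields the stated bounds on $K\eta$ in each regime of $p$ (and the extra additive $\tilde\Omega(n^5/(m\phi))$ term when $\alpha_0<\infty$ comes from the initial ``linear'' phase where $-\ell'\ge\alpha_0$ is the active branch, exactly as in Lemma~\ref{lemma:gd_converge}). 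For (b), rather than a worst-case per-step bound I would bound $\sum_{j<k}\eta\|\Gb_l^{(j)}\|_2$ using \ref{item:grad_upperbound}: since $\|\Gb_l^{(j)}\|_2 \le \overline{C}L^2M^{1/2}B^{-1}\sum_{i\in\cB^{(j)}}(-\ell'(y_i\hat y_i^{(j)})) \le \overline{C}L^2M^{1/2}\max_i(-\ell'(y_i\hat y_i^{(j)}))$, and since $-\ell'$ is bounded (by $\rho_0$ when $\alpha_0<\infty$) or controlled by $\ell^p$ which is summable along the trajectory by the descent recursion, the total displacement telescopes to something of order $\eta K \cdot \text{poly}(L,M)\cdot(\text{bounded})$, matching the $T$-type bound used for GD; this is formalized in the companion Lemma analogous to Lemma~\ref{lemma:upperbound_tau} and then invoked in the proof of Theorem~\ref{thm:SGD} together with the choice of $m$. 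Finally, I would note that the ``with high probability'' qualifier needs a union bound over the $K$ iterations for the concentration of the per-step descent, which costs only logarithmic factors and is absorbed into the $\tilde\Omega/\tilde O$ notation.
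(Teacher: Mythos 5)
Your proposal follows the same overall route as the paper: a one-step descent inequality from the smoothness assumption, conditional expectation over the minibatch together with the variance-reduction step of Lemma~\ref{lemma:support2}, the gradient lower bound from Theorem~\ref{thm:perturbation}\ref{item:grad_lowerbound} to drive descent, Assumption~\ref{assump:derivative_loss} to convert the gradient size into a function of the current loss, and a telescoping recursion over iterations. You also correctly observe that the drift control is a separate companion lemma (Lemma~\ref{lemma:upperbound_tau_sgd}) invoked only in the proof of Theorem~\ref{thm:SGD}, and that the step size for SGD must be smaller than for GD by roughly a factor $B/n$ to absorb the minibatch variance.

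The one place where you are imprecise is exactly the step that is genuinely new relative to the GD analysis: passing from a descent bound on the conditional expectation $\EE[L_S(\Wb^{(k+1)})\mid\Wb^{(k)}]$ to a high-probability bound on the realized $L_S(\Wb^{(K)})$. You describe this as ``a standard supermartingale / potential argument'' followed by ``a union bound over the $K$ iterations for the concentration of the per-step descent.'' The paper's mechanism (Lemma~\ref{lemma:azuma}) is more specific and a per-step union bound would not give the right scaling. One works with the transformed potential $\Phi_k = L_S^{1-2p}(\Wb^{(k)})/(1-2p)$ (or $\log L_S(\Wb^{(k)})$ when $p=1/2$), uses the upper-bound half of Assumption~\ref{assump:derivative_loss_upper} together with the concavity inequality of Lemma~\ref{lemma:support1} to show that $\Phi_{k+1}-\Phi_k$ has a uniformly bounded increment of order $nL^4 M\eta/B$ independent of the current loss level, and then applies Azuma's inequality once to the resulting martingale to get a $\sqrt{K}$-type deviation. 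That deviation is then made negligible compared to the telescoped sum $K\eta$ by taking $K$ large enough (the paper's implicit $K=\tilde\Omega(n^{12-4p}B^{-2}\phi^{-2})$ requirement). This also clarifies where Assumption~\ref{assump:derivative_loss_upper} is really used: not primarily to control the stochastic-gradient second moment (Lemma~\ref{lemma:support2} already does that via $(\sum_i\ell')^2$), but to guarantee the one-step loss increase $L_S(\Wb^{(k+1)})-L_S(\Wb^{(k)})\lesssim \eta\cdot L_S^{2p}(\Wb^{(k)})$ via $|\ell'|\le\rho_1\ell^p$, so that the transformed potential has bounded increments and Azuma applies.
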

The following lemma provides the upper bound of the iteration number such that the distance between the iterate and the initial point does not exceed the perturbation radius $\tau$.

\begin{lemma}\label{lemma:upperbound_tau_sgd}
Suppose that $\Wb_1^{(0)},\dots,\Wb_L^{(0)}$ are generated by Gaussian initialization, and all results \ref{ThmResult:randinit_normbounds}-\ref{ThmResult:randinit_gradientuniformlowerbound} in Theorem~\ref{thm:randinit} hold. If the step size $\eta = O(\phi m^{-1}n^{2p-7}L^{-8}B^{2})$, then there exists $T$ with rate $T = \tilde O(L^{-2}n^{-1}Bm^{-1/2}\tau)$ when $\alpha_0 = \infty$ and $T =  O(L^{-2}m^{-1/2}\tau)$ when $\alpha<\infty$ such that for all iteration number $k$ satisfying $k\eta \le T$, with high probability it holds that $\|\Wb_l^{(k)}-\Wb_l^{(0)}\|_2\le \tau$ for all $l\in[L]$.
\end{lemma}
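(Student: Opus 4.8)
The plan is to argue by induction on the iteration index $k$: assuming $\|\Wb_l^{(t)}-\Wb_l^{(0)}\|_2\le\tau$ for all $l\in[L]$ and all $t\le k-1$ (the base case $k=0$ being trivial), so that Theorem~\ref{thm:perturbation} applies to $\Wb^{(t)}$ along the trajectory up to step $k-1$, I will establish the same bound at step $k$ whenever $k\eta\le T$. Writing $\hat y_i^{(t)}:=f_{\Wb^{(t)}}(\xb_i)$ and using the SGD update, the triangle inequality, and the stochastic-gradient upper bound of Theorem~\ref{thm:perturbation}~\ref{item:grad_upperbound},
\begin{align*}
\big\|\Wb_l^{(k)}-\Wb_l^{(0)}\big\|_2\;\le\;\eta\sum_{t=0}^{k-1}\big\|\Gb_l^{(t)}\big\|_2\;\le\;\frac{\overline C L^2 M^{1/2}\eta}{B}\sum_{t=0}^{k-1}\sum_{i\in\cB^{(t)}}\big(-\ell'(y_i\hat y_i^{(t)})\big),
\end{align*}
and since $M\le2m$ by Assumption~\ref{assump:m_scaling}, the lemma reduces to bounding $S_k:=\sum_{t=0}^{k-1}\frac1B\sum_{i\in\cB^{(t)}}(-\ell'(y_i\hat y_i^{(t)}))$ and then choosing $T$ so that $\overline C L^2 m^{1/2}\eta\,S_k\le\tau$. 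In the case $\alpha_0<\infty$, Assumption~\ref{assump:derivative_loss_upper} gives $\rho_0<\infty$ with $\rho_0=O(1)$, hence $-\ell'(x)\le\rho_0$ for every $x$, so $S_k\le k\rho_0=O(k)$ and $\|\Wb_l^{(k)}-\Wb_l^{(0)}\|_2=O(L^2 m^{1/2}\eta k)=O(L^2 m^{1/2}T)$, which is at most $\tau$ once $T=O(L^{-2}m^{-1/2}\tau)$; here only the initialization events of Theorem~\ref{thm:randinit} are needed.

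In the case $\alpha_0=\infty$ there is no uniform bound on $-\ell'$, and I instead use $-\ell'(x)\le\rho_1\ell^p(x)$ with $p\le1$. Since $\ell\ge0$, the minibatch loss is dominated by the full loss, $\frac1B\sum_{i\in\cB^{(t)}}\ell(y_i\hat y_i^{(t)})\le\frac nB L_S(\Wb^{(t)})$, and concavity of $x\mapsto x^p$ yields
\begin{align*}
\frac1B\sum_{i\in\cB^{(t)}}\big(-\ell'(y_i\hat y_i^{(t)})\big)\;\le\;\rho_1\bigg(\frac1B\sum_{i\in\cB^{(t)}}\ell(y_i\hat y_i^{(t)})\bigg)^{p}\;\le\;\rho_1\bigg(\frac nB L_S(\Wb^{(t)})\bigg)^{p}.
\end{align*}
Hence, provided $L_S(\Wb^{(t)})=O(L_S(\Wb^{(0)}))$ for all $t\le k-1$ (and $L_S(\Wb^{(0)})$ is at most polylogarithmic in $n$ by Theorem~\ref{thm:randinit}~\ref{ThmResult:randinit_outputbound}), we get $S_k=\tilde O\big(k(n/B)^p\big)$, so $\|\Wb_l^{(k)}-\Wb_l^{(0)}\|_2=\tilde O\big(L^2 m^{1/2}(n/B)^p\,T\big)$; taking the conservative value $p=1$ this is at most $\tau$ as soon as $T=\tilde O(L^{-2}n^{-1}B m^{-1/2}\tau)$, exactly the claimed rate (for $p<1$ the induced constraint on $T$ is only milder). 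One also checks that this $T$ is compatible with the smallness condition on $\tau$ required by Theorem~\ref{thm:perturbation}, which holds at the perturbation level $\tau=O(n^{-9}L^{-17}\phi^3)$ used throughout.

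The remaining — and main — ingredient is the boundedness of the loss along the SGD trajectory, $\max_{t\le k}L_S(\Wb^{(t)})=O(L_S(\Wb^{(0)}))$ with high probability; this cannot be quoted from Lemma~\ref{lemma:sgd_converge}, which presupposes that all iterates stay in the perturbation region, so it must be proven jointly with the stay-in-region statement inside the same induction. The plan is to show that $L_S(\Wb^{(t)})$ is an approximate supermartingale while the iterates remain in the region: a semi-smoothness (descent) inequality for $L_S$ — valid there because the ReLU activation patterns are nearly frozen, by Theorem~\ref{thm:perturbation}~\ref{item:difference_sigmali} — combined with the unbiasedness $\EE[\Gb_l^{(t)}\mid\mathcal{F}_t]=\nabla_{\Wb_l}L_S(\Wb^{(t)})$ and the conditional second-moment bound $\EE[\|\Gb_l^{(t)}\|_2^2\mid\mathcal{F}_t]=O\big(L^4 m\,n^{-1}\sum_{i=1}^n\ell'(y_i\hat y_i^{(t)})^2\big)$ (itself a consequence of Theorem~\ref{thm:perturbation}~\ref{item:grad_upperbound} and averaging over the uniformly drawn minibatch) gives $\EE[L_S(\Wb^{(t+1)})\mid\mathcal{F}_t]\le L_S(\Wb^{(t)})+(\text{a term controlled by the step size})$; the step-size restriction $\eta=O(\phi m^{-1}n^{2p-7}L^{-8}B^{2})$ is precisely what makes the accumulated upward drift over the at most $T/\eta$ iterations negligible, after which an Azuma--Hoeffding / Freedman bound together with a union bound over the minibatch draws yields $\max_{t\le k}L_S(\Wb^{(t)})\le 2L_S(\Wb^{(0)})$. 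Substituting this back into the bound on $S_k$ closes the induction and gives $\|\Wb_l^{(k)}-\Wb_l^{(0)}\|_2\le\tau$ for every $l\in[L]$ whenever $k\eta\le T$. The principal difficulty is precisely this coupling: keeping the combined induction (stay-in-region plus loss bounded) self-consistent and carrying out the supermartingale/concentration estimate without circularity — in particular ensuring that the failure events over all $K$ iterations can be union-bounded with at most a polylogarithmic loss.
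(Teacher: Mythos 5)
Your proposal is correct and follows essentially the same route as the paper: bound the displacement by $\eta\sum_t\|\Gb_l^{(t)}\|_2$ and the stochastic-gradient norm bound from Theorem~\ref{thm:perturbation}~\ref{item:grad_upperbound}; handle $\alpha_0<\infty$ by the uniform bound $-\ell'\le\rho_0=O(1)$; and, for $\alpha_0=\infty$, reduce to showing $L_S(\Wb^{(t)})=\tilde O(1)$ along the trajectory via a martingale concentration argument coupled with an induction on $k$ that keeps iterates in the perturbation region so that Theorem~\ref{thm:perturbation} remains applicable.

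The one place you diverge (cosmetically) is the concentration device. You propose a supermartingale/Freedman bound on $L_S$ directly, whereas the paper (Lemma~\ref{lemma:azuma}) applies Azuma to the transformed quantity $L_S^{1-2p}$ (resp.\ $\log L_S$ when $p=1/2$), whose one-step increment is \emph{deterministically} bounded by $O(nL^4 m\eta/B)$ using $\lambda$-smoothness, $-\ell'\le\rho_1\ell^p$, and the concavity bound of Lemma~\ref{lemma:support1}. That transformation is what makes the paper's step clean: the increments of $L_S$ itself scale like $L_S^{2p}$, so controlling them already presupposes the very boundedness you are trying to establish; if you carry out your plan in detail you will likely be pushed to the same transformation (or something equivalent). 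On the other hand, your explicit flagging of the circularity between ``iterates stay in the region'' and ``loss stays bounded,'' and your insistence on closing both inside a single induction, is a point the paper handles rather loosely (``by induction it is easy to show''), so your presentation is the more careful one on that score.
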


\begin{proof}[Proof of Theorem \ref{thm:SGD}]
Similar to the proof of Theorem \ref{thm:GD}, the minimum required number of hidden nodes per layer can be derived by setting the lower bound of $K\eta$ in Lemma \ref{lemma:sgd_converge} to be smaller than $T = \tilde O(L^{-2}n^{-1}m^{-1/2}\tau) = \tilde O(n^{-10}L^{-19}m^{-1/2}\phi^{3})$ when $\alpha_0=\infty$ or $T = O(L^{-2}m^{-1/2}\tau) = O(n^{-9}L^{-19}m^{-1/2}\phi^3)$ when $\alpha_0 < \infty$. Therefore the minimum required number of hidden nodes per layer satisfies
\begin{itemize}
    \item $\alpha_0=\infty$: \begin{align*}
m=\left\{\begin{array}{ll}
\tilde \Omega\big(n^{30-4p}L^{38}/\phi^8\big)& 0\le p <\frac{1}{2}\\
\tilde \Omega(n^{28}L^{38}/\phi^8\big)\cdot \Omega\big(\log(1/\epsilon)\big)& p=\frac{1}{2}\\
\tilde \Omega\big(n^{30-4p}L^{38}/\phi^8\big)\cdot\Omega\big(\epsilon^{2-4p}\big)& \frac{1}{2}<p \le 1.
\end{array} \right.  
\end{align*}
\item $\alpha_0 < \infty$:
\begin{align*}
m=\left\{\begin{array}{ll}
\tilde \Omega\big(n^{28}L^{38}/\phi^8\big)& 0\le p <\frac{1}{2}\\
\tilde \Omega\big(n^{28}L^{38}/\phi^8\big)+\tilde \Omega\big(n^{26}L^{38}/\phi^8\big)\cdot \Omega\big(\log^2(1/\epsilon)\big)& p=\frac{1}{2}\\
\tilde \Omega\big(n^{28}L^{38}/\phi^8\big)+\tilde \Omega \big(n^{28-4p}L^{38}/\phi^8\big)\cdot\Omega(\epsilon^{2-4p})& \frac{1}{2}<p \le 1.
\end{array} \right.  
\end{align*}
\end{itemize}
We now proceed to derive the required iteration numbers. By Lemmas \ref{lemma:sgd_converge} and \ref{lemma:upperbound_tau_sgd}, we can set the step size to be $\eta = O(\phi m^{-1}n^{2p-7}L^{-9}B^2)$. Then using the bound of $K\eta$ derived in Lemma \ref{lemma:sgd_converge}, we have
% The proof in terms of the required iteration number is slightly different from that for gradient descent, since in addition to the lower bound of $K\eta$ we derived in Lemma \ref{lemma:sgd_converge}, the iteration number $K$ should also satisfies $K = \tilde \Omega(n^{12-4p} B^{-2}\phi^{-2})$. Therefore, we have
\begin{itemize}
    \item $\alpha_0 = \infty$:
 \begin{align*}
K=\left\{\begin{array}{ll}
\tilde O\big(n^{12-4p}B^{-2}L^9\phi^{-2}\big)& 0\le p <\frac{1}{2}\\
\tilde O\big(n^{10}B^{-2}L^9\phi^{-2}\big)\cdot O\big(\log(1/\epsilon))& p=\frac{1}{2}\\
\tilde O\big(n^{12-4p}B^{-2}L^9\phi^{-2}\big)\cdot O\big(\epsilon^{1-2p}\big)& \frac{1}{2}<p \le 1.
\end{array} \right.  
\end{align*}
\item $\alpha_0 < \infty$:
 \begin{align*}
K=\left\{\begin{array}{ll}
\tilde O\big(n^{12-2p}B^{-2}L^9\phi^{-2}\big)& 0\le p <\frac{1}{2}\\
\tilde O\big(n^{11}B^{-2}L^9\phi^{-2}\big)+\tilde O\big(n^{10}B^{-2}L^9\phi^{-2}\big)\cdot O\big(\log(1/\epsilon))& p=\frac{1}{2}\\
\tilde O\big(n^{12-2p}B^{-2}L^9\phi^{-2}\big)+\tilde O\big(n^{12-4p}B^{-2}L^9\phi^{-2}\big)\cdot O\big(\epsilon^{1-2p}\big)& \frac{1}{2}<p \le 1.
\end{array} \right.  
\end{align*}
\end{itemize}
This completes the proof.
\end{proof}

\section{Conclusions and Future Work}\label{sec:conclusions}
In this paper, we studied training deep neural networks by gradient descent and stochastic gradient descent. We proved that both gradient descent and stochastic gradient descent can achieve global minima of over-parameterized deep ReLU networks with random initialization, for a general class of loss functions, with only mild assumption on training data. Our theory sheds light on understanding why stochastic gradient descent can train deep neural networks very well in practice, and paves the way to study the optimization dynamics of training more sophisticated deep neural networks.

In the future, %we will improve the dependence of our results on the number of layers $L$. 
we will sharpen the polynomial dependence of our results on those problem-specific parameters.

\section{Acknowledgment}
We would like to thank Spencer Frei for helpful comments on the first version of this paper.

\appendix

\section{Proof of Theorem~\ref{thm:randinit}}
In this section we provide the proof of Theorem~\ref{thm:randinit}. 
The bound for $\| \Wb_l \|_{2}$ given by result \ref{ThmResult:randinit_normbounds} in Theorem~\ref{thm:randinit} follows from standard results for Gaussian random matrices with independent entries (See Corollary~5.35 in \cite{vershynin2010introduction}). We split the rest results into several lemmas and prove them separately. 

We first give the bound for the norms of the outputs of each layer. Intuitively, since the columns of $\Wb_{l}$ are sampled independently from $N(\mathbf{0}, 2/m_l\Ib)$, given the output of the previous layer $\xb_{l-1,i}$, the expectation of $\|\Wb_l^{\top} \xb_{l-1,i}\|_2^2$ is $2\| \xb_{l-1,i} \|_2^2$. Moreover, the ReLU activation function truncates roughly half of the entries of $\Wb_{l}^\top \xb_{l-1,i}$ to zero, and therefore $\| \xb_{l,i} \|_2^2$ should be approximately equal to $\| \xb_{l-1,i} \|_2^2$. This leads to Lemma~\ref{lemma:randinit_normconcentration} and Corollary~\ref{cor:randinit_normfinalbound}.

\begin{lemma}\label{lemma:randinit_normconcentration}
Denote by $\wb_{l,j}$ the $j$-th column of $\Wb_{l}$. Suppose that for any $l=1,\ldots, L$, $\wb_{l,1},\ldots,\wb_{l,m_{l}}$ are generated independently from $N(\mathbf{0},2/m_l\Ib)$. Then there exists an absolute constant $C$ such that
for any $\delta >0$, as long as $m_l \geq C^2\log(nL/\delta)$, with probability at least $1-\delta$, 
\begin{align*}
    \big|\| \xb_{l,i} \|_2^2 - \| \xb_{l-1,i} \|_2^2 \big| \leq C  \| \xb_{l-1,i} \|_2^2 \cdot \sqrt{\frac{\log(nL/\delta)}{m_l}} 
\end{align*}
for all $i=1,\ldots,n$ and $l=1,\ldots,L$.
\end{lemma}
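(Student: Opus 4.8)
The plan is to condition on the output $\xb_{l-1,i}$ of the previous layer and analyze the randomness of $\Wb_l$ only, then apply a union bound over all layers $l$ and all samples $i$. Fix $l$ and $i$, and write $\zb := \xb_{l-1,i}$, which is a deterministic vector once we condition on $\Wb_1,\ldots,\Wb_{l-1}$. Each column $\wb_{l,j}\sim N(\mathbf 0, (2/m_l)\Ib)$, so $\la \wb_{l,j},\zb\ra \sim N(0, (2/m_l)\|\zb\|_2^2)$, and these are independent across $j\in[m_l]$. The $j$-th coordinate of $\xb_{l,i}$ is $\sigma(\la\wb_{l,j},\zb\ra) = \max\{\la\wb_{l,j},\zb\ra,0\}$, hence
\begin{align*}
\|\xb_{l,i}\|_2^2 = \sum_{j=1}^{m_l} \big(\max\{\la\wb_{l,j},\zb\ra,0\}\big)^2 .
\end{align*}
Writing $\la\wb_{l,j},\zb\ra = \sqrt{2/m_l}\,\|\zb\|_2\, g_j$ with $g_j\sim N(0,1)$ i.i.d., we get $\|\xb_{l,i}\|_2^2 = (2\|\zb\|_2^2/m_l)\sum_{j=1}^{m_l}(\max\{g_j,0\})^2$. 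Since $\EE[(\max\{g_j,0\})^2] = \tfrac12\EE[g_j^2] = \tfrac12$, the mean is exactly $\|\zb\|_2^2 = \|\xb_{l-1,i}\|_2^2$, which is the quantity we want to concentrate around.

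The remaining step is a sub-exponential concentration bound. The variables $(\max\{g_j,0\})^2$ are nonnegative and sub-exponential (they are dominated by $g_j^2$, which is $\chi^2_1$, hence sub-exponential with constant parameters). By Bernstein's inequality for i.i.d. sub-exponential random variables, there is an absolute constant $c>0$ such that for all $t\in(0,1)$,
\begin{align*}
\PP\bigg( \bigg| \frac{1}{m_l}\sum_{j=1}^{m_l}(\max\{g_j,0\})^2 - \frac12 \bigg| \ge t \bigg) \le 2\exp(-c\, m_l\, t^2).
\end{align*}
Choosing $t = C'\sqrt{\log(nL/\delta)/m_l}$ for a suitable absolute constant $C'$ (valid since $m_l\ge C^2\log(nL/\delta)$ ensures $t<1$ for $C$ large), the right-hand side is at most $\delta/(nL)$. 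Multiplying through by $2\|\xb_{l-1,i}\|_2^2$ converts this into the claimed bound $\big|\|\xb_{l,i}\|_2^2 - \|\xb_{l-1,i}\|_2^2\big| \le C\|\xb_{l-1,i}\|_2^2\sqrt{\log(nL/\delta)/m_l}$, with $C = 2C'$. Finally, a union bound over the $nL$ pairs $(i,l)$ gives the result with probability at least $1-\delta$.

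The main subtlety — though it is minor here — is the conditioning structure: the bound for layer $l$ is stated conditionally on $\xb_{l-1,i}$, and one must be careful that the event is being controlled uniformly over the (random) value of $\xb_{l-1,i}$. This is handled cleanly because the concentration inequality above holds for \emph{every} fixed value of $\zb$ with the stated probability (the bound is scale-invariant in $\|\zb\|_2$), so it holds in particular for the realized random $\xb_{l-1,i}$; no further conditioning argument or $\epsilon$-net over $\zb$ is needed. I do not expect any real obstacle; the only care needed is to verify that the sub-exponential (Bernstein) parameters are genuine absolute constants and that the range restriction $t<1$ is compatible with the assumed lower bound on $m_l$.
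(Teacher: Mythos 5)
Your proof is correct and follows essentially the same route as the paper: condition on $\xb_{l-1,i}$, identify $\|\xb_{l,i}\|_2^2$ as a sum of i.i.d.\ sub-exponential terms with conditional mean $\|\xb_{l-1,i}\|_2^2$, apply Bernstein's inequality, and union bound over $(i,l)$. The reduction to standard normals $g_j$ and the explicit remark on scale-invariance are just cosmetic rephrasings of the paper's $\psi_1$-norm argument.
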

\begin{proof}[Proof of Lemma~\ref{lemma:randinit_normconcentration}]
For any fixed $i\in \{1,\ldots,n\}$, $l\in \{1,\ldots,L\}$ and $j\in \{1,\ldots, m_{l}\}$, condition on $\xb_{l-1,i}$ we have $\wb_{l,j}^\top \xb_{l-1,i}\sim N(\mathbf{0},2\| \xb_{l-1,i} \|_2^2/m_l)$. Therefore,
\begin{align*}
    \EE [\sigma^2(\wb_{l,j}^\top \xb_{l-1,i}) | \xb_{l-1,i}] = \frac{1}{2}\EE [(\wb_{l,j}^\top \xb_{l-1,i})^2 | \xb_{l-1,i}] = \frac{1}{m_l}\| \xb_{l-1,i} \|_2^2.
\end{align*}
Since $\| \xb_{l,i} \|_2^2 = \sum_{j=1}^{m_l} \sigma^2(\wb_{l,j}^\top \xb_{l-1,i}) $ and condition on $\xb_{l-1}$, $\|\sigma^2(\wb_{l,j}^\top \xb_{l-1,i})\|_{\psi_1} \leq C_1 \| \xb_{l-1,i} \|_2^2/m_l$ for some absolute constant $C_1$, by Bernstein inequality (See Proposition~5.16 in \cite{vershynin2010introduction}), for any $\xi\geq 0$ we have
\begin{align*}
    \PP\Big(\big| \| \bSigma_{l,i} \Wb_{l}^\top \xb_{l-1,i} \|_2^2 - \| \xb_{l-1,i} \|_2^2 \big| \geq \| \xb_{l-1,i} \|_2^2 \xi \Big|\xb_{l-1,i} \Big) \leq 2\exp ( -C_2 m_l \min\{ \xi^2, \xi \} ).
\end{align*}
Taking union bound over $l$ and $i$ gives
\begin{align*}
    \PP\Big(\big| \| \xb_{l,i} \|_2^2 - \| \xb_{l-1,i} \|_2^2 \big| \leq \| \xb_{l-1,i} \|_2^2\xi,i=1,\ldots,n,l=1,\ldots,L \Big) \geq 1 - 2nL\exp ( -C_2 m_l \min\{ \xi^2, \xi \} ).
\end{align*}
The inequality above further implies that if $m_l \geq C_3^2\log(nL/\delta)$, then with probability at least $1-\delta$, we have
\begin{align*}
    \big|\| \xb_{l,i} \|_2^2 - \| \xb_{l-1,i} \|_2^2 \big| \leq C_3  \| \xb_{l-1,i} \|_2^2 \cdot \sqrt{\frac{\log(nL/\delta)}{m_l}} 
\end{align*}
for any $i=1,\ldots,n$ and $l=1,\ldots,L$, where $C_3$ is an absolute constant. This completes the proof.
\end{proof}

\begin{corollary}\label{cor:randinit_normfinalbound}
Under the same conditions as Lemma~\ref{lemma:randinit_normconcentration}, with probability at least $1-\delta$, 
\begin{align*}
    \big| \| \xb_{l,i} \|_2  - 1 \big| \leq C l \sqrt{\frac{\log(nL/\delta)}{m}},
\end{align*}
where $m = \min\{ m_1,\ldots, m_L \}$, and $C$ is an absolute constant.
\end{corollary}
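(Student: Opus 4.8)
The plan is to telescope the per-layer multiplicative bound from Lemma~\ref{lemma:randinit_normconcentration} across all $L$ layers and then convert the resulting bound on $\| \xb_{l,i} \|_2^2$ into a bound on $\| \xb_{l,i} \|_2$. First I would invoke Lemma~\ref{lemma:randinit_normconcentration}, which holds on an event of probability at least $1-\delta$ (using $m = \min_l m_l \geq C^2 \log(nL/\delta)$ so the hypothesis is satisfied for every layer simultaneously); condition on this event for the rest of the argument. On this event, writing $\gamma := C\sqrt{\log(nL/\delta)/m}$, we have $\big| \| \xb_{l,i} \|_2^2 - \| \xb_{l-1,i} \|_2^2 \big| \leq \gamma \| \xb_{l-1,i} \|_2^2$ for all $l$ and $i$, i.e. $(1-\gamma)\| \xb_{l-1,i} \|_2^2 \leq \| \xb_{l,i} \|_2^2 \leq (1+\gamma)\| \xb_{l-1,i} \|_2^2$.

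Next I would iterate this from $l$ down to $0$, using $\| \xb_{0,i} \|_2^2 = \| \xb_i \|_2^2 = 1$ (Assumption~\ref{assump:normalizeddata}), to get
\begin{align*}
(1-\gamma)^l \leq \| \xb_{l,i} \|_2^2 \leq (1+\gamma)^l.
\end{align*}
Taking square roots, $(1-\gamma)^{l/2} \leq \| \xb_{l,i} \|_2 \leq (1+\gamma)^{l/2}$. It then remains a routine estimate to show both $(1+\gamma)^{l/2} - 1$ and $1 - (1-\gamma)^{l/2}$ are at most $C' l \gamma$ for an absolute constant $C'$, provided $l\gamma$ is bounded by a small absolute constant. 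For the upper side one uses $(1+\gamma)^{l/2} \leq e^{l\gamma/2} \leq 1 + l\gamma$ when $l\gamma$ is small enough; for the lower side, $(1-\gamma)^{l/2} \geq 1 - l\gamma/2 \geq 1 - l\gamma$ by Bernoulli's inequality. This yields $\big|\| \xb_{l,i} \|_2 - 1\big| \leq l\gamma = C l\sqrt{\log(nL/\delta)/m}$ (absorbing constants), which is the claim, after possibly renaming the absolute constant.

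The only subtlety — and the place one must be slightly careful — is the implicit requirement that $l\gamma = O(1)$, i.e. that $m \gtrsim L^2 \log(nL/\delta)$, so that the exponential/Bernoulli linearizations are valid and the bound is nonvacuous; this is consistent with (indeed implied by) the condition on $m$ in \eqref{eq:randinit_m_conditions} and the final choices of $m$ in Theorems~\ref{thm:GD} and \ref{thm:SGD}, so it is not a genuine obstacle. If $l\gamma$ were not small the statement would be trivially true anyway once the constant $C$ is taken large enough (since $\|\xb_{l,i}\|_2 \geq 0$ and $Cl\sqrt{\log(nL/\delta)/m} \geq 1$), so one can handle that regime separately in one line. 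Overall this is a short telescoping argument; there is no substantial difficulty.
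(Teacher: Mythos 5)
Your proof is correct and takes the same telescoping-induction route the paper sketches: the paper's own proof is the one-line remark that the result follows from Lemma~\ref{lemma:randinit_normconcentration} and induction, and you have filled in precisely those details. One small imprecision (writing $\gamma := C\sqrt{\log(nL/\delta)/m}$ as you do): your fallback claim that the statement is ``trivially true'' once $l\gamma \gtrsim 1$ covers only the lower direction $\|\xb_{l,i}\|_2 \geq 1 - C'l\gamma$, which holds because the right-hand side is non-positive; the upper direction $\|\xb_{l,i}\|_2 \leq 1 + C'l\gamma$ is not automatic, since the telescoped bound $(1+\gamma)^{l/2}$ grows exponentially in $l\gamma$ while $1+C'l\gamma$ grows only linearly, and nothing else caps $\|\xb_{l,i}\|_2$. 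Your primary resolution --- that $m \gtrsim L^2\log(nL/\delta)$ holds whenever the paper actually invokes this corollary, so that $l\gamma$ is small and the linearizations are valid --- is the right way to close this and is what the paper implicitly relies on.
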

\begin{proof}[Proof of Corollary~\ref{cor:randinit_normfinalbound}]
    The result directly follows by Lemma~\ref{lemma:randinit_normconcentration} and induction.
\end{proof}

By Corollary~\ref{cor:randinit_normfinalbound}, we can see that the norms of the inputs are roughly preserved after passing through layers in the ReLU neural network with properly scaled Gaussian random weights. We now proceed to show \ref{ThmResult:randinit_dataseparationbound} in Theorem~\ref{thm:randinit}, which states that the inner product of any two samples, although may not be preserved throughout layers, also share a common upper bound based on the Assumption \ref{assump:separateddata}. The detailed results are given in
Lemmas~\ref{lemma:randinit_positivegaussianproduct}, \ref{lemma:randinit_distancelowerboundinduction} and Corollary~\ref{cor:randinit_distancelowerbound} below. %together shows that the data separation property is preserved

\begin{lemma}\label{lemma:randinit_positivegaussianproduct}
For $\theta > 0$, let $Z_1$, $Z_2$ be two jointly Gaussian random variables with $\EE(Z_1) = \EE(Z_2) = 0$, $\EE(Z_1^2) = \EE(Z_2^2) = 1$ and $\EE(Z_1Z_2) \leq 1- \theta^2/2 $. If $\theta\leq \kappa$ for some small enough absolute constant $\kappa$, then
\begin{align*}
    %\PP( z_1 z_2 \geq 0 ) \geq
    \EE[\sigma(Z_1) \sigma(Z_2)] \leq \frac{1}{2} - \frac{1}{4} \theta^2 + C\theta^3,
\end{align*}
where $C$ is an absolute constant.
\end{lemma}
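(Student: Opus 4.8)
The plan is to reduce the statement about $\EE[\sigma(Z_1)\sigma(Z_2)]$ to a scalar one-dimensional analysis in the correlation parameter $\rho := \EE(Z_1 Z_2)\in[-1, 1-\theta^2/2]$. First I would observe that by joint Gaussianity we may write $Z_2 = \rho Z_1 + \sqrt{1-\rho^2}\, Z_1^\perp$ where $Z_1^\perp$ is standard Gaussian independent of $Z_1$, so that $g(\rho) := \EE[\sigma(Z_1)\sigma(Z_2)]$ is a function of $\rho$ alone. There is a classical closed-form expression (the ``arc-cosine kernel'' identity, see e.g.\ Cho \& Saul):
\begin{align*}
g(\rho) = \frac{1}{2\pi}\Big(\sqrt{1-\rho^2} + \rho(\pi - \arccos\rho)\Big).
\end{align*}
I would either cite this identity or derive it quickly by polar-coordinate integration of the bivariate Gaussian density over the first quadrant. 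Once this is in hand, the lemma becomes a pure calculus estimate on $g$.

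Next I would set $\theta^2/2 = 1-\rho$, i.e.\ work with the deviation $t := 1-\rho\in[\theta^2/2, 2]$, and Taylor-expand $g$ near $\rho = 1$ (equivalently $t = 0$). The key is the local behaviour: $g(1) = 1/2$, $g'(1)$ blows up only mildly, and in fact $g(\rho) = \tfrac12 - \tfrac{1}{2\pi}\big(\tfrac{2\sqrt 2}{3}(1-\rho)^{3/2} + \cdots\big) + \tfrac{1-\rho}{2}\cdot(\text{something})$... — more precisely I expect to show $g(\rho) \le \tfrac12 - \tfrac12(1-\rho) + C'(1-\rho)^{3/2}$ for $1-\rho$ small, using $\arccos\rho = \sqrt{2(1-\rho)}\,(1 + O(1-\rho))$ and $\sqrt{1-\rho^2} = \sqrt{2(1-\rho)}\,(1+O(1-\rho))$. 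Substituting $1-\rho = \theta^2/2$ (and using monotonicity of $g$ to handle all $\rho \le 1-\theta^2/2$, since $g$ is increasing, so the worst case is $\rho = 1-\theta^2/2$) gives $g(\rho) \le \tfrac12 - \tfrac14\theta^2 + C\theta^3$, which is exactly the claim, valid once $\theta \le \kappa$ for $\kappa$ small enough that the higher-order terms are dominated.

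The main obstacle is handling the $3/2$-power singularity in the expansion carefully: $g$ is not smooth at $\rho = 1$ (its second derivative diverges), so a naive second-order Taylor remainder does not apply. I would deal with this by expanding the three elementary pieces ($\sqrt{1-\rho^2}$, $\arccos\rho$, and the product $\rho\arccos\rho$) individually as functions of $s := \sqrt{1-\rho}$, each of which \emph{is} analytic in $s$ near $s=0$, collecting the $s^1$, $s^2$, $s^3$ coefficients, and bounding the tail uniformly for $s\in[0,\sqrt 2]$ — or more cheaply, just establishing the one-sided inequality $g(\rho)\le \tfrac12 - \tfrac14\theta^2 + C\theta^3$ directly via an explicit upper bound on each term rather than a full expansion. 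A secondary point to verify is that $g$ is indeed monotone increasing on $[-1,1]$ so that restricting to the boundary value $\rho = 1-\theta^2/2$ is legitimate; this follows since $g'(\rho) = \tfrac{1}{2\pi}(\pi - \arccos\rho) = \tfrac{1}{2\pi}\arcsin\rho + \tfrac14 \ge 0$... (in fact $g'(\rho) = 1 - \tfrac{1}{2\pi}\arccos\rho \ge 0$), which is immediate.
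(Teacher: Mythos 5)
Your proposal is correct and reaches the stated bound, but it takes a genuinely different route from the paper. You invoke the closed-form arc-cosine kernel identity $g(\rho)=\EE[\sigma(Z_1)\sigma(Z_2)]=\tfrac{1}{2\pi}\big(\sqrt{1-\rho^2}+\rho(\pi-\arccos\rho)\big)$, verify monotonicity of $g$ by the clean computation $g'(\rho)=\tfrac{1}{2\pi}(\pi-\arccos\rho)\ge 0$, and then Taylor-expand the whole closed form in $s=\sqrt{1-\rho}$; you also correctly flag the non-smoothness of $g$ at $\rho=1$ and the fact that expanding in $s$ sidesteps it. The paper instead never writes down the full closed form: it uses the symmetry of ReLU to derive $\EE[\sigma(Z_1)\sigma(Z_2)]=\tfrac12\alpha+\EE(-Z_1Z_2\,\ind\{-Z_1\ge 0, Z_2\ge 0\})$, pulls out the linear term $\tfrac12\alpha=\tfrac12-\tfrac14\theta^2$ immediately, and then shows the remainder is $O(\theta^3)$ by direct integration (the pieces $I_1$ and $I_2$ in the paper are essentially the $\sqrt{1-\rho^2}$ and $\rho\arccos\rho$ contributions in disguise, and the $\arccos$ expansion appears in both arguments). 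Your approach is arguably cleaner once the Cho--Saul formula is available, and your monotonicity argument is more explicit than the paper's, which asserts it without computing $g'$; the paper's derivation is more self-contained in that it does not require citing or re-deriving the kernel identity. One small slip in your last line: the derivative simplifies to $g'(\rho)=\tfrac12-\tfrac{1}{2\pi}\arccos\rho$, not $1-\tfrac{1}{2\pi}\arccos\rho$, though the nonnegativity conclusion is unaffected.
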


\begin{proof}[Proof of Lemma~\ref{lemma:randinit_positivegaussianproduct}]
Denote $\alpha = \EE(Z_1 Z_2)$. 
It is clear that the maximum of $\EE[\sigma(Z_1) \sigma(Z_2)]$ is achieved when $\alpha =1- \theta^2/2$, which is the case that gives the highest correlation between $Z_1$ and $Z_2$. Hence it suffices to consider this special case. 
By symmetry we have
\begin{align*}
    \EE[\sigma(Z_1)\sigma(Z_2)] &= \EE(Z_1 Z_2 | Z_1 \geq 0, Z_2 \geq 0) \PP(Z_1 \geq 0, Z_2 \geq 0)\\
    &= \EE(Z_1 Z_2 | Z_1 \leq 0, Z_2 \leq 0) \PP(Z_1 \leq 0, Z_2 \leq 0).
\end{align*}
Therefore,
\begin{align}
    \alpha &= \EE(Z_1 Z_2) \nonumber \\
    &= \EE(Z_1 Z_2  \ind\{ Z_1 \geq 0, Z_2 \geq 0 \} ) + \EE(Z_1 Z_2 \ind\{ Z_1 \leq 0, Z_2 \leq 0 \}) + \EE(Z_1 Z_2 \ind\{ Z_1 \leq 0, Z_2 \geq 0 \}) \nonumber \\
    &\quad  + \EE(Z_1 Z_2 \ind\{ Z_1 \geq 0, Z_2 \leq 0 \})\nonumber \\
    & = 2\cdot \EE[\sigma(Z_1)\sigma(Z_2)] + 2\cdot \EE(Z_1 Z_2 \ind \{ Z_1 \leq 0, Z_2 \geq 0 \}).\nonumber 
\end{align}
Rearranging terms gives
\begin{align}
    \EE[\sigma(Z_1)\sigma(Z_2)] = \frac{1}{2} \alpha - \EE(Z_1 Z_2 \ind \{ Z_1 \leq 0, Z_2 \geq 0 \}) = \frac{1}{2} \alpha + \EE( -Z_1 Z_2 \ind \{ -Z_1 \geq 0, Z_2 \geq 0 \}).\label{eq:randinit_positivegaussianproduct_eq1}
\end{align}
We now show that for small enough $\theta$ we have $\EE( -Z_1 Z_2 \ind \{ -Z_1 \geq 0, Z_2 \geq 0 \}) = O(\theta^3)$. 
Let $(U_1,U_2)$ be a standard Gaussian random vector. Then $(-Z_1,Z_2) \stackrel{d}{=} (U_1, -\alpha U_1 + \sqrt{1-\alpha^2} U_2)$, and
\begin{align*}
    \EE( -Z_1 Z_2 \ind \{ -Z_1 \geq 0, Z_2 \geq 0 \}) = I_1 + I_2,
\end{align*}
where
\begin{align*}
    & I_1 = \frac{\sqrt{1 - \alpha^2}}{2\pi} \int_{0}^{+\infty} \mathrm{d} u_1 \int_{\frac{\alpha u_1}{\sqrt{1 - \alpha^2}}}^{+\infty} \mathrm{d} u_2 \bigg\{ u_1 u_2  \exp\bigg[-\frac{1}{2}(u_1^2 + u_2^2) \bigg] \bigg\},\\
    & I_2 = -\frac{\alpha}{2\pi} \int_{0}^{+\infty} \mathrm{d} u_1 \int_{\frac{\alpha u_1}{\sqrt{1 - \alpha^2}}}^{+\infty} \mathrm{d} u_2 \bigg\{ u_1^2 \exp\bigg[-\frac{1}{2}(u_1^2 + u_2^2) \bigg] \bigg\}.
\end{align*}
For $I_1$, it follows by direct calculation that $I_1 = (1 - \alpha^2)^{3/2} /(2 \pi) = O(\theta^3)$. 
For $I_2$, integration by parts gives
\begin{align*}
    I_2 &= \frac{-\alpha}{2 \pi}\int_{0}^{+\infty} \exp\bigg(-\frac{u_1^2}{2}\bigg) \cdot \frac{\mathrm{d}}{\mathrm{d} u_1}\Bigg[  u_1 \int_{\frac{\alpha u_1}{\sqrt{1 - \alpha^2}}}^{+\infty}  \exp\bigg(-\frac{u_2^2}{2} \bigg) \mathrm{d} u_2   \Bigg] \mathrm{d} u_1 \\
    & = -\alpha \cdot \PP(U_1 \geq 0, -\alpha U_1 +\sqrt{1 - \alpha^2 } U_2 \geq 0) + \frac{\alpha}{2 \pi}\cdot \frac{\alpha}{\sqrt{1 - \alpha^2}} \cdot\int_{0}^{+\infty} u_1\exp\bigg(-\frac{u_1^2}{2(1 - \alpha^2)}\bigg) \mathrm{d} u_1\\
    & = - \alpha \cdot \PP(U_1 \geq 0, -\alpha U_1 +\sqrt{1 - \alpha^2 } U_2 \geq 0) + \frac{1}{2\pi} \alpha^2 \sqrt{1 - \alpha^2}.
\end{align*}
Note that $\{ (u_1,u_2): u_1\geq 0, -\alpha u_1 + \sqrt{1-\alpha^2} u_2 \}$ is a cone in $\RR^2$ with angle $\arccos(\alpha)$. Therefore, by the rotation invariant property of spherical Gaussian density function, we have
\begin{align*}
    \PP(U_1 \geq 0, -\alpha U_1 +\sqrt{1 - \alpha^2 } U_2 \geq 0) = \frac{1}{2\pi}\arccos(\alpha).
\end{align*}
For $\arccos$ function we have the following expansion:
\begin{align*}
    \arccos(1 - x) =   \sqrt{2x} +  \frac{\sqrt{2}}{12}x^{\frac{3}{2}} + O(x^{\frac{5}{2}}).
\end{align*}
Therefore when $\alpha = 1 - \theta^2/2$ for small enough constant $\theta$, we have
\begin{align*}
    \arccos(\alpha) = \theta + \frac{1}{24} \theta^3 + O(\theta^5),
\end{align*}
Therefore we have
\begin{align*}
    I_2 = \frac{-\alpha}{2\pi} \bigg[ \theta + \frac{1}{24} \theta^3 + O(\theta^5) - \alpha \sqrt{1 - \alpha^2}  \bigg] = O(\theta^3).
\end{align*}
Combining the obtained rates with \eqref{eq:randinit_positivegaussianproduct_eq1} completes the proof.
\end{proof}

\begin{lemma}\label{lemma:randinit_distancelowerboundinduction}
Suppose that $\wb_{l,1},\ldots,\wb_{l,m_{l}}$ are generated independently from $N(\mathbf{0},2/m_l\Ib)$. Let $\overline{\xb}_{l-1,i} = \xb_{l-1,i} / \| \xb_{l-1,i} \|_2$, $i= 1,\ldots,n$. For any fixed $i,i'\in \{1,\ldots, n \}$, if $\phi\leq \kappa L^{-1}$ for some small enough absolute constant $\kappa$, then for any $\delta > 0$, if $m = \min\{m_1,\ldots,m_L\} \geq C L^4\phi^{-4} \log(4n^2L/\delta)$ for some large enough absolute constant $C$, with probability at least $1-\delta$,
\begin{align*}
    \| \overline{\xb}_{l,i} - \overline{\xb}_{l,i'} \|_2 \geq [1 - (2L)^{-1} \log(2)]^{l}\phi
\end{align*}
for all $i,j=1,\ldots,n$, $l=1,\ldots,L$.
\end{lemma}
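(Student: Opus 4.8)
The plan is to prove the statement by induction on the layer index $l$, establishing the per-layer contraction factor $1 - (2L)^{-1}\log 2$ one layer at a time. The base case $l=0$ is Assumption~\ref{assump:separateddata} together with Assumption~\ref{assump:normalizeddata} (the inputs are unit vectors, so $\overline{\xb}_{0,i} = \xb_i$ and $\|\overline{\xb}_{0,i} - \overline{\xb}_{0,i'}\|_2 \ge \phi$). For the inductive step, fix $i,i'$ and suppose $\|\overline{\xb}_{l-1,i} - \overline{\xb}_{l-1,i'}\|_2 \ge c^{l-1}\phi$ where $c = 1 - (2L)^{-1}\log 2$; I want to show the normalized outputs of layer $l$ are separated by at least $c^l\phi$.

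First I would translate the squared distance into an inner-product statement: $\|\overline{\xb}_{l-1,i} - \overline{\xb}_{l-1,i'}\|_2^2 \ge c^{2(l-1)}\phi^2$ means $\la \overline{\xb}_{l-1,i}, \overline{\xb}_{l-1,i'}\ra \le 1 - c^{2(l-1)}\phi^2/2$. Now for each hidden node $j$, conditioned on $\xb_{l-1,i}, \xb_{l-1,i'}$, the pair $(\wb_{l,j}^\top \overline{\xb}_{l-1,i}, \wb_{l,j}^\top\overline{\xb}_{l-1,i'})$ is a mean-zero jointly Gaussian vector with unit variances (after rescaling by $\sqrt{m_l/2}$) and correlation $\la \overline{\xb}_{l-1,i}, \overline{\xb}_{l-1,i'}\ra$. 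Setting $\theta = c^{l-1}\phi$ (which is $\le \kappa$ by the hypothesis $\phi \le \kappa L^{-1}$ and $c<1$), Lemma~\ref{lemma:randinit_positivegaussianproduct} gives
\begin{align*}
\EE\big[\sigma(\wb_{l,j}^\top \overline{\xb}_{l-1,i})\sigma(\wb_{l,j}^\top \overline{\xb}_{l-1,i'})\big] \le \frac{2}{m_l}\Big(\frac{1}{2} - \frac{1}{4}\theta^2 + C\theta^3\Big),
\end{align*}
while $\EE[\sigma^2(\wb_{l,j}^\top\overline{\xb}_{l-1,i})] = 1/m_l$ by the same calculation as in Lemma~\ref{lemma:randinit_normconcentration}. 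Summing over $j$ and using $\xb_{l,i} = \sigma(\Wb_l^\top \xb_{l-1,i})$ together with the homogeneity $\sigma(\|\xb_{l-1,i}\|_2 \,\cdot) = \|\xb_{l-1,i}\|_2\,\sigma(\cdot)$, the \emph{expected} squared distance between the normalized layer-$l$ outputs (normalizing by $\|\xb_{l-1,i}\|_2$, not the true norm $\|\xb_{l,i}\|_2$ — I will deal with that discrepancy below) is at least $\theta^2/2 - 2C\theta^3 \ge (1 - 4C\theta)\theta^2/2$.

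Next I would apply concentration. Each summand $\sigma^2(\wb_{l,j}^\top\overline{\xb}_{l-1,i})$ and $\sigma(\wb_{l,j}^\top\overline{\xb}_{l-1,i})\sigma(\wb_{l,j}^\top\overline{\xb}_{l-1,i'})$ is sub-exponential with parameter $O(1/m_l)$, so Bernstein's inequality (as in Lemma~\ref{lemma:randinit_normconcentration}) combined with a union bound over the $n^2$ pairs and $L$ layers shows that, provided $m \ge CL^4\phi^{-4}\log(4n^2L/\delta)$, the empirical squared distance deviates from its expectation by at most (say) $\theta^2/(8L)$-order terms with probability $1-\delta$. I would also need the denominator: Lemma~\ref{lemma:randinit_normconcentration} / Corollary~\ref{cor:randinit_normfinalbound} control $\big|\|\xb_{l,i}\|_2/\|\xb_{l-1,i}\|_2 - 1\big|$, so dividing by the true norm $\|\xb_{l,i}\|_2$ instead of $\|\xb_{l-1,i}\|_2$ only perturbs the ratio by an $O(\sqrt{\log(nL/\delta)/m})$ factor, again absorbable into the $1/L$ slack by the condition on $m$. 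Collecting all the multiplicative error factors — the $(1 - 4C\theta)$ from the Gaussian lemma, the Bernstein deviation, and the norm-normalization error — each is of the form $1 - O(\phi/L)$ or smaller, hence their product is at least $1 - (2L)^{-1}\log 2$ for an appropriate choice of the absolute constants (using $\phi \le \kappa L^{-1}$ to make $O(\phi/L) \le O(L^{-2})$ small relative to $(2L)^{-1}\log 2$). This yields $\|\overline{\xb}_{l,i} - \overline{\xb}_{l,i'}\|_2^2 \ge c\cdot c^{2(l-1)}\phi^2 \ge c^{2l}\phi^2$, wait — more carefully, I get a factor $c$ on the squared quantity per layer, which still gives the claimed bound since $[1-(2L)^{-1}\log 2]^l \le [1 - (2L)^{-1}\log 2]^{l/2}\cdot(\text{something} \ge 1)$; I would set up the induction on the squared distance with factor $c$ per step so that after $l$ layers the squared separation is $c^l\phi^2 \ge c^{2l}\phi^2 = ([1-(2L)^{-1}\log 2]^l\phi)^2$, matching the statement.

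The main obstacle I anticipate is bookkeeping the chain of multiplicative error terms so that their cumulative effect over all $L$ layers stays bounded: I need each layer's combined slack (from Lemma~\ref{lemma:randinit_positivegaussianproduct}'s cubic correction, from Bernstein concentration of both the diagonal and cross terms, and from replacing $\|\xb_{l-1,i}\|_2$ by $\|\xb_{l,i}\|_2$ in the normalization) to be comfortably smaller than $(2L)^{-1}\log 2$, which is exactly why the hypotheses $\phi \le \kappa L^{-1}$ and $m \ge CL^4\phi^{-4}\log(4n^2L/\delta)$ are calibrated the way they are — the $L^4\phi^{-4}$ scaling is what makes the Bernstein deviation $O(\theta^2/L)$ uniformly, and the $L^{-1}$ bound on $\phi$ keeps $\theta = c^{l-1}\phi$ in the regime where Lemma~\ref{lemma:randinit_positivegaussianproduct} applies and where the cubic term is negligible against the quadratic one. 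A secondary technical point is ensuring the conditioning is handled correctly: $\xb_{l-1,i}$ depends on $\Wb_1,\dots,\Wb_{l-1}$ but is independent of $\Wb_l$, so conditioning on the first $l-1$ layers makes the layer-$l$ Gaussian computation exact, and the union bound is taken over the (finitely many) pairs and layers at the end.
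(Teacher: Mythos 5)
Your proposal follows essentially the same route as the paper's proof: induction on $l$ with per-layer contraction $[1-(2L)^{-1}\log 2]$, conversion from distance to inner product, Lemma~\ref{lemma:randinit_positivegaussianproduct} for the expected cross term conditional on the preceding layer, Bernstein concentration for the squared distance, a norm-renormalization correction via Lemma~\ref{lemma:randinit_normconcentration}, and a union bound over $(i,i',l)$. The only differences are cosmetic: there is a stray factor-of-two slip in the expected squared-distance calculation (the bound should be of order $\theta^2 - 4C\theta^3$, not $\theta^2/2 - 2C\theta^3$), and the paper keeps the induction cleaner by tracking the unsquared distance (showing the unsquared distance contracts by $[1-(4L)^{-1}\log 2]$ before normalization, then losing another factor $[1-(4L)^{-1}\log 2]$ to normalization error) rather than your detour through $c^l\phi^2 \geq c^{2l}\phi^2$ — but neither affects the validity of the argument.
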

\begin{proof}[Proof of Lemma~\ref{lemma:randinit_distancelowerboundinduction}] 
We first consider any fixed $l\geq 1$. Suppose that $ \| \overline{\xb}_{l-1,i} -  \overline{\xb}_{l-1,i'} \|_2 \geq [1 - (2L)^{-1} \log(2)]^{l-1}\phi $. % where $C$ is the absolute constant in Lemma~\ref{lemma:randinit_positivegaussianproduct}. 
If we can show that under this condition, with high probability
\begin{align*}
    \| \overline{\xb}_{l,i} -  \overline{\xb}_{l,i'} \|_2 \geq [1 - (2L)^{-1} \log(2)]^{l}\phi,
\end{align*}
then the result of the lemma follows by union bound and induction. 
Denote 
$$\phi_{l - 1} =  [1 - (2L)^{-1} \log(2)]^{l-1}\phi.$$ 
Then by assumption we have $\| \overline{\xb}_{l-1,i} - \overline{\xb}_{l-1,i'} \|_2^2 \geq \phi_{l-1}^2$. Therefore $\overline{\xb}_{l-1,i}^\top \overline{\xb}_{l-1,i'} \leq 1 - \phi_{l-1}^2/2$.
It follows by direct calculation that 
\begin{align*}
    \EE \big(\| \xb_{l,i} - \xb_{l,i'} \|_2^2 \big| \xb_{l-1,i},\xb_{l-1,i'} \big) &= \EE \big(\| \xb_{l,i} \|_2^2 + \| \xb_{l,i'} \|_2^2  \big| \xb_{l-1,i},\xb_{l-1,i'} \big) - 2\EE \big( \xb_{l,i}^\top \xb_{l,i'}  \big| \xb_{l-1,i},\xb_{l-1,i'} \big)\\
    &= (\| \xb_{l-1,i} \|_2^2 + \| \xb_{l-1,i'} \|_2^2) - 2\EE \big( \xb_{l,i}^\top \xb_{l,i'}  \big| \xb_{l-1,i},\xb_{l-1,i'} \big).
\end{align*}
By Lemma~\ref{lemma:randinit_positivegaussianproduct} and the assumption that $\phi_{l-1}\leq \phi\leq \kappa$, we have
\begin{align*}
    \EE \big( \xb_{l,i}^\top \xb_{l,i'}  \big| \xb_{l-1,i},\xb_{l-1,i'} \big) &= \EE \Bigg[\sum_{j=1}^{m_l} \sigma(\wb_{l,j}^\top \xb_{l-1,i}) \sigma(\wb_{l,j}^\top \xb_{l-1,i'})   \Bigg| \xb_{l-1,i},\xb_{l-1,i'} \Bigg]\\
    & = \|\xb_{l-1,i} \|_2 \|\xb_{l-1,i'} \|_2 \cdot\EE \Bigg[\sum_{j=1}^{m_l} \sigma(\wb_{l,j}^\top \overline{\xb}_{l-1,i}) \sigma(\wb_{l,j}^\top \overline{\xb}_{l-1,i'}) \Bigg| \xb_{l-1,i},\xb_{l-1,i'} \Bigg]\\
    & \leq \frac{2}{m} \|\xb_{l-1,i} \|_2 \|\xb_{l-1,i'} \|_2 \cdot m\cdot  \bigg( \frac{1}{2} - \frac{1}{4} \phi_{l-1}^2 + C\phi_{l-1}^3 \bigg)\\
    &=  \|\xb_{l-1,i} \|_2 \|\xb_{l-1,i'} \|_2 \cdot  \bigg( 1 - \frac{1}{2} \phi_{l-1}^2 + 2C\phi_{l-1}^3 \bigg).
\end{align*}
Therefore, 
\begin{align}\label{eq:randinit_concentrationsquaredistanceeq1}
    \EE \big(\| \xb_{l,i} - \xb_{l,i'} \|_2^2 \big| \xb_{l-1,i},\xb_{l-1,i'} \big) \geq ( \|\xb_{l-1,i} \|_2 - \|\xb_{l-1,i'} \|_2)^2 +  \|\xb_{l-1,i} \|_2 \|\xb_{l-1,i'} \|_2( \phi_{l-1}^2 - 4C\phi_{l-1}^3).
\end{align}
Condition on $\xb_{l-1,i}$ and $\xb_{l-1,i'}$, by Lemma~5.14 in \cite{vershynin2010introduction} we have
\begin{align*}
    \big\| [\sigma(\wb_{l,j}^\top \xb_{l-1,i}) - \sigma(\wb_{l,j}^\top \xb_{l-1,i'})]^2 \big\|_{\psi_1} &\leq 2 \big[ \big\| [\sigma(\wb_{l,j}^\top \xb_{l-1,i})  \big\|_{\psi_2} +  \big\|  \sigma(\wb_{l,j}^\top \xb_{l-1,i'}) \big\|_{\psi_2} \big]^2\\
    &\leq C_1( \| \xb_{l-1,i} \|_2 + \| \xb_{l-1,i'} \|_2 )^2 /m_l,
\end{align*}
where $C_1$ is an absolute constant. Therefore if $m_l \geq C_2^2\log(4n^2L/\delta)$, similar to the proof of Lemma~\ref{lemma:randinit_normconcentration}, by Bernstein inequality and union bound, with probability at least $1-\delta/(4n^2L)$ we have
\begin{align*}
    \big|\| \xb_{l,i} - \xb_{l,i'} \|_2^2 - \EE \big(\| \xb_{l,i} - \xb_{l,i'} \|_2^2 \big| \xb_{l-1,i},\xb_{l-1,i'} \big)\big| \leq C_2( \| \xb_{l-1,i} \|_2 + \| \xb_{l-1,i'} \|_2 )^2 \cdot \sqrt{\frac{\log(2n^2L/\delta)}{m_l}},
\end{align*}
where $C_2$ is an absolute constant. 
Therefore with probability at least $1-\delta/(4n^2L)$ we have
\begin{align*}
    \| \xb_{l,i} - \xb_{l,i'} \|_2^2 &\geq  ( \|\xb_{l-1,i} \|_2 - \|\xb_{l-1,i'} \|_2)^2 +  \|\xb_{l-1,i} \|_2 \|\xb_{l-1,i'} \|_2( \phi_{l-1}^2 - 4C\phi_{l-1}^3)\\
    &\quad - C_2( \| \xb_{l-1,i} \|_2 + \| \xb_{l-1,i'} \|_2 )^2 \cdot \sqrt{\frac{\log(1/\delta)}{m_l}}.
\end{align*}
By union bound and Lemma~\ref{lemma:randinit_normconcentration}, if $m_r \geq C_3 L^4\phi_l^{-4} \log(4n^2L/\delta)$, $r=1,\ldots,l$ for some large enough absolute constant $C_3$ and $\phi \leq \kappa L^{-1}$ for some small enough absolute constant $\kappa$, then with probability at least $1 - \delta/(2n^2L)$ we have
\begin{align*}
    \| \xb_{l,i} - \xb_{l,i'} \|_2^2 \geq  [1 - (4L)^{-1} \log(2)] \phi_{l-1}^2 \geq [1 - (4L)^{-1} \log(2)]^2 \phi_{l-1}^2.
\end{align*}
Moreover, by Lemma~\ref{lemma:randinit_normconcentration}, with probability at least $1 - \delta/(2n^2L)$ we have
\begin{align*}
    \big| \| \overline{\xb}_{l,i} - \overline{\xb}_{l,i'} \|_2 - \| \xb_{l,i} - \xb_{l,i'} \|_2 \big| &\leq \| \overline{\xb}_{l,i} - \xb_{l,i} \|_2 + \| \overline{\xb}_{l,i'} - \xb_{l,i'} \|_2 \\
    &= \big| 1 - \|\xb_{l,i} \|_2 \big| +  \big| 1 - \|\xb_{l,i'} \|_2 \big| \\
    &\leq (4L)^{-1}\log(2)\cdot \phi_{l-1}^2,
\end{align*}
and therefore with probability at least $1 - \delta/(n^2L)$, we have
\begin{align*}
    \| \overline{\xb}_{l,i} - \overline{\xb}_{l,i'} \|_2 \geq [1 - (2L)^{-1} \log(2)] \phi_{l-1} = [1 - (2L)^{-1} \log(2)]^l \phi.
\end{align*}
Applying union bound and induction over $l=1,\ldots,L$ completes the proof. 
\end{proof}

\begin{corollary}\label{cor:randinit_distancelowerbound}
Under the same conditions in Lemma~\ref{lemma:randinit_distancelowerboundinduction}, with probability at least $1 - \delta$, 
\begin{align*}
    \| \overline{\xb}_{l,i} - \overline{\xb}_{l,i'} \|_2 \geq \phi  / 2
\end{align*}
for all $i,i'=1,\ldots,n$, $l=1,\ldots,L$.
\end{corollary}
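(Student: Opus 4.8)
The plan is to derive this corollary as an immediate quantitative consequence of Lemma~\ref{lemma:randinit_distancelowerboundinduction}, with no new probabilistic content. By that lemma, on an event of probability at least $1-\delta$ we have
\[
\| \overline{\xb}_{l,i} - \overline{\xb}_{l,i'} \|_2 \geq \Big[1 - \tfrac{\log 2}{2L}\Big]^{l}\phi
\]
for all admissible pairs $i,i'$ and all $l=1,\ldots,L$. Hence it suffices to show that the shrinkage factor $\big[1 - (2L)^{-1}\log 2\big]^{l}$ is bounded below by $1/2$ uniformly over $1\le l\le L$; combining this with the display above yields $\| \overline{\xb}_{l,i} - \overline{\xb}_{l,i'} \|_2 \geq \phi/2$ on the same event.

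To bound the factor, first note that $0 < 1 - (2L)^{-1}\log 2 < 1$, so $l\mapsto \big[1 - (2L)^{-1}\log 2\big]^{l}$ is decreasing and its minimum over $l\in\{1,\ldots,L\}$ is attained at $l=L$. Next, $(2L)^{-1}\log 2 \le \tfrac12\log 2 < 0.7$, which lies in the range where the elementary inequality $1-x\ge e^{-2x}$ is valid; therefore
\[
\Big[1 - \tfrac{\log 2}{2L}\Big]^{L} \;\ge\; \exp\!\Big(-2L\cdot\tfrac{\log 2}{2L}\Big) \;=\; e^{-\log 2} \;=\; \tfrac12.
\]
Putting the two observations together gives $\big[1 - (2L)^{-1}\log 2\big]^{l}\ge 1/2$ for every $l\le L$, which completes the argument.

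There is essentially no obstacle here: the corollary is a bookkeeping step that converts the $L$-dependent geometric decay rate of Lemma~\ref{lemma:randinit_distancelowerboundinduction} into a clean constant lower bound. The only points meriting a moment of care are verifying that $(2L)^{-1}\log 2$ stays in the regime where $1-x\ge e^{-2x}$ holds (true for all $L\ge 1$) and invoking monotonicity of $x^{l}$ in $l$ for $x\in(0,1)$ so that the worst case is $l=L$. No union bound beyond the one already performed in the proof of the lemma is required.
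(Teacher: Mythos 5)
Your proof is correct and follows essentially the same route as the paper: the paper's cited inequality $1 - x/2 \ge e^{-x}$ for $x\in[0,\log 2]$ is exactly your $1 - y \ge e^{-2y}$ under the substitution $y = x/2$, applied at $y = \log(2)/(2L)$ to conclude $[1-(2L)^{-1}\log 2]^L \ge 1/2$. You simply spell out the monotonicity-in-$l$ step that the paper leaves implicit; no new ideas or probabilistic content are needed.
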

\begin{proof}[Proof of Corollary~\ref{cor:randinit_distancelowerbound}] It directly follows by Lemma~\ref{lemma:randinit_distancelowerboundinduction} and the fact that $1 - x/2 \geq e^{-x}$ for $x\in [0,\log(2)]$.
\end{proof}

The following lemma gives the bound of $\hat{y}_i$. It relies on the fact that half of the entries of $\vb$ are $1$'s and the other half are $-1$'s. 

\begin{lemma}\label{lemma:randinit_outputbounded} Suppose that $\{ \Wb_l \}_{l=1}^L$ are generated via Gaussian initialization. Then for any $\delta > 0$, with probability at least $1 - \delta$, it holds that
\begin{align*}
    |\hat{y}_i| = |f_{\Wb} (\xb_i)| \leq C\sqrt{\log( n / \delta)}
\end{align*}
for all $i=1,\ldots,n$, where $C$ is an absolute constant.
\end{lemma}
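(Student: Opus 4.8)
The plan is to condition on the weight matrices of the first $L-1$ layers and exploit the balanced $\pm 1$ structure of the output vector $\vb$. Recall from the preliminaries that $f_\Wb(\xb_i) = \vb^\top \xb_{L,i}$ with $\xb_{L,i} = \sigma(\Wb_L^\top \xb_{L-1,i})$, so that
\begin{align*}
f_\Wb(\xb_i) = \sum_{j=1}^{m_L} v_j\, \sigma\big(\la \wb_{L,j}, \xb_{L-1,i}\ra\big).
\end{align*}
Conditioning on $\Wb_1,\ldots,\Wb_{L-1}$ fixes every $\xb_{L-1,i}$, and the pre-activations $\la\wb_{L,j},\xb_{L-1,i}\ra$, $j=1,\ldots,m_L$, become independent $N(0, 2\|\xb_{L-1,i}\|_2^2/m_L)$ random variables. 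This reduces the problem to a concentration statement for a sum of independent terms.

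First I would observe that the conditional mean vanishes: the quantity $\mu_i := \EE[\sigma(\la\wb_{L,j},\xb_{L-1,i}\ra)\mid \xb_{L-1,i}]$ does not depend on $j$, and since $\vb$ has equally many $+1$ and $-1$ entries we have $\sum_{j=1}^{m_L} v_j = 0$; hence $\EE[f_\Wb(\xb_i)\mid \Wb_1,\ldots,\Wb_{L-1}] = 0$ and $f_\Wb(\xb_i) = \sum_{j} v_j\big(\sigma(\la\wb_{L,j},\xb_{L-1,i}\ra) - \mu_i\big)$. Each summand is independent, mean zero, and—since $|\sigma(z)-\mu_i|\le |z|+\mu_i$—sub-Gaussian with $\psi_2$-norm $O(\|\xb_{L-1,i}\|_2/\sqrt{m_L})$ (using that $\la\wb_{L,j},\xb_{L-1,i}\ra$ is Gaussian with the stated variance and $\mu_i = O(\|\xb_{L-1,i}\|_2/\sqrt{m_L})$). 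By the Hoeffding-type inequality for sums of independent sub-Gaussian random variables (e.g., Proposition~5.10 in \cite{vershynin2010introduction}), conditionally on $\Wb_1,\ldots,\Wb_{L-1}$ the variable $f_\Wb(\xb_i)$ is sub-Gaussian with parameter $O\big((\sum_j \|\xb_{L-1,i}\|_2^2/m_L)^{1/2}\big) = O(\|\xb_{L-1,i}\|_2)$, and therefore
\begin{align*}
\PP\big( |f_\Wb(\xb_i)| \ge t \,\big|\, \Wb_1,\ldots,\Wb_{L-1}\big) \le 2\exp\big(-c\, t^2/\|\xb_{L-1,i}\|_2^2\big)
\end{align*}
for an absolute constant $c>0$.

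Next I would remove the conditioning. By Corollary~\ref{cor:randinit_normfinalbound}, applied with a suitable failure probability (which only requires $m \ge C\log(nL/\delta)$, already subsumed by \eqref{eq:randinit_m_conditions}), with probability at least $1-\delta/2$ we have $\|\xb_{L-1,i}\|_2 \le 2$ for all $i$; call this event $\mathcal{E}$, which is measurable with respect to $\Wb_1,\ldots,\Wb_{L-1}$. Choosing $t = C'\sqrt{\log(n/\delta)}$ for a large enough absolute constant $C'$, the conditional tail bound above forces $\PP\big(|f_\Wb(\xb_i)| \ge t \mid \Wb_1,\ldots,\Wb_{L-1}\big) \le \delta/(2n)$ on $\mathcal{E}$; a union bound over $i=1,\ldots,n$ together with $\PP(\mathcal{E}^c)\le\delta/2$ then yields $|\hat y_i| = |f_\Wb(\xb_i)| \le C'\sqrt{\log(n/\delta)}$ simultaneously for all $i$ with probability at least $1-\delta$, as claimed.

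The computations are routine; the only genuine idea is to condition on the first $L-1$ layers—which makes the last layer's pre-activations honest Gaussians—combined with the cancellation $\sum_j v_j = 0$ that annihilates the conditional mean. The one bookkeeping point to keep track of is that $\mu_i$ scales like $\|\xb_{L-1,i}\|_2/\sqrt{m_L}$ and hence contributes nothing to the sub-Gaussian parameter of the sum, so centering is harmless; I do not anticipate any real obstacle.
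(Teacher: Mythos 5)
Your proof is correct and follows essentially the same strategy as the paper's: condition on $\Wb_1,\ldots,\Wb_{L-1}$, use the balanced $\pm 1$ structure of $\vb$ to make the conditional sum mean-zero, apply Hoeffding's inequality for sums of independent sub-Gaussian variables, and invoke Corollary~\ref{cor:randinit_normfinalbound} to control $\|\xb_{L-1,i}\|_2$. The only cosmetic difference is how the mean is removed: the paper pairs a $+1$ index with a $-1$ index and uses $\sigma(\wb_{L,j}^\top\xb_{L-1,i})-\sigma(\wb_{L,j+m_L/2}^\top\xb_{L-1,i})$ as the mean-zero summand, whereas you center each $\sigma(\la\wb_{L,j},\xb_{L-1,i}\ra)$ by its common conditional expectation $\mu_i$ and use $\sum_j v_j=0$; both give the same $O(m_L^{-1/2}\|\xb_{L-1,i}\|_2)$ per-term $\psi_2$-norm and the same final bound.
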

\begin{proof}[Proof of Lemma~\ref{lemma:randinit_outputbounded}]
Note that half of the entries of $\vb$ are $1$'s and the other half of the entries are $-1$'s. Therefore, without loss of generality, here we assume that $v_1=\cdots=v_{m_L/2} = 1$ and $v_{m_L/2 + 1}=\cdots=v_{m_L} = -1$. Clearly, we have $\EE(\hat{y}_i) = 0$. Moreover, plugging in the value of $\vb$ gives
\begin{align*}
    \hat{y_i} = \sum_{j = 1}^{m_L / 2} [\sigma(\wb_{L,j}^\top \xb_{L-1,i}) - \sigma(\wb_{L,j + m_L/2}^\top \xb_{L-1,i})].
\end{align*}
Apparently, we have $ \| \sigma(\wb_{L,j}^\top \xb_{L-1,i}) - \sigma(\wb_{L,j + m_L/2}^\top \xb_{L-1,i}) \|_{\psi_2} \leq C_1 m_L^{-1/2}$ for some absolute constant $C_1$. Therefore by Hoeffding's inequality and Corollary~\ref{cor:randinit_normfinalbound}, with probability at least $1 - \delta$, it holds that
\begin{align*}
    |\hat{y}_i| \leq C_2 \sqrt{\log( n / \delta)}
\end{align*}
for all $i=1,\ldots,n$.
\end{proof}

We now prove \ref{ThmResult:randinit_activationthreshold} in Theorem~\ref{thm:randinit}, which characterizes the activation pattern of ReLU networks with a parameter $\beta$. We summarize the result in Lemma~\ref{lemma:randinit_activationthreshold}.

\begin{lemma}\label{lemma:randinit_activationthreshold}
Define
\begin{align*}
\cS_{l,i}(\beta) = \{j\in[m_l]: |\la\wb_{l,j},\xb_{l-1,i}\ra|\le \beta \}. 
\end{align*}
For any $\delta>0$, if $m_l \geq C \max \{ [\beta^{-1}\log(nL/\delta)]^{2/3},  L^2 \log(nL/\delta) \}$ for some large enough constant $C>0$, then with probability at least $1 - \delta$, $|\cS_{l,i}(\beta)| \leq 2 m_l^{3/2}\beta$ for all $l=1,\ldots,L$ and $i=1,\ldots,n$. 
\end{lemma}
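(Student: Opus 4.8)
The plan is to fix a layer $l$, condition on the weights $\Wb_1,\dots,\Wb_{l-1}$ of the earlier layers, and reduce the bound on $|\cS_{l,i}(\beta)|$ to a Binomial tail estimate. Once $\Wb_1,\dots,\Wb_{l-1}$ are fixed, each $\xb_{l-1,i}$ is a deterministic vector, while the columns $\wb_{l,1},\dots,\wb_{l,m_l}$ of $\Wb_l$ are still i.i.d.\ $N(\mathbf{0},(2/m_l)\Ib)$. Hence $\la\wb_{l,j},\xb_{l-1,i}\ra$, $j\in[m_l]$, are i.i.d.\ $N\big(0,2\|\xb_{l-1,i}\|_2^2/m_l\big)$, and $|\cS_{l,i}(\beta)| = \sum_{j=1}^{m_l}\ind\{|\la\wb_{l,j},\xb_{l-1,i}\ra|\le\beta\}$ is a sum of $m_l$ i.i.d.\ Bernoulli variables with success probability $p_{l,i} := \PP\big(|\la\wb_{l,1},\xb_{l-1,i}\ra|\le\beta \mid \xb_{l-1,i}\big)$. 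Bounding the Gaussian density by its value at the origin gives $p_{l,i}\le 2\beta\big/\big(\sqrt{2\pi}\cdot\sqrt{2}\,\|\xb_{l-1,i}\|_2/\sqrt{m_l}\big)=\beta\sqrt{m_l}\big/\big(\sqrt{\pi}\,\|\xb_{l-1,i}\|_2\big)$.

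Next I would invoke Corollary~\ref{cor:randinit_normfinalbound} to control $\|\xb_{l-1,i}\|_2$. On an event $\mathcal{E}$ of probability at least $1-\delta/2$, one has $\big|\|\xb_{l,i}\|_2-1\big|\le C L\sqrt{\log(nL/\delta)/m}$ for all $l\in[L],i\in[n]$; the hypothesis $m_l\ge C L^2\log(nL/\delta)$ (together with Assumption~\ref{assump:m_scaling}, which makes $m=\min_l m_l$ of the same order) forces this bound below $1/2$, so on $\mathcal{E}$ we have $\|\xb_{l-1,i}\|_2\ge 1/2$, hence $p_{l,i}\le (2/\sqrt{\pi})\,\beta\sqrt{m_l}$ and the conditional mean satisfies $m_l p_{l,i}\le (2/\sqrt{\pi})\,\beta m_l^{3/2}< 2\beta m_l^{3/2}$. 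Thus the target $2\beta m_l^{3/2}$ exceeds the mean by a fixed multiplicative factor $\sqrt{\pi}>1$, and a Chernoff bound for sums of independent Bernoulli variables yields $\PP\big(|\cS_{l,i}(\beta)|\ge 2\beta m_l^{3/2}\mid \xb_{l-1,i}\big)\le \exp(-c\beta m_l^{3/2})$ on $\mathcal{E}$, for an absolute constant $c>0$.

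To finish, the second part of the hypothesis, $m_l\ge C[\beta^{-1}\log(nL/\delta)]^{2/3}$, is precisely what guarantees $c\beta m_l^{3/2}\ge \log(2nL/\delta)$, so this conditional probability is at most $\delta/(2nL)$. Since the portion of $\mathcal{E}$ needed at layer $l$ (namely $\|\xb_{l-1,i}\|_2\ge 1/2$ for all $i$) is measurable with respect to $\Wb_1,\dots,\Wb_{l-1}$, I would take expectations, union bound over $i\in[n]$ and $l\in[L]$, and add $\PP(\mathcal{E}^c)\le\delta/2$, obtaining $|\cS_{l,i}(\beta)|\le 2m_l^{3/2}\beta$ for all $l,i$ with probability at least $1-\delta$.

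The main obstacle is making the conditional mean genuinely smaller than the target $2\beta m_l^{3/2}$ by a true constant factor: this is exactly why the hypothesis needs the $L^2\log(nL/\delta)$ term — it feeds the norm concentration of Corollary~\ref{cor:randinit_normfinalbound}, which keeps $\|\xb_{l-1,i}\|_2$ bounded below, and the slack between $2/\sqrt{\pi}\approx 1.13$ and the stated constant $2$ is the room the Chernoff bound converts into exponential decay. The only remaining care is the layerwise bookkeeping of the conditioning so that the norm-control event and the Binomial tail bounds combine cleanly over all $L$ layers, which is routine because $\xb_{l-1,i}$ depends only on $\Wb_1,\dots,\Wb_{l-1}$.
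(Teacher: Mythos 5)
Your proof is correct and follows essentially the same route as the paper's: condition on the earlier layers, reduce $|\cS_{l,i}(\beta)|$ to a sum of i.i.d.\ Bernoullis whose success probability is controlled by the Gaussian density at the origin together with the norm lower bound $\|\xb_{l-1,i}\|_2\ge 1/2$ from Corollary~\ref{cor:randinit_normfinalbound}, then apply a Chernoff/Bernstein tail bound and a union bound over $l,i$. Your treatment is, if anything, a bit more explicit than the paper's about the measurability of the conditioning event and about where each half of the hypothesis on $m_l$ is used, but no new ideas are introduced.
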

\begin{proof}[Proof of Lemma~\ref{lemma:randinit_activationthreshold}]
For fixed $l\in \{1,\ldots, L\}$ and $i\in \{1,\ldots,n\}$, define $Z_{j} = \ind\{ |\la\wb_{l,j},\xb_{l-1,i}\ra|\le \beta \}$. Note that by Corollary~\ref{cor:randinit_normfinalbound}, with probability at least $1 - \exp[-\Omega(m/L^2)]$ we have $\| \xb_{l-1,i} \|_2 \geq 1/2 $. Condition on $\xb_{l-1,i}$, we have
\begin{align*}
    \EE(Z_j) = \PP(j\in\cS_{l,i})\le \frac{(2m_l)^{1/2}}{\sqrt{2\pi}}\int_{-\beta}^{\beta}e^{-x^2m_l/2}\mathrm{d}x\le \pi^{-1/2} \beta m_l^{1/2}.
\end{align*}
Then by Bernstein inequality, with probability at least $1 - \exp(-\Omega( m_l^{3/2} \beta))$, 
\begin{align*}
    \sum_{j=1}^{m_l} Z_j \leq 2\cdot  m_l^{3/2} \beta.
\end{align*}
Applying union bound over $l=1,\ldots,L$, $i=1,\ldots,n$ and using the assumption that $m_l^{3/2} \beta \geq C \log(nL/\delta) $ completes the proof.
\end{proof}

We now prove \ref{ThmResult:randinit_matproductnorm}-\ref{ThmResult:randinit_matproductdoublesparsenorm} in Theorem~\ref{thm:randinit}. These results together characterize the Lipschitz continuity of the gradients. To show these results, we utilize similar proof technique we used in Lemma~\ref{lemma:randinit_normconcentration} and combine the resulting bound with covering number arguments. The details are given in Lemmas~\ref{lemma:lipschitzcontinuity_vector}, \ref{lemma:lip_GD_sparse0} and \ref{lemma:lipschitzcontinuity_sparsevector}.

\begin{lemma}\label{lemma:lipschitzcontinuity_vector}
For any $i=1,\ldots,n$ and $1\leq l_1 < l_2\leq L$, 
define
\begin{align*}
    \Lambda_{i,l_1,l_2} = \sup_{\ab\in S^{m_{l_1 - 1} - 1}, \mathrm{\mathbf{b}} \in S^{m_{l_2} - 1}} \mathrm{\mathbf{b}}^\top \Wb_{l_2}^\top \Bigg(\prod_{r= l_1 }^{l_2 - 1} \bSigma_{r,i}\Wb_{r}^\top\Bigg) \ab.
\end{align*}
% \begin{align*}
%     \Lambda_{i,l_1,l_2} = \sup_{\ab\in S^{m_{l_1} - 1}, \mathrm{\mathbf{b}} \in S^{m_{l_2} - 1}} \mathrm{\mathbf{b}}^\top \Wb_{l_2}^\top \bSigma_{l_2-1,i}\Wb_{l_2-1}^\top\cdot \cdots \cdot \bSigma_{l_1+1,i}\Wb_{l_1+1}^\top \ab.
% \end{align*}
If $m \geq C\log(nL^2/\delta)$ for some absolute constant $C$, then with probability at least $1 - \delta$ we have
\begin{align*}
    \Lambda_{i,l_1,l_2} \leq C' L
\end{align*}
for all $i= 1,\ldots,n$ and $1\leq l_1 < l_2\leq L$, where $C'$ is an absolute constant. 
\end{lemma}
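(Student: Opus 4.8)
The plan is to bound $\mathbf{b}^\top \Wb_{l_2}^\top \big(\prod_{r=l_1}^{l_2-1}\bSigma_{r,i}\Wb_r^\top\big)\ab$ by an $\epsilon$-net argument over the unit spheres $S^{m_{l_1-1}-1}$ and $S^{m_{l_2}-1}$, reducing the supremum to a maximum over finitely many fixed directions, and then control each fixed-direction quantity by the same ``half of the Gaussian mass survives ReLU'' concentration idea used in Lemma~\ref{lemma:randinit_normconcentration}. First I would fix $i$, $l_1$, $l_2$ and a pair of unit vectors $\ab,\mathbf{b}$. The difficulty compared with Lemma~\ref{lemma:randinit_normconcentration} is that the diagonal sign matrices $\bSigma_{r,i}$ depend on $\Wb_r$ through $\xb_{r-1,i}$, so the product is not a product of independent Gaussians acting on a fixed vector. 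To get around this, note that for the \emph{forward} propagation $\xb_{r,i}=\bSigma_{r,i}\Wb_r^\top\xb_{r-1,i}$ the sign pattern is exactly the one induced by $\Wb_r$, so conditioning layer by layer on $\xb_{1,i},\dots,\xb_{r-1,i}$ we can treat $\Wb_r^\top\xb_{r-1,i}$ as a fresh Gaussian vector with variance $2\|\xb_{r-1,i}\|_2^2/m_r$. For an \emph{arbitrary} fixed vector $\mathbf{z}$ plugged into the product from the left, however, $\bSigma_{r,i}$ is still determined by the forward pass and is independent of how $\Wb_r$ acts on $\mathbf{z}$ in the directions orthogonal to $\xb_{r-1,i}$; the standard trick is to decompose $\Wb_r^\top$ acting on $\mathbf{z}$ into its component along $\xb_{r-1,i}$ (correlated with $\bSigma_{r,i}$) plus an orthogonal component (independent of $\bSigma_{r,i}$, hence a clean Gaussian after conditioning), and then use that $\|\bSigma_{r,i}\|_2\le 1$ together with the norm-preservation estimate from Corollary~\ref{cor:randinit_normfinalbound} to propagate a bound that grows by a bounded factor per layer.

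Concretely I would proceed as follows. Step one: set up the layer-by-layer conditioning and show by induction on $l$ that, for any fixed unit $\mathbf{b}$, the vector $\mathbf{v}_{l} := \big(\prod_{r=l_1}^{l}\bSigma_{r,i}\Wb_r^\top\big)^\top$ acting appropriately — more precisely the relevant bilinear form — concentrates: conditioned on the previous layers, $\mathbf{b}^\top\Wb_{l_2}^\top(\cdots)\ab$ is a sum of $m_{l_2}$ independent sub-exponential terms with the right variance proxy, so Bernstein gives a tail $\exp(-\Omega(m\min\{t^2,t\}))$ around a mean of size $O(L)$ (the $L$ arises because each of the $l_2-l_1\le L$ conditioning steps can inflate the relevant norm by a multiplicative constant, but by Corollary~\ref{cor:randinit_normfinalbound} the accumulated factor stays $O(1)$; a cleaner route is to observe the expected squared norm telescopes and only the additive fluctuations, each $O(1/\sqrt{m})$ per layer, sum to $O(L/\sqrt{m})\le O(1)$). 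Step two: build $1/4$-nets $\cN_1\subseteq S^{m_{l_1-1}-1}$ and $\cN_2\subseteq S^{m_{l_2}-1}$ of cardinality at most $9^{m_{l_1-1}}$ and $9^{m_{l_2}}$ respectively; by the standard net-to-sup comparison, $\Lambda_{i,l_1,l_2}\le 2\max_{\ab\in\cN_1,\mathbf{b}\in\cN_2}\mathbf{b}^\top\Wb_{l_2}^\top(\prod_r\bSigma_{r,i}\Wb_r^\top)\ab$ — here one must be a little careful because changing $\ab$ changes $\bSigma_{r,i}$, but $\bSigma_{r,i}$ is fixed once $\Wb$ and $i$ are fixed (it is defined via the \emph{forward} pass on $\xb_i$), so the net argument applies verbatim to the fixed matrix $\Wb_{l_2}^\top\prod_r\bSigma_{r,i}\Wb_r^\top$ whose operator norm we are bounding. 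Step three: take a union bound over $\cN_1\times\cN_2$, over the $n$ data points, and over the $O(L^2)$ choices of $(l_1,l_2)$; since $m\ge C\log(nL^2/\delta)$ and the net contributes $\exp(O(m))$ while the Bernstein tail at level $t=O(L)$ is $\exp(-\Omega(mL))$, the $\exp(O(m))$ from the net is absorbed and the failure probability is at most $\delta$, yielding $\Lambda_{i,l_1,l_2}\le C'L$ simultaneously.

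The main obstacle is Step one: honestly handling the statistical dependence between the ReLU sign matrices $\bSigma_{r,i}$ and the Gaussian weight matrices $\Wb_r$ inside the product. The cleanest way to manage it is to exploit that $\bSigma_{r,i}$ is measurable with respect to $\Wb_1,\dots,\Wb_r$ \emph{only through its action on $\xb_{r-1,i}$}, so after conditioning on $\Wb_1,\dots,\Wb_{r-1}$ (equivalently on $\xb_{r-1,i}$) the random matrix $\Wb_r$ can be split as $\Wb_r = \Wb_r P_{\xb_{r-1,i}} + \Wb_r P^\perp_{\xb_{r-1,i}}$ where $P$ is the rank-one projection onto $\xb_{r-1,i}$; the first piece determines $\bSigma_{r,i}$ and is a single Gaussian vector, while the second is independent of $\bSigma_{r,i}$ and contributes the bulk of the bilinear form, which is then a genuine conditional Gaussian chaos amenable to Bernstein. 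Carrying this decomposition through all $l_2-l_1$ layers, keeping track that $\|\bSigma_{r,i}\|_2\le 1$ and that norms are preserved up to $1+O(L/\sqrt m)$ by Corollary~\ref{cor:randinit_normfinalbound}, is the technical heart; everything after that is routine net-plus-union-bound bookkeeping.
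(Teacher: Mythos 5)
Your plan reproduces the paper's proof architecture: the same ``key observation'' that, after conditioning on $\xb_{r-1,i}$, one decomposes the inserted vector into its component along $\xb_{r-1,i}$ (which is correlated with $\bSigma_{r,i}$ but contributes the right expected squared norm through the half-Gaussian truncation) plus its orthogonal complement (which is independent of $\bSigma_{r,i}$); the same layer-by-layer Bernstein induction as in Lemma~\ref{lemma:randinit_normconcentration}; the same $1/4$-nets over $S^{m_{l_1-1}-1}\times S^{m_{l_2}-1}$ with the correct remark that $\bSigma_{r,i}$ is fixed by the forward pass on $\xb_i$ and does not change as $\ab$ varies over the net; and the same union bound over $i,l_1,l_2$.

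One piece of the bookkeeping is internally inconsistent: your Step~1 parenthetical argues that the per-layer fluctuations are $O(1/\sqrt m)$ and accumulate to $O(L/\sqrt m)=O(1)$, which ignores that the $\exp(\Theta(m))$ net cardinality forces the per-layer deviation parameter to $\Theta(1)$; your Step~3 then instead invokes a Bernstein tail at level $t=O(L)$ with probability $\exp(-\Omega(mL))$, which is not how the $L$ factor arises. The paper instead pushes the net cardinality into the $\log$ inside the concentration bound, getting a per-fixed-net-point deviation of order $\sqrt{\log(9^{m_{l_0}+m_{l_2}}/\delta')/m}=O(1)$ via Assumption~\ref{assump:m_scaling} and letting the explicit $L$ in the Corollary~\ref{cor:randinit_normfinalbound}-style induction produce the final $C'L$. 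You should make the role of Assumption~\ref{assump:m_scaling} explicit (without $M\le 2m$ the net term is not $O(1)$), and replace the ``$t=O(L)$'' tail argument with the paper's cleaner $L$-accounting; otherwise the proposal is sound.
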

\begin{proof}[Proof of Lemma~\ref{lemma:lipschitzcontinuity_vector}]
Denote $l_0 = l_1 -1$, $\delta' = \delta/(nL^2)$ and
\begin{align*}
    g(\ab, \mathrm{\mathbf{b}}) = \mathrm{\mathbf{b}}^\top \Wb_{l_2}^\top \Bigg(\prod_{r= l_1 }^{l_2 - 1} \bSigma_{r,i}\Wb_{r}^\top\Bigg) \ab.
\end{align*}
Let $\cN_1 = \cN[S^{m_{l_0} - 1}, 1/4]$, $\cN_2 = \cN[S^{m_{l_2} - 1}, 1/4]$ be $1/4$-nets covering $S^{m_{l_0} - 1}$ and $S^{m_{l_2} - 1}$ respectively. Then by Lemma~5.2 in \cite{vershynin2010introduction}, we have
\begin{align*}
    |\cN_1|\leq 9^{m_{l_0}},~|\cN_2|\leq 9^{m_{l_2}}.
\end{align*}
A key observation is that, for any $i$, $l$ and any fixed vector $\balpha\in \RR^{m_{l-1}}$, although $\bSigma_{l,i}$ is defined by $\xb_{l-1, i}$ instead of $\balpha$, %condition on $\xb_{l-1, i}$ 
it still holds that
\begin{align*}
    \EE \big( \| \bSigma_{l,i} \Wb_{l}^\top \balpha \|_2^2 \big| \xb_{l-1,i} \big) &= \EE \Bigg[ \sum_{j=1}^{m_l} \ind\big\{ \wb_{l,j}^\top \xb_{l-1,i} > 0 \big\} \big( \wb_{l,j}^\top \balpha \big)^2 \Bigg| \xb_{l-1,i} \Bigg]\\
    &=\EE \Bigg\{ \sum_{j=1}^{m_l} \ind\big\{ \wb_{l,j}^\top \xb_{l-1,i} > 0 \big\} \big[ \wb_{l,j}^\top (\balpha_{\!/\mkern-5mu/\!} + \balpha_{\perp})\big]^2 \Bigg| \xb_{l-1,i} \Bigg\}\\
    & = \sum_{j=1}^{m_l} \EE \big[ \ind\big\{ \wb_{l,j}^\top \xb_{l-1,i} > 0 \big\} \big( \wb_{l,j}^\top \balpha_{\!/\mkern-5mu/\!} \big)^2 \big| \xb_{l-1,i} \big]\\
    & \qquad + \sum_{j=1}^{m_l} \EE \big[ \ind\big\{ \wb_{l,j}^\top \xb_{l-1,i} > 0 \big\} \big( \wb_{l,j}^\top \balpha_{\perp} \big)^2 \big| \xb_{l-1,i} \big]\\
    & =  \| \balpha_{\!/\mkern-5mu/\!} \|_2^2 + \| \balpha_{\perp} \|_2^2\\
    &= \| \balpha \|_2^2,
\end{align*}
where $\balpha_{\mathbin{\!/\mkern-5mu/\!}} = \balpha^\top \xb_{l-1,i}/\| \xb_{l-1,i} \|_2^2\cdot\xb_{l-1,i}$ and $\balpha_{\perp} = \balpha - \balpha_{\!/\mkern-5mu/\!}$. 
Therefore, similar to the proof of Lemma~\ref{lemma:randinit_normconcentration} and Corollary~\ref{cor:randinit_normfinalbound}, with probability at least $1 - \delta'/2$ we have
\begin{align*}
    \Bigg\| \Bigg(\prod_{r= l_1 }^{l_2 - 1} \bSigma_{r,i}\Wb_{r}^\top\Bigg) \hat{\ab} \Bigg\|_2 \leq C_0 L \sqrt{\frac{\log(2\cdot9^{m_{l_2}}/\delta')}{m}}
\end{align*}
for all $\hat{\ab} \in \cN_1$, where $C_0$ is an absolute constant. 
Therefore by Assumption~\ref{assump:m_scaling} and the assumption that $m \geq C\log(1/\delta')$ for some absolute constant $C$, with probability at least $1 - \delta'$, we have
\begin{align*}
    |g(\hat{\ab},\hat{\mathrm{\mathbf{b}}})|  \leq C_1 L\sqrt{ \frac{ \log(2\cdot 9^{m_{l_0} + m_{l_2}} /\delta')}{m}} \leq C_2 L\sqrt{ \frac{ m_{l_0} + m_{l_2} + m}{m}} \leq C_3 L
\end{align*}
for all $\hat{\ab} \in \cN_1$ and $\hat{\mathrm{\mathbf{b}}}\in \cN_2$, where $C_1,C_2,C_3$ are absolute constants. For any $\ab\in S^{m_{l_0} - 1}$ and $\mathrm{\mathbf{b}} \in S^{m_{l_2} - 1}$, there exists $\hat{\ab} \in \cN_1$ and $\hat{\mathrm{\mathbf{b}}}\in \cN_2$ such that $\| \ab - \hat{\ab}\|_2 \leq 1/4$, $\| \mathrm{\mathbf{b}} - \hat{\mathrm{\mathbf{b}}} \|_2\leq 1/4$. Therefore, we have
\begin{align*}
    g(\ab, \mathrm{\mathbf{b}}) &\leq |g(\hat{\ab}, \hat{\mathrm{\mathbf{b}}})| + |g(\ab, \mathrm{\mathbf{b}}) - g(\hat{\ab}, \hat{\mathrm{\mathbf{b}}})|\\
    & \leq C_3 L  + |g(\ab, \mathrm{\mathbf{b}}) - g(\hat{\ab}, \mathrm{\mathbf{b}})| + |g(\hat{\ab}, \mathrm{\mathbf{b}}) - g(\hat{\ab}, \hat{\mathrm{\mathbf{b}}})|.
\end{align*}
Since $g(\ab,\mathrm{\mathbf{b}})$ is a bilinear function, we have
\begin{align*}
    |g(\ab, \mathrm{\mathbf{b}}) - g(\hat{\ab}, \mathrm{\mathbf{b}})| = |g(\ab - \hat{\ab}, \mathrm{\mathbf{b}})| \leq \| \ab - \hat{\ab} \|_2 \bigg|g\bigg(\frac{\ab - \hat{\ab}}{\| \ab - \hat{\ab} \|_2}, \mathrm{\mathbf{b}}\bigg)\bigg| \leq \| \ab - \hat{\ab} \|_2 \Lambda_{i,l_1,l_2} \leq \Lambda_{i,l_1,l_2}/4.
\end{align*}
Similarly, we also have
\begin{align*}
    |g(\hat\ab, \mathrm{\mathbf{b}}) - g(\hat{\ab}, \hat{\mathrm{ \mathbf{b}}})| \leq \Lambda_{i,l_1,l_2}/4.
\end{align*}
Therefore, we have
\begin{align*}
    \Lambda_{i,l_1,l_2} \leq C_4L,
\end{align*}
where $C_4$ is an absolute constant. Applying union bound over all $i=1,\ldots, n$ and all $1\leq l_1 < l_2 \leq L$ completes the proof.
\end{proof}

\begin{lemma}\label{lemma:lip_GD_sparse0}
For any $i=1,\ldots,n$ and $1\leq l \leq L$, If $s \geq C\log(nL/\delta)$ for some absolute constant $C$, then with probability at least $1 - \delta$, we have 
\begin{align*}
\Gamma_{i,l,s} = \sup_{ \substack{\ab \in S^{m_{l - 1} - 1}\\ \| \ab \|_0\leq s}} \vb^\top\Bigg(\prod_{r=l}^{L}\bSigma_{r,i}\Wb_{r}^{\top}\Bigg) \ab \leq C' L\sqrt{s\log(M)}
\end{align*}
for all $i=1,\ldots,n$ and $l=1,\ldots,L$, where $C'$ is an absolute constant.
\end{lemma}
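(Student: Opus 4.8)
The quantity $\Gamma_{i,l,s}$ is the supremum, over $s$-sparse unit vectors $\ab$, of the \emph{linear} functional $\ab\mapsto\vb^\top\big(\prod_{r=l}^{L}\bSigma_{r,i}\Wb_r^\top\big)\ab$; note that the matrix $\Mb_{i,l}:=\prod_{r=l}^L\bSigma_{r,i}\Wb_r^\top$ does not depend on $\ab$, since each $\bSigma_{r,i}$ is defined through the forward pass of the fixed input $\xb_i$. Bounding $\|\Mb_{i,l}\|_2\,\|\vb\|_2$ only gives $O(L\sqrt{m_L})$, which is far too large; instead the plan is to (1) prove a sharp tail bound for a single fixed $s$-sparse unit $\ab$, exploiting the half-$+1$/half-$-1$ structure of $\vb$; (2) union-bound over an $\varepsilon$-net covering all $s$-sparse directions together with $i$ and $l$; and (3) pass back from the net to the supremum using linearity.

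For step (1), fix $i,l$ and an $s$-sparse unit $\ab$, and condition on $\Wb_1,\dots,\Wb_{L-1}$, so that $\zb:=\big(\prod_{r=l}^{L-1}\bSigma_{r,i}\Wb_r^\top\big)\ab$ becomes a deterministic vector. Then $\vb^\top\Mb_{i,l}\ab=\sum_{j=1}^{m_L}X_j$ with $X_j=v_j\ind\{\la\wb_{L,j},\xb_{L-1,i}\ra>0\}\la\wb_{L,j},\zb\ra$, and the $X_j$ are independent over $j$. Splitting $\wb_{L,j}$ into its parts parallel and orthogonal to $\xb_{L-1,i}$ gives $\EE[X_j\mid\Wb_{<L}]=v_j\cdot c\,\la\zb,\xb_{L-1,i}\ra/(\sqrt{m_L}\,\|\xb_{L-1,i}\|_2)$ for an absolute constant $c$; since the factor multiplying $v_j$ does not depend on $j$ and $\sum_j v_j=0$, we get $\EE[\sum_j X_j\mid\Wb_{<L}]=0$. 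Moreover $|X_j|\le|\la\wb_{L,j},\zb\ra|$, the absolute value of an $N(0,2\|\zb\|_2^2/m_L)$ variable, so $\|X_j-\EE X_j\|_{\psi_2}\le C\|\zb\|_2/\sqrt{m_L}$, and the standard bound for sums of independent centered sub-Gaussian random variables yields $\PP(|\vb^\top\Mb_{i,l}\ab|>t\mid\Wb_{<L})\le 2\exp(-c' t^2/\|\zb\|_2^2)$. Finally $\|\zb\|_2$ is controlled uniformly in $\ab$: if $l\le L-2$ then $\|\zb\|_2\le\Lambda_{i,l,L-1}\le C_0 L$ by Lemma~\ref{lemma:lipschitzcontinuity_vector}; if $l=L-1$ then $\|\zb\|_2\le\|\Wb_{L-1}\|_2\le C_0$; and if $l=L$ then $\zb=\ab$ has unit norm. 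Hence on the event $\cE$ (measurable with respect to $\Wb_{<L}$) that these bounds — together with $\|\xb_{L-1,i}\|_2\ge 1/2$ from Corollary~\ref{cor:randinit_normfinalbound} — hold for all $i,l$, which fails with probability at most $\delta/2$, we obtain $\PP(|\vb^\top\Mb_{i,l}\ab|>t\mid\Wb_{<L})\le 2\exp(-c'' t^2/L^2)$.

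For steps (2)--(3): for every $l$ and every support set $S\subseteq[m_{l-1}]$ with $|S|=s$, take a $1/4$-net $\cN_S$ of the unit sphere of $\RR^S$ with $|\cN_S|\le 9^s$, and put $\cN=\bigcup_l\bigcup_{|S|=s}\cN_S$, so $|\cN|\le L\, m^s 9^s\le L(9M)^s$. Union-bounding the conditional estimate over $\cN$, over $i=1,\dots,n$, and over the failure of $\cE$, and taking $t=C_1 L\sqrt{s\log(9M)+\log(nL/\delta)}$, gives $\max_{\hat\ab\in\cN,\,i,\,l}|\vb^\top\Mb_{i,l}\hat\ab|\le t$ with probability at least $1-\delta$; since $s\ge C\log(nL/\delta)$ forces $\log(nL/\delta)\le s\log(9M)$, we may replace $t$ by $C'L\sqrt{s\log M}$. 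To pass to the supremum, given any $s$-sparse unit $\ab$ with support $S$, choose $\hat\ab\in\cN_S$ with $\|\ab-\hat\ab\|_2\le 1/4$; the key point is that $\ab-\hat\ab$ is again supported on $S$, hence $s$-sparse, so $\vb^\top\Mb_{i,l}\ab\le t+\|\ab-\hat\ab\|_2\cdot\Gamma_{i,l,s}\le t+\tfrac14\Gamma_{i,l,s}$; taking the supremum over $\ab$ gives $\Gamma_{i,l,s}\le\tfrac43 t\le C'L\sqrt{s\log M}$, which is the claim after relabeling $C'$. The main obstacle is step (1): one must notice that the obvious estimate loses a $\sqrt{m_L}$ factor, recover it from the mean-zero cancellation built into $\vb$, and keep the sub-Gaussian scale at the sharp value $\|\zb\|_2=O(L)$ supplied by Lemma~\ref{lemma:lipschitzcontinuity_vector}; everything after that is a routine covering argument whose only delicate point is that the net increment $\ab-\hat\ab$ must remain $s$-sparse so it can be reabsorbed into $\Gamma_{i,l,s}$.
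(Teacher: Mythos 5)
Your proof is correct and follows the same strategy as the paper's: condition on $\Wb_1,\ldots,\Wb_{L-1}$, obtain a sub-Gaussian tail for $\vb^\top\big(\prod_{r=l}^L\bSigma_{r,i}\Wb_r^\top\big)\hat\ab$ at a single net point, union-bound over $i$, $l$ and a $\binom{m_{l-1}}{s}9^s$-point net of $s$-sparse unit vectors, and pass to the supremum using that the net increment $\ab-\hat\ab$ remains $s$-sparse. Where you go beyond the paper is step (1): the paper simply invokes ``similar to the proof of Lemma~\ref{lemma:lipschitzcontinuity_vector},'' but that lemma ends with a bare $\Wb_{l_2}^\top$ (no $\bSigma$), so $\hat{\mathbf{b}}^\top\Wb_{l_2}^\top\zb$ is a centered Gaussian automatically; here the last factor is $\bSigma_{L,i}\Wb_L^\top$, whose conditional mean is a nonzero constant multiple of $\vb$-independent direction, and the cancellation you identify — that $\EE[v_j\sigma'(\wb_{L,j}^\top\xb_{L-1,i})\wb_{L,j}^\top\zb\mid\Wb_{<L}]$ is the same quantity times $v_j$, so the balanced $\pm1$ structure of $\vb$ kills the sum — is exactly the missing ingredient that lets the fixed-direction projection win back the $\sqrt{m_L}$ factor. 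Your proposal makes explicit a step the paper's terse citation elides, and you supply the correct sub-Gaussian scale $\|\zb\|_2=O(L)$ from Lemma~\ref{lemma:lipschitzcontinuity_vector} to close the bound.
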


\begin{proof}[Proof of Lemma~\ref{lemma:lip_GD_sparse0}]
%The proof is similar to the proof of Lemma~\ref{lemma:lipschitzcontinuity_vector}. 
Denote $l_0 = l_1 - 1$ and $\delta' = \delta/(nL)$.
Let $\cM$ be a fixed subspace of $\RR^{m_{l_0}}$ with sparsity $s$. Then there are $\binom{m_{l_0}}{s}$ choices of $\cM$. 
let $\cN = \cup_{\cM}\cN(\cM, 1/2)$ be the union of $1/2$-nets covering each choice of $\cM$. Then by Lemma~5.2 in \cite{vershynin2010introduction}, we have
\begin{align*}
    |\cN|\leq \binom{m_{l_0}}{s} 5^{s}.
\end{align*}
Let 
\begin{align*}
    g(\ab) = \frac{\vb^\top}{\| \vb \|_2}  \Bigg(\prod_{r= l_1 }^{L } \bSigma_{r,i}\Wb_{r}^\top\Bigg) \ab,
\end{align*}
then similar to the proof of Lemma~\ref{lemma:lipschitzcontinuity_vector}, with probability at least $1 - \delta'$, 
\begin{align*}
    |g(\hat{\ab})| &\leq C_1 L\sqrt{ \frac{\log\big[5^{s} \cdot \binom{m_{l_0}}{s}/\delta' \big]}{m}}   \leq C_1 L\sqrt{ \frac{\log\big[5^{s} \cdot (e m_{l_0}/s)^s \big] + s}{m}}\leq C_2 L\sqrt{\frac{s\log(m)}{m}}
\end{align*}
for all $\hat{\ab} \in \cN$, where $C_1,C_2$ are absolute constants. For any $\ab\in S^{m_{l_0} - 1}$ with at most $s$ non-zero entries, there exists $\hat{\ab} \in \cN$ such that $\| \ab - \hat{\ab}\|_2 \leq 1/2$. Therefore, we have
\begin{align*}
    g(\ab) &\leq |g(\hat{\ab})| + |g(\ab) - g(\hat{\ab})|\\
    & \leq C_3 L\sqrt{\frac{s\log(m)}{m}}  + |g(\ab) - g(\hat{\ab})|.
\end{align*}
Since $g(\ab)$ is linear and $\| \vb \|_2 = \sqrt{m_L}$, we have
\begin{align*}
    |g(\ab) - g(\hat{\ab})| = |g(\ab - \hat{\ab})| \leq \| \ab - \hat{\ab} \|_2 \bigg|g\bigg(\frac{\ab - \hat{\ab}}{\| \ab - \hat{\ab} \|_2}\bigg)\bigg| \leq m_L^{-1/2} \| \ab - \hat{\ab} \|_2 \Gamma_{i,l,s} \leq m_L^{-1/2} \Gamma_{i,l,s}/2.
\end{align*}
Therefore, we have
\begin{align*}
    \Gamma_{i,l,s} \leq C_3 L\sqrt{s\log(m)},
    \end{align*}
where $C_3$ is an absolute constant. Applying union bound over all $i=1,\ldots, n$ and all $1\leq l  \leq L$ completes the proof.
\end{proof}

\begin{lemma}\label{lemma:lipschitzcontinuity_sparsevector}
For any $i=1,\ldots,n$ and $1\leq l_1 < l_2\leq L$, 
define
\begin{align*}
    \hat{\Lambda}_{i,l_1,l_2,s} = \sup_{ \substack{\ab\in S^{m_{l_1 - 1} - 1}, \mathrm{\mathbf{b}} \in S^{m_{l_2} - 1},\\ \| \ab \|_0,\| \mathrm{\mathbf{b}} \|_0\leq s}} \mathrm{\mathbf{b}}^\top \Wb_{l_2}^\top \Bigg(\prod_{r= l_1 }^{l_2 - 1} \bSigma_{r,i}\Wb_{r}^\top\Bigg) \ab.
\end{align*}
If $s \geq C\log(nL^2/\delta)$ for some absolute constant $C$, then with probability at least $1 - \delta$ we have
\begin{align*}
    \hat{\Lambda}_{i,l_1,l_2,s} \leq C' L\sqrt{\frac{s\log(m)}{m}}
\end{align*}
for all $i=1,\ldots,n$ and $1\leq l_1 < l_2\leq L$, where $C'$ is an absolute constant. 
\end{lemma}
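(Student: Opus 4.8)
The plan is to run the same covering-number-plus-concentration scheme used in the proofs of Lemmas~\ref{lemma:lipschitzcontinuity_vector} and \ref{lemma:lip_GD_sparse0}, but with the $\epsilon$-nets taken over $s$-sparse supports, so that the net cardinality is only $\binom{m}{s}\cdot 9^{s}$ instead of $9^{m}$. Fixing $i$ and $1\le l_1<l_2\le L$, I would write $l_0=l_1-1$, $\delta'=\delta/(nL^2)$, and set
\[
g(\ab,\mathbf{b})=\mathbf{b}^\top\Wb_{l_2}^\top\Bigg(\prod_{r=l_1}^{l_2-1}\bSigma_{r,i}\Wb_r^\top\Bigg)\ab .
\]
For each pair of index sets $S\subseteq[m_{l_0}]$, $S'\subseteq[m_{l_2}]$ of size $s$, take a $1/4$-net of the unit sphere of the coordinate subspace indexed by $S$ (resp.\ $S'$); letting $\cN_1$, $\cN_2$ be the unions of these nets over all $S$, $S'$, Lemma~5.2 of \cite{vershynin2010introduction} together with $\binom{m}{s}\le(em/s)^{s}$ gives $\log|\cN_1|,\log|\cN_2|\le Cs\log(m)$. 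Crucially, for an $s$-sparse unit vector each net is chosen within its own coordinate subspace, so differences between a vector and its net point stay $s$-sparse; this is what makes the final bilinear bootstrap go through.

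The key step is the pointwise bound on the nets. Fix $\hat\ab\in\cN_1$, $\hat{\mathbf{b}}\in\cN_2$ and condition on $\Wb_1,\dots,\Wb_{l_2-1}$, so that $\mathbf{z}:=\big(\prod_{r=l_1}^{l_2-1}\bSigma_{r,i}\Wb_r^\top\big)\hat\ab$ is a fixed vector. Writing $\prod_{r=l_1}^{l_2-1}\bSigma_{r,i}\Wb_r^\top=\bSigma_{l_2-1,i}\big(\Wb_{l_2-1}^\top\prod_{r=l_1}^{l_2-2}\bSigma_{r,i}\Wb_r^\top\big)$ and invoking Theorem~\ref{thm:randinit}\ref{ThmResult:randinit_matproductnorm} (or \ref{ThmResult:randinit_normbounds} in the edge case $l_2=l_1+1$) gives $\|\mathbf{z}\|_2\le\overline{C}'L$, since $\bSigma_{l_2-1,i}$ has spectral norm at most $1$ and $\|\hat\ab\|_2=1$. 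The columns of $\Wb_{l_2}$ are independent $N(\mathbf{0},2m_{l_2}^{-1}\Ib)$ vectors, independent of $\mathbf{z}$, and $\|\hat{\mathbf{b}}\|_2=1$, so the scalar $g(\hat\ab,\hat{\mathbf{b}})=\hat{\mathbf{b}}^\top\Wb_{l_2}^\top\mathbf{z}$ is a centered Gaussian with variance $2\|\mathbf{z}\|_2^2/m_{l_2}=O(L^2/m)$ — this is exactly where the extra $m^{-1/2}$ relative to Lemma~\ref{lemma:lip_GD_sparse0} (whose fixed vector $\vb$ has norm $\sqrt{m_L}$) appears. A Gaussian tail bound together with a union bound over all $\hat\ab\in\cN_1$, $\hat{\mathbf{b}}\in\cN_2$ then yields $|g(\hat\ab,\hat{\mathbf{b}})|\le C'L\sqrt{s\log(m)/m}$ for all net points simultaneously, with probability at least $1-\delta'$, using $s\ge C\log(nL^2/\delta)$ to absorb the $\log(1/\delta')$ term into $s\log(m)$.

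To finish, I would lift this to all $s$-sparse unit vectors by the usual net bootstrap: given $s$-sparse unit $\ab,\mathbf{b}$, choose $\hat\ab,\hat{\mathbf{b}}$ in the nets attached to $\supp(\ab)$, $\supp(\mathbf{b})$ with approximation error $\le 1/4$; because $\ab-\hat\ab$ and $\mathbf{b}-\hat{\mathbf{b}}$ remain $s$-sparse, bilinearity of $g$ gives $g(\ab,\mathbf{b})\le|g(\hat\ab,\hat{\mathbf{b}})|+\tfrac12\hat{\Lambda}_{i,l_1,l_2,s}$, hence $\hat{\Lambda}_{i,l_1,l_2,s}\le 2C'L\sqrt{s\log(m)/m}$ after rearranging (the supremum is a.s.\ finite, so this is legitimate); a last union bound over $i\in[n]$ and $1\le l_1<l_2\le L$ closes the argument. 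The only point that needs a little care is bounding $\|\mathbf{z}\|_2$ without re-opening the layerwise norm-concentration induction of Lemma~\ref{lemma:randinit_normconcentration}; quoting Theorem~\ref{thm:randinit}\ref{ThmResult:randinit_matproductnorm} sidesteps this cleanly, and everything else is routine covering-number bookkeeping.
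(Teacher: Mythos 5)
Your proposal is correct and follows the same covering-number-plus-concentration scheme the paper uses for this lemma: restrict to $s$-sparse coordinate subspaces so the nets have size $\binom{m}{s}9^s$, bound the bilinear form on the nets, and bootstrap using the fact that differences against net points remain $s$-sparse. The one genuine (and welcome) departure is in how you control the intermediate vector $\mathbf{z}=\prod_{r=l_1}^{l_2-1}\bSigma_{r,i}\Wb_r^\top\hat\ab$: the paper re-runs the Bernstein-style layerwise norm-concentration argument (as in Lemma~\ref{lemma:randinit_normconcentration}) union-bounded over $\cN_1$ to get $\|\mathbf{z}\|_2=O(L)$, whereas you simply peel off $\bSigma_{l_2-1,i}$ (spectral norm $\le 1$) and invoke the already-established spectral-norm bound in \ref{ThmResult:randinit_matproductnorm} (Lemma~\ref{lemma:lipschitzcontinuity_vector}) — with \ref{ThmResult:randinit_normbounds} covering the edge case $l_2=l_1+1$. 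Since that bound is an operator-norm bound it is automatically uniform over $\hat\ab$, and since it involves only $\Wb_1,\ldots,\Wb_{l_2-1}$, conditioning on it leaves $\Wb_{l_2}$ untouched, so the conditional Gaussian tail bound for $\hat{\mathbf{b}}^\top\Wb_{l_2}^\top\mathbf{z}$ with variance $2\|\mathbf{z}\|_2^2/m_{l_2}=O(L^2/m)$ goes through exactly as you state. Both routes give the same order and the rest of your bookkeeping (net sizes, absorbing $\log(1/\delta')$ into $s\log(m)$ via $s\gtrsim\log(nL^2/\delta)$, and the $1/4$-net bootstrap) matches the paper; your version is marginally more economical because it reuses a proved fact rather than re-deriving it.
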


\begin{proof}[Proof of Lemma~\ref{lemma:lipschitzcontinuity_sparsevector}]
The proof is similar to the proof of Lemma~\ref{lemma:lipschitzcontinuity_vector} and Lemma~\ref{lemma:lip_GD_sparse0}. 
Denote $l_0 = l_1 - 1$ and $\delta' = \delta/(nL^2)$.
Let $\cM_1$ and $\cM_2$ be fixed subspaces of $\RR^{m_{l_0}}$ and $\RR^{m_{l_2}}$ with sparsity $s$ respectively. Then there are $\binom{m_{l_0}}{s}$ choices of $\cM_1$ and $\binom{m_{l_2}}{s}$ choices of $\cM_2$. 
let $\cN_1 = \cup_{\cM_1}\cN[\cM_1, 1/4]$, $\cN_2 = \cup_{\cM_1}\cN[\cM_2, 1/4]$ be the union of $1/4$-nets covering each choice of $\cM_1$ and $\cM_2$ respectively. Then by Lemma~5.2 in \cite{vershynin2010introduction}, 
\begin{align*}
    |\cN_1|\leq \binom{m_{l_0}}{s} 9^{s},~|\cN_2|\leq \binom{m_{l_2}}{s} 9^{s}.
\end{align*}
Let 
\begin{align*}
    g(\ab, \mathrm{\mathbf{b}}) = \mathrm{\mathbf{b}}^\top \Wb_{l_2}^\top \Bigg(\prod_{r= l_1 }^{l_2 - 1} \bSigma_{r,i}\Wb_{r}^\top\Bigg) \ab
\end{align*}
% Note that by direct calculation it holds that
% \begin{align*}
%     \EE\big( \| \bSigma_{l,i}\Wb_l^\top \alpha \|_2^2 \big) = \| \alpha \|_2^2
% \end{align*}
% for all $\alpha\in S^{d-1}$. 
Similar to previous proofs, it can be shown that with probability at least $1 - \delta'/2$,
\begin{align*}
    \Bigg\| \Bigg(\prod_{r= l_1 }^{l_2 - 1} \bSigma_{r,i}\Wb_{r}^\top\Bigg) \hat{\ab} \Bigg\|_2 \leq C_0 L \sqrt{\frac{\log(2\cdot9^{m_{l_0}}/\delta')}{m}}
\end{align*}
for all $\hat{a}\in \cN_1$, where $C_0$ is an absolute constant. Therefore with probability at least $1 - \delta'$, we have
\begin{align*}
    |g(\hat{\ab},\hat{\mathrm{\mathbf{b}}})| &\leq C_1 L\sqrt{ \frac{\log\big[9^{2s} \cdot \binom{m_{l_0}}{s}\cdot \binom{m_{l_2}}{s} /\delta' \big]}{m}}  \\
    & \leq C_1 L\sqrt{ \frac{\log\big[9^{2s} \cdot (e m_{l_0}/s)^s\cdot (e m_{l_2}/s)^s /\delta' \big]}{m}}
    \\
    &\leq C_2 L\sqrt{ \frac{ s  \log(m_{l_0} m_{l_2}) + s}{m}}\\
    &\leq C_3 L\sqrt{\frac{s\log(m)}{m}}
\end{align*}
for all $\hat{\ab} \in \cN_1$ and $\hat{\mathrm{\mathbf{b}}} \in \cN_2$, where $C_1,C_2,C_3$ are absolute constants. For any $\ab\in S^{m_{l_0} - 1}$ and $\mathrm{\mathbf{b}} \in S^{m_{l_2} - 1}$ with at most $s$ non-zero entries, there exists $\hat{\ab} \in \cN_1$ and $\hat{\mathrm{\mathbf{b}}}\in \cN_2$ such that $\| \ab - \hat{\ab}\|_2 \leq 1/4$, $\| \mathrm{\mathbf{b}} - \hat{\mathrm{\mathbf{b}}} \|_2\leq 1/4$. Therefore, we have
\begin{align*}
    g(\ab, \mathrm{\mathbf{b}}) &\leq |g(\hat{\ab}, \hat{\mathrm{\mathbf{b}}})| + |g(\ab, \mathrm{\mathbf{b}}) - g(\hat{\ab}, \hat{\mathrm{\mathbf{b}}})|\\
    & \leq C_3 L\sqrt{\frac{s\log(m)}{m}}  + |g(\ab, \mathrm{\mathbf{b}}) - g(\hat{\ab}, \mathrm{\mathbf{b}})| + |g(\hat{\ab}, \mathrm{\mathbf{b}}) - g(\hat{\ab}, \hat{\mathrm{\mathbf{b}}})|.
\end{align*}
Since $g(\ab,\mathrm{\mathbf{b}})$ is a bilinear function, we have
\begin{align*}
    |g(\ab, \mathrm{\mathbf{b}}) - g(\hat{\ab}, \mathrm{\mathbf{b}})| = |g(\ab - \hat{\ab}, \mathrm{\mathbf{b}})| \leq \| \ab - \hat{\ab} \|_2 \bigg|g\bigg(\frac{\ab - \hat{\ab}}{\| \ab - \hat{\ab} \|_2}, \mathrm{\mathbf{b}}\bigg)\bigg| \leq \| \ab - \hat{\ab} \|_2 \tilde{\Lambda}_{i,l_1,l_2,s} \leq \hat{\Lambda}_{i,l_1,l_2,s}/4.
\end{align*}
Similarly, we also have
\begin{align*}
    |g(\ab, \mathrm{\mathbf{b}}) - g(\hat{\ab}, \mathrm{\mathbf{b}})| \leq \hat{\Lambda}_{i,l_1,l_2,s}/4.
\end{align*}
Therefore, we have
\begin{align*}
    \hat{\Lambda}_{i,l_1,l_2,s} \leq C_4 L\sqrt{\frac{s\log(m)}{m}},
\end{align*}
where $C_4$ is an absolute constant. Applying union bound over all $i=1,\ldots, n$ and all $1\leq l_1 < l_2 \leq L$ completes the proof.
\end{proof}

We now prove 
\ref{ThmResult:randinit_gradientuniformlowerbound} in Theorem~\ref{thm:randinit}.
In order to prove this result, we first show that the inner products between normalized hidden layer outputs are lower bounded by a constant related to $\mu$.

\begin{lemma}\label{lemma:randinit_gaussianproductmonotone}
For $\theta > 0$, let $Z_1$, $Z_2$ be two jointly Gaussian random variables with $\EE(Z_1) = \EE(Z_2) = 0$, $\EE(Z_1^2) = \EE(Z_2^2) = 1$.
Then
\begin{align*}
    \EE[\sigma(Z_1) \sigma(Z_2)] \geq \frac{1}{2} \EE[\sigma(Z_1) \sigma(Z_2)].
\end{align*}
\end{lemma}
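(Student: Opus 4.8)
Taken literally the inequality is vacuous: since $\sigma(x)=\max\{x,0\}\ge 0$, the product $\sigma(Z_1)\sigma(Z_2)$ is nonnegative, so $\EE[\sigma(Z_1)\sigma(Z_2)]\ge 0$ and any nonnegative number is at least half itself; the unused hypothesis ``$\theta>0$'' indicates that a constraint on the correlation $\EE(Z_1Z_2)$ (of the sort appearing in Lemma~\ref{lemma:randinit_positivegaussianproduct}) has dropped out. I therefore read the intent as a quantitative statement, and the plan is to prove the two facts that are actually used downstream: (a) $\EE[\sigma(Z_1)\sigma(Z_2)]\ge \tfrac12\EE(Z_1Z_2)$, and (b) monotonicity of $\EE[\sigma(Z_1)\sigma(Z_2)]$ in the correlation — the ``monotone'' that the label refers to.

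For (a), I would reuse verbatim the splitting from the proof of Lemma~\ref{lemma:randinit_positivegaussianproduct}, namely the identity \eqref{eq:randinit_positivegaussianproduct_eq1}, which reads $\EE[\sigma(Z_1)\sigma(Z_2)]=\tfrac12\EE(Z_1Z_2)+\EE[(-Z_1)Z_2\,\ind\{-Z_1\ge 0,\ Z_2\ge 0\}]$; on the indicated event the integrand is a product of two nonnegative quantities, so the second term is $\ge 0$ and (a) follows using only zero means and unit variances. For (b), set $\psi(\rho):=\EE[\sigma(Z_1)\sigma(Z_2)]$ when $\EE(Z_1Z_2)=\rho$ with $N(0,1)$ marginals. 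I would either invoke Price's theorem / Gaussian integration by parts, $\psi'(\rho)=\EE[\sigma'(Z_1)\sigma'(Z_2)]=\PP(Z_1>0,\ Z_2>0)=\tfrac14+\tfrac{1}{2\pi}\arcsin\rho\ge 0$, or differentiate the closed form $\psi(\rho)=\tfrac{1}{2\pi}\big(\sqrt{1-\rho^2}+\rho(\pi-\arccos\rho)\big)$, whose derivative collapses to $\tfrac{1}{2\pi}(\pi-\arccos\rho)\ge 0$; either way $\psi$ is nondecreasing, and in particular $\psi(\rho)\ge\psi(0)=\EE[\sigma(Z_1)]\EE[\sigma(Z_2)]=\tfrac{1}{2\pi}$ for $\rho\ge 0$.

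The monotone form is what connects to result~\ref{ThmResult:randinit_gradientuniformlowerbound}, whose proof needs a lower bound on the normalized inner products $\overline{\xb}_{l,i}^\top\overline{\xb}_{l,i'}$. One ReLU layer sends a correlation $\rho$ to $2\psi(\rho)$, so monotonicity lets a layer-$0$ bound propagate through all $L$ layers. The layer-$0$ bound comes from Assumption~\ref{assump:normalizeddata}: the common coordinate $(\xb_i)_d=\mu$ forces $\overline{\xb}_{0,i}^\top\overline{\xb}_{0,i'}\ge 2\mu^2-1>-1$; the expansion of $\psi$ near $\rho=-1$, using the same $\arccos(1-x)=\sqrt{2x}+O(x^{3/2})$ asymptotics as in Lemma~\ref{lemma:randinit_positivegaussianproduct}, yields $\psi(\rho)\gtrsim(1+\rho)^{3/2}$ and hence a strictly positive, $\mu$-dependent correlation after the first layer; monotonicity plus the fact that $\rho=1$ is a fixed point of $\rho\mapsto 2\psi(\rho)$ keeps the bound from degrading through the remaining layers.

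The only real subtlety in the lemma itself is that $\sigma$ is nonsmooth at $0$, so the use of Price's theorem needs a word of justification; I would either mollify $\sigma$ and pass to the limit (dominated convergence with a bounded Gaussian density), or avoid the issue entirely by differentiating the explicit arc-cosine kernel. With that, there is essentially no obstacle in the lemma — the genuine work lies in the layerwise induction just sketched, which I regard as the ``hard part'' of the broader argument this lemma serves.
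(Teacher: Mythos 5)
You correctly diagnose the typo in the statement --- the intended right-hand side is $\tfrac12\EE(Z_1 Z_2)$, which is what the paper actually proves and what Lemma~\ref{lemma:innerproductlowerboundinduction} invokes --- and your proof of that inequality, splitting $\EE(Z_1 Z_2)$ by quadrant and observing that the correction term $\EE\big[(-Z_1)Z_2\,\ind\{-Z_1\ge 0,\ Z_2\ge 0\}\big]$ is an expectation of a nonnegative integrand, is exactly the paper's argument. Your additional monotonicity remarks are true but tangential: the paper does not invoke monotonicity of $\rho\mapsto\EE[\sigma(Z_1)\sigma(Z_2)]$ to propagate the bound through layers; Lemma~\ref{lemma:innerproductlowerboundinduction} simply reapplies the $\ge\tfrac12\EE(Z_1Z_2)$ inequality at each layer together with Bernstein concentration.
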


\begin{proof}[Proof of Lemma~\ref{lemma:randinit_gaussianproductmonotone}]
The proof follows by direct calculation. By definition, we have
\begin{align*}
    \EE(Z_1 Z_2) 
    &= \EE(Z_1 Z_2  \ind\{ Z_1 \geq 0, Z_2 \geq 0 \} ) + \EE(Z_1 Z_2 \ind\{ Z_1 \leq 0, Z_2 \leq 0 \}) + \EE(Z_1 Z_2 \ind\{ Z_1 \leq 0, Z_2 \geq 0 \})  \\
    &\quad  + \EE(Z_1 Z_2 \ind\{ Z_1 \geq 0, Z_2 \leq 0 \}) \\
    & = 2\cdot \EE[\sigma(Z_1)\sigma(Z_2)] + 2\cdot \EE(Z_1 Z_2 \ind \{ Z_1 \leq 0, Z_2 \geq 0 \}).
\end{align*}
Rearranging terms gives
\begin{align*}
    \EE[\sigma(Z_1)\sigma(Z_2)] = \frac{1}{2} \EE(Z_1 Z_2) - \EE(Z_1 Z_2 \ind \{ Z_1 \leq 0, Z_2 \geq 0 \}) \geq \frac{1}{2} \EE(Z_1 Z_2).
\end{align*}
This completes the proof.
\end{proof}

\begin{lemma}\label{lemma:innerproductlowerboundinduction}
    Suppose that $\Wb_1,\ldots,\Wb_L$ are generated via Gaussian initialization. Let $\overline{\xb}_{l,i} = \xb_{l,i} / \| \xb_{l,i} \|_2$, $i= 1,\ldots,n$. For any $\delta > 0$, if $m = \geq C L^4\mu^{-4} \log(n^2L/\delta)$ for some large enough absolute constant $C$, then with probability at least $1-\delta$,
\begin{align*}
    \overline\xb_{l,i}^\top \overline\xb_{l,i'} \geq \mu^2 [1 - (2L)^{-1} \log(2)]^l
\end{align*}
for all $i,i'\in \{1,\ldots, n \}$ and all $l\in [L]$.
\end{lemma}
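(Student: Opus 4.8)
The plan is to prove Lemma~\ref{lemma:innerproductlowerboundinduction} by induction on $l$, in close analogy with the proof of Lemma~\ref{lemma:randinit_distancelowerboundinduction}, except that here we track a \emph{lower} bound on the cosine similarity $\rho_l^{(i,i')} := \overline{\xb}_{l,i}^\top\overline{\xb}_{l,i'}$ rather than an upper bound on a distance. The base case is $l = 0$: since every input satisfies $(\xb_i)_d = \mu$ and $\|\xb_i\|_2 = 1$ (Assumption~\ref{assump:normalizeddata}), the common bias coordinate gives $\overline{\xb}_{0,i}^\top\overline{\xb}_{0,i'} = \xb_i^\top\xb_{i'} \geq \mu^2$; this is exactly the place where the input-layer bias term is indispensable, and it supplies the factor $\mu^2$ in the conclusion. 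All the randomness is then confined to a single event of probability at least $1-\delta$ — the intersection, over all $i,i'$ and all $l$, of the sub-exponential concentration events and the norm-concentration event of Corollary~\ref{cor:randinit_normfinalbound} described below — on which a recursive estimate relating $\rho_l$ to $\rho_{l-1}$ holds simultaneously for every layer and every pair, so that the conclusion follows by a deterministic induction on $l$.

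For the inductive step, fix $i, i'$ and condition on $\xb_{l-1,i},\xb_{l-1,i'}$. Writing $\xb_{l,i}^\top\xb_{l,i'} = \sum_{j=1}^{m_l}\sigma(\wb_{l,j}^\top\xb_{l-1,i})\sigma(\wb_{l,j}^\top\xb_{l-1,i'})$, standardizing each pair $(\wb_{l,j}^\top\xb_{l-1,i},\wb_{l,j}^\top\xb_{l-1,i'})$ (whose correlation is exactly $\rho_{l-1}^{(i,i')}$) and invoking Lemma~\ref{lemma:randinit_gaussianproductmonotone} together with the $2/m_l$ variance scaling — which absorbs the factor $\tfrac12$ in that lemma — gives
\begin{align*}
\EE\big[\xb_{l,i}^\top\xb_{l,i'}\,\big|\,\xb_{l-1,i},\xb_{l-1,i'}\big] \;\geq\; \|\xb_{l-1,i}\|_2\,\|\xb_{l-1,i'}\|_2\cdot\rho_{l-1}^{(i,i')},
\end{align*}
i.e.\ the normalized correlation does not decrease in expectation. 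Each summand is a product of two sub-Gaussian variables, hence sub-exponential with $\psi_1$-norm $O\big((\|\xb_{l-1,i}\|_2 + \|\xb_{l-1,i'}\|_2)^2/m_l\big)$; Bernstein's inequality and a union bound over all $i,i',l$ (as in the proof of Lemma~\ref{lemma:randinit_normconcentration}) then give, on the event above,
\begin{align*}
\big|\xb_{l,i}^\top\xb_{l,i'} - \EE[\,\cdot\,|\,\xb_{l-1,i},\xb_{l-1,i'}]\big| \;\leq\; C\big(\|\xb_{l-1,i}\|_2 + \|\xb_{l-1,i'}\|_2\big)^2\sqrt{\tfrac{\log(n^2L/\delta)}{m}}.
\end{align*}
Combining these with Corollary~\ref{cor:randinit_normfinalbound}, which pins $\|\xb_{l-1,i}\|_2$, $\|\xb_{l-1,i'}\|_2$ and the normalizers $\|\xb_{l,i}\|_2$, $\|\xb_{l,i'}\|_2$ to within a $1\pm O\big(L\sqrt{\log(nL/\delta)/m}\big)$ factor of $1$, we obtain the recursion
\begin{align*}
\rho_l^{(i,i')} \;\geq\; \rho_{l-1}^{(i,i')}\Big(1 - C'L\sqrt{\tfrac{\log(n^2L/\delta)}{m}}\Big) - C'\sqrt{\tfrac{\log(n^2L/\delta)}{m}}.
\end{align*}
Since the inductive hypothesis gives $\rho_{l-1}^{(i,i')} \geq \mu^2[1-(2L)^{-1}\log 2]^{l-1} \geq \tfrac12\mu^2$, the assumed bound $m \geq \overline{C} L^4\mu^{-4}\log(n^2L/\delta)$ makes each error term at most $\tfrac{\log 2}{4L}\rho_{l-1}^{(i,i')}$, so $\rho_l^{(i,i')} \geq (1 - (2L)^{-1}\log 2)\,\rho_{l-1}^{(i,i')} \geq \mu^2[1-(2L)^{-1}\log 2]^l$, which closes the induction.

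I expect the error bookkeeping behind the last recursion to be the main obstacle. Lemma~\ref{lemma:randinit_gaussianproductmonotone} only guarantees that the normalized correlation does not \emph{decrease} in the infinite-width limit (it provides no quantitative increase), so the entire decay factor $[1-(2L)^{-1}\log 2]^l$ in the conclusion has to be ``paid for'' purely by the per-layer concentration error; that error must therefore be uniformly small over all $\binom{n}{2}$ pairs and all $L$ layers, and small relative to a target of size only $\Theta(\mu^2)$. Concretely, the \emph{additive} deviation of $\xb_{l,i}^\top\xb_{l,i'}$ must be small compared with $\mu^2/L$ (which forces the $\mu^{-4}$ factor), while the \emph{multiplicative} error coming from re-normalizing by $\|\xb_{l,i}\|_2\|\xb_{l,i'}\|_2$ — which already carries a factor $L$ from Corollary~\ref{cor:randinit_normfinalbound} — must be $O(1/L)$ of the per-layer budget $\Theta(1/L)$ (which forces the extra $L^4$); together these explain the $L^4\mu^{-4}\log(n^2L/\delta)$ lower bound on $m$ in the hypothesis. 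Everything else — the Gaussian computation in the base case, Bernstein's inequality, and the norm estimates — is routine given the earlier lemmas.
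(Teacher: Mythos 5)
Your inductive step mirrors the paper's argument exactly: both invoke Lemma~\ref{lemma:randinit_gaussianproductmonotone} to lower bound the conditional expectation of $\xb_{l,i}^\top\xb_{l,i'}$ by $\xb_{l-1,i}^\top\xb_{l-1,i'}$, then apply Bernstein's inequality with a union bound over $i,i',l$ to get additive concentration, then use Corollary~\ref{cor:randinit_normfinalbound} to control the renormalizers $\|\xb_{l,i}\|_2,\|\xb_{l,i'}\|_2$, and both absorb the combined error into a per-layer contraction factor $1-(2L)^{-1}\log 2$ under the same $m\geq C L^4\mu^{-4}\log(n^2L/\delta)$ hypothesis. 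So the inductive machinery is the same.

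The base case claim, however, contains a genuine error. You assert that $\overline{\xb}_{0,i}^\top\overline{\xb}_{0,i'} = \xb_i^\top\xb_{i'} \geq \mu^2$ follows from Assumption~\ref{assump:normalizeddata} because of the shared bias coordinate. But $(\xb_i)_d=\mu$ and $\|\xb_i\|_2=1$ only force the first $d-1$ coordinates to lie on a sphere of radius $\sqrt{1-\mu^2}$, with no sign constraint, so $\xb_i^\top\xb_{i'} = \langle(\xb_i)_{1:d-1},(\xb_{i'})_{1:d-1}\rangle+\mu^2 \geq -(1-\mu^2)+\mu^2 = 2\mu^2-1$, which is negative for $\mu<1/\sqrt{2}$. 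A concrete counterexample is $\xb_1=(\sqrt{1-\mu^2},0,\ldots,0,\mu)$, $\xb_2=(-\sqrt{1-\mu^2},0,\ldots,0,\mu)$, which satisfy the assumption with $\xb_1^\top\xb_2 = 2\mu^2-1<\mu^2$. For what it is worth, the paper's own proof of this lemma never states a base case — it begins the induction directly from the hypothesis — so you have exposed a gap the paper glosses over rather than filled it; but the specific justification in your proposal is incorrect, and without an additional hypothesis (such as nonnegative inputs, or an explicit assumption that $\xb_i^\top\xb_{i'}\geq\mu^2$) the induction cannot be started and the stated conclusion does not follow.
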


\begin{proof}[Proof of Lemma~\ref{lemma:innerproductlowerboundinduction}] %We first prove by induction that with high probability, $\xb_{l,i}^\top \xb_{l,i} \geq \mu^2 (1 - L)^l$ for all $l\in [L]$. 
We prove the result by induction. 
Suppose that $\overline\xb_{l,i}^\top \overline\xb_{l,i} \geq \mu^2 (1 - (2L)^{-1}\log(2))^l$. Then by Lemma~\ref{lemma:randinit_gaussianproductmonotone}, we have
\begin{align*}
    \EE(\xb_{l,i}^\top \xb_{l,i'} | \xb_{l-1,i}, \xb_{l-1,i'} ) \geq \frac{1}{2} \EE [ (\Wb_{l}^\top\xb_{l-1,i})^\top  ( \Wb_l \xb_{l-1,i'}) ]  = \xb_{l-1,i}^\top \xb_{l-1,i'}.
\end{align*}
Condition on $\xb_{l-1,i}$ and $\xb_{l-1,i'}$, we have
\begin{align*}
    \big\| \sigma(\wb_{l,j}^\top \xb_{l-1,i})\cdot \sigma(\wb_{l,j}^\top \xb_{l-1,i'}) \big\|_{\psi_1} &\leq C_1 \big\| [\sigma(\wb_{l,j}^\top \xb_{l-1,i})  \big\|_{\psi_2} \cdot  \big\|  \sigma(\wb_{l,j}^\top \xb_{l-1,i'})]^2 \big\|_{\psi_2} \\
    &\leq C_2 \| \xb_{l-1,i} \|_2 \| \xb_{l-1,i'} \|_2  /m_l,
\end{align*}
where $C_1,C_2$ are absolute constants. Note that we have $[1 - (2L)^{-1} \log(2)]^l \geq [1 - (2L)^{-1} \log(2)]^L \geq 1/2$. Therefore if $m \geq C L^4 \mu^{-4} \log(n^2L/\delta)$ for some large enough constant $C$, then by Bernstein inequality, union bound and \ref{ThmResult:randinit_normbounds}, with probability at least $1-\delta$ we have
\begin{align*}
    \xb_{l,i}^\top \xb_{l,i'} &\geq \xb_{l-1,i}^\top \xb_{l-1,i'} -  C_3 \| \xb_{l-1,i} \|_2 \| \xb_{l-1,i'} \|_2 \sqrt{ \frac{\log(n^2L / \delta)}{m_l}}\\
    &= \| \xb_{l-1,i} \|_2 \| \xb_{l-1,i'} \|_2  \overline\xb_{l-1,i}^\top \overline\xb_{l-1,i'} -  C_3 \| \xb_{l-1,i} \|_2 \| \xb_{l-1,i'} \|_2 \sqrt{ \frac{\log(n^2L / \delta)}{m_l}}.
\end{align*}
Therefore
\begin{align*}
    \overline\xb_{l,i}^\top \overline\xb_{l,i'} &\geq \xb_{l-1,i}^\top \xb_{l-1,i'} -  C_3 \| \xb_{l-1,i} \|_2 \| \xb_{l-1,i'} \|_2 \sqrt{ \frac{\log(n^2L / \delta)}{m_l}}\\
    &= \| \xb_{l,i} \|_2^{-1} \| \xb_{l,i'} \|_2^{-1} \| \xb_{l-1,i} \|_2 \| \xb_{l-1,i'} \|_2 \Bigg[ \overline\xb_{l-1,i}^\top \overline\xb_{l-1,i'} -  C_3 \sqrt{ \frac{\log(n^2L / \delta)}{m_l}} \Bigg] \\
%    & \geq \| \xb_{l-1,i} \|_2 \| \xb_{l-1,i'} \|_2 \big\{ \overline\xb_{l-1,i}^\top \overline\xb_{l-1,i'} - \mu^2\cdot [1 - (2L)^{-1} \log(2)]^{l-1}\cdot (2L)^{-1} \log(2) \big\} \\
    & \geq \mu^2 [1 - (2L)^{-1} \log(2)]^{l-1}\cdot [1 - (4L)^{-1} \log(2)] - \mu^2 [1 - (2L)^{-1} \log(2)]^{l-1}\cdot (4L)^{-1} \log(2)\\
    & \geq \mu^2 [1 - (2L)^{-1} \log(2)]^l.
\end{align*}
This completes the proof.
\end{proof}

\begin{corollary}\label{corollary:innerproductlowerbound}
    Suppose that $\Wb_1,\ldots,\Wb_L$ are generated via Gaussian initialization. Let $\overline{\xb}_{l,i} = \xb_{l,i} / \| \xb_{l,i} \|_2$, $i= 1,\ldots,n$. For any $\delta > 0$, if $m \geq C L^4\mu^{-4} \log(n^2L/\delta)$ for some large enough absolute constant $C$, then with probability at least $1-\delta$,
\begin{align*}
    \overline\xb_{l,i}^\top \overline\xb_{l,i'} \geq \mu^2 /2.
\end{align*}
for all $i,i'\in \{1,\ldots, n \}$ and all $l\in [L]$.
\end{corollary}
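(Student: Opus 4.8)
The plan is to derive the corollary directly from Lemma~\ref{lemma:innerproductlowerboundinduction} together with an elementary numerical inequality, in exactly the same way that Corollary~\ref{cor:randinit_distancelowerbound} is obtained from Lemma~\ref{lemma:randinit_distancelowerboundinduction}.

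First I would invoke Lemma~\ref{lemma:innerproductlowerboundinduction}: under the stated condition $m \geq CL^4\mu^{-4}\log(n^2L/\delta)$, with probability at least $1-\delta$ we have, simultaneously for all $i,i'\in[n]$ and all $l\in[L]$,
\begin{align*}
\overline\xb_{l,i}^\top\overline\xb_{l,i'} \geq \mu^2\big[1-(2L)^{-1}\log 2\big]^l.
\end{align*}
Since $[1-(2L)^{-1}\log 2]\in(0,1)$ and $l\le L$, the right-hand side is minimized at $l=L$, so it suffices to show $[1-(2L)^{-1}\log 2]^L\ge 1/2$.

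For that step I would use the elementary inequality $1-x/2\ge e^{-x}$ for $x\in[0,\log 2]$ (the same one used in Corollary~\ref{cor:randinit_distancelowerbound}). Taking $x=(\log 2)/L\in(0,\log 2]$ gives $[1-(2L)^{-1}\log 2]^L=(1-x/2)^L\ge e^{-xL}=e^{-\log 2}=1/2$. Combining this with the displayed bound yields $\overline\xb_{l,i}^\top\overline\xb_{l,i'}\ge\mu^2/2$ on the same high-probability event, which is the claim.

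There is essentially no obstacle: all the substantive work has already been done in Lemma~\ref{lemma:innerproductlowerboundinduction}, and what remains is a one-line numerical estimate. The only point requiring a little care is that the bound in Lemma~\ref{lemma:innerproductlowerboundinduction} is uniform over all layers and all pairs on a single event of probability $\ge 1-\delta$, so the conclusion holds for all $l$ and $i,i'$ simultaneously without any further union bound.
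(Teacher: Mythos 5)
Your proposal is correct and matches the paper's own proof, which likewise derives the corollary by combining Lemma~\ref{lemma:innerproductlowerboundinduction} with the elementary inequality $1-x/2\ge e^{-x}$ for $x\in[0,\log 2]$. You simply spell out the one-line numerical step that the paper leaves implicit.
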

\begin{proof}[Proof of Corollary~\ref{corollary:innerproductlowerbound}]
It directly follows by Lemma~\ref{lemma:innerproductlowerboundinduction} and the fact that $1 - x/2 \geq e^{-x}$ for $x\in [0,\log(2)]$.
\end{proof}

%%%%%%%%%%%%%%%%%%%%%%%%%%%%%%%%%%%%%%%%%%%%%%%%%%%%%%%%%%%%%%%%%%%%%%%%%%%%%%%%%%%%%%%%%%%%%%%%%%%%%%%%%%%%%%%%%%%%%%%%%%%%%%%%%%%%%%%%

\begin{lemma}\label{lemma:li2018}
Let $\zb_1,\ldots,\zb_n\in S^{d-1}$ be $n$ unit vectors and $y_1,\ldots,y_n \in\{-1,1\}$ be the corresponding labels. Assume that  for any $i\neq j$ such that $y_i\neq y_j$, $\|\zb_i-\zb_j\|_2\ge \tilde\phi$ and $\zb_i^\top \zb_j \geq \tilde\mu^2$ for some $\tilde\phi,\tilde{\mu} > 0$. 
For any $\ab = (a_1,\ldots,a_n)^\top \in \RR_+^{n}$, let
%Given positive constants $a_1,a_2,\dots, a_n$, let $\bar a = \max_{i\in[n]}a_i$, and 
$\hb(\wb) = \sum_{i=1}^n a_iy_i\sigma'(\la\wb,\zb_i\ra)\zb_i$ where $\wb\sim N(\mathbf{0},\Ib)$ is a Gaussian random vector. If $\tilde\phi \leq \tilde\mu/2$, then there exist absolute constants $C,C'>0$ such that
\begin{align*}
    \PP\big[\|\hb(\wb)\|_2\ge C \| \ab \|_\infty \big]\ge C'\tilde\phi/n.
\end{align*}
\end{lemma}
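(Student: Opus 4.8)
The plan is to reduce everything to a one‑dimensional anti‑concentration statement along the direction $\zb_{i^\star}$, where $i^\star\in\arg\max_i a_i$ (so $\|\ab\|_\infty=a_{i^\star}$); after replacing every $y_i$ by $-y_i$, which merely negates $\hb$, I may assume $y_{i^\star}=1$. First I would write $\wb = g\,\zb_{i^\star}+\wb_\perp$ with $g:=\la\wb,\zb_{i^\star}\ra\sim N(0,1)$ and $\wb_\perp:=(\Ib-\zb_{i^\star}\zb_{i^\star}^\top)\wb$ independent of $g$, so that $\la\wb,\zb_j\ra=\la\wb_\perp,\zb_j\ra+g\la\zb_{i^\star},\zb_j\ra$ for every $j$. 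Next I would split $[n]$ into $\mathcal{A}^+:=\{j:\|\zb_j-\zb_{i^\star}\|_2\le\tilde\phi/2\}$, $\mathcal{A}^-:=\{j:\|\zb_j+\zb_{i^\star}\|_2\le\tilde\phi/2\}$ and $\mathcal{B}:=[n]\setminus(\mathcal{A}^+\cup\mathcal{A}^-)$. The $\tilde\phi$‑separation of oppositely labelled points gives $\mathcal{A}^+\subseteq\{y_j=1\}$, while $y_j\neq y_{i^\star}$ forces $\la\zb_{i^\star},\zb_j\ra\ge\tilde\mu^2>0$ (hence $\zb_j$ far from $-\zb_{i^\star}$), so also $\mathcal{A}^-\subseteq\{y_j=1\}$; and for $j\in\mathcal{B}$ one has $|\la\zb_{i^\star},\zb_j\ra|<1-\tilde\phi^2/8$, so $\la\wb_\perp,\zb_j\ra$ is a centred Gaussian with variance $\sigma_j^2:=1-\la\zb_{i^\star},\zb_j\ra^2\ge\tilde\phi^2/9$. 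I would then set $\delta:=c_0\tilde\phi/n$ for a small absolute constant $c_0$ and define the good event $\mathcal{G}:=\{|\la\wb_\perp,\zb_j\ra|\ge\delta\ \text{for all}\ j\in\mathcal{B}\}$; Gaussian anti‑concentration and a union bound over at most $n$ indices then give $\PP(\mathcal{G})\ge 1/2$.

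The heart of the argument is to show that, conditionally on any $\wb_\perp$ for which $\mathcal{G}$ holds, $\|\hb(\wb)\|_2\ge\|\ab\|_\infty/2$ on a set of $g$ of Gaussian measure at least $\delta/5$. Fixing such a $\wb_\perp$ and restricting to $g\in(-\delta,\delta)$: for $j\in\mathcal{B}$, $|g\la\zb_{i^\star},\zb_j\ra|\le|g|<\delta\le|\la\wb_\perp,\zb_j\ra|$, so $\sigma'(\la\wb,\zb_j\ra)$ is constant there and the $\mathcal{B}$‑part of $\hb(\wb)$ is a fixed vector. I would then look at the scalar $\psi(g):=\la\hb(\wb),\zb_{i^\star}\ra$, which on $(-\delta,\delta)$ equals that constant plus $\sum_{j\in\mathcal{A}^+}a_j\ind\{g>b_j\}\la\zb_j,\zb_{i^\star}\ra+\sum_{j\in\mathcal{A}^-}a_j\ind\{g<b_j\}\la\zb_j,\zb_{i^\star}\ra$, where $b_j:=-\la\wb_\perp,\zb_j\ra/\la\zb_{i^\star},\zb_j\ra$ and $b_{i^\star}:=0$. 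Each term in the first sum has $\la\zb_j,\zb_{i^\star}\ra>0$ and jumps up at $b_j$; each term in the second has $\la\zb_j,\zb_{i^\star}\ra<0$ and passes from that negative value to $0$ as $g$ crosses $b_j$ upward — again a nonnegative jump. Hence $\psi$ is \emph{nondecreasing} on $(-\delta,\delta)$, and (almost surely $b_j\neq0$ for $j\neq i^\star$) it has a jump of size at least $\|\ab\|_\infty$ at $g=0$.

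To finish, I would argue: since $\|\hb(\wb)\|_2\ge|\psi(g)|$ and $\psi$ is monotone, $\{g\in(-\delta,\delta):|\psi(g)|<\|\ab\|_\infty/2\}$ is an interval on which $\psi$ varies by less than $\|\ab\|_\infty$; because $\psi$ jumps by at least $\|\ab\|_\infty$ at $0$, this interval cannot contain $0$ in its interior. Therefore $|\psi(g)|\ge\|\ab\|_\infty/2$ either on all of $(0,\delta)$ or on all of $(-\delta,0)$, a half‑interval of Gaussian mass $\Phi(\delta)-\tfrac12\ge\delta/5$ (using $\delta\le1$). Combining with $\PP(\mathcal{G})\ge\tfrac12$ then gives $\PP[\|\hb(\wb)\|_2\ge\|\ab\|_\infty/2]\ge\delta/10=(c_0/10)\,\tilde\phi/n$, i.e.\ the claim with $C=1/2$ and $C'=c_0/10$.

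The step I expect to need the most care is the monotonicity observation in the second paragraph. The temptation is to isolate the activation flip of $\zb_{i^\star}$ in a tiny window around $g=0$, but this fails: since only oppositely labelled points are separated, same‑label points may sit arbitrarily close to $\zb_{i^\star}$, and their breakpoints $b_j$ can crowd arbitrarily near $0$, shrinking any such window. What rescues the bound is exactly that all the nearby directions push $\la\hb(\wb),\zb_{i^\star}\ra$ the same way — points near $+\zb_{i^\star}$ add a positive contribution when they switch on, and points near $-\zb_{i^\star}$ (necessarily same‑label, using $\tilde\mu>0$) remove a negative contribution when they switch off — so the ``small'' set is a single interval that cannot straddle the $g=0$ jump, and a whole half‑interval survives no matter where the crowding occurs.
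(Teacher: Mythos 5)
Your proof is correct and follows the same high‑level strategy as the paper: decompose $\wb$ along $\zb_{i^\star}$ and its orthogonal complement, condition on the orthogonal component being bounded away from the activation thresholds of ``far'' samples, and exploit the coherence of activation flips of ``near'' samples so that $\la\hb(\wb),\zb_{i^\star}\ra$ jumps by at least $\|\ab\|_\infty$ as $g=\la\wb,\zb_{i^\star}\ra$ crosses $0$. The packaging differs a bit. The paper bundles both conditions into a single event $\cE$ (which already has probability $\Omega(\tilde\phi/n)$), then conditions on $\cE$ and uses the inequality $\max\{\|\ab\|_2,\|\bbvec\|_2\}\ge\|\ab-\bbvec\|_2/2$ applied to the two endpoints $u_1>0$, $u_1<0$; you instead factor the probability as $\PP(\mathcal{G})\cdot\PP(g\in\text{half-interval})\ge(1/2)\cdot\Omega(\tilde\phi/n)$, and replace the two-point comparison by a monotonicity/interval argument — equivalent in effect but arguably cleaner, since it makes precise why at least one whole half of the $g$-window survives. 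The one substantive refinement in your argument is the explicit handling of the set $\mathcal{A}^-$ of points near $-\zb_{i^\star}$: you show via $\tilde\mu>0$ that these are same-labelled and then that their activation flips push $\psi$ in the same favorable direction, so they belong with the ``near'' rather than the ``far'' points. This is exactly the set for which Gaussian anti-concentration would fail, and the paper's proof handles it only implicitly, by using $\la\zb_i,\zb_1\ra\ge\tilde\mu^2$ for every $i\in\cI$ (including same-labelled ones), a slightly stronger reading of the hypothesis than the lemma's literal statement (though it matches the use in Lemma~\ref{lemma:li2018_uniform}). Your version works under the hypothesis as written and is therefore the more careful of the two.
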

\begin{proof}
Without loss of generality, assume that $a_1 = \| \ab \|_\infty$. Since $ \| \zb_1 \|_2 = 1 $, we can construct an orthonormal matrix $\Qb = [ \zb_1, \Qb' ] \in \RR^{d\times d}$. Let $\ub = \Qb^\top \wb \sim N(\mathbf{0}, \Ib)$ be a standard Gaussian random vector. Then we have
\begin{align*}
    \wb = \Qb \ub = u_1 \zb_1 + \Qb' \ub',
\end{align*}
where $\ub' := (u_2,\ldots,u_d)^\top$ is independent of $u_1$. 
% Then the Gaussian random vector $\wb$ can be decomposed as follows,
% \begin{align*}
% \wb = \wb_{\!/\mkern-5mu/\!} + \wb_{\bot},
% \end{align*}
% where $\wb_{\!/\mkern-5mu/\!} = \la\wb,\zb_1\ra \zb_1$ and $\la\wb_{\bot},\zb_1\ra = 0$. 
We define the following two events based on a parameter $\gamma\in (0,1]$:
\begin{align*}
    \cE_1(\gamma) = \big\{|u_1|\le \gamma \big\},~
    \cE_2(\gamma) = \big\{ |\la\Qb'\ub',\zb_i\ra|\ge \gamma \text{ for all } \zb_i \mbox{ such that } \|\zb_i - \zb_1\|_2\ge \tilde\phi \big\}.
\end{align*}
Let $\cE(\gamma) = \cE_1(\gamma) \cap \cE_2(\gamma) $. 
We first give lower bound for $\PP(\cE) = \PP(\cE_1) \PP(\cE_2)$. Since $u_1$ is a standard Gaussian random variable, we have
\begin{align*}
    \PP(\cE_1) = \frac{1}{\sqrt{2\pi }}\int_{-\gamma}^{\gamma} \exp\bigg( -\frac{1}{2}x^2 \bigg) \mathrm{d} x \geq \sqrt{\frac{2}{\pi e}} \gamma.
\end{align*}
Moreover, by definition, for any $i=1,\ldots, n$ we have
\begin{align*}
    \la \Qb'\ub' , \zb_i \ra \sim N\big[ 0, 1 - (\zb_1^\top \zb_i)^2 \big].
\end{align*}
Let $\cI = \{i: \|\zb_i-\zb_1\|_2\ge \tilde\phi\}$. By the assumption that $\phi \leq \tilde\mu/2$, for any $i \in \cI$, we have 
$$
-1 + \tilde\phi^2/2\leq -(1 - \tilde\mu^2) + \tilde\mu^2 \leq \la \zb_i , \zb_1 \ra \leq 1 - \tilde\phi^2/2,
$$
and if $\tilde\phi^2 \leq 2$, then
\begin{align*}
    1 - (\zb_1^\top \zb_i)^2 \geq \tilde\phi^2 - \tilde\phi^4/4 \geq \tilde\phi^2 / 2.
\end{align*}
Therefore for any $i\in \cI$, 
\begin{align*}
    \PP[ | \la \Qb'\ub' , \zb_i \ra | < \gamma ] %= \PP \bigg[ \bigg| \frac{\la \Qb'\ub' , \zb_i \ra}{ 1 - (\zb_1^\top \zb_i)^2} \bigg| < \frac{\gamma}{ 1 - (\zb_1^\top \zb_i)^2} \bigg] 
    = \frac{1}{\sqrt{2\pi }}\int_{-[1 - (\zb_1^\top \zb_i)^2]^{-1/2}\gamma}^{[1 - (\zb_1^\top \zb_i)^2]^{-1/2} \gamma} \exp\bigg( -\frac{1}{2}x^2 \bigg) \mathrm{d} x 
    \leq  \sqrt{\frac{2}{\pi}} \frac{\gamma}{[1 - (\zb_1^\top \zb_i)^2]^{1/2}}\leq \frac{2}{\sqrt{\pi}}\gamma \tilde\phi^{-1}
    .
\end{align*}
By union bound over $\cI$, we have
\begin{align*}
    \PP(\cE_2) = \PP[ | \la \Qb'\ub' , \zb_i \ra | \geq \gamma, i\in \cI ] \geq 1 -  \frac{2}{\sqrt{\pi}} n \gamma \tilde\phi^{-1}.
\end{align*}
Therefore we have
\begin{align*}
    \PP(\cE) \geq \sqrt{\frac{2}{\pi e}} \gamma \cdot \bigg( 1 - \frac{2}{\sqrt{\pi}} n \gamma \tilde\phi^{-1} \bigg).
\end{align*}
Setting $\gamma = \sqrt{\pi} \tilde\phi / (4n)$, we obtain $\PP(\cE) \geq \tilde\phi / ( \sqrt{32e} n)$. 
Now let  $\cI' = [n]\setminus (\cI\cup\{1\})$.
Then conditioning on event $\cE$, we have
\begin{align}\label{eq:grad_lower_formula}
\hb(\wb) &= \sum_{i=1}^n a_iy_i\sigma'(\la\wb,\zb_i\ra)\zb_i\notag\\
&= a_1y_1\sigma'(u_1)\zb_1 + \sum_{i\in\cI}a_iy_i\sigma'\big(u_1\la\zb_1,\zb_i\ra+\la\Qb'\ub',\zb_i\ra\big)\zb_i%\notag\\
%&\qquad 
+ \sum_{i\in\cI' } a_iy_i\sigma'\big(u_1\la\zb_1,\zb_i\ra+\la\Qb'\ub,\zb_i\ra\big)\zb_i\notag\\
&=a_1y_1\sigma'(u_1)\zb_1 + \sum_{i\in\cI}a_iy_i\sigma'\big(\la\Qb'\ub',\zb_i\ra\big)\zb_i + \sum_{i\in\cI' } a_iy_i\sigma'\big(u_1\la\zb_1,\zb_i\ra+\la\Qb'\ub',\zb_i\ra\big)\zb_i,
\end{align}
where the last equality follows from the fact that conditioning on event $\cE$, for all $i\in \cI$, it holds that $|\la\Qb'\ub',\zb_i\ra|\ge  |u_1| \ge |u_1 \la\zb_1,\zb_i\ra|$. We then consider two cases: $u_1 > 0$ and $u_1 < 0$, which occur equally likely conditioning on $\cE$. %with probability exactly $1/2$ respectively. 
%Since the term $\sum_{i\in\cI}a_iy_i\sigma'(\la \Qb'\ub' ,\xb_i\ra)\zb_i$ is independent of $u_1$,  we have% the following according to \eqref{eq:grad_lower_formula}
Therefore we have
\begin{align*}
\PP\bigg[\|\hb(\wb)\|_2\ge \inf_{{u_1^{(1)}>0,  u_1^{(2)}<0}}\max\big\{\big\|\hb(u_1^{(1)}\zb_1 + \Qb'\ub' )\big\|_2,\big\|\hb(u_1^{(2)}\zb_1 + \Qb'\ub')\big\|_2\big\} \bigg| \cE\bigg]\ge 1/2 .
\end{align*}
By the inequality  $\max\{\|\ab\|_2,\|\bbvec\|_2\}\ge \|\ab-\bbvec\|_2/2$, we have
\begin{align}\label{eq:prob_gradient_lower}
\PP\bigg[\|\hb(\wb)\|_2\ge \inf_{{u_1^{(1)}>0,  u_1^{(2)}<0}} \big\|\hb(u_1^{(1)}\zb_1 + \Qb'\ub' ) - \hb(u_1^{(2)}\zb_1 + \Qb'\ub') \big\|_2 / 2 \bigg| \cE\bigg]\ge 1/2 .
\end{align}
For any $u_1^{(1)} > 0$ and $u_1^{(2)} < 0$, denote $\wb_1 = u_1^{(1)}\zb_1 + \Qb'\ub'$, $\wb_2 = u_1^{(2)}\zb_1 + \Qb'\ub'$. We now proceed to give lower bound for $\|\hb(\wb_1) - \hb(\wb_2)\|_2$. 
By \eqref{eq:grad_lower_formula}, we have
%$\wb_1$ and $\wb_2$ satisfying $\la\wb_1,\xb_1\ra> 0$ and $\la\wb_2,\xb_1\ra<0$,
\begin{align}\label{eq:two_grad_difference}
\hb(\wb_1) - \hb(\wb_2) = a_1y_1\zb_1 + \sum_{i\in\cI'} a_i' y_i\zb_i ,
\end{align}
where
\begin{align*}
    a_i' = a_i\big[\sigma'\big( u_1^{(1)}\la\zb_1,\zb_i\ra+\la\Qb'\ub',\zb_i\ra\big)-\sigma'\big(u_1^{(2)}\la\zb_1,\zb_i\ra+\la\Qb'\ub',\zb_i\ra\big)\big].
\end{align*}
Note that for all $i\in \cI'$, we have $y_i = y_1$ and $\la\zb_1,\zb_i\ra\ge 1-\tilde\phi^2/2\ge 0$. Therefore, since $u_1^{(1)} > 0 > u_1^{(2)} $, we have 
\begin{align*}
\sigma'\big(u_1^{(1)}\la\zb_1,\zb_i\ra+\la\Qb'\ub',\zb_i\ra\big)-\sigma'\big(u_1^{(2)}\la\zb_1,\zb_i\ra+\la\Qb'\ub',\zb_i\ra\big)\ge 0.    
\end{align*}
Therefore $a_i' \geq 0$ for all $i\in \cI'$ and 
%Then we can further rewrite \eqref{eq:two_grad_difference} as follows,
\begin{align*}
\hb(\wb_1) - \hb(\wb_2) = a_1y_1\zb_1 + \sum_{i\in\cI'} a_i'y_1\zb_i = y_1\bigg(a_1\zb_1+\sum_{i\in\cI'} a_i'\zb_i\bigg),
\end{align*}
% where 
% \begin{align*}
% a_i' = a_i\big[\sigma'\big(\la\wb_1,\zb_1\ra\la\zb_1,\zb_i\ra+\la\wb_\bot,\xb_i\ra\big)-\sigma'\big(\la\wb_2,\zb_1\ra\la\zb_1,\zb_i\ra+\la\wb_\bot,\xb_i\ra\big)\big]\ge 0.
% \end{align*}
We have shown that $\la\zb_i,\zb_1\ra \geq 0$ for all $i\in\cI'$. Therefore we have 
\begin{align*}
\|\hb(\wb_1) - \hb(\wb_2)\|_2\ge 
\Bigg\| y_1\Bigg(a_1\zb_1+\sum_{i\in\cI'} a_i'\zb_i\Bigg) \Bigg\|_2 \geq \Bigg\la a_1\zb_1+\sum_{i\in\cI'} a_i'\zb_i,\zb_1 \Bigg\ra \geq  a_1.
\end{align*}
Since the inequality above holds for any $u_1^{(1)} > 0$ and $u_1^{(2)} < 0$, taking infimum gives
\begin{align}\label{eq:difference_gradient_norm}
\inf_{u_1^{(1)}>0,u_1^{(2)}<0}\|\hb(\wb_1) - \hb(\wb_2)\|_2\geq a_1.
\end{align}
Plugging \eqref{eq:difference_gradient_norm} back to \eqref{eq:prob_gradient_lower}, we obtain
\begin{align*}
\PP\big[\|\hb(\wb)\|_2\ge a_1/2 \big|\cE\big]\ge 1/2, 
\end{align*}
%where we neglect the minimization operation since \eqref{eq:difference_gradient_norm} holds for any $\wb_1$ and $\wb_2$. Then applying 
Since $a_1 = \| \ab \|_\infty $ and $\PP(\cE) \ge \tilde\phi/(\sqrt{32e}n)$, we have
\begin{align*}
\PP\big[\|\hb(\wb)\|_2\ge C \| \ab \|_\infty \big]\ge C'\tilde\phi/n,
\end{align*}
where $C$ and $C'$ are absolute constants. This completes the proof.
\end{proof}

%%%%%%%%%%%%%%%%%%%%%%%%%%%%%%%%%%%%%%%%%%%%%%%%%%%%%%%%%%%%%%%%%%%%%%%%%%%%%%%%%%%%%%%%%%%%%%%%%%%%%%%%%%%%%%%%%%%%%%%%%%%%%%%%%%%%%%%%

\begin{lemma}\label{lemma:li2018_uniform}
There exist absolute constants $C,C',C'',C''' > 0$ such that, if $m\geq C n^2\phi^{-1}\log(4n)$, then
with probability at least $1- \exp(-C' m_L \phi /n )$, for any $\ab = (a_1,\ldots,a_n)^\top \in \RR_+^n$, there exist at least $ C'' m_L \phi / n$ nodes that satisfy
\begin{align*}
    \Bigg\|\frac{1}{n}\sum_{i=1}^n a_i y_i \sigma'(\la\wb_{L,j},\xb_{L-1,i}\ra)\xb_{L-1,i}\Bigg\|_2\ge C'''\|\ab \|_\infty/n
\end{align*}
\end{lemma}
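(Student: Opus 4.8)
The plan is to reduce the quantity to the single-weight object $\hb(\wb)$ studied in Lemma~\ref{lemma:li2018}, then amplify ``with probability $\Omega(\phi/n)$ a single node works'' into ``a $\Theta(\phi/n)$-fraction of the $m_L$ nodes works, with probability $1-e^{-\Omega(m_L\phi/n)}$'' via a binomial concentration bound, and finally make the conclusion hold simultaneously for all $\ab$ by covering $[0,1]^n$ at scale $\Theta(1/n)$ in $\ell_\infty$. Throughout we condition on $\Wb_1,\dots,\Wb_{L-1}$; these are independent of $\Wb_L$ and determine the outputs $\xb_{L-1,i}$, and by results~\ref{ThmResult:randinit_normbounds}, \ref{ThmResult:randinit_dataseparationbound} and Corollary~\ref{corollary:innerproductlowerbound} (whose hypotheses follow from the standing assumptions of Theorem~\ref{thm:randinit}) we may assume $1/2\le\|\xb_{L-1,i}\|_2\le 3/2$, that the normalized outputs $\overline\xb_{L-1,i}:=\xb_{L-1,i}/\|\xb_{L-1,i}\|_2$ satisfy $\|\overline\xb_{L-1,i}-\overline\xb_{L-1,i'}\|_2\ge\phi/2$ whenever $y_i\ne y_{i'}$, and that $\overline\xb_{L-1,i}^\top\overline\xb_{L-1,i'}\ge\mu^2/2$ for all $i,i'$.

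By positive homogeneity it suffices to treat $\ab\in\RR_+^n$ with $\|\ab\|_\infty=1$ (the case $\ab=\mathbf{0}$ being trivial). Fix such an $\ab$, and for $j\in[m_L]$ write $\hb_j:=\frac1n\sum_{i=1}^n a_iy_i\sigma'(\la\wb_{L,j},\xb_{L-1,i}\ra)\xb_{L-1,i}$. Since $\sigma'$ depends only on the sign of its argument and $\wb:=\sqrt{m_L/2}\,\wb_{L,j}\sim N(\mathbf{0},\Ib)$, we have $\sigma'(\la\wb_{L,j},\xb_{L-1,i}\ra)=\sigma'(\la\wb,\overline\xb_{L-1,i}\ra)$, hence
\begin{align*}
\hb_j=\frac1n\sum_{i=1}^n a_i'y_i\sigma'(\la\wb,\overline\xb_{L-1,i}\ra)\overline\xb_{L-1,i},\qquad a_i':=a_i\|\xb_{L-1,i}\|_2\in[1/2,3/2].
\end{align*}
Applying Lemma~\ref{lemma:li2018} with $\zb_i=\overline\xb_{L-1,i}$, $\tilde\phi=\phi/2$, $\tilde\mu=\mu/\sqrt2$ (the hypothesis $\tilde\phi\le\tilde\mu/2$ holds since $\phi\le\underline C\mu$ with $\underline C$ small), for each $j$ with probability at least $C'\phi/(2n)$ we have $\|\hb_j\|_2\ge (C/n)\|\ab'\|_\infty\ge C/(2n)$, and these events are independent across $j$. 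Call such a $j$ \emph{good for $\ab$}. The number of nodes good for $\ab$ stochastically dominates a $\mathrm{Binomial}\big(m_L,C'\phi/(2n)\big)$ variable, so a multiplicative Chernoff bound gives: with probability at least $1-\exp(-C_1 m_L\phi/n)$ there are at least $C_2 m_L\phi/n$ nodes good for $\ab$.

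To obtain uniformity, let $\cN$ be an $\epsilon$-net of $[0,1]^n$ in $\|\cdot\|_\infty$ with $\epsilon=C/(8n)$, so $|\cN|\le(1+1/\epsilon)^n$ and $\log|\cN|=O(n\log n)$. Union-bounding the previous conclusion over $\cN$, and using $m\ge Cn^2\phi^{-1}\log(4n)$ with $C$ large enough that $\log|\cN|\le\tfrac12 C_1 m_L\phi/n$ (possible since $m_L\ge m$), we get that with probability at least $1-\exp(-\tfrac12 C_1 m_L\phi/n)$ every $\hat\ab\in\cN$ has at least $C_2 m_L\phi/n$ nodes good for it. On this event, take any $\ab\in\RR_+^n$ with $\|\ab\|_\infty=1$ and pick $\hat\ab\in\cN$ with $\|\ab-\hat\ab\|_\infty\le\epsilon$; for any $j$ good for $\hat\ab$, the triangle inequality and $\|\xb_{L-1,i}\|_2\le 3/2$ give
\begin{align*}
\|\hb_j\|_2\ge \bigg\|\frac1n\sum_{i=1}^n \hat a_iy_i\sigma'(\la\wb_{L,j},\xb_{L-1,i}\ra)\xb_{L-1,i}\bigg\|_2-\frac1n\sum_{i=1}^n|a_i-\hat a_i|\,\|\xb_{L-1,i}\|_2\ge\frac{C}{2n}-\frac{3\epsilon}{2}\ge\frac{C}{4n}.
\end{align*}
Hence at least $C_2 m_L\phi/n$ nodes satisfy $\|\hb_j\|_2\ge (C/4)\|\ab\|_\infty/n$, and unnormalizing handles general $\ab\in\RR_+^n$; this proves the lemma with $C''=C_2$, $C'''=C/4$ and $C'=C_1/2$.

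The main obstacle is the resolution of the net: the target lower bound is only of order $1/n$, so the approximation error $\tfrac32\|\ab-\hat\ab\|_\infty$ must be forced below $\Theta(1/n)$, which makes $|\cN|=n^{\Theta(n)}$ and hence requires $m\gtrsim n^2\phi^{-1}\log n$ precisely so that the $e^{-\Omega(m_L\phi/n)}$ tail produced by the Chernoff step survives the union bound. The remaining ingredients are routine: sign-invariance of $\sigma'$ makes the reduction to Lemma~\ref{lemma:li2018} exact up to the harmless rescalings $\|\xb_{L-1,i}\|_2\in[1/2,3/2]$, and the Chernoff estimate is standard.
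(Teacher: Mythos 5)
Your proof follows the same route as the paper's: apply Lemma~\ref{lemma:li2018} for a fixed direction to get a per-node probability $\Omega(\phi/n)$, concentrate the count of good nodes (Chernoff vs.\ the paper's Bernstein, same effect), union-bound over an $\ell_\infty$-net of $[0,1]^n$ at scale $\Theta(1/n)$, and close with the triangle-inequality Lipschitz step. The only difference is cosmetic but in your favor: you explicitly rescale to $\overline\xb_{L-1,i}$ and a standard Gaussian $\wb$, and check the hypotheses $\tilde\phi\le\tilde\mu/2$ of Lemma~\ref{lemma:li2018} via Corollary~\ref{corollary:innerproductlowerbound}, whereas the paper invokes Lemma~\ref{lemma:li2018} directly on $\xb_{L-1,i}$ and $\wb_{L,j}$ without spelling out this reduction.
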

\begin{proof}[Proof of Lemma~\ref{lemma:li2018_uniform}]
%The proof is based on the following lemma reformulated from a result given in \citet{li2018learning}.

% \begin{lemma}[Lemma A.3 in \citet{li2018learning}] \label{lemma:li2018}
% Let $\zb_1,\ldots,\zb_n\in S^{d-1}$ be $n$ unit vectors such that $\| \zb_i - \zb_j \|_2 \geq \phi$ for any $i\neq j$, $i,j\in \{1,\ldots, n\}$. Given constants $a_1,a_2,\dots, a_n$, let $\bar a = \max_{i\in[n]}|a_i|$, and $\hb(\wb) = \sum_{i}^n a_i\sigma'(\la\wb,\zb_i\ra)\zb_i$ where $\wb\sim N(\mathbf{0},\Ib)$ is a random vector. Then there exist absolute constants $C,C'>0$  such that
% \begin{align*}
%     \PP\big[\|\hb(\wb)\|_2\ge C\bar a\phi/n\big]\ge C'\phi/n.
% \end{align*}
% \end{lemma}
For any given $j\in \{ 1,\ldots, m_L \}$ and $\hat{\ab}$ with $\| \hat{\ab} \|_\infty = 1$, by Lemma~\ref{lemma:li2018}, we have
\begin{align*}
\PP\Bigg[\Bigg\|\frac{1}{n}\sum_{i=1}^n \hat{a}_i y_i \sigma'(\la\wb_{L,j},\xb_{L-1,i}\ra)\xb_{L-1,i}\Bigg\|_2\ge \frac{C_1}{n} \Bigg]\ge \frac{C_2\phi}{n},
\end{align*}
where $C_1,C_2>0$ are absolute constants. Let $S_{\infty,+}^{n-1} = \{ \ab\in \RR_+^{n}: \| \ab \|_{\infty} = 1 \}$, and $\cN = \cN[S_{\infty,+}^{n-1}, C_1/(4n)]$ be a $C_1/(4n)$-net covering $S_{\infty,+}^{n-1}$ in $\ell_{\infty}$ norm. Then we have
\begin{align*}
    |\cN| \leq (4n /  C_1)^n. 
\end{align*}
For $j=1,\ldots, m_L$, define
\begin{align*}
    Z_j = \ind\Bigg[ \Bigg\|\frac{1}{n}\sum_{i=1}^n \hat{a}_i y_i \sigma'(\la\wb_{L,j},\xb_{L-1,i}\ra)\xb_{L-1,i}\Bigg\|_2\ge \frac{C_1}{n} \Bigg].
\end{align*}
Let $p_\phi = C_2\phi/n$. Then by Bernstein inequality and union bound, with probability at least $1 - \exp[-C_3 m_L p_\phi + n \log(4n/C_1)] \geq 1 - \exp(C_4 m_L \phi / n)$, we have
\begin{align}\label{eq:li2018_uniform_eq1}
    \frac{1}{m_L} \sum_{j=1}^{m_L} Z_j \geq p_\phi / 2,
\end{align}
where $C_3 ,C_4$ are absolute constants. For any $\ab \in S_{\infty,+}^{n-1}$, there exists $\hat{\ab}\in \cN$ such that 
\begin{align*}
    \| \ab - \hat{\ab} \|_{\infty} \leq C_1 / (4n).
\end{align*}
Therefore, we have
\begin{align}
&\Bigg\vert\Bigg\|\frac{1}{n}\sum_{i=1}^n a_i y_i\sigma'(\la\wb_{L,j},\xb_{L-1,i}\ra)\xb_{L-1,i}\Bigg\|_2 - \Bigg\|\frac{1}{n}\sum_{i=1}^n \hat{a}_i y_i \sigma'(\la\wb_{L,j},\xb_{L-1,i}\ra)\xb_{L-1,i}\Bigg\|_2\Bigg\vert \nonumber \\
&\le \Bigg\|\frac{1}{n}\sum_{i=1}^n a_i y_i \sigma'(\la\wb_{L,j},\xb_{L-1,i}\ra)\xb_{L-1,i}-\frac{1}{n}\sum_{i=1}^n \hat{a}_i y_i \sigma'(\la\wb_{L,j},\xb_{L-1,i}\ra)\xb_{L-1,i}\Bigg\|_2 \nonumber \\
&\le \frac{2}{n}\sum_{i=1}^n|a_i-\hat{a}_i| \nonumber \\
&\le \frac{C_1}{2n}.\label{eq:li2018_uniform_eq2}
\end{align}
By \eqref{eq:li2018_uniform_eq1} and \eqref{eq:li2018_uniform_eq2}, it is clear that with probability at least  $1 - \exp(C_4 m_L \phi / n)$, for any $\ab \in S_{\infty,+}^{n-1}$,  there exist at least $m_L p_\phi / 2$ nodes on layer $L$ that satisfy
\begin{align*}
    \Bigg\|\frac{1}{n}\sum_{i=1}^n a_i y_i \sigma'(\la\wb_{L,j},\xb_{L-1,i}\ra)\xb_{L-1,i}\Bigg\|_2\ge \frac{C_1 }{2n}.
\end{align*}
This completes the proof.
\end{proof}

\section{Proof of Theorem \ref{thm:perturbation}}
It is clear that the bound on $\tilde \Wb_l$ can be proved trivially, since by triangle inequality we have $\|\tilde \Wb\|_2\le \|\Wb\|_2 + \|\tilde \Wb - \Wb\|_2\le C$ for some large enough absolute constant $C$. In the following we directly use this result without referring to Theorem \ref{thm:perturbation}.
Similar to the proof of Theorem \ref{thm:randinit}, we split the rest results into the following lemmas and prove them separately. %Throughout this section, we always assume that $\{\Wb_{l}\}_{l=1}^L$ are generated via Gaussian  initialization and the results of Theorem~\ref{thm:randinit} hold. Therefore we omit this assumption in the lemmas we provide in this section.

\begin{lemma}\label{lemma:perturbationmatrixnormbound}
Suppose that $\Wb_1,\ldots,\Wb_L$ are generated via Gaussian initialization, and all results \ref{ThmResult:randinit_normbounds}-\ref{ThmResult:randinit_gradientuniformlowerbound} in Theorem~\ref{thm:randinit} hold. 
For $\tau > 0$, let $ \tilde\Wb_1,\ldots, \tilde\Wb_L$ with $\|\tilde \Wb_l-\Wb_l\|_2\le \tau$, $l=1,\ldots,L$ be the perturbed matrices. Let $\tilde \bSigma_{l,i}$, $ l=1,\ldots,L$, $i=1,\ldots,n$ be diagonal matrices satisfying  $ \| \tilde \bSigma_{l,i} - \bSigma_{l,i} \|_0 \leq s$ and $|(\tilde \bSigma_{l,i} - \bSigma_{l,i})_{jj}|,|(\tilde \bSigma_{l,i})_{jj}| \leq 1$ for all $l=1,\ldots,L$, $i=1,\ldots,n$ and $j=1,\ldots,m_l$. If $\tau, \sqrt{s\log(M)/m} \leq \kappa L^{-3}$ for some small enough absolute constant $\kappa$, then
\begin{align*}
    \Bigg\| \prod_{r=l_1}^{l_2}\tilde \bSigma_{r,i}\tilde \Wb_r^\top \Bigg\|_2 \leq C L 
\end{align*}
for any $1 \leq l_1 < l_2 \leq L $, where $C$ is an absolute constant. 
\end{lemma}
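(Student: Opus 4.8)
The plan is to compare the perturbed product with the unperturbed one, $\bSigma_{l_2,i}\big(\Wb_{l_2}^\top\prod_{r=l_1}^{l_2-1}\bSigma_{r,i}\Wb_r^\top\big)$, whose spectral norm is at most $\overline{C}'L$ by result~\ref{ThmResult:randinit_matproductnorm} of Theorem~\ref{thm:randinit}, and to show that the discrepancy created by the two perturbations is only $O(L)$. I would treat the weight perturbation and the activation-pattern perturbation separately.

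\textbf{Weight perturbation.} Writing $\tilde\Wb_r=\Wb_r+\Wb_r'$ with $\|\Wb_r'\|_2\le\tau$ and expanding $\prod_{r=l_1}^{l_2}\tilde\bSigma_{r,i}(\Wb_r+\Wb_r')^\top$ over the set of layers at which the factor $\Wb_r'$ is chosen, a term that chooses $\Wb_r'$ at $j\ge1$ layers carries a multiplicative $\tau^{j}$, and the blocks between the chosen layers are products of the form $\prod\tilde\bSigma_r\Wb_r^\top$ of shorter length, to which the activation-pattern bound below applies and gives norm $O(L)$. Since there are at most $\binom{l_2-l_1+1}{j}\le L^{j}/j!$ such terms and $\tau\le\kappa L^{-3}$, the sum of all terms with $j\ge1$ is $O(L)\sum_{j\ge1}(L\tau)^{j}/j!=O(\tau L^{3})=O(1)$. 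Hence it suffices to bound $\big\|\prod_{r=l_1}^{l_2}\tilde\bSigma_{r,i}\Wb_r^\top\big\|_2$.

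\textbf{Activation-pattern perturbation.} Write $\tilde\bSigma_{r,i}=\bSigma_{r,i}+\bSigma_{r,i}'$, where $\bSigma_{r,i}'$ is diagonal, supported on a set $\mathcal{T}_r\subseteq[m_r]$ of size at most $s$, with entries of modulus at most $1$, so that $\bSigma_{r,i}'=\Pi_{\mathcal{T}_r}\bSigma_{r,i}'\Pi_{\mathcal{T}_r}$ with $\Pi_{\mathcal{T}_r}$ the coordinate projection onto $\mathcal{T}_r$. Expanding $\prod_{r=l_1}^{l_2}(\bSigma_{r,i}+\bSigma_{r,i}')\Wb_r^\top$ over the subsets $\mathcal{A}=\{r_1<\cdots<r_k\}$ of layers at which $\bSigma_{r,i}'$ is chosen, the empty set gives the unperturbed product (norm $\le\overline{C}'L$). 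For $\mathcal{A}\ne\emptyset$, the projections $\Pi_{\mathcal{T}_{r_j}}$ flanking each $\bSigma_{r_j,i}'$ split the term into blocks: the block between two consecutive chosen layers $r_{j-1}<r_j$ is $\Pi_{\mathcal{T}_{r_j}}\big(\Wb_{r_j}^\top\prod_{r=r_{j-1}+1}^{r_j-1}\bSigma_{r,i}\Wb_r^\top\big)\Pi_{\mathcal{T}_{r_{j-1}}}$, a doubly sparse restriction of norm $\le\overline{C}'L\sqrt{s\log(M)/m}=O(L^{-2})$ by result~\ref{ThmResult:randinit_matproductdoublesparsenorm}, while the two end blocks, together with the outermost chosen $\bSigma_{r,i}'$, combine the piece above $r_k$ and the piece below $r_1$ into quantities of the form $\bSigma_{l_2,i}Q^{\uparrow}\Pi_{\mathcal{T}}D\Pi_{\mathcal{T}}Q^{\downarrow}$ (in the extreme case $k=1$ the whole term is of this form); these are again $O(L^{-2})$ because the size-$\le s$ bottleneck at $\mathcal{T}$ couples only the \emph{independent} weight matrices lying on the two sides of the sign-flipped layer. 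Multiplying the $O(L^{-2})$ block norms and summing over $k\ge1$ and over the $\binom{l_2-l_1+1}{k}\le L^{k}/k!$ choices of $\mathcal{A}$ gives a total of $O(L^{-2})$, so $\big\|\prod_{r=l_1}^{l_2}\tilde\bSigma_{r,i}\Wb_r^\top\big\|_2\le\overline{C}'L+O(L^{-2})=O(L)$, and combined with the weight-perturbation step, $\big\|\prod_{r=l_1}^{l_2}\tilde\bSigma_{r,i}\tilde\Wb_r^\top\big\|_2\le CL$.

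\textbf{Main obstacle.} The delicate point is that a single sign-flip at layer $r$ cannot be charged an $o(1)$ factor in operator norm in isolation: the restriction of a length-$L$ product to $\le s$ coordinates on one side alone still has spectral norm of order $L$, so bounding the end blocks of each term crudely would inflate the estimate by powers of $L$. The resolution is that a flip at layer $r$ only entangles $\le s$ coordinates, and only through the mutually independent matrices $\Wb_r$ and $\Wb_{r+1}$; making the combined ``spliced'' quantity $\bSigma_{l_2,i}Q^{\uparrow}\Pi_{\mathcal{T}_r}D\Pi_{\mathcal{T}_r}Q^{\downarrow}$ small — an estimate of the same flavour as the doubly sparse bound~\ref{ThmResult:randinit_matproductdoublesparsenorm} but split across layer $r$, together with the analogous treatment of the multi-flip configurations — is the technical heart of the argument, and is where the sparse estimates~\ref{ThmResult:randinit_vecmatproductsparsenorm}--\ref{ThmResult:randinit_matproductdoublesparsenorm} and the condition $\sqrt{s\log(M)/m}\le\kappa L^{-3}$ are essential.
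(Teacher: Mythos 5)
Your plan is close in spirit to the paper's — expand the perturbed product and control the sign-flip pieces with the doubly-restricted estimate~\ref{ThmResult:randinit_matproductdoublesparsenorm} — but the way you organize the expansion opens a gap that you notice and do not close. Once you split $\bSigma_{r,i}'=\Pi_{\mathcal{T}_r}D_r\Pi_{\mathcal{T}_r}$ across the weight matrix $\Wb_r^\top$, the two extremal pieces of each nonempty-$\mathcal{A}$ term — the upper tail $\big(\prod_{r>r_k}\bSigma_{r,i}\Wb_r^\top\big)\Pi_{\mathcal{T}_{r_k}}$ and the lower tail $\Pi_{\mathcal{T}_{r_1}}\Wb_{r_1}^\top\big(\prod_{r<r_1}\bSigma_{r,i}\Wb_r^\top\big)$ — each carry only \emph{one} sparse end. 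Result~\ref{ThmResult:randinit_matproductdoublesparsenorm} needs sparsity at both ends, and the only control Theorem~\ref{thm:randinit} gives on a one-sided restriction is the crude $O(L)$ of~\ref{ThmResult:randinit_matproductnorm}. Your single-sign-flip ($k=1$) term is therefore $O(L^{2})$, not the $O(L^{-2})$ you assert, and summing over the $\Theta(L)$ positions gives $O(L^{3})$, which is not subsumed by $O(L)$. For $k\ge 2$ the two tails sit at opposite extremes of the term, separated by the interior blocks, and cannot be recombined into a single object $Q^{\uparrow}\Pi D\Pi Q^{\downarrow}$ as your prose suggests. The ``spliced'' estimate you invoke — a doubly-restricted bound broken across the sign-flipped layer — is a genuinely new lemma not contained in Theorem~\ref{thm:randinit}, and you flag it as the ``technical heart'' without supplying it; that is the gap.

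The paper never needs such a statement because it never detaches $\Wb_r^\top$ from the sign flip. It expands in one stage over the three factor types $\bSigma_{r,i}\Wb_r^\top$, $(\tilde\bSigma_{r,i}-\bSigma_{r,i})\Wb_r^\top$, and $\tilde\bSigma_{r,i}(\tilde\Wb_r-\Wb_r)^\top$, and then inserts the sparse projector \emph{one level up} via the $\hat\bSigma$ device, using $|\tilde\bSigma_{r,i}-\bSigma_{r,i}|(\tilde\bSigma_{r,i}-\bSigma_{r,i})=(\tilde\bSigma_{r,i}-\bSigma_{r,i})$. The block between two perturbed layers then has the exact form $\mathrm{\mathbf{b}}^\top\Wb_{l_2}^\top\big(\prod\bSigma_{r,i}\Wb_r^\top\big)\ab$ with both $\ab$ and $\mathrm{\mathbf{b}}$ supported on $\le s$ coordinates, to which~\ref{ThmResult:randinit_matproductdoublesparsenorm} applies directly; the weight-shift factors each contribute a $\tau$; and the counting trades the sign-flip blocks lacking a sparse lower boundary against the $p_1$ weight-shifts, giving the $\max\{p_2-p_1-1,0\}$ exponent. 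No across-layer splice enters. If you wish to keep your two-stage organization, you would have to either establish the spliced estimate as an extra high-probability event in the initialization theorem, or rearrange the decomposition so that $\Wb_r^\top$ always travels with its own layer's projector, as the paper does.
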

\begin{proof}
Note that we have
\begin{align*}
    \tilde{\bSigma}_{r,i} \tilde{\Wb}_{r}^\top =  \bSigma_{r,i} W_{r}^\top + (\tilde{\bSigma}_{r,i} - \bSigma_{r,i}) \Wb_r^\top + \tilde{\bSigma}_{r,i}(\tilde{\Wb}_r - \Wb_r)^\top.
\end{align*}
Therefore, let $\cA_{r,i} = \{\bSigma_{r,i} \Wb_{r}^\top, (\tilde{\bSigma}_{r,i} - \bSigma_{r,i}) \Wb_r^\top, \tilde{\bSigma}_{r,i}(\tilde{\Wb}_r - \Wb_r)^\top  \}$, $r=l_1,\ldots,l_2$, then we have
\begin{align*}
    \prod_{r=l_1}^{l_2}\tilde \bSigma_{r,i}\tilde \Wb_r^\top = \sum_{\Ab_{l_1,i}\in \cA_{l_1,i},\ldots, \Ab_{l_2,i}\in \cA_{l_2,i}} \Bigg( \prod_{r=l_1}^{l_2} \Ab_{r,i} \Bigg).
\end{align*}
For the rest of the proof, we denote by $|\bSigma|$ the diagonal matrix with absolute values of elements of $\bSigma$ on the corresponding entries. For each sequence $\Ab_{l_1,i},\ldots,\Ab_{l_2,i}$, denote
\begin{align*}
    \hat{\bSigma}_{r,i} = \left\{ 
    \begin{array}{ll}
        | \tilde{\bSigma}_{r,i} - \bSigma_{r,i} |, & \text{if }r\geq 2 \text{ and }\Ab_{r-1,i} =(\tilde{\bSigma}_{r-1,i} - \bSigma_{r-1,i}) \Wb_{r-1}^\top, \\
        \Ib, & \text{otherwise}.
    \end{array}
    \right.
\end{align*}
Then we have
\begin{align*}
    \prod_{r=l_1}^{l_2} \Ab_{r,i}  = \prod_{r=l_1}^{l_2} \Ab_{r,i} \hat{\bSigma}_{r,i} .
\end{align*}
When $\Ab_{r,i} = \bSigma_{r,i} \Wb_{r}$ for all $r= l_1,\ldots,l_2$, then the bound of $\| \prod_{r=l_1}^{l_2} \Ab_{r,i} \|_2 $ is given by \ref{ThmResult:randinit_matproductnorm} in Theorem~\ref{thm:randinit}. For all the other terms in the expansion, we consider sequences of the form $ \Bb_{r_2,i} (\prod_{r = r_1+1}^{r_2 - 1} \bSigma_{r,i} \Wb_r^\top) \Bb_{r_1,i} $, where 
\begin{align*}
&\Bb_{r_2,i} \in \{ (\tilde{\bSigma}_{r_2,i} - \bSigma_{r_2,i}) \Wb_{r_2}^\top, \tilde{\bSigma}_{r_2,i}(\tilde{\Wb}_{r_2} - \Wb_{r_2})^\top  \}, \\
&\Bb_{r_1,i} \in \{ | \tilde{\bSigma}_{r_1,i} - \bSigma_{r_1,i} |, \tilde{\bSigma}_{r_1,i}(\tilde{\Wb}_{r_1} - \Wb_{r_1})^\top  \}.  
\end{align*}
By \ref{ThmResult:randinit_matproductdoublesparsenorm} in Theorem~\ref{thm:randinit}, there exists an absolute constant $C_1$ such that $ \| \Bb_{r_2,i} (\prod_{r = r_1+1}^{r_2 - 1} \bSigma_{r,i} \Wb_r^\top) \Bb_{r_1,i} \|_2 $ with different choices of $\Bb_{r_1,i}$ and $\Bb_{r_2,i}$ have the following bounds:
\begin{enumerate}
    \item If $\Bb_{r_1,i} = | \tilde{\bSigma}_{r_1,i} - \bSigma_{r_1,i} | $, $\Bb_{r_2,i} = (\tilde{\bSigma}_{r_2,i} - \bSigma_{r_2,i}) \Wb_{r_2}^\top$, then  $ \| \Bb_{r_2,i} (\prod_{r = r_1+1}^{r_2 - 1} \bSigma_{r,i} \Wb_r^\top) \Bb_{r_1,i} \|_2 \leq C_1 L \sqrt{s\log(M)/m}$.
    \item If $\Bb_{r_1,i} =  \tilde{\bSigma}_{r_1,i}(\tilde{\Wb}_{r_1} - \Wb_{r_1})^\top $, $\Bb_{r_2,i} =  \tilde{\bSigma}_{r_2,i}(\tilde{\Wb}_{r_2} - \Wb_{r_2})^\top $, then  $ \| \Bb_{r_2,i} (\prod_{r = r_1+1}^{r_2 - 1} \bSigma_{r,i} \Wb_r^\top) \Bb_{r_1,i} \|_2 \leq C_1 L \tau^2$
    \item Otherwise, $ \| \Bb_{r_2,i} (\prod_{r = r_1+1}^{r_2 - 1} \bSigma_{r,i} \Wb_r^\top) \Bb_{r_1,i} \|_2 \leq C_1 L \tau$.
\end{enumerate}
For any fixed sequence $\Ab_{l_1,i} \in \cA_{l_1,i},\ldots, \Ab_{l_2,i} \in \cA_{l_2,i}$, let 
\begin{align*}
    p_1 = \big|\{ r:\Ab_{r,i} = \tilde{\bSigma}_{r,i}(\tilde{\Wb}_r - \Wb_r)^\top \}\big|,p_2 = \big|\{ r:\Ab_{r,i} = (\tilde{\bSigma}_{r,i} - \bSigma_{r,i})\Wb_r^\top \}\big|,p_3 = \big|\{ r:\Ab_{r,i} = \bSigma_{r,i}\Wb_r^\top \}\big|.
\end{align*}
Then by the discussion above we see that the bound of $\| \prod_{r=l_1}^{l_2} \Ab_{r,i} \|_2 $ has a term $ (C_1 L \tau)^{p_1} $ granted by the matrices of the form $\tilde{\bSigma}_{r,i}(\tilde{\Wb}_r - \Wb_r)^\top $. In addition, if $p_2 > p_1+1$, then the bound also has a term $C_1L\sqrt{s\log(M)/m}$ with power at least $p_2 - p_1 - 1$. Note that when $p_1 = p_2 = 0$, we still have $\| \prod_{r=l_1}^{l_2} \Ab_{r,i} \|_2 \leq C_2 L $ for some absolute constant $C_2$. Therefore, 
\begin{align*}
    \Bigg\| \prod_{r=l_1}^{l_2}\tilde \bSigma_{r,i}\tilde \Wb_r^\top \Bigg\|_2 &\leq C_2 L + \sum_{ \substack{ p_1 + p_2 + p_3 = l_2 - l_1 + 1\\ p_1,p_2,p_3 \geq 0}} \frac{(l_2 - l_1 + 1)!}{p_1! p_2! p_3!}\cdot (C_1 L\tau)^{p_1} \cdot \big( C_1 L \sqrt{s\log(m)/m} \big)^{\max\{p_2 - p_1 - 1,0\}} \\
    &\leq C_2 L + \sum_{ \substack{ p_1 + p_2 + p_3 = l_2 - l_1 + 1\\ p_1,p_2,p_3 \geq 0}} \frac{(l_2 - l_1 + 1)!}{p_1! p_2! p_3!}\cdot L^{ -2 p_1 }\cdot L^{ -2 \max\{p_2 - p_1 - 1, 0 \} } \\
    &\leq C_2 L + \sum_{ \substack{ p_1 + p_2 + p_3 = l_2 - l_1 + 1\\ p_1,p_2,p_3 \geq 0}} \frac{(l_2 - l_1 + 1)!}{p_1! p_2! p_3!}\cdot L^{ -2 \max\{p_2 - 1 , p_1 \} }\\
    &\leq C_2 L + L\cdot \sum_{ \substack{ p_1 + p_2 + p_3 = l_2 - l_1 + 1\\ p_1,p_2,p_3 \geq 0}} \frac{(l_2 - l_1 + 1)!}{p_1! p_2! p_3!}\cdot L^{ - (p_1 + p_2) }\cdot 1^{p_3}\\
    &\leq C_2 L + L\cdot (1 + 2L^{-1})^L \\
    &\leq (C_2 + e^2) L .
\end{align*}
This completes the proof.
\end{proof}

\begin{lemma}\label{lemma:perturbationvecmatrixsparsenormbound}
Suppose that $\Wb_1,\ldots,\Wb_L$ are generated via Gaussian initialization, and all results \ref{ThmResult:randinit_normbounds}-\ref{ThmResult:randinit_gradientuniformlowerbound} in Theorem~\ref{thm:randinit} hold. 
For $\tau > 0$, let $ \tilde\Wb_1,\ldots, \tilde\Wb_L$ with $\|\tilde \Wb_l-\Wb_l\|_2\le \tau$, $l=1,\ldots,L$ be the perturbed matrices. Let $\tilde \bSigma_{l,i}$, $ l=1,\ldots,L$, $i=1,\ldots,n$ be diagonal matrices satisfying  $ \| \tilde \bSigma_{l,i} - \bSigma_{l,i} \|_0 \leq s$ and $|(\tilde \bSigma_{l,i} - \bSigma_{l,i})_{jj}|,|(\tilde \bSigma_{l,i})_{jj}| \leq 1$ for all $l=1,\ldots,L$, $i=1,\ldots,n$ and $j=1,\ldots,m_l$. If $\tau, \sqrt{s\log(M)/m} \leq \kappa L^{-3}$ for some small enough absolute constant $\kappa$, then
\begin{align*}
     \vb^\top \Bigg(\prod_{r=l}^{L}\tilde \bSigma_{r,i}\tilde \Wb_r^\top \Bigg) \ab \leq C [L^2\tau\sqrt{M} + L \sqrt{s\log(M)}].
\end{align*}
for any $\ab\in \RR^{m_{l - 1}}$ satisfying $\| \ab \|_2 = 1$, $\| \ab \|_0 \leq s$ and any $1 \leq  l \leq L $, where $C$ is an absolute constant. 
\end{lemma}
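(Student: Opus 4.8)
The plan is to mimic the strategy used in the proof of Lemma~\ref{lemma:perturbationmatrixnormbound}: decompose each perturbed factor as
\begin{align*}
\tilde{\bSigma}_{r,i}\tilde{\Wb}_r^\top = \bSigma_{r,i}\Wb_r^\top + (\tilde{\bSigma}_{r,i} - \bSigma_{r,i})\Wb_r^\top + \tilde{\bSigma}_{r,i}(\tilde{\Wb}_r - \Wb_r)^\top,
\end{align*}
and reduce $\vb^\top(\prod_{r=l}^L\tilde{\bSigma}_{r,i}\tilde{\Wb}_r^\top)\ab$ to a clean (unperturbed) term plus a sum of single-perturbation terms. Writing $\bm{\Delta}_{r^*,i} := \tilde{\bSigma}_{r^*,i}\tilde{\Wb}_{r^*}^\top - \bSigma_{r^*,i}\Wb_{r^*}^\top$, I would start from the telescoping identity
\begin{align*}
\prod_{r=l}^L\tilde{\bSigma}_{r,i}\tilde{\Wb}_r^\top = \prod_{r=l}^L\bSigma_{r,i}\Wb_r^\top + \sum_{r^*=l}^L\bigg(\prod_{r=r^*+1}^L\bSigma_{r,i}\Wb_r^\top\bigg)\bm{\Delta}_{r^*,i}\bigg(\prod_{r=l}^{r^*-1}\tilde{\bSigma}_{r,i}\tilde{\Wb}_r^\top\bigg),
\end{align*}
so that, after left/right multiplication by $\vb^\top$ and $\ab$, we are left with the clean term and, for each $r^*$, a term $\vb^\top(\prod_{r>r^*}\bSigma_{r,i}\Wb_r^\top)\bm{\Delta}_{r^*,i}(\prod_{r<r^*}\tilde{\bSigma}_{r,i}\tilde{\Wb}_r^\top)\ab$; note that in this identity the factors \emph{above} $r^*$ are unperturbed while those \emph{below} are perturbed.

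For the clean term $\vb^\top(\prod_{r=l}^L\bSigma_{r,i}\Wb_r^\top)\ab$, the hypotheses $\|\ab\|_2 = 1$ and $\|\ab\|_0 \le s$ place us exactly in the scope of result~\ref{ThmResult:randinit_vecmatproductsparsenorm} of Theorem~\ref{thm:randinit}, giving a bound $O(L\sqrt{s\log M})$. For the $r^*$-th summand I would split $\bm{\Delta}_{r^*,i}$ into its two pieces. For the activation piece $(\tilde{\bSigma}_{r^*,i} - \bSigma_{r^*,i})\Wb_{r^*}^\top$, the crucial point is that $\tilde{\bSigma}_{r^*,i} - \bSigma_{r^*,i}$ is a diagonal matrix with at most $s$ nonzero entries (by hypothesis), hence
\begin{align*}
\zb := (\tilde{\bSigma}_{r^*,i} - \bSigma_{r^*,i})\Wb_{r^*}^\top\bigg(\prod_{r=l}^{r^*-1}\tilde{\bSigma}_{r,i}\tilde{\Wb}_r^\top\bigg)\ab
\end{align*}
satisfies $\|\zb\|_0 \le s$, while its Euclidean norm is controlled by $\|\Wb_{r^*}\|_2$ (result~\ref{ThmResult:randinit_normbounds}) times the spectral-norm bound $O(L)$ for the perturbed sub-product $\prod_{r=l}^{r^*-1}\tilde{\bSigma}_{r,i}\tilde{\Wb}_r^\top$ supplied by Lemma~\ref{lemma:perturbationmatrixnormbound}. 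Since the tail $\prod_{r=r^*+1}^L\bSigma_{r,i}\Wb_r^\top$ is unperturbed, applying $\vb^\top(\prod_{r=r^*+1}^L\bSigma_{r,i}\Wb_r^\top)$ to the normalized $s$-sparse vector $\zb/\|\zb\|_2$ is again governed by result~\ref{ThmResult:randinit_vecmatproductsparsenorm}; this is precisely what keeps the contribution at the level of $\sqrt{s\log M}$ rather than the crude $\|\vb\|_2\cdot O(L)$. For the weight piece $\tilde{\bSigma}_{r^*,i}(\tilde{\Wb}_{r^*} - \Wb_{r^*})^\top$, I would simply use $\|\tilde{\Wb}_{r^*} - \Wb_{r^*}\|_2 \le \tau$, $\|\vb\|_2 = \sqrt{m_L}$, $\|\bSigma_{r,i}\|_2,\|\tilde{\bSigma}_{r,i}\|_2 \le 1$, the bound $\|\prod_{r=r^*+1}^L\bSigma_{r,i}\Wb_r^\top\|_2 = O(L)$ from result~\ref{ThmResult:randinit_matproductnorm}, and Lemma~\ref{lemma:perturbationmatrixnormbound} once more for the perturbed sub-product below $r^*$; this contributes at the level of $L^2\tau\sqrt M$. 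Summing the two families over $r^* = l,\ldots,L$ and adding the clean term then produces a bound of the stated order $O(L^2\tau\sqrt M + L\sqrt{s\log M})$.

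I expect the main obstacle to be the bookkeeping of these single-perturbation terms rather than any genuinely new phenomenon. Two points need care: first, the $s$-sparsity introduced by $\tilde{\bSigma}_{r^*,i} - \bSigma_{r^*,i}$ must be tracked and fed into result~\ref{ThmResult:randinit_vecmatproductsparsenorm} (or, in a full-expansion variant, into result~\ref{ThmResult:randinit_matproductdoublesparsenorm}), since this is the only way to turn the long unperturbed factor $\vb^\top\prod_r\bSigma_{r,i}\Wb_r^\top$ into a $\sqrt{s\log M}$-type bound instead of a $\sqrt M$ one; second, the perturbed sub-product $\prod_{r=l}^{r^*-1}\tilde{\bSigma}_{r,i}\tilde{\Wb}_r^\top$ should never be re-expanded — only its spectral norm from Lemma~\ref{lemma:perturbationmatrixnormbound} is ever used — which simultaneously avoids circularity and the combinatorial blow-up that a full $3^{L-l+1}$-term expansion would otherwise require. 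The smallness hypotheses $\tau,\ \sqrt{s\log(M)/m}\le\kappa L^{-3}$ enter only through the hypotheses of Lemma~\ref{lemma:perturbationmatrixnormbound}.
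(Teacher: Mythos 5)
Your telescoping decomposition is a valid identity and a cleaner route than the paper's full $3^{L-l+1}$-term expansion, but the accounting does not close to the stated bound. Take the sparse piece for a fixed $r^*$: you form the $s$-sparse vector $\zb = (\tilde\bSigma_{r^*,i}-\bSigma_{r^*,i})\Wb_{r^*}^\top\big(\prod_{r=l}^{r^*-1}\tilde\bSigma_{r,i}\tilde\Wb_r^\top\big)\ab$, and your only quantitative handle on it is the spectral-norm estimate $\|\zb\|_2 \le \|\Wb_{r^*}\|_2\cdot\big\|\prod_{r<r^*}\tilde\bSigma_{r,i}\tilde\Wb_r^\top\big\|_2 = O(L)$ from result~\ref{ThmResult:randinit_normbounds} and Lemma~\ref{lemma:perturbationmatrixnormbound}. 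Feeding $\zb/\|\zb\|_2$ into result~\ref{ThmResult:randinit_vecmatproductsparsenorm} costs another $O(L\sqrt{s\log M})$, so this piece is $O(L^2\sqrt{s\log M})$ for a single $r^*$, and you still must sum over the up-to-$L$ choices of $r^*$, arriving at $O(L^3\sqrt{s\log M})$. A parallel count for the weight-perturbation piece gives $O(L^3\tau\sqrt M)$. Both are off by polynomial factors of $L$ from the target $O(L^2\tau\sqrt M + L\sqrt{s\log M})$, so this argument does not prove the lemma as stated.

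The reason the paper's full expansion reaches the sharper rate is exactly the second feature you flagged as a virtue of your approach: the paper \emph{never} encapsulates a perturbed sub-product. In every term of the expansion, each diagonal difference $(\tilde\bSigma_{r,i}-\bSigma_{r,i})$ is kept explicit and acts as an $s$-sparse bottleneck, so the stretch of clean chain between two consecutive difference factors is controlled by the small quantity of order $L\sqrt{s\log(M)/m}$ supplied by result~\ref{ThmResult:randinit_matproductdoublesparsenorm} rather than by a crude spectral norm $O(L)$, and each weight-difference factor contributes an additional small factor of order $L\tau$. By hypothesis both of these per-gap quantities are at most $\kappa L^{-2}$, so the sum over the positions and number of difference factors is a geometrically convergent series dominated by its lowest-order (almost clean) terms. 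That geometric decay is precisely what absorbs the combinatorial blow-up you were trying to avoid, and it is what keeps the final bound linear rather than cubic in $L$. In other words, the smallness hypotheses on $\tau$ and $\sqrt{s\log(M)/m}$ do not enter only through Lemma~\ref{lemma:perturbationmatrixnormbound}; they are the mechanism that collapses the full expansion, and the telescoping route forfeits that mechanism by treating $\prod_{r<r^*}\tilde\bSigma_{r,i}\tilde\Wb_r^\top$ as an opaque $O(L)$ operator.
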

\begin{proof}
The proof is similar to the proof of Lemma~\ref{lemma:perturbationmatrixnormbound}. Again, 
% Note that we have
% \begin{align*}
%     \tilde{\bSigma}_{r,i} \tilde{W}_{r}^\top =  \bSigma_{r,i} W_{r}^\top + (\tilde{\bSigma}_{r,i} - \bSigma_{r,i}) \Wb_r^\top + \tilde{\bSigma}_{r,i}(\tilde{\Wb}_r - \Wb_r)^\top.
% \end{align*}
% Therefore, 
let $\cA_{r,i} = \{\bSigma_{r,i} W_{r}^\top, (\tilde{\bSigma}_{r,i} - \bSigma_{r,i}) \Wb_r^\top, \tilde{\bSigma}_{r,i}(\tilde{\Wb}_r - \Wb_r)^\top  \}$, $r=l,\ldots,L$, then we have
\begin{align*}
    \vb^\top \Bigg( \prod_{r=l}^{L}\tilde \bSigma_{r,i}\tilde \Wb_r^\top \Bigg) \ab = m_L^{1/2}\cdot \sum_{\Ab_{l,i}\in \cA_{l,i},\ldots, \Ab_{L,i}\in \cA_{L,i}} m_L^{-1/2}\vb^\top \Bigg( \prod_{r=l}^{L} \Ab_{r,i} \Bigg) \ab.
\end{align*}
Similar to the proof of Lemma~\ref{lemma:perturbationmatrixnormbound}, we denote by $|\bSigma|$ the diagonal matrix with absolute values of elements of $\bSigma$ on the corresponding entries. For each sequence $\Ab_{l,i},\ldots,\Ab_{L,i}$, denote
\begin{align*}
    \hat{\bSigma}_{r,i} = \left\{ 
    \begin{array}{ll}
        | \tilde{\bSigma}_{r,i} - \bSigma_{r,i} |, & \text{if }r\geq 2 \text{ and }\Ab_{r-1,i} =(\tilde{\bSigma}_{r-1,i} - \bSigma_{r-1,i}) \Wb_{r-1}^\top, \\
        \Ib, & \text{otherwise}.
    \end{array}
    \right.
\end{align*}
Then we have
\begin{align*}
    \prod_{r=l}^{L} \Ab_{r,i}  = \prod_{r=l}^{L} \Ab_{r,i} \hat{\bSigma}_{r,i} .
\end{align*}
When $\Ab_{r,i} = \bSigma_{r,i} \Wb_{r}$ for all $r= l,\ldots,L$, then the bound of $m_L^{-1/2}\vb^\top \big( \prod_{r=l}^{L} \Ab_{r,i} \big) \ab $ is given by \ref{ThmResult:randinit_matproductnorm} in Theorem~\ref{thm:randinit}. For all the other terms in the expansion, we consider sequences of the form $ \Bb_{r_2,i} (\prod_{r = r_1+1}^{r_2 - 1} \bSigma_{r,i} \Wb_r^\top) \Bb_{r_1,i} $, where 
\begin{align*}
&\Bb_{r_2,i} \in \{ (\tilde{\bSigma}_{r_2,i} - \bSigma_{r_2,i}) \Wb_{r_2}^\top, \tilde{\bSigma}_{r_2,i}(\tilde{\Wb}_{r_2} - \Wb_{r_2})^\top , m_L^{-1/2}\vb^\top \}, \\
&\Bb_{r_1,i} \in \{ | \tilde{\bSigma}_{r_1,i} - \bSigma_{r_1,i} |, \tilde{\bSigma}_{r_1,i}(\tilde{\Wb}_{r_1} - \Wb_{r_1})^\top ,\ab \}.
\end{align*}
By \ref{ThmResult:randinit_matproductdoublesparsenorm} in Theorem~\ref{thm:randinit}, there exists an absolute constant $C_1$ such that $ \| \Bb_{r_2,i} (\prod_{r = r_1+1}^{r_2 - 1} \bSigma_{r,i} \Wb_r^\top) \Bb_{r_1,i} \|_2 $ with different choices of $\Bb_{r_1,i}$ and $\Bb_{r_2,i}$ have the following bounds:
\begin{enumerate}
    \item If $\Bb_{r_1,i} \in \{ | \tilde{\bSigma}_{r_1,i} - \bSigma_{r_1,i} |, \ab \} $, $\Bb_{r_2,i} \in \{ (\tilde{\bSigma}_{r_2,i} - \bSigma_{r_2,i}) \Wb_{r_2}^\top, m_L^{-1/2} \vb^\top \}$, then  $ \| \Bb_{r_2,i} (\prod_{r = r_1+1}^{r_2 - 1} \bSigma_{r,i} \Wb_r^\top) \Bb_{r_1,i} \|_2 \leq C_1 L \sqrt{s\log(M)/m}$.
    \item If $\Bb_{r_1,i} =  \tilde{\bSigma}_{r_1,i}(\tilde{\Wb}_{r_1} - \Wb_{r_1})^\top $, $\Bb_{r_2,i} =  \tilde{\bSigma}_{r_2,i}(\tilde{\Wb}_{r_2} - \Wb_{r_2})^\top $, then  $ \| \Bb_{r_2,i} (\prod_{r = r_1+1}^{r_2 - 1} \bSigma_{r,i} \Wb_r^\top) \Bb_{r_1,i} \|_2 \leq C_1 L \tau^2$
    \item Otherwise, $ \| \Bb_{r_2,i} (\prod_{r = r_1+1}^{r_2 - 1} \bSigma_{r,i} \Wb_r^\top) \Bb_{r_1,i} \|_2 \leq C_1 L \tau$.
\end{enumerate}
% For any fixed sequence $\Ab_{l_1,i} \in \cA_{l_1,i},\ldots, \Ab_{l_2,i} \in \cA_{l_2,i}$, let 
% \begin{align*}
%     p_1 = \big|\{ r:\Ab_{r,i} = \tilde{\bSigma}_{r,i}(\tilde{\Wb}_r - \Wb_r)^\top \}\big|,p_2 = \big|\{ r:\Ab_{r,i} = (\tilde{\bSigma}_{r,i} - \bSigma_{r,i})\Wb_r^\top \}\big|,p_3 = \big|\{ r:\Ab_{r,i} = \bSigma_{r,i}\Wb_r^\top \}\big|.
% \end{align*}
% Then by the discussion above we see that the bound of $\| \prod_{r=l_1}^{l_2} \Ab_{r,i} \|_2 $ has a term $ (C_1 L \tau)^{p_1} $ granted by the matrices of the form $\tilde{\bSigma}_{r,i}(\tilde{\Wb}_r - \Wb_r)^\top $. In addition, if $p_2 > p_1$, then the bound also has a term $C_1L\sqrt{s\log(M)/m}$ with power at least $p_2 - p_1$. Note that when $p_1 = p_2 = 0$, we still have $\| \prod_{r=l_1}^{l_2} \Ab_{r,i} \|_2 \leq C_2 L $ for some absolute constant $C_2$. Therefore, 
Let $\alpha = \max\{ \tau , \sqrt{s\log(m)/m} \}$
Then similar to the proof of Lemma~\ref{lemma:perturbationmatrixnormbound}, by \ref{ThmResult:randinit_vecmatproductsparsenorm} in Theorem~\ref{thm:randinit} we have
\begin{align}\label{eq:perturbationvecmatrixsparsenormbound1}
    m_L^{-1/2}\vb^\top \Bigg( \prod_{r=l_1}^{l_2}\tilde \bSigma_{r,i}\tilde \Wb_r^\top \Bigg) \ab \leq C_2 L\sqrt{s\log(M)/m} + I_1 + I_2,
\end{align}
where $C_2$ is an absolute constant, and 
\begin{align*}
    & I_1 = \sum_{p_1 = 1}^{L - l + 1} \sum_{p_2 = 0}^{L - l + 1 - p_1} \binom{L-l+1}{p_1} \binom{L-l+1 - p_1}{p_2} (C_1L\tau)^{p_1} (C_1L\sqrt{s\log(M)/m})^{\max\{p_2+1-p_1,0\}},\\
    & I_2 = \sum_{p_2 = 1}^{L-l+1} \binom{L-l+1}{p_2} (C_1 L \sqrt{s\log(M)/m})^{p_2 + 1}.
\end{align*}
For $I_1$, we have
\begin{align*}
    I_1 &\leq C_1 L \tau \cdot \sum_{p_1 = 1}^{L - l + 1} \sum_{p_2 = 0}^{L - l + 1 - p_1} \binom{L-l+1}{p_1} \binom{L-l+1 - p_1}{p_2} (C_1L\tau)^{p_1 - 1} (C_1L\sqrt{s\log(M)/m})^{\max\{p_2+1-p_1,0\}}\\
    &\leq C_1 L \tau \cdot \sum_{p_1 = 1}^{L - l + 1} \sum_{p_2 = 0}^{L - l + 1 - p_1} \binom{L-l+1}{p_1} \binom{L-l+1 - p_1}{p_2} L^{-2(p_1 - 1)} L^{-2\max\{p_2+1-p_1,0\}}\\
    & \leq C_1 L \tau \sum_{p_1 = 1}^{L - l + 1} \sum_{p_2 = 0}^{L - l + 1 - p_1} \binom{L-l+1}{p_1} \binom{L-l+1 - p_1}{p_2}  L^{-2\max\{p_2, p_1 - 1\}}\\
    & \leq C_1 L^2 \tau \sum_{p_1 = 1}^{L - l + 1} \sum_{p_2 = 0}^{L - l + 1 - p_1} \binom{L-l+1}{p_1} \binom{L-l+1 - p_1}{p_2}  L^{-(p_1 + p_2)}\cdot 1^{L-l+1-p_1 - p_2}\\
    &\leq  C_1 L^2 \tau \cdot (1 + 2/L)^L\\
    &\leq C_1 e^2 L^2 \tau.
\end{align*}
For $I_2$, we have
\begin{align*}
    I_2 \leq C_1 L \sqrt{s\log(M)/m}\cdot   \sum_{p_2 = 1}^{L-l+1} \binom{L-l+1}{p_2} L^{-2p_2} \leq C_1 e L \sqrt{s\log(M)/m}.
\end{align*}
Plugging the bounds of $I_1$ and $I_2$ into \eqref{eq:perturbationvecmatrixsparsenormbound1} completes the proof.
\end{proof}

The following lemma is inspired by a similar result given by \cite{allen2018convergence}.
\begin{lemma}\label{lemma:perturbationoutputdifference}
Suppose that $\Wb_1,\ldots,\Wb_L$ are generated via Gaussian initialization, and all results \ref{ThmResult:randinit_normbounds}-\ref{ThmResult:randinit_gradientuniformlowerbound} in Theorem~\ref{thm:randinit} hold. 
Let $\tilde{\Wb} = (\tilde{\Wb}_1,\ldots, \tilde{\Wb}_L)$,  $\hat{\Wb} = (\hat{\Wb}_1 ,\ldots, \hat{\Wb}_L)$ be two collections of weight matrices satisfying $\| \tilde{\Wb}_l - \Wb_l \|_2,\| \hat{\Wb}_l - \Wb_l \|_2 \leq \tau$, $l=1,\ldots,L$. Let $\bSigma_{l,i},\tilde\bSigma_{l,i},\hat\bSigma_{l,i}$ and $\xb_{l,i},\tilde\xb_{l,i},\hat\xb_{l,i}$ be the binary matrices and hidden layer outputs at the $l$-th layer with parameter matrices $\Wb,\tilde{\Wb},\hat{\Wb}$ respectively. 
Let $C_0,C_0'$ be the absolute constants in the bounds of $ \big\| \prod_{r=l_1}^{l_2}\tilde \bSigma_{r,i}\tilde \Wb_r^\top \big\|_2$, $1\leq l_1 < l_2 \leq L$ and $\| \Wb_{l} \|_2$, $l=1,\ldots,L$ given in Lemma~\ref{lemma:perturbationmatrixnormbound} and \ref{ThmResult:randinit_normbounds} in Theorem~\ref{thm:randinit} respectively. 
Then it holds that
%let $\tilde \bSigma_{1,i} = \text{Diag}\big(\ind\big\{\tilde\Wb_1\xb_i>0\big\}\big)$, $\tilde \bSigma_{l,i} = \text{Diag}\big[\ind\big\{\tilde \Wb_l^\top(\prod_{r=1}^{l-1}\tilde \bSigma_{r,i}\tilde \Wb_r^\top)\big\}\xb_i>0\big]$, and $\hat \bSigma_{1,i} = \text{Diag}\big(\ind\big\{\hat\Wb_1\xb_i>0\big\}\big)$, $\hat \bSigma_{l,i} = \text{Diag}\big[\ind\big\{\hat \Wb_l^\top(\prod_{r=1}^{l-1}\hat \bSigma_{r,i}\hat \Wb_r^\top)\big\}\xb_i>0\big]$. Denote $\tilde \xb_{l,i} = \big(\prod_{r=1}^{l}\tilde \bSigma_{r,i}\tilde \Wb_r^\top\big)\xb_i$, $\hat \xb_{l,i} = \big(\prod_{r=1}^{l}\hat \bSigma_{r,i}\hat \Wb_r^\top\big)\xb_i$.
% Then there exists diagonal matrices $\check{\Db}_{l,i}$ such that \begin{itemize}
%     \item $|(\bSigma_{l,i} + \check{\bSigma}_{l,i})_{j,j}| \leq 1$ and $|(\check{\bSigma}_{l,i)}_{j,j}|\leq 1$ for all $j = 1,\ldots,m_j$, $l=1,\ldots, L$ and $i=1,\ldots, n$.
%     \item $\| \check{\bSigma}_{l,i} \|_0 = $
% \end{itemize}
\begin{itemize}
    \item $\| \hat{\xb}_{l,i} - \tilde{\xb}_{l,i} \|_2 \leq C L \cdot \sum_{r=1}^l \| \hat{\Wb}_{r} - \tilde{\Wb}_{r} \|_2$,
    \item $ \| \hat{\bSigma}_{l,i} - \tilde{\bSigma}_{l,i} \|_0 \leq C' L^{4/3}\tau^{2/3} m_l $,
\end{itemize}
for all $l = 1,\ldots, L$ and $i = 1,\ldots,n$, where $C = 2(C_0 \lor 1)$ and $C' = 8 C^{2/3} C_0'^{2/3} $.
\end{lemma}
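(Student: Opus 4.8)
The plan is to prove the two bullets simultaneously by induction on the layer index $l$, and within each inductive step to establish the $\ell_0$ bound \emph{before} the norm bound, since the proof of the latter invokes Lemma~\ref{lemma:perturbationmatrixnormbound} applied to diagonal matrices whose sparsity relative to the Gaussian activation pattern is precisely what the $\ell_0$ bound controls. The central elementary tool is the ReLU-difference diagonal: for vectors $a,b$ of the same dimension one can write $\sigma(a)-\sigma(b) = D(a,b)(a-b)$, where $D(a,b)$ is diagonal with all entries in $[0,1]$ (the $j$-th entry is $1$ if $a_j,b_j>0$, is $0$ if $a_j,b_j\le 0$, and lies in $(0,1)$ otherwise). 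Moreover, on any coordinate where $D(a,b)$ disagrees with a reference pattern $\ind\{c>0\}$, at least one of $\ind\{a>0\}$, $\ind\{b>0\}$ already disagrees with $\ind\{c>0\}$ there; taking $a=\hat\Wb_l^\top\hat\xb_{l-1,i}$, $b=\tilde\Wb_l^\top\tilde\xb_{l-1,i}$, $c=\Wb_l^\top\xb_{l-1,i}$ shows the ReLU-difference diagonal $\overline D_{l,i}$ between the two perturbed networks at layer $l$ satisfies $\|\overline D_{l,i}-\bSigma_{l,i}\|_0 \le \|\hat\bSigma_{l,i}-\bSigma_{l,i}\|_0+\|\tilde\bSigma_{l,i}-\bSigma_{l,i}\|_0$.

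For the $\ell_0$ bound at layer $l$ it suffices, by the triangle inequality, to bound $\|\hat\bSigma_{l,i}-\bSigma_{l,i}\|_0$ for an arbitrary $\tau$-perturbation $\hat\Wb$ (the $\hat$-versus-$\tilde$ version then costs only a factor $2$). Each coordinate $j$ on which $\hat\bSigma_{l,i}$ and $\bSigma_{l,i}$ disagree satisfies $|\langle\wb_{l,j},\xb_{l-1,i}\rangle|\le |\langle\hat\Wb_{l,j},\hat\xb_{l-1,i}\rangle-\langle\wb_{l,j},\xb_{l-1,i}\rangle|$, so for a threshold $\beta>0$ such a $j$ either (i) has $|\langle\wb_{l,j},\xb_{l-1,i}\rangle|\le\beta$, and result~\ref{ThmResult:randinit_activationthreshold} of Theorem~\ref{thm:randinit} bounds the number of these by $2m_l^{3/2}\beta$, or (ii) has $|\langle\hat\Wb_{l,j},\hat\xb_{l-1,i}\rangle-\langle\wb_{l,j},\xb_{l-1,i}\rangle|>\beta$, and Markov's inequality bounds the number of these by $\beta^{-2}\|\hat\Wb_l^\top\hat\xb_{l-1,i}-\Wb_l^\top\xb_{l-1,i}\|_2^2$. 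The inductively available norm bound at layer $l-1$ (applied with $\tilde\Wb=\Wb$) together with result~\ref{ThmResult:randinit_normbounds} gives $\|\hat\Wb_l^\top\hat\xb_{l-1,i}-\Wb_l^\top\xb_{l-1,i}\|_2 = O(L^2\tau)$, so choosing $\beta\asymp L^{4/3}\tau^{2/3}m_l^{-1/2}$ to balance the two counts yields $\|\hat\bSigma_{l,i}-\bSigma_{l,i}\|_0 = O(L^{4/3}\tau^{2/3}m_l)$; this balancing is where the $\tau^{2/3}$ exponent originates.

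For the norm bound at layer $l$, unrolling the ReLU-difference identity through layers $l,l-1,\dots,1$ gives
\[
\hat\xb_{l,i}-\tilde\xb_{l,i} = \sum_{r=1}^{l}\Bigl(\prod_{s=r+1}^{l}\overline D_{s,i}\hat\Wb_s^\top\Bigr)\overline D_{r,i}(\hat\Wb_r-\tilde\Wb_r)^\top\tilde\xb_{r-1,i}.
\]
By the $\ell_0$ bounds at levels $\le l$ (just proved for level $l$, and available by induction below it), each $\overline D_{s,i}$ differs from $\bSigma_{s,i}$ in at most $s=O(L^{4/3}\tau^{2/3}M)$ coordinates and has entries in $[0,1]$; since $\tau$ and $\sqrt{s\log(M)/m}$ are both $O(L^{-3})$ under the hypothesis $\tau\le\underline C\{[L^{-11}(\log M)^{-3/2}]\land(\phi^{3/2}n^{-3}L^{-2})\}$ (using $M\le 2m$ from Assumption~\ref{assump:m_scaling}), Lemma~\ref{lemma:perturbationmatrixnormbound} applies and bounds $\|\prod_{s=r+1}^{l}\overline D_{s,i}\hat\Wb_s^\top\|_2$ by $C_0 L$. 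Combining this with $\|\overline D_{r,i}\|_2\le 1$, $\|(\hat\Wb_r-\tilde\Wb_r)^\top\|_2=\|\hat\Wb_r-\tilde\Wb_r\|_2$, and $\|\tilde\xb_{r-1,i}\|_2\le 2$ (from result~\ref{ThmResult:randinit_normbounds} and the level-$(r-1)$ norm bound with $\tilde\Wb$ replaced by $\Wb$), summing over $r$ gives $\|\hat\xb_{l,i}-\tilde\xb_{l,i}\|_2\le 2C_0 L\sum_{r=1}^{l}\|\hat\Wb_r-\tilde\Wb_r\|_2$, i.e. the claimed bound with $C=2(C_0\lor 1)$. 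The base case $l=0$ is immediate since $\hat\xb_{0,i}=\tilde\xb_{0,i}=\xb_i$.

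The main obstacle is orchestrating the two inductions correctly: the norm bound at layer $l$ cannot be obtained in isolation, because the matrix products in its unrolled form involve ReLU-difference diagonals that are neither $\hat\bSigma$ nor $\tilde\bSigma$, so one must first show these diagonals are $O(L^{4/3}\tau^{2/3}M)$-sparse perturbations of the Gaussian pattern $\bSigma$ and then verify that the quantitative condition $\tau,\sqrt{s\log(M)/m}\le\kappa L^{-3}$ of Lemma~\ref{lemma:perturbationmatrixnormbound} is indeed satisfied with this $s$ under the stated bound on $\tau$. The delicate optimization of the threshold $\beta$ in the $\ell_0$ argument, which yields the $\tau^{2/3}$ rate rather than a weaker $\tau$-rate, is the other point requiring care.
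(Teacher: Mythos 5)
Your proposal follows essentially the same route as the paper's proof: a joint induction on $l$ in which the $\ell_0$ bound at layer $l$ is established first via the $\beta$-threshold split (using Theorem~\ref{thm:randinit}\ref{ThmResult:randinit_activationthreshold} for near-kink coordinates, a Markov-type count for large-displacement coordinates, and the inductive norm bound to control the displacement), after which the norm bound is obtained by telescoping through a ReLU-difference diagonal and invoking Lemma~\ref{lemma:perturbationmatrixnormbound}. Your $\overline D_{s,i}$ is exactly the paper's $\hat\bSigma_{s,i}+\check\bSigma_{s,i}$, and your observation that $\overline D$ disagrees with $\bSigma$ only where $\hat\bSigma$ or $\tilde\bSigma$ already does is the implicit step that lets the paper apply the sparse-perturbation Lemma~\ref{lemma:perturbationmatrixnormbound} to these diagonals.
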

\begin{proof}
For all $i\in \{1,\ldots,n\}$ and $l\in\{1,\ldots,L\}$, 
we prove the following stronger results: 
\begin{align*}
    &\| \hat{\xb}_{l,i} - \tilde{\xb}_{l,i} \|_2 \leq C L \cdot \sum_{r=1}^l \| \hat{\Wb}_{r} - \tilde{\Wb}_{r} \|_2,~ \| \tilde{\xb}_{l,i} - {\xb}_{l,i} \|_2,\| \hat{\xb}_{l,i} - {\xb}_{l,i} \|_2  \leq C L^2 \tau, \\
    & \| \tilde{\bSigma}_{l,i} - {\bSigma}_{l,i} \|_0,\| \hat{\bSigma}_{l,i} - {\bSigma}_{l,i} \|_0,\| \hat{\bSigma}_{l,i} - \tilde{\bSigma}_{l,i} \|_0 \leq C' L^{4/3} \tau^{2/3} m_r.
\end{align*}
We prove the results above by induction in $l$. 
Suppose that for $r= 1,\ldots, l-1$ it holds that 
\begin{align*}
    &\| \hat{\xb}_{r,i} - \tilde{\xb}_{r,i} \|_2 \leq C L \cdot \sum_{r'=1}^r \| \hat{\Wb}_{r'} - \tilde{\Wb}_{r'} \|_2,~ \| \tilde{\xb}_{r,i} - {\xb}_{r,i} \|_2,\| \hat{\xb}_{r,i} - {\xb}_{r,i} \|_2  \leq C L^2 \tau, \\
    & \| \tilde{\bSigma}_{r,i} - {\bSigma}_{r,i} \|_0,\| \hat{\bSigma}_{r,i} - {\bSigma}_{r,i} \|_0,\| \hat{\bSigma}_{r,i} - \tilde{\bSigma}_{r,i} \|_0 \leq C' L^{4/3} \tau^{2/3} m_r.
\end{align*}
We first prove the bounds for the diagonal matrices. %that $\| \hat{\bSigma}_{l,i} - \tilde{\bSigma}_{l,i} \|_0 \leq C' L^{4/3} \tau^{2/3} m_l$. 
Since $\| \hat{\bSigma}_{l,i} - \tilde{\bSigma}_{l,i} \|_0 \leq \| \hat{\bSigma}_{l,i} - \bSigma_{l,i} \|_0 + \| \tilde{\bSigma}_{l,i} - \bSigma_{l,i} \|_0 $, it suffices to show that $\| \hat{\bSigma}_{l,i} - \bSigma_{l,i} \|_0, \| \tilde{\bSigma}_{l,i} - \bSigma_{l,i} \|_0 \leq  C'/2 L^{4/3} \tau^{2/3} m_l$. We show that $\| \tilde{\bSigma}_{l,i} - \bSigma_{l,i} \|_0 \leq C'/2 L^{4/3} \tau^{2/3} m_l$. Then the same bound for $\| \hat{\bSigma}_{l,i} - \bSigma_{l,i} \|_0$ follows from the exact same proof. To show $\| \tilde{\bSigma}_{l,i} - \bSigma_{l,i} \|_0 \leq C'/2 L^{4/3} \tau^{2/3} m_l$, it suffices to give upper bound for the number of sign changes between vectors $\tilde\Wb_{l}\tilde\xb_{l-1,i}$ and $\Wb_{l}\xb_{l-1,i}$. 
We characterize their difference as follows:
\begin{align*}
    \tilde \Wb_{l}^{\top}\tilde \xb_{l-1,i} - \Wb_l^\top\xb_{l-1,i}  = (\tilde \Wb_l^\top-\Wb_l^\top)\xb_{l-1,i} + \tilde\Wb_l^\top(\tilde \xb_{l-1,i}-\xb_{l-1,i}).  
\end{align*}
Note that we have $\|\tilde \Wb_{l}^\top - \Wb_l^\top\|_2\le \tau$, $\|\xb_{l-1,i}\|_2\le 2$ and $\|\tilde \Wb_l^\top\|_2\le 2C_0'$, and by definition we have $C\geq 2, \overline{c} \geq 1$. Therefore
\begin{align*}
\|\tilde \Wb_l^\top \tilde \xb_{l-1,i}-\Wb_l^\top\xb_{l-1,i}\|_2 &=\big\|(\tilde \Wb_l^\top-\Wb_l^\top)\xb_{l-1,i}+\tilde\Wb_l^\top(\tilde \xb_{l-1,i}-\xb_{l-1,i})\big\|_2\notag\\
&\le \|\tilde \Wb_l^\top - \Wb_l^\top\|_2\|\xb_{l-1,i}\|_2 + \|\tilde \Wb_l^\top\|_2\|\tilde{\xb}_{l-1,i}-\xb_{l-1,i}\|_2\notag\\
&\le 2\tau +  C \overline{c} L^2 \tau \notag\\
&\le 2 C C_0' L^2 \tau.
\end{align*}
Let $\beta >0 $ be a parameter, and 
\begin{align*}
\cS_{l,i}(\beta) = \{j: j\in[m_l], |\la\wb_{l,j},\xb_{l-1,i}\ra|\le \beta \}
\end{align*}
be the set of indices such that the absolute values of the corresponding entries of $\Wb_l^\top \xb_{l-1,i}$ are bounded by $\beta$. Denote
\begin{align*}
    & s_{l,i}^{(1)}(\beta) = | \{ j\in \cS_{l,i}(\beta): (\tilde{\wb}_{l,j}^\top \xb_{l-1,i}) \cdot (\wb_{l,j}^\top \xb_{l-1,i}) < 0 \} |, \\
    & s_{l,i}^{(2)}(\beta) = | \{ j\in \cS^c_{l,i}(\beta): (\tilde{\wb}_{l,j}^\top \xb_{l-1,i}) \cdot (\wb_{l,j}^\top \xb_{l-1,i}) < 0 \} |.
\end{align*}
%$S_{l,i}$ can be further divided into the following two parts based on a positive parameter $\beta$,
Then we have
\begin{align*}
    s_{l,i} = s_{l,i}^{(1)}(\beta)+s_{l,i}^{(2)}(\beta).
\end{align*}
For $s_{l,i}^{(1)}(\beta)$, 
we directly use the upper bound 
$s_{l,i}^{(1)}(\beta)\le |\cS_{l,i}(\beta)| \leq 2m_l^{3/2}\beta$ given by \ref{ThmResult:randinit_activationthreshold} in Theorem~\ref{thm:randinit}. 
We now focus on the upper bound of $s_{l,i}^{(2)}(\beta)$.
It is clear that if the sign of node $j$ changes, we must have
\begin{align*}
\big\vert\la\tilde\wb_{l,j},\tilde\xb_{l-1,i}\ra-\la\wb_{l,j},\xb_{l-1,i}\ra\big\vert \ge \beta.
\end{align*}
This further implies that 
\begin{align*}
s_{l,i}^{(2)}\beta^2\le \|\tilde \Wb_l^\top \tilde \xb_{l-1,i}-\Wb_l^\top\xb_{l-1,i}\|_2^2\le 4 C^2 C_0'^2 L^4 \tau^2.
\end{align*}
Therefore,  we have the following upper bound of $\|\tilde \bSigma_{l,i}-\bSigma_{l,i}\|_0$:
\begin{align*}
\|\tilde \bSigma_{l,i}-\bSigma_{l,i}\|_0\le s_{l,i}^{(1)}(\beta)+s_{l,i}^{(2)}(\beta)\le 2m_l^{3/2}\beta + \frac{4 C^2 C_0'^2 L^4 \tau^2}{\beta^2}.
\end{align*}
Setting $\beta = 2 C^{2/3} C_0'^{2/3} L^{4/3} \tau^{2/3} m_l^{-1/2}$, we obtain
\begin{align*}
\|\tilde \bSigma_{l,i}-\bSigma_{l,i}\|_0\le 8 C^{2/3} C_0'^{2/3} L^{4/3} \tau^{2/3} m_l.
\end{align*}
This completes the proof of
$$
\| \tilde{\bSigma}_{r,i} - {\bSigma}_{r,i} \|_0,\| \hat{\bSigma}_{r,i} - {\bSigma}_{r,i} \|_0,\| \hat{\bSigma}_{r,i} - \tilde{\bSigma}_{r,i} \|_0 \leq C' L^{4/3} \tau^{2/3} m_r.
$$
Now, combining bounds above and the inductive assumption on the bounds of $\|\tilde \bSigma_{r,i}-\hat\bSigma_{r,i}\|_0$, $r=1,\ldots, l-1$, we show that $\| \hat{\xb}_{l,i} - \tilde{\xb}_{l,i} \|_2 \leq C L \cdot \sum_{r=1}^l \| \hat{\Wb}_{r} - \tilde{\Wb}_{r} \|_2$. 
For $l=1,\ldots,L$, we define binary matrix $\check\bSigma_{l,i}$ as follows:
\begin{align*}
    (\check\bSigma_{l,i})_{jj}:= (\hat\bSigma_{l,i} - \tilde\bSigma_{l,i})_{jj} \cdot \frac{\tilde\wb_{l,j}^\top \tilde\xb_{l-1,i}}{ \hat\wb_{l,j}^\top \hat\xb_{l-1,i} - \tilde\wb_{l,j}^\top \tilde\xb_{l-1,i} },~j=1,\ldots,m_l.
\end{align*}
% \begin{align*}
%     (\check\bSigma_{l,i})_{jj} := \left\{ \begin{array}{ll}
%         0, & \text{if }(\hat\bSigma_{l,i} - \tilde\bSigma_{l,i})_{jj} = 0,\\
%         \frac{\tilde\wb_{l,j}^\top \tilde\xb_{l-1,i}}{ \tilde\wb_{l,j}^\top \tilde\xb_{l-1,i} - \hat\wb_{l,j}^\top \hat\xb_{l-1,i} }, & \text{otherwise.}
%     \end{array}
%     \right. 
% \end{align*}
It then follows by definition that $| (\hat\bSigma_{l,i} + \check\bSigma_{l,i})_{jj} |,| (\check\bSigma_{l,i})_{jj} | \leq 1$ for all $j=1,\ldots,m_l$, and
\begin{align*}
    \hat\xb_{l,i} - \tilde\xb_{l,i} 
    &= (\hat\bSigma_{l,i} + \check\bSigma_{l,i}) ( \hat\Wb_l^\top \hat\xb_{l-1,i} - \tilde\Wb_l^\top  \tilde\xb_{l-1,i}) \\
    &= (\hat\bSigma_{l,i} + \check\bSigma_{l,i}) \hat\Wb_l^\top (  \hat\xb_{l-1,i} - \tilde\xb_{l-1,i}) + 
    (\hat\bSigma_{l,i} + \check\bSigma_{l,i}) ( \hat\Wb_l^\top  - \tilde\Wb_l^\top  )\tilde\xb_{l-1,i} \\
    &= \cdots\\
    &= \sum_{r = 1}^l \Bigg[\prod_{t=r+1}^l(\hat\bSigma_{t,i} + \check\bSigma_{t,i}) \hat\Wb_t^\top\Bigg] (\hat\bSigma_{r,i} + \check\bSigma_{r,i})  ( \hat\Wb_r^\top  - \tilde\Wb_r^\top  )\tilde\xb_{r-1,i}
\end{align*}
and therefore by Lemma~\ref{lemma:perturbationmatrixnormbound}, we have
\begin{align*}
    \| \hat\xb_{l,i} - \tilde\xb_{l,i}  \|_2 \leq 2 C_0 L \cdot \sum_{r=1}^l \| \hat{\Wb}_{r} - \tilde{\Wb}_{r} \|_2.
\end{align*}
With the exact same proof, we have
\begin{align*}
    &\| \hat\xb_{l,i} - \xb_{l,i}  \|_2 \leq 2 C_0 L \cdot \sum_{r=1}^l \| \hat{\Wb}_{r} - {\Wb}_{r} \|_2 \leq 2C_0 L^2 \tau,\\
    &\| \tilde\xb_{l,i} - \xb_{l,i}  \|_2 \leq 2 C_0 L \cdot \sum_{r=1}^l \| \tilde{\Wb}_{r} - {\Wb}_{r} \|_2 \leq 2C_0 L^2 \tau.
\end{align*}
This completes the proof.
\end{proof}

\begin{corollary}\label{corollary:upperbound_signs_forall}
Let $\Wb,\tilde{\Wb}$ be the collections of Gaussian initialized and perturbed weight matrices respectively. Define
\begin{align*}
\tilde \cS_L = \{j\in[m_L]: \text{there exists } i\in[n] \mbox{ such that } (\tilde\bSigma_{L,i} - \bSigma_{L,i})_{jj}\neq 0 \}.
\end{align*}
Then under the same assumptions as Lemma~\ref{lemma:perturbationoutputdifference}, it holds that 
\begin{align*}
|\tilde \cS_L|\le Cn L^{4/3}\tau^{2/3} m_L,
\end{align*}
where $C$ is an absolute constant.
\end{corollary}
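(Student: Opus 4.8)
The statement is an immediate consequence of the per-sample sparsity bound established in Lemma~\ref{lemma:perturbationoutputdifference}, combined with a union bound over the $n$ training examples. The plan is as follows. First I would introduce, for each $i\in[n]$, the per-sample sign-change set
\begin{align*}
\tilde\cS_{L,i} := \{ j\in[m_L]: (\tilde\bSigma_{L,i} - \bSigma_{L,i})_{jj} \neq 0 \},
\end{align*}
so that by definition $|\tilde\cS_{L,i}| = \| \tilde\bSigma_{L,i} - \bSigma_{L,i} \|_0$ and, crucially, $\tilde\cS_L = \bigcup_{i=1}^n \tilde\cS_{L,i}$.

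Next I would invoke Lemma~\ref{lemma:perturbationoutputdifference} (applied with $\hat\Wb = \Wb$, the Gaussian initialization itself, so that $\hat\bSigma_{L,i} = \bSigma_{L,i}$), which is legitimate because the standing hypotheses of the corollary are exactly those of that lemma and, since $\tau$ obeys the perturbation-level bound of Theorem~\ref{thm:perturbation}, the auxiliary conditions $\tau, \sqrt{s\log(M)/m}\le \kappa L^{-3}$ required there are met for the choice $s = C'L^{4/3}\tau^{2/3}m_l$ used in its proof. That lemma yields $\| \tilde\bSigma_{L,i} - \bSigma_{L,i} \|_0 \le C' L^{4/3}\tau^{2/3} m_L$ for every $i\in[n]$, where $C'$ is the absolute constant from Lemma~\ref{lemma:perturbationoutputdifference}; hence $|\tilde\cS_{L,i}| \le C' L^{4/3}\tau^{2/3} m_L$ for all $i$.

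Finally I would conclude by the elementary cardinality bound for unions:
\begin{align*}
|\tilde\cS_L| = \Bigg| \bigcup_{i=1}^n \tilde\cS_{L,i} \Bigg| \le \sum_{i=1}^n |\tilde\cS_{L,i}| \le n\cdot C' L^{4/3}\tau^{2/3} m_L,
\end{align*}
and setting $C = C'$ completes the proof. There is essentially no genuine obstacle here: all the substantive work — tracking how a spectral-norm perturbation of size $\tau$ flips at most $O(L^{4/3}\tau^{2/3}m_l)$ ReLU signs at each layer — is already carried out in Lemma~\ref{lemma:perturbationoutputdifference}; the only thing to be careful about is that the union over $i$ costs a factor $n$, which is precisely the extra factor appearing in the corollary's bound compared to the single-sample bound.
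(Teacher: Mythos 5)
Your proposal is correct and matches the paper's own (one-line) proof, which also deduces the corollary directly from the per-sample bound $\|\tilde\bSigma_{L,i}-\bSigma_{L,i}\|_0 \le C' L^{4/3}\tau^{2/3}m_L$ of Lemma~\ref{lemma:perturbationoutputdifference} and the union over $i\in[n]$. You simply make explicit the union-bound step and the specialization $\hat\Wb=\Wb$ that the paper leaves implicit.
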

\begin{proof}[Proof of Corollary~\ref{corollary:upperbound_signs_forall}]
The result directly follows by the bound of $\| \tilde\bSigma_{L,i} - \bSigma_{L,i} \|_0$ given by Lemma~\ref{lemma:perturbationoutputdifference}.
% This result follows by Lemma~\ref{lemma:bound_difference_sigmali}. Denote
% \begin{align*}
%     \tilde{\cS}_{L,i} = \{j:j\in[m_L], (\tilde\bSigma_{L,i} - \bSigma_{L,i})_{jj}\neq 0 \}.
% \end{align*}
% Then we have
% \begin{align*}
%     \tilde \cS_L = \bigcup_{i=1}^{n} \tilde{\cS}_{L,i}.
% \end{align*}
% Therefore
% \begin{align*}
%     |\tilde \cS_L| \leq \sum_{i=1}^n |\tilde{\cS}_{L,i}|\le3n\cdot 4^Lm_L\tau^{2/3}.
% \end{align*}
% This completes the proof.
\end{proof}

%%%%%%%%%%%%%%%%%%%%%%%%%%%%%%%%%%%%%%%%%%%%%%%%%%%%%%%%%%%%%%%%%%%%

\begin{lemma}\label{lemma:grad_norm_nonlinear}
Suppose that $\Wb_1,\ldots,\Wb_L$ are generated via Gaussian initialization, and all results \ref{ThmResult:randinit_normbounds} to \ref{ThmResult:randinit_gradientuniformlowerbound} hold. If $\|\tilde \Wb_l - \Wb_l\|_2\le \tau = O\big(\phi^{3/2}n^{-3}L^{-2}\big)$ for all $l$, then there exists an absolute constant $C$ such that
\begin{align*}
\|\nabla_{\Wb_L}[ L_S(\tilde \Wb)]\|_F^2\ge C\frac{m_L\phi}{n^5}\bigg(\sum_{i=1}^n\ell'(y_i\tilde y_i)\bigg)^2.
\end{align*}
\end{lemma}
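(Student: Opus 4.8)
The plan is to decompose $\|\nabla_{\Wb_L}[L_S(\tilde\Wb)]\|_F^2$ into a sum of squared column norms, lower bound a large fraction of those columns by combining the uniform gradient lower bound in result~\ref{ThmResult:randinit_gradientuniformlowerbound} of Theorem~\ref{thm:randinit} (equivalently Lemma~\ref{lemma:li2018_uniform}) with the perturbation estimates of Lemma~\ref{lemma:perturbationoutputdifference} and Corollary~\ref{corollary:upperbound_signs_forall}, and then convert the resulting bound into the stated one.

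First I would write out the partial gradient. Since $f_{\tilde\Wb}(\xb_i) = \sum_{j=1}^{m_L} v_j\sigma(\la\tilde\wb_{L,j},\tilde\xb_{L-1,i}\ra)$, the gradient of $L_S(\tilde\Wb)$ with respect to the $j$-th column $\wb_{L,j}$ of $\Wb_L$ equals $\frac{v_j}{n}\sum_{i=1}^n\ell'(y_i\tilde y_i)y_i\sigma'(\la\tilde\wb_{L,j},\tilde\xb_{L-1,i}\ra)\tilde\xb_{L-1,i}$. Put $a_i := -\ell'(y_i\tilde y_i)\ge 0$ (nonnegative by Assumption~\ref{assump:loss}) and $\ab := (a_1,\dots,a_n)^\top\in\RR_+^n$; the claim is trivial if $\ab = \mathbf{0}$, so assume $\|\ab\|_\infty > 0$. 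Using $|v_j| = 1$,
\begin{align*}
\big\|\nabla_{\Wb_L}[L_S(\tilde\Wb)]\big\|_F^2 = \sum_{j=1}^{m_L}\Bigg\|\frac{1}{n}\sum_{i=1}^n a_i y_i\sigma'(\la\tilde\wb_{L,j},\tilde\xb_{L-1,i}\ra)\tilde\xb_{L-1,i}\Bigg\|_2^2 .
\end{align*}

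Next I would identify enough columns with a substantial norm. Applying result~\ref{ThmResult:randinit_gradientuniformlowerbound} of Theorem~\ref{thm:randinit} (Lemma~\ref{lemma:li2018_uniform}) to this $\ab$ yields a set $\cG\subseteq[m_L]$ with $|\cG|\ge \underline{C}'m_L\phi/n$ on which $\big\|\frac1n\sum_i a_i y_i\sigma'(\la\wb_{L,j},\xb_{L-1,i}\ra)\xb_{L-1,i}\big\|_2\ge \underline{C}''\|\ab\|_\infty/n$. For $j$ outside the set $\tilde\cS_L$ of Corollary~\ref{corollary:upperbound_signs_forall}, the layer-$L$ activation indicator is unchanged, i.e.\ $\sigma'(\la\tilde\wb_{L,j},\tilde\xb_{L-1,i}\ra) = \sigma'(\la\wb_{L,j},\xb_{L-1,i}\ra)$ for every $i$, so for $j\in\cG\setminus\tilde\cS_L$ the triangle inequality and the bound $\|\tilde\xb_{L-1,i}-\xb_{L-1,i}\|_2 = O(L^2\tau)$ from Lemma~\ref{lemma:perturbationoutputdifference} give
\begin{align*}
\Bigg\|\frac{1}{n}\sum_{i=1}^n a_i y_i\sigma'(\la\tilde\wb_{L,j},\tilde\xb_{L-1,i}\ra)\tilde\xb_{L-1,i}\Bigg\|_2 \ge \frac{\underline{C}''\|\ab\|_\infty}{n} - O(L^2\tau)\|\ab\|_\infty \ge \frac{\underline{C}''\|\ab\|_\infty}{2n},
\end{align*}
where the last step uses $\tau = O(\phi^{3/2}n^{-3}L^{-2})$ with a small enough hidden constant so that $L^2\tau \le \underline{C}''/(2n)$. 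Moreover $|\tilde\cS_L| = O(nL^{4/3}\tau^{2/3}m_L) = O(\phi n^{-1}m_L)$ by Corollary~\ref{corollary:upperbound_signs_forall}, so shrinking the hidden constant in $\tau$ further ensures $|\tilde\cS_L|\le \tfrac12\underline{C}'m_L\phi/n$ and hence $|\cG\setminus\tilde\cS_L|\ge \tfrac12\underline{C}'m_L\phi/n$.

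Finally I would retain only the terms $j\in\cG\setminus\tilde\cS_L$ in the displayed sum, obtaining
\begin{align*}
\big\|\nabla_{\Wb_L}[L_S(\tilde\Wb)]\big\|_F^2 \ge \frac{\underline{C}'m_L\phi}{2n}\cdot\bigg(\frac{\underline{C}''\|\ab\|_\infty}{2n}\bigg)^2 = \frac{\underline{C}'(\underline{C}'')^2}{8}\cdot\frac{m_L\phi}{n^3}\,\|\ab\|_\infty^2 ,
\end{align*}
and then use $\|\ab\|_\infty \ge \frac1n\sum_{i=1}^n a_i = -\frac1n\sum_{i=1}^n\ell'(y_i\tilde y_i)$, so that $\|\ab\|_\infty^2 \ge n^{-2}\big(\sum_i\ell'(y_i\tilde y_i)\big)^2$, which gives the claim with $C = \underline{C}'(\underline{C}'')^2/8$. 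The step I expect to be the main obstacle is the perturbation transfer: one must simultaneously control how much replacing $(\Wb,\xb_{L-1,i})$ by $(\tilde\Wb,\tilde\xb_{L-1,i})$ perturbs each good column's gradient (handled by Lemma~\ref{lemma:perturbationoutputdifference}) and guarantee that the nodes whose layer-$L$ activation pattern changes for \emph{some} $i$ form only a small fraction of $\cG$ (handled by Corollary~\ref{corollary:upperbound_signs_forall}); both require $\tau$ to be polynomially small, which is exactly why the hypothesis $\tau = O(\phi^{3/2}n^{-3}L^{-2})$ is imposed.
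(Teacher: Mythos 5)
Your proof is correct and follows essentially the same route as the paper's: identify the set of good nodes via result (viii), remove the nodes whose last-layer activation pattern has flipped (Corollary~\ref{corollary:upperbound_signs_forall}), control the residual perturbation on the remaining good nodes via the output-difference bound of Lemma~\ref{lemma:perturbationoutputdifference}, and then pass from $\|\ab\|_\infty$ to $\frac{1}{n}|\sum_i \ell'|$. The only difference is cosmetic — you keep the $1/n$ factor from the gradient lower bound explicit throughout, whereas the paper's proof drops it in an intermediate display and reinstates it in the final line.
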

\begin{proof}[Proof of Lemma \ref{lemma:grad_norm_nonlinear}]
%It is worthy noting that the major proof of Lemma \ref{lemma:grad_norm_nonlinear} follows from the proof of Lemma A.3 in \citet{li2018learning}, which provides an lower bound for the norm of the partial derivatives with respect to the weight vector of one particular hidden node for one-hidden-layer neural network. 
Define 
\begin{align*}
\gb_j = \frac{1}{n}\sum_{i=1}^n \ell'(y_i\tilde  y_i)y_i\vb_j\sigma'(\la\wb_{L,j},\xb_{L-1,i}\ra)\xb_{L-1,i},
\end{align*}
where $\tilde y_i = f_{\tilde \Wb}(\xb_i)$ denotes the output of the network using the perturbed weight matrices.
By \ref{ThmResult:randinit_gradientuniformlowerbound} in Theorem~\ref{thm:randinit}, the inequality
\begin{align*}
    \|\gb_j\|_2\ge C_1\max_{i}|\ell'(y_i\tilde y_i)|
\end{align*}
holds for at least $C_2m_L\phi/n$ nodes, where $C_1,C_2>0$ are positive absolute constants. Moreover, we
rewrite the gradient $\nabla_{\Wb_{L,j}} L_S(\tilde \Wb)$ as follows:
\begin{align*}
\nabla_{\Wb_{L,j}} L_S(\tilde \Wb)  = \frac{1}{n}\sum_{i=1}^n \ell'(y_i\tilde  y_i)y_i\vb_j\sigma'(\la\tilde\wb_{L,j},\tilde\xb_{L-1,i}\ra)\tilde\xb_{L-1,i}.
\end{align*}
Let $b_{i,j} = \ell'(y_i\tilde y_i)y_i\vb_j$, we have
\begin{align*}
& \|\gb_j\|_2-\|\nabla_{\Wb_{L,j}}L_S(\tilde\Wb)\|_2  \\
&\le \bigg\|\frac{1}{n}\sum_{i=1}^n b_{i,j}\big(\sigma'(\la\tilde\wb_{L,j},\tilde\xb_{L-1,i}\ra)\tilde\xb_{L-1,i}-\sigma'(\la\wb_{L,j},\xb_{L-1,i}\ra)\xb_{L-1,i}\big)\bigg\|_2\\
&\le\bigg\|\frac{1}{n}\sum_{i=1}^nb_{i,j}\Big[\big(\sigma'(\la\tilde\wb_{L,j},\tilde\xb_{L-1,i}\ra)-\sigma'(\la\wb_{L,j},\xb_{L-1,i}\ra\big)\xb_{L-1,i} + \sigma'(\la\tilde\wb_{L,j},\tilde\xb_{L-1,i}\ra)(\tilde\xb_{L-1,i}-\xb_{L-1,i}) \Big]\bigg\|_2.
\end{align*}
According to Lemma \ref{lemma:perturbationoutputdifference}, the number of nodes satisfying $\sigma'(\la\tilde\wb_{L,j},\tilde\xb_{L-1,i}\ra)-\sigma'(\la\wb_{L,j},\xb_{L-1,i}\ra \neq 0$ for at least one $i$ is at most $C_3nL^{4/3}\tau^{2/3}m_L$, where $C_3$ is an absolute constant. For the rest of the nodes in this layer, we have
\begin{align*}
\|\gb_j\|_2-\|\nabla_{\Wb_{L,j}}L_S(\tilde\Wb)\|_2&\le \bigg\|\frac{1}{n}\sum_{i=1}^nb_{i,j} \sigma'(\la\tilde\wb_{L,j},\tilde\xb_{L-1,i}\ra)(\tilde\xb_{L-1,i}-\xb_{L-1,i}) \bigg\|_2\\
&\le \frac{1}{n}\sum_{i=1}^nC_4L^2\tau|b_{i,j}|\\
&\le C_4L^2\tau\max_i|\ell'(y_i\tilde y_i)|,
\end{align*}
where $C_4$ is an absolute constant, the first inequality holds since these nodes satisfy $\sigma'(\la\tilde\wb_{L,j},\tilde\xb_{L-1,i}\ra)-\sigma'(\la\wb_{L,j},\xb_{L-1,i}\ra = 0$ for all $i$, the second inequality follows from Lemma \ref{lemma:perturbationoutputdifference} and triangle inequality. Let
\begin{align*}
\tau \le \bigg(\frac{C_2\phi}{2C_3n^{2}L^{4/3}}\bigg)^{3/2}\wedge \frac{C_1}{2L^2C_4} = O\big(\phi^{3/2}n^{-3}L^{-2}\big).
\end{align*}
Note that we have at least $C_2m_L\phi/n$ nodes satisfying $\|\gb_j\|_2\ge C_1\max_i|\ell'(y_i\tilde y_i)|$, thus there are at least $C_2m_L\phi/n - C_3nL^{4/3}\tau^{2/3}m_L  = C_2m_L\phi/(2n)$ nodes satisfying 
\begin{align*}
\|\nabla_{\Wb_{L,j}}L_S(\tilde\Wb)\|_2\ge C_1\max_i|\ell'(y_i\tilde y_i)| - C_4L^2\tau\max_i|\ell'(y_i\tilde y_i)|\ge  \frac{C_1\max_i|\ell'(y_i\tilde y_i)|}{2}.
\end{align*}
Therefore, 
\begin{align*}
\|\nabla_{\Wb_L} L_S(\tilde\Wb)\|_F^2 &= \sum_{j=1}^{m_L}\|\nabla_{\Wb_{L,j}}L_S(\tilde\Wb)\|_2^2\\
&\ge \frac{C_2\phi m_L}{2n}\bigg(\frac{C_1 \max_{i}|\ell'(y_i\hat y_i^{(k)})y_i\vb_j|}{2}\bigg)^2\\ &\ge \frac{C_2C_1^2\phi m_L}{8n^5}\bigg(\sum_{i=1}^n\ell'(y_i\hat y_i^{(k)})\bigg)^2,
\end{align*}
where the last inequality follows from the fact that $\ell'(\cdot)<0$ and $|y_i\vb_j| = 1$. Let $C = C_2C_1^2/8$, we complete the proof.
\end{proof}

% \begin{lemma}\label{pertub_lemma_2}
% \begin{align*}
%  \bigg\|\Wb_L^\top\bigg(\prod_{r=l+1}^{L-1}\bSigma_{r,i}\Wb^\top_r\bigg)\bigg\|_2\le L^{1/2}(\mbox{or any polynomial function of L})  
%  \end{align*}
% \end{lemma}

% \begin{lemma}
% \begin{align*}
%  \bigg\|\prod_{r=1}^{l}(\bSigma_{r,i}+\tilde \bSigma_{r,i})(\Wb_{r}^\top+\tilde\Wb_{r}^\top)\bigg\|_2\le L^{1/2}(\mbox{or any polynomial function of L})  
%  \end{align*}
% \end{lemma}

% Then, we have the following based on the \eqref{eq:bound_last_layer},
% \begin{align*}
% \|\Wb_L^\top\tilde\xb_{L-1,i}\|_2\le \frac{\sqrt{s\log(M)L^2}}{M^{1/2}}+ L\sum_{l=1}^{L-1} u_{\phi,l},
% \end{align*}
% where $u_{\phi,l}$ denotes an upper bound of $\|\tilde \Wb_l\|_F$ and we use the fact that $\|\xb_i\|_2 = 1$.
% Therefore, following \eqref{eq:difference_last_layer} we have
% \begin{align}\label{eq:upperbound_difference_lastlayer}
% \| (\Wb^\top_{L}+\tilde\Wb^\top_L)(\xb_{L-1,i}+\tilde \xb_{L-1,i}) - \Wb^\top_{L}\xb_{L-1,i}\|_2&\le \frac{\sqrt{s\log(M)L^2}}{M^{1/2}}+ L\sum_{i=1}^{L-1}u_{\phi,l} + 2\sqrt{L}u_{\phi,L} \notag\\
%  &\le \frac{\sqrt{s\log(M)L^2}}{M^{1/2}}+ 3L^2u_\phi ,
% \end{align}
% where the first inequality is by the fact that $\|\xb_{L-1,i}+\tilde \xb_{L-1,i}\|_2\le 2\|\xb_{L-1,i}\|\le 2\sqrt{L}$, and the last inequality follows from the definition $u_\phi = \max_{l\in[L]}u_{\phi,l}$ and fact that $\sqrt{L}\le L^2$.

\begin{lemma}\label{lemma:upper_grad}
If $\|\tilde \Wb_l - \Wb_l\|_2\le \tau$, there exists an absolute constant $C$ such that the following bounds hold on the norm of the partial gradient $\nabla_{\Wb_l} [L_S(\tilde \Wb)]$ and stochastic partial gradient $\tilde \Gb_l$:
\begin{align*}
\big\|\nabla_{\Wb_l}[L_S(\tilde \Wb)]\big\|_2 \le -C\frac{L^2M^{1/2}}{n}\sum_{i=1}^n\ell'(y_i\tilde y_i) \mbox{ and } \big\|\tilde \Gb_l\big\|_2 \le -C\frac{L^2M^{1/2}}{B}\sum_{i\in\cB}\ell'(y_i\tilde y_i),
\end{align*}
where $\tilde y_i = f_{\tilde \Wb}(\xb_i)$, $B = |\cB|$ denotes the minibatch size.
\end{lemma}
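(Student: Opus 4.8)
The plan is to exploit the rank-one structure of each per-example gradient and bound its two factors separately using Theorem~\ref{thm:randinit} and Theorem~\ref{thm:perturbation}. Recall from the preliminaries that
\begin{align*}
\nabla_{\Wb_l}\big[f_{\tilde\Wb}(\xb_i)\big] = \tilde\xb_{l-1,i}\,\vb^\top\bigg(\prod_{r=l+1}^L\tilde\bSigma_{r,i}\tilde\Wb_r^\top\bigg)\tilde\bSigma_{l,i},
\end{align*}
which is the outer product of the column vector $\tilde\xb_{l-1,i}$ with the row vector $\vb^\top(\prod_{r=l+1}^L\tilde\bSigma_{r,i}\tilde\Wb_r^\top)\tilde\bSigma_{l,i}$, so its spectral norm equals the product of the two corresponding $\ell_2$-norms. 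First I would bound $\|\tilde\xb_{l-1,i}\|_2\le 2$: by result \ref{ThmResult:randinit_normbounds} of Theorem~\ref{thm:randinit} we have $\|\xb_{l-1,i}\|_2\le 1+o(1)$, and by the per-layer output-perturbation bound (Theorem~\ref{thm:perturbation}\ref{item:difference_xli}, applied with $\Wb$ as one of the two weight collections, or equivalently Lemma~\ref{lemma:perturbationoutputdifference}) we have $\|\tilde\xb_{l-1,i}-\xb_{l-1,i}\|_2 = O(L^2\tau) = o(1)$ for $\tau$ small. Second I would bound the row-vector factor using that $\tilde\bSigma_{l,i}$ is a diagonal $0/1$ matrix (hence norm-nonincreasing), that $\|\vb\|_2=\sqrt{m_L}$, and result \ref{item:perturb_matrixnormbound} of Theorem~\ref{thm:perturbation}, giving
\begin{align*}
\bigg\|\vb^\top\bigg(\prod_{r=l+1}^L\tilde\bSigma_{r,i}\tilde\Wb_r^\top\bigg)\tilde\bSigma_{l,i}\bigg\|_2 \le \|\vb\|_2\cdot\bigg\|\prod_{r=l+1}^L\tilde\bSigma_{r,i}\tilde\Wb_r^\top\bigg\|_2 \le \overline{C}L\, m_L^{1/2} \le \overline{C}L\, M^{1/2}.
\end{align*}
For the boundary layers the product degenerates — when $l=L$ it is the identity, and when $l=L-1$ it is the single factor $\tilde\bSigma_{L,i}\tilde\Wb_L^\top$ with norm at most $\|\tilde\Wb_L\|_2\le\overline{C}$ by result \ref{item:bound_tilde_W} — and in both cases the displayed bound survives after enlarging the absolute constant. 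Multiplying the two factor bounds yields $\|\nabla_{\Wb_l}[f_{\tilde\Wb}(\xb_i)]\|_2\le C L^2 M^{1/2}$ for an absolute constant $C$.

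Given this uniform per-example bound, I would finish by the triangle inequality together with the sign of $\ell'$. For gradient descent,
\begin{align*}
\big\|\nabla_{\Wb_l}[L_S(\tilde\Wb)]\big\|_2 = \bigg\|\frac{1}{n}\sum_{i=1}^n\ell'(y_i\tilde y_i)\,y_i\,\nabla_{\Wb_l}[f_{\tilde\Wb}(\xb_i)]\bigg\|_2 \le \frac{1}{n}\sum_{i=1}^n\big|\ell'(y_i\tilde y_i)\big|\cdot\big\|\nabla_{\Wb_l}[f_{\tilde\Wb}(\xb_i)]\big\|_2,
\end{align*}
where I used $|y_i|=1$; since $\ell'\le 0$ by Assumption~\ref{assump:loss} we have $|\ell'(y_i\tilde y_i)|=-\ell'(y_i\tilde y_i)$, and substituting the per-example bound gives $\|\nabla_{\Wb_l}[L_S(\tilde\Wb)]\|_2\le -C L^2 M^{1/2}n^{-1}\sum_{i=1}^n\ell'(y_i\tilde y_i)$. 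The stochastic partial gradient $\tilde\Gb_l=\frac{1}{B}\sum_{i\in\cB}\ell'(y_i\tilde y_i)\,y_i\,\nabla_{\Wb_l}[f_{\tilde\Wb}(\xb_i)]$ is handled by the identical computation with the average over $[n]$ replaced by the average over the minibatch $\cB$.

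I do not expect a genuine obstacle here; the argument is just a chain of triangle inequalities and submultiplicativity of the operator norm, all the content being outsourced to Theorem~\ref{thm:randinit} and Theorem~\ref{thm:perturbation}. The only points requiring attention are (i) verifying that the perturbation radius $\tau$ is small enough for Theorem~\ref{thm:perturbation} to apply, which holds under the hypothesis $\|\tilde\Wb_l-\Wb_l\|_2\le\tau$ together with the standing smallness assumption on $\tau$ in Theorem~\ref{thm:perturbation}, and (ii) handling the two boundary layers $l\in\{L-1,L\}$, both of which are absorbed into the absolute constant $C$. The stated exponent $L^2$ is in fact not tight — the argument above gives $L$ — but a cruder exponent suffices for the downstream analysis.
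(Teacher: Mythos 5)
Your proposal is correct and follows essentially the same route as the paper: factor the per-example gradient as a rank-one outer product, bound the matrix-product factor via Theorem~\ref{thm:perturbation}\ref{item:perturb_matrixnormbound} together with $\|\vb\|_2 = m_L^{1/2}$, and conclude by the triangle inequality using $\ell'\le 0$. The only divergence is that you bound the column-vector factor $\|\tilde\xb_{l-1,i}\|_2$ directly by $2$ (via Theorem~\ref{thm:randinit}\ref{ThmResult:randinit_normbounds} and the output-perturbation estimate), whereas the paper instead writes it as $\|\prod_{r=1}^{l-1}\tilde\bSigma_{r,i}\tilde\Wb_r^\top\xb_i\|_2$ and applies the same matrix-product bound to get $C L$ --- which is exactly why the paper's stated power is $L^2$ while your sharper accounting correctly yields $L$, a discrepancy you already flag and which is harmless since the lemma only claims $L^2$.
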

\begin{proof}[Proof of Lemma \ref{lemma:upper_grad}]
For the training example $(\xb_i,y_i)$, let $\tilde y_i = f_{\tilde \Wb}(\xb_i)$, the gradient $\nabla_{\Wb_l}\ell(y_i\tilde y_i)$ can be written as follows,
\begin{align*}
\nabla_{\Wb_l}\ell(y_i\tilde y_i) &= \ell'(y_i\tilde y_i)y_i\nabla_{\Wb_l}[f_{\tilde\Wb}(\xb_i)] \\
&=\ell'(y_i\tilde y_i)y_i\tilde\xb_{l-1,i}\vb^\top\Bigg(\prod_{r=l+1}^L\tilde\bSigma_{r,i}\tilde\Wb_l^{\top}\Bigg)\tilde\bSigma_{l,i}.
\end{align*}
Note that by Lemma \ref{lemma:perturbationmatrixnormbound}, there exists an absolute constant $C_1$ such that $\|\prod_{r=l_1}^{l_2}\tilde \bSigma_{r,i}\tilde \Wb_r\|_2\le C_1L$. Hence, we have the following upper bound on $\big\|\nabla_{\Wb_l}\ell(y_i\tilde y_i)\big\|_2$,
\begin{align*}
\big\|\nabla_{\Wb_l}\ell(y_i\tilde y_i)\big\|_2 &\le -\ell'(y_i\tilde y_i) \Bigg\|\prod_{r=1}^{l-1}\tilde \bSigma_{r,i}\tilde\Wb_{r}^\top\xb_i\Bigg\|_2 \Bigg\|\prod_{r=l+1}^L\tilde\bSigma_{r,i}\tilde\Wb_l^{\top}\Bigg\|_2\|\vb\|_2\\
&\le -\ell'(y_i\tilde y_i)C_1^2L^2M^{1/2},
\end{align*}
where the last inequality follows from the fact that $\|\vb\|_2 = m_L^{1/2}\le M^{1/2}$.
Moreover, we have the following for $\nabla_{\Wb_l}[L_S(\tilde \Wb)]$:
\begin{align*}
\big\|\nabla_{\Wb_l}[L_S(\tilde \Wb)]\big\|_2 = \Bigg\|\frac{1}{n}\sum_{i=1}^n \nabla_{\Wb_l}\ell(y_i\tilde y_i)\Bigg\|_2\le \frac{1}{n}\sum_{i=1}^{n}\big\|\nabla_{\Wb_l}\ell(y_i\tilde y_i)\big\|_2\le -\frac{C_1^2L^2M^{1/2}}{n}\sum_{i=1}^n\ell'(y_i\tilde y_i).
\end{align*}
Similarly, regarding the stochastic gradient $\tilde\Gb_l$, we have
\begin{align*}
\big\|\tilde\Gb_{l}\big\|_2 = \Bigg\|\frac{1}{B}\sum_{i\in\cB}\nabla_{\Wb_l}\ell(y_i\tilde y_i)\Bigg\|_2\le \frac{1}{B}\sum_{i\in\cB}\big\|\nabla_{\Wb_l}\ell(y_i\tilde y_i)\big\|_2\le -\frac{C_1^2L^2M^{1/2}}{B}\sum_{i\in\cB}\ell'(y_i\tilde y_i).
\end{align*}
This completes the proof.
\end{proof}

\section{Proof of Technical Lemmas in Section~\ref{sec:proof of main theory}}

% \subsection{Proof of Lemma \ref{lemma:perturbation_GD}}
% \begin{proof}[Proof of Lemma \ref{lemma:perturbation_GD}]
% We aim to show that within $T/\eta$ steps, the iterates of gradient descent must be in the perturbation region with radius $\tau$. %We prove this lemma by contradiction.
% For any $l = 1,\ldots,L$, 
% assume that $\Wb_l^{(k)}$ is the first iterate which is not in the perturbation region, then for all $t< k$ we have $\| \Wb_l^{(t)} - \Wb_l\|_2\leq \tau $. Therefore we can take advantage of \ref{item:grad_upperbound} in Theorem \ref{thm:perturbation} to derive the upper bound of the gradient,
% \begin{align}\label{eq:zou12345}
% \|\nabla_{\Wb_l}[L_S(\tilde \Wb)]\|_F\le -\frac{4^Lm_L}{n}\sum_{i=1}^n\ell'(y_iy_i^{(k)})\le 4^{L}M^{1/2}\rho,    
% \end{align}
% where the last inequality follows from Assumption \ref{assump:Lipschitz}. Then we have
% \begin{align*}
% \|\Wb_l^{(k)} - \Wb_l^{(0)}\|_2\le \eta\sum_{t=0}^{k-1}\|\nabla_{\Wb_l}[L_S(\Wb^{(t)})]\|_2\le 4^{L}M^{1/2}\rho k\eta.
% \end{align*}
% Note that $k\eta\le T$, thus it follows that $\|\Wb_l^{(k)} - \Wb_l^{(0)}\|_2\le 4^LM^{1/2}\rho T\le \tau$. Similarly we can prove this result for all $l$. Thus the iterate $\Wb^{(k)}$ is in the preset perturbation region, which conflicts with the assumption. This completes the proof.

% \end{proof}

\subsection{Proof of Lemma \ref{lemma:gd_converge}}
\begin{lemma}\label{lemma:support1}
Consider two positive constants $a$ and $b$. For any $p\in[0,1/2)\cup(1/2,1]$, the following inequality holds:
\begin{align*}
\frac{a-b}{b^{2p}}\ge \frac{a^{1-2p}-b^{1-2p}}{1-2p}.
\end{align*}
\end{lemma}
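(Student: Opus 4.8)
The plan is to reduce the claimed inequality to the monotonicity of the map $x\mapsto x^{-2p}$ on $(0,\infty)$. Since $p\neq 1/2$, we may use the antiderivative $\frac{x^{1-2p}}{1-2p}$ of $x^{-2p}$ to write the right-hand side as
\begin{align*}
\frac{a^{1-2p}-b^{1-2p}}{1-2p} = \int_b^a x^{-2p}\,\mathrm{d}x ,
\end{align*}
where the integral is over an interval of positive reals (so there is no singularity at the origin) and this identity holds for any $p\neq 1/2$ regardless of the sign of $1-2p$, including trivially for $p=0$ with both sides equal to $a-b$. Likewise, since $b^{-2p}$ is a constant in $x$, the left-hand side equals $\frac{a-b}{b^{2p}} = \int_b^a b^{-2p}\,\mathrm{d}x$. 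Hence it suffices to prove
\begin{align*}
\int_b^a x^{-2p}\,\mathrm{d}x \le \int_b^a b^{-2p}\,\mathrm{d}x .
\end{align*}

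First I would dispose of $p=0$, where both sides equal $a-b$. For $p>0$ the function $x\mapsto x^{-2p}$ is strictly decreasing on $(0,\infty)$. If $a\ge b$, then $x^{-2p}\le b^{-2p}$ for every $x\in[b,a]$, and integrating over $[b,a]$ gives the bound directly. If $a<b$, then $x^{-2p}\ge b^{-2p}$ for every $x\in[a,b]$; integrating over $[a,b]$ and flipping the orientation of the integral (which attaches a minus sign to both sides) again yields $\int_b^a x^{-2p}\,\mathrm{d}x \le \int_b^a b^{-2p}\,\mathrm{d}x$. This completes the argument.

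An equivalent, purely differential, way to organize the same proof is to fix $b$ and set $g(a):=\frac{a-b}{b^{2p}}-\frac{a^{1-2p}-b^{1-2p}}{1-2p}$, so that $g(b)=0$ and $g'(a)=b^{-2p}-a^{-2p}$. Since $a^{-2p}\le b^{-2p}$ for $a\ge b$ and $a^{-2p}\ge b^{-2p}$ for $a\le b$, the function $g$ is non-increasing on $(0,b]$ and non-decreasing on $[b,\infty)$, so it attains its global minimum value $g(b)=0$ at $a=b$; hence $g(a)\ge 0$ for all $a>0$. The only points needing care are bookkeeping: that $1-2p\neq 0$ makes the division legitimate, and that the direction of the inequality is tracked correctly in the case $a<b$; neither constitutes a genuine obstacle, so the main work of the lemma is essentially the one-line monotonicity observation above.
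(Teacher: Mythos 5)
Your proof is correct and is essentially the paper's argument unrolled: the paper notes that $f(x)=x^{1-2p}/(1-2p)$ is concave and applies the tangent-line inequality $f(a)-f(b)\le f'(b)(a-b)$, while you prove that tangent-line inequality directly from the monotonicity of $f'(x)=x^{-2p}$, which is exactly the content of the concavity claim. Both hinge on the single observation that $x\mapsto x^{-2p}$ is nonincreasing on $(0,\infty)$ for $p\ge 0$, so there is no substantive difference.
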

% \begin{proof}[Proof of Lemma \ref{lemma:support1}]
% Note that we aim to prove
% \begin{align}\label{eq:temp0001}
% \frac{a-b}{b^{2p}}\ge \frac{a^{1-2p}-b^{1-2p}}{1-2p},
% \end{align}
% which is equivalent to the following when $p>1/2$
% \begin{align*}
% (2p-1)(b-a)\le  a^{1-2p}b^{2p} - b.
% \end{align*}
% Note that $f(x) = x^{2p-1}$ is a concave function, thus it follows that $b^{2p-1}-a^{2p-1}\ge (2p-1)b^{2p-2}(b-a)$, which yields
% \begin{align*}
% (2p-1)(b-a)\le b-a^{2p-1}b^{2-2p} \le \frac{b^{2p-1}}{a^{2p-1}}\big(b-a^{2p-1}b^{2-2p}\big) = a^{1-2p}b^{2p} - b,
% \end{align*}
% where the second inequality is due to $b\ge a$ and $2p-1>0$. 
% When $p < 1/2$, \eqref{eq:temp0001} is equivalent to 
% \begin{align*}
%  (2p-1)(b-a)\ge  a^{1-2p}b^{2p} - b.  
% \end{align*}
% Based on the convexity of function $f(x) = x^{2p-1}$, we have
% \begin{align*}
% (2p-1)(b-a)\le b^{2p-1}a^{2-2p}-a \ge b-a^{2p-1}b^{2-2p}\ge \frac{b^{2p-1}}{a^{2p-1}}\big(b^{2p-1}a^{2-2p}-a\big) = a^{1-2p}b^{2p} - b ,   
% \end{align*}
% where the second inequality is due to $b\ge a$ and $2p-1<0$. 
% \end{proof}

% \begin{lemma}\label{lemma:boun d_difference_xl}
% For any $l\in\{1,\ldots,L\}$ and $i\in\{1,\ldots,n\}$, if $\tau\le 3.5\times 8^{-L}$, then 
% \begin{align*}
% \|\xb_{l,i}^{(k+1)} - \xb_{l,i}^{(k)} \|_2\le-\frac{2^{5L}M^{1/2}\eta}{7n}\sum_{i=1}^n \ell'(y_i\hat y_i^{(k)}).
% \end{align*}
% \end{lemma}

\begin{lemma}\label{lemma:bound_delta_i}
Let $\Wb_1^{(0)},\dots,\Wb_L^{(0)}$ be generated via Gaussian random initialization. Let $\Wb^{(k)} = \{\Wb_l^{(k)}\}_{l=1,\dots, L}$ be the $k$-th iterate in the gradient descent. Assume all iterates are in the perturbation region centering at $\Wb^{0}$ with radius $\tau$, i.e.,  $\|\Wb_{l}^{(k)} - \Wb_l^{(0)}\|_2\le \tau$ holds for any $k\le K$ and $l \in[L]$, where $K$ is the maximum iteration number. Assume all results in Theorem \ref{thm:perturbation} hold, there exist absolute constants $\overline{C}$, $\underline{C}'$ and $\underline{C}''$ such that the following upper and lower bounds on $\Delta_i^{(k)} = y_i(\hat y_i^{(k+1)}-\hat y_i^{(k)})$ hold:
\begin{align}
&\Delta_{i}^{(k)}\le -\frac{\overline{C}L^4M\eta}{n}\sum_{i=1}^n\ell'(y_i\hat y_i^{(k)});\label{eq:upper_delta} \\
&\Delta_{i}^{(k)}\ge \frac{\underline{C}'L^{17/3}\tau^{1/3}M\eta\cdot\sqrt{\log(M)}+\underline{C}''L^9M^2\eta^2}{n}\sum_{i=1}^n\ell'(y_i\hat y_i^{(k)}) + \sum_{l=1}^L y_i u_{l,i},\label{eq:lower_delta}
\end{align}
where
\begin{align*}
u_{l,i} = -\eta \vb^\top\bigg(\prod_{r=l+1}^L\bSigma_{r,i}^{(k)}\Wb_{r}^{(k)\top}\bigg) \bSigma_{l,i}^{(k)}\big(\nabla_{\Wb_l}[L_S(\Wb^{(k)})]\big)^\top\xb_{l-1,i}^{(k)}.
\end{align*}
\end{lemma}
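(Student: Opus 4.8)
The plan is to write $\hat y_i^{(k+1)}-\hat y_i^{(k)} = f_{\Wb^{(k+1)}}(\xb_i) - f_{\Wb^{(k)}}(\xb_i)$ and expand this difference layer by layer around the $k$-th iterate. First I would unroll the ReLU forward recursion: for each layer $l$ there is a diagonal matrix $\bar\bSigma_{l,i}$ with entries in $[0,1]$ such that $\sigma\big((\Wb_l^{(k+1)})^\top\xb_{l-1,i}^{(k+1)}\big) - \sigma\big((\Wb_l^{(k)})^\top\xb_{l-1,i}^{(k)}\big) = \bar\bSigma_{l,i}\big[(\Wb_l^{(k+1)})^\top\xb_{l-1,i}^{(k+1)} - (\Wb_l^{(k)})^\top\xb_{l-1,i}^{(k)}\big]$, and $\bar\bSigma_{l,i}$ coincides with $\bSigma_{l,i}^{(k)}$ except on the coordinates where the pre-activation sign flips between iterates $k$ and $k+1$. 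Expanding the recursion for $\xb_{L,i}^{(k+1)}-\xb_{L,i}^{(k)}$ and substituting $\Wb_l^{(k+1)}-\Wb_l^{(k)} = -\eta\nabla_{\Wb_l}[L_S(\Wb^{(k)})]$ yields
\begin{align*}
\hat y_i^{(k+1)}-\hat y_i^{(k)} = -\eta\sum_{l=1}^L \vb^\top\Bigg(\prod_{r=l+1}^L\bar\bSigma_{r,i}\Wb_r^{(k)\top}\Bigg)\bar\bSigma_{l,i}\big(\nabla_{\Wb_l}[L_S(\Wb^{(k)})]\big)^\top\xb_{l-1,i}^{(k+1)} =: \sum_{l=1}^L T_l .
\end{align*}
Note that replacing $\bar\bSigma_{r,i}$ by $\bSigma_{r,i}^{(k)}$ and $\xb_{l-1,i}^{(k+1)}$ by $\xb_{l-1,i}^{(k)}$ in $T_l$ produces exactly $u_{l,i}$.

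\textbf{Upper bound \eqref{eq:upper_delta}.} It suffices to bound each $|T_l|$ crudely. Since both iterates lie in the perturbation region, Theorem~\ref{thm:perturbation}\ref{item:difference_sigmali} (applied to the pair $\Wb^{(k)},\Wb^{(k+1)}$) shows $\bar\bSigma_{r,i}$ differs from $\bSigma_{r,i}^{(k)}$ on at most $O(L^{4/3}\tau^{2/3}m_r)$ coordinates, so Lemma~\ref{lemma:perturbationmatrixnormbound} applies and gives $\|\prod_{r=l+1}^L\bar\bSigma_{r,i}\Wb_r^{(k)\top}\|_2 \le C L$. Combining this with $\|\vb\|_2\le M^{1/2}$, $\|\xb_{l-1,i}^{(k+1)}\|_2\le 2$, and the gradient upper bound $\|\nabla_{\Wb_l}[L_S(\Wb^{(k)})]\|_2 \le -C L^2 M^{1/2}n^{-1}\sum_{i=1}^n\ell'(y_i\hat y_i^{(k)})$ of Theorem~\ref{thm:perturbation}\ref{item:grad_upperbound}, each $|T_l|$ is at most $-C L^3 M\eta\, n^{-1}\sum_i\ell'(y_i\hat y_i^{(k)})$; summing over $l$ and using $|y_i|=1$ gives \eqref{eq:upper_delta}.

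\textbf{Lower bound \eqref{eq:lower_delta}.} I would write $\Delta_i^{(k)} = \sum_l y_i u_{l,i} + y_i\sum_l (T_l - u_{l,i})$ and bound $|T_l-u_{l,i}|$ from above, so that (since $\ell'\le 0$ makes the error nonpositive) $\Delta_i^{(k)} \ge \sum_l y_iu_{l,i} - \sum_l|T_l-u_{l,i}|$. The difference $T_l-u_{l,i}$ splits into: (a) the error of replacing $\bar\bSigma_{r,i}$ by $\bSigma_{r,i}^{(k)}$ for $r\ge l$ — expanding the product difference, each resulting term carries a "sign-flip" factor $\bar\bSigma_{r,i}-\bSigma_{r,i}^{(k)}$ supported on $O(L^{4/3}\tau^{2/3}m_r)$ coordinates (Theorem~\ref{thm:perturbation}\ref{item:difference_sigmali}), and the associated sparse matrix–vector products are controlled by Theorem~\ref{thm:perturbation}\ref{item:perturbe_lip_sparse}, producing, together with the gradient bound, a contribution of order $L^{17/3}\tau^{1/3}M\sqrt{\log M}\,\eta\cdot n^{-1}|\sum_i\ell'(y_i\hat y_i^{(k)})|$; and (b) the error of replacing $\xb_{l-1,i}^{(k+1)}$ by $\xb_{l-1,i}^{(k)}$ — bounded using $\|\xb_{l-1,i}^{(k+1)}-\xb_{l-1,i}^{(k)}\|_2 \le C L\sum_r\eta\|\nabla_{\Wb_r}[L_S(\Wb^{(k)})]\|_2$ from Theorem~\ref{thm:perturbation}\ref{item:difference_xli} together with the gradient upper bound, giving a contribution of order $L^9M^2\eta^2\cdot n^{-1}|\sum_i\ell'(y_i\hat y_i^{(k)})|$ once one copy of $n^{-1}\sum_i|\ell'(y_i\hat y_i^{(k)})|$ is absorbed into the constant (this quantity stays bounded because the iterates, hence the outputs $\hat y_i^{(k)}$, remain in a bounded region). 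Collecting these two bounds over $l$ yields \eqref{eq:lower_delta}.

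\textbf{Main obstacle.} The delicate step is the bookkeeping in part (a): the difference of the two matrix products $\prod_{r=l}^{L}\bar\bSigma_{r,i}\Wb_r^{(k)\top}$ and $\prod_{r=l}^{L}\bSigma_{r,i}^{(k)}\Wb_r^{(k)\top}$ must be expanded into a sum of terms each containing one or more sparse sign-flip factors, and one has to verify that the single-flip terms dominate — which requires $\tau$ and $\eta$ small enough that the resulting expansion is geometric — while routing each term through the correct sparse-norm estimate of Theorem~\ref{thm:perturbation} so that the accumulated dependence on the depth $L$ stays polynomial rather than exponential. The remaining steps are essentially triangle inequalities combined with the norm bounds already established in Theorems~\ref{thm:randinit} and~\ref{thm:perturbation}.
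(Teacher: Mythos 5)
Your decomposition is correct but genuinely different from the paper's. The paper telescopes the product $\prod_l\bSigma_{l,i}^{(k+1)}\Wb_l^{(k+1)\top}-\prod_l\bSigma_{l,i}^{(k)}\Wb_l^{(k)\top}$ on the full factor $\bSigma_{l,i}^{(k+1)}\Wb_l^{(k+1)\top}-\bSigma_{l,i}^{(k)}\Wb_l^{(k)\top}$, splitting it into $I_{l,i}^1$ (which carries the single sign-change matrix $\bSigma_{l,i}^{(k+1)}-\bSigma_{l,i}^{(k)}$) and $I_{l,i}^2=I_{l,i}^3+I_{l,i}^4$ (which carries $\Wb_l^{(k+1)\top}-\Wb_l^{(k)\top}=-\eta\nabla_{\Wb_l}^\top$). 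That arrangement guarantees at most one sign-change factor per summand, so \ref{item:perturbe_lip_sparse} applies directly to $I_{l,i}^1$. You instead use the mean-value form $\sigma(a)-\sigma(b)=\bar\bSigma(a-b)$ to get $-\eta\sum_l\vb^\top(\prod_{r>l}\bar\bSigma_{r,i}\Wb_r^{(k)\top})\bar\bSigma_{l,i}\nabla_{\Wb_l}^\top\xb_{l-1,i}^{(k+1)}$ at the outset, and then peel the $\bar\bSigma$'s back to $\bSigma^{(k)}$'s via a second inner telescoping. Both routes rest on exactly the same three ingredients (\ref{item:difference_sigmali} for sparsity, \ref{item:perturbe_lip_sparse} for the sparse product, \ref{item:grad_upperbound} for the gradient), and both reach the stated orders; yours exposes the $-\eta\nabla$ factor immediately at the cost of an extra inner sum over $r'$ (one more factor of $L$, which is immaterial here and is in fact already dropped by the paper when it sums $I_{l,i}^1$ over $l$). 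One thing worth making explicit in your route: to invoke Lemma~\ref{lemma:perturbationmatrixnormbound}/\ref{item:perturbe_lip_sparse} with $\bar\bSigma$ you must check $\|\bar\bSigma_{r,i}-\bSigma_{r,i}^{(0)}\|_0\le s$, which follows since $\bar\bSigma_{r,i}$ differs from $\bSigma_{r,i}^{(k)}$ only on the $\Wb^{(k)}\!\leftrightarrow\!\Wb^{(k+1)}$ flip set and then \ref{item:difference_sigmali} bounds both $\|\bar\bSigma_{r,i}-\bSigma_{r,i}^{(k)}\|_0$ and $\|\bSigma_{r,i}^{(k)}-\bSigma_{r,i}^{(0)}\|_0$.

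There is one slip in part (b). You claim the factor $n^{-1}\sum_i|\ell'(y_i\hat y_i^{(k)})|$ "stays bounded" and can be "absorbed into the constant." As written this is wrong, and it is also inconsistent with the bound you state: the raw estimate on (b) is of order $L^7M^{3/2}\eta^2\big(n^{-1}\sum_i|\ell'|\big)^2$, and absorbing a genuine constant would give $L^7M^{3/2}\eta^2\cdot n^{-1}\sum_i|\ell'|$, not $L^9M^2\eta^2\cdot n^{-1}\sum_i|\ell'|$. What the paper actually proves (from $\lambda$-smoothness, $|\hat y_i^{(0)}|\lesssim\sqrt{\log(n/\delta)}$ by \ref{ThmResult:randinit_outputbound}, and $\|\xb_{L,i}^{(k)}-\xb_{L,i}^{(0)}\|_2\lesssim L^2\tau$ by \ref{item:difference_xli}) is $|\ell'(y_i\hat y_i^{(k)})|\le C\big(\lambda L^2M^{1/2}\tau+\lambda\sqrt{\log(n/\delta)}+|\ell'(0)|\big)\le CL^2M^{1/2}$ under the side conditions $M^{1/2}\ge\sqrt{\log(n/\delta)}$, $\tau\le 1$; that $L^2M^{1/2}$ growth is exactly what promotes $L^7M^{3/2}$ to the claimed $L^9M^2$. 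You arrive at the right exponents, so you presumably had this in mind, but the justification you give ("absorbed into the constant") does not support the step and should be replaced by the explicit $|\ell'|\le CL^2M^{1/2}$ bound.
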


% \begin{lemma}\label{lemma:lip_GD_sparse}
% With high probability
% \begin{align*}
% \bigg\|\vb^\top\bigg(\prod_{r=l+1}^L\bSigma_{r,i}^{(k)}\Wb_{r}^{(k)\top}\bigg)\tilde \bSigma_{l,i}^{(k+1)}\bigg\|_2\le \sqrt{s\log(M)}.
% \end{align*}
% \end{lemma}

% \CC{Now we focus on studying the convergence performance of gradient descent on training ReLU networks. Note that when the step size is infinitesimal, the gradient descent degenerates to gradient flow, which satisfies the following ordinary differential equation
% \begin{align*}
%  \frac{d L_S(\Wb(t))}{dt}= -\|\nabla_\Wb L_S(\Wb(t))\|^2  = -\sum_{l=1}^L \|\nabla_{\Wb_l} L_S(\Wb(t))\|_F^2.
% \end{align*}
% Since our focus is on the last layer, thus we further relax the above equality as
% \begin{align*}
% \frac{d L_S(\Wb(t))}{dt}\le - \|\nabla_{\Wb_L}L_S(\Wb(t))\|_F^2,
% \end{align*}
% which is sufficient to make the loss function decrease through the lens of the gradient flow. Now we return to the gradient descent, it is clear that the when the loss function is smooth, we can still guarantee that  the loss function keeps decreasing in each iteration of the gradient descent. However, the objective for training ReLU network is highly nonsmooth and nonconvex, which cannot be easily analyzed using the conventional optimization tools. Fortunately, if it can be guaranteed that during the training process, the objective function value decrease dominates the error introduced by the nonsmooth part, the convergence rate of gradient descent can still be established. Thus we have the following lemma.}
\begin{proof}[Proof of Lemma \ref{lemma:gd_converge}]
Note that $\ell(x)$ is $\lambda$-smooth, thus the following holds for any $\Delta$ and $x$,
\begin{align*}
\ell(x+\Delta) \leq \ell(x)+\ell'(x)\Delta + \frac{\lambda}{2}\Delta^2.
\end{align*}
Then we have the following upper bound on $L_S(\Wb^{(k+1)}) - L_S(\Wb^{(k)})$,
\begin{align}\label{eq:loss_decrease_onestep}
L_S(\Wb^{(k+1)}) - L_S(\Wb^{(k)}) &= \frac{1}{n}\sum_{i=1}^n \Big[ \ell\big(y_i\hat y_i^{(k+1)}\big) - \ell\big(y_i\hat y_i^{(k)}\big)\Big]  \notag\\
&\le \frac{1}{n}\sum_{i=1}^n \Big[ \ell'(y_i\hat y_i^{(k)})\Delta_i^{(k)} + \frac{\lambda}{2}(\Delta_i^{(k)})^2\Big],
\end{align}
where $\Delta_i^{(k)} = y_i\big(\hat y_i^{(k+1)} -\hat y_i^{(k)} \big)$.
By Lemma \ref{lemma:bound_delta_i}, we know that there exist constants $C_1$ and $C_2$ such that
\begin{align}\label{eq:one_step_descrease_GD1}
L_S(\Wb^{(k+1)})-L_S(\Wb^{(k)}) &\le \frac{
C_1L^{17/3}\tau^{1/3}M\sqrt{\log(M)}\eta+C_2L^9M^2\eta^2}{n^2}\bigg(\sum_{i=1}^n \ell'(y_i\hat y_i^{(k)})\bigg)^2 \notag\\
&\qquad+\frac{1}{n}\sum_{l=1}^L\sum_{i=1}^n\ell'(y_i\hat y_i^{(k)})y_iu_{l,i}.
\end{align}
Moreover, using the definition of $u_{l,i}$ in Lemma \ref{lemma:bound_delta_i}, we have
\begin{align*}
\frac{1}{n}\sum_{i=1}^n\ell'(y_i\hat y_i^{(k)})y_i \ub_{l,i} &= - \frac{\eta}{n} \sum_{i=1}^n\ell'(y_i\hat y_i^{(k)}) y_i\vb^\top\bigg(\prod_{r=l+1}^L\bSigma_{r,i}^{(k)} \Wb_{r}^{(k)\top}\bigg) \bSigma_{l,i}^{(k)}\big(\nabla_{\Wb_l}[L_S(\Wb^{(k)})]\big)^\top\xb_{l-1,i}^{(k)}\\
&= -\frac{\eta}{n}\sum_{i=1}^n \ell'(y_i\hat y_i^{(k)}) y_i\vb^\top \bigg(\prod_{r=l+1}^L\bSigma_{r,i}^{(k)}\Wb_{r}^{(k)\top}\bigg)\bSigma_{l,i}^{(k)}\\
&\qquad\cdot\bigg(\frac{1}{n}\sum_{j=1}^n \ell'(y_j\hat y_j^{(k)})y_j\xb_{l-1,j}^{(k)} \vb^\top \bigg(\prod_{r=l+1}^L\bSigma_{r,j}^{(k)}\Wb_{r}^{(k)\top}\bigg)\bSigma_{l,i}^{(k)}\bigg)^\top \xb_{l-1,i}^{(k)},\\
&=-\frac{\eta}{n^2} \bigg\|\sum_{i=1}^n\ell'(y_i\hat y_i^{(k)})y_i\xb_{l-1,i}^{(k)}\vb^\top\bigg(\prod_{r=l+1}^L\bSigma_{r,i}^{(k)} \Wb_{r}^{(k)\top}\bigg) \bSigma_{l,i}^{(k)}\bigg\|_F^2\\
&= -\eta \|\nabla_{\Wb_l}[L_S(\Wb^{(k)})]\|_F^2.
\end{align*}
% We are going to take advantage of  \ref{lemma:bound_delta_i} to bound  the R.H.S. of \eqref{eq:loss_decrease_onestep}. Specifically, we use the lower bound \eqref{eq:lower_delta} to bound the first term on the R.H.S. of \eqref{eq:loss_decrease_onestep} since $\ell'(y_i\hat y_i^{(k)})$ is negative, and then use the upper bound \eqref{eq:upper_delta} to bound term $(\Delta_i^{(k)})^2$.
% Regarding the term $\ub_{l,i}$ in \eqref{eq:lower_delta}, we have
% where the last equality follows from the formula of the partial gradient  over $\Wb_l$.
% Let $\ab_{l,i} = \ell'(y_i\hat y_i^{(k)})y_i\big(\vb^\top\big(\prod_{r=l+1}^L\bSigma_{r,i}^{(k)}\Wb_{r}^{(k)\top}\big)\bSigma_{l,i}^{(k)}\big)^\top$, it follows that
% \begin{align*}
% \frac{1}{n}\sum_{i=1}^n\ell'(y_i\hat y_i^{(k)})y_i\vb^\top \ub_{l,i} &= -\frac{\eta}{n^2}\sum_{i=1}^n\ab_{l,i}^\top\bigg(\sum_{j=1}^n\xb_{l-1,j}^{(k)}\ab_{l,j}^\top\bigg)^\top\xb_{l-1,i}^{(k)}\\
% &=-\frac{\eta}{n^2}\sum_{i=1}^n\sum_{j=1}^n \ab_{l,i}^\top\ab_{l,j}\xb_{l-1,j}^{(k)\top}\xb_{l-1,i}^{(k)}\\
% &=-\frac{\eta}{n^2}\text{Tr}\bigg[\bigg(\sum_{i=1}^n \ab_{l,i}\xb_{l-1,i}^{(k)\top}\bigg)\bigg(\sum_{i=1}^n \ab_{l,i}\xb_{l-1,i}^{(k)\top}\bigg)^\top\bigg]\\
% &=-\frac{\eta}{n^2}\bigg\|\sum_{i=1}^n \ab_{l,i}\xb_{l-1,i}^{(k)\top}\bigg\|_F^2\\
% &=-\eta \|\nabla_{\Wb_l}L_S(\Wb^{(k)})\|_F^2.
% \end{align*}
Then, plugging the above result into \eqref{eq:one_step_descrease_GD1} gives
\begin{align}\label{eq:decrease_onestep2}
&\frac{1}{n}\sum_{i=1}^n\ell'(y_i\hat y_i^{(k)})\Delta_i^{(k)}\notag\\
&\le -\eta\sum_{l=1}^L\|\nabla_{\Wb_l}L_S(\Wb^{(k)})\|_F^2 + \frac{
C_1L^{17/3}\tau^{1/3}M\sqrt{\log(M)}\eta+C_2L^9M^2\eta^2}{n^2}\bigg(\sum_{i=1}^n \ell'(y_i\hat y_i^{(k)})\bigg)^2.
\end{align}
Note that by Lemma \ref{lemma:grad_norm_nonlinear}, we know that there exists a constant $c_0$ such that
\begin{align*}
\|\nabla_{\Wb_L}L_S(\Wb^{(k)})\|_F^2\ge \frac{c_0 m\phi}{n^5}\bigg(\sum_{i=1}^n\ell'(y_i\hat y_i^{(k)})\bigg)^2.
\end{align*}
We only take advantage of the gradient of the weight matrix in the last hidden layer to make loss function decrease. Thus, substituting the above inequality into \eqref{eq:decrease_onestep2}, we obtain
\begin{align*}
&L_S(\Wb^{(k+1)})-L_S(\Wb^{(k)})\\
&\le \bigg(-\eta\frac{c_0m\phi/n^3-C_1L^{17/3}\tau^{1/3}M\sqrt{\log(M)}}{n^2}+\eta^2\frac{C_2L^9M^2}{n^2}\bigg)\bigg(\sum_{i=1}^n\ell'(y_i\hat y_i^{(k)})\bigg)^2.
\end{align*}
Then we set
\begin{align*}
\tau \le \bigg(\frac{c_0m\phi}{4C_1L^{17/3}n^{3}M\sqrt{\log(M)}}\bigg)^3 = \tilde O(n^{-9}L^{-17}\phi^9),
\end{align*}
and 
\begin{align*}
\eta\le \frac{c_0m\phi }{4C_2n^3L^9M^2} = O(n^{-3}L^{-9}M^{-1}\phi),
\end{align*}
which leads to,
\begin{align}\label{eq:decrease_onestep3}
L_S(\Wb^{(k+1)})-L_S(\Wb^{(k)})\le -\eta\frac{c_0m\phi}{2n^5} \bigg(\sum_{i=1}^n\ell'(y_i\hat y_i^{(k)})\bigg)^2.   
\end{align}
According to Assumption \ref{assump:derivative_loss}, we know that
\begin{align*}
-\sum_{i=1}^n\ell'(y_i\hat y_i^{(k)})\ge \min\bigg\{\alpha_0, \sum_{i=1}^n \alpha_1\ell^p(y_i\hat y_i^{(k)})\bigg\}\ge \min\big\{\alpha_0, n^pL_S^p(\Wb^{(k)})\big\}.
\end{align*}
Note that $\min\{a,b\}\ge 1/(1/a+1/b)$, we have the following by plugging the above inequality into \eqref{eq:decrease_onestep3} 
\begin{align*}
L_S(\Wb^{(k+1)}) - L_S(\Wb^{(k)})&\le -\eta\min\bigg\{\frac{c_0m\phi\alpha_0^2}{2n^5},\frac{c_0m\phi\alpha_1^2}{2n^{5-2p}}L_S^{2p}(\Wb^{(k)})\bigg\} \\
&\le -\eta\bigg(\frac{2n^5}{c_0m\phi\alpha_0^2}+\frac{2n^{5-2p}}{c_0m\phi\alpha_1^2L_S^{2p}(\Wb^{(k)})}\bigg)^{-1}.
\end{align*}
Rearranging terms gives
\begin{align}\label{eq:GD_temp11111}
\frac{2n^5}{c_0m\phi\alpha_0^2}\big(L_S(\Wb^{(k+1)}) - L_S(\Wb^{(k)})\big)+\frac{2n^{5-2p}\big(L_S(\Wb^{(k+1)}) -L_S(\Wb^{(k)})\big)}{c_0m\phi\alpha_1^2L_S^{2p}(\Wb^{(k)})}\le -\eta.
\end{align}
When $p\neq 1/2$, we have the following by Lemma \ref{lemma:support1}
\begin{align*}
\frac{2n^5}{c_0m\phi\alpha_0^2}\big(L_S(\Wb^{(k+1)}) - L_S(\Wb^{(k)})\big)+\frac{2n^{5-2p}\big(L_S^{1-2p}(\Wb^{(k+1)}) -L_S^{1-2p}(\Wb^{(k)})\big)}{c_0m\phi\alpha_1^2(1-2p)}\le -\eta.
\end{align*}
Then taking telescope sum over $k$ and rearranging terms  give
\begin{align}\label{eq:pneq1/2}
k\eta&\le \frac{2n^5}{c_0m\phi\alpha_0^2}\big(L_S(\Wb^{(0)})-L_S(\Wb^{(k)})\big)+\frac{2n^{5-2p}\big(L_S^{1-2p}(\Wb^{(0)}) -L_S^{1-2p}(\Wb^{(k)})\big)}{(1-2p)c_0m\phi\alpha_1^2}\notag\\
&\le\frac{2n^5}{c_0m\phi\alpha_0^2}L_S(\Wb^{(0)})+\frac{2n^{5-2p}\big(L_S^{1-2p}(\Wb^{(0)}) -L_S^{1-2p}(\Wb^{(k)})\big)}{(1-2p)c_0m\phi\alpha_1^2}.
\end{align}
When $p=1/2$, we have 
\begin{align*}
\frac{L_S(\Wb^{(k+1)})-L_S(\Wb^{(k)})}{L_S(\Wb^{(k)})}\ge \log\big(L_S(\Wb^{(k+1)})\big) - \log\big(L_S(\Wb^{(k)})\big),
\end{align*}
which implies the following from
\eqref{eq:GD_temp11111},
\begin{align*}
\frac{2n^5}{c_0m\phi\alpha_0^2}\big(L_S(\Wb^{(k+1)}) - L_S(\Wb^{(k)})\big)+\frac{2n^{5-2p}\big(\log\big(L_S(\Wb^{(k+1)}\big)) -\log\big(L_S(\Wb^{(k)})\big)\big)}{c_0m\phi\alpha_1^2}\le -\eta.
\end{align*}
Similarly, taking telescope sum and rearranging terms give
\begin{align}\label{eq:peq1/2}
k\eta&\le\frac{2n^5}{c_0m\phi\alpha_0^2}L_S(\Wb^{(0)})+\frac{2n^{5-2p}\big(\log\big(L_S(\Wb^{(0)})\big) -\log\big(L_S(\Wb^{(k)})\big)\big)}{c_0m\phi\alpha_1^2}.
\end{align}
% The following proofs are established based on two regimes according to Assumption \ref{assump:derivative_loss}: $\alpha_0 = 0$ and $\alpha_0 > 0$.
% For $\alpha_0 = 0$, we have
% \begin{align*}
% \sum_{i=1}^n\ell'(y_i\hat y_i^{(k)})\le\sum_{i=1}^n\alpha_1\ell^p(y_i\hat y_i^{(k)})\le \alpha_1 n^p L_S^p(\Wb^{(k)}) .  
% \end{align*}
% Therefore, \eqref{eq:decrease_onestep3} further implies
% \begin{align}\label{eq:decrease_onestep4}
% L_S(\Wb^{(k+1)})-L_S(\Wb^{(k)})\le -\eta\frac{c_0m\phi}{2n^5} \bigg(\sum_{i=1}^n\ell'(y_i\hat y_i^{(k)})\bigg)^2\le -\eta\frac{c_0m\phi\alpha_1^2}{2n^{5-2p}}L_S^{2p}(\Wb^{(k)})  . 
% \end{align}
% Dividing by $L_S^{2p}(\Wb^{(k)})$ on both sides and applying Lemma \ref{lemma:support1}, we have the following when $p\neq 1/2$,
% \begin{align*}
% \frac{L_S^{1-2p}(\Wb^{(k+1)})-L_S^{1-2p}(\Wb^{(k)})}{1-2p}\le -\eta\frac{c_0m\phi\alpha_1^2}{2n^{5-2p}}.
% \end{align*}
% Telescoping over $k$ yields
% \begin{align}\label{eq:gd_x1p1}
% \frac{c_0m\phi\alpha_1^2}{2n^{7-2p}}k\eta\le\frac{L_S^{1-2p}(\Wb^{(k)})-L_S^{1-2p}(\Wb^{(0)})}{2p-1}.    
% \end{align}
% When $p=1/2$, we have the following from \eqref{eq:decrease_onestep4},
% \begin{align*}
% \frac{L_S(\Wb^{(k+1)})}{L_S(\Wb^{(k)})}\le 1-\frac{c_0m\phi\alpha_1^2\eta}{2n^4},   
% \end{align*}
% which further implies
% \begin{align}\label{eq:gd_x1p2}
% \frac{L_S(\Wb^{(k)})}{L_S(\Wb^{(0)})}\le \bigg(1-\frac{c_0m\phi\alpha_1^2\eta}{2n^4}\bigg)^{k}\le \exp\bigg(-\frac{c_0m\phi\alpha_1^2k\eta}{2n^4}\bigg).  
% \end{align}
Now we need to guarantee that after $K$ gradient descent steps the loss function $L_S(\Wb^{(K)})$ is smaller than the target accuracy $\epsilon$. By \ref{ThmResult:randinit_outputbound}, we know that the output $y_i^{(0)}$ is in the order of $\tilde O(1)$, which implies that the training loss $L_S(\Wb^{(0)}) = \tilde O(1)$ due to the smoothness assumption. Therefore, by \eqref{eq:pneq1/2} and \eqref{eq:peq1/2}, we require the following in terms of $K\eta$,
\begin{align}\label{eq:convergence_gd_keta}
K\eta=\left\{\begin{array}{ll}
\tilde \Omega\big(n^5/(m\phi\alpha_0^2)+n^{5-2p}/(m\phi)\big)& 0\le p <\frac{1}{2}\\
\tilde \Omega\big(n^5/(m\phi\alpha_0^2)\big)+\tilde \Omega(n^4/(m\phi)\big)\cdot \Omega\big(\log(1/\epsilon))& p=\frac{1}{2}\\
\tilde \Omega\big(n^5/(m\phi\alpha_0^2)+n^{5-2p}\epsilon^{1-2p}/(m\phi)\big)& \frac{1}{2}<p \le 1.
\end{array} \right.  
\end{align}
Here we preserve the parameter $\alpha_0$ since it might take on value $\infty$. When $\alpha_0 = \infty$, we can remove the term $n^5/(m\phi\alpha_0^2)$ in \eqref{eq:convergence_gd_keta} since it becomes zero. When $\alpha_0 <\infty$, we treat it as a constant of order $O(1)$. This completes the proof.

\end{proof}

\section{Proof of Lemma \ref{lemma:upperbound_tau} }
\begin{proof}[Proof of Lemma \ref{lemma:upperbound_tau}]
We prove this lemma by induction. Assume the argument holds for any $t< k$, thus according to \eqref{eq:decrease_onestep3}, we have
\begin{align}\label{eq:decrease_onestep_gd_temp1}
L_S(\Wb^{(t+1)}) -L_S(\Wb^{(t)})\le -\frac{c_0m\phi \eta}{2n^{5}}\bigg(\sum_{i=1}^n\ell'(y_i\hat y_i^{(t)})\bigg)^2,
\end{align}
for any $t< k$, where $c_0$ is an absolute constant. Moreover, by Lemma \ref{lemma:upper_grad}, we have
\begin{align*}
\|\Wb_l^{(t+1)}-\Wb_l^{(t)}\|_2&\le \eta \|\nabla_{\Wb_l}L_S(\Wb^{(t)})\|_2\\
&\le -\frac{c_1L^2M^{1/2}\eta}{n}\sum_{i=1}^{n}\ell'(y_i\hat y_i^{(t)}),
\end{align*}
where $c_1$ is an absolute constant.
Therefore, we have
\begin{align*}
\|\Wb_l^{(k)} - \Wb_l^{(0)}\|_2
&\le \eta\sum_{t=0}^{k-1} \big\|\nabla_{\Wb_l}L_S(\Wb^{(t)})\big\|_2\\
&\le \eta\sqrt{k\sum_{t=0}^{k-1}\big\|\nabla_{\Wb_l}L_S(\Wb^{(t)})\big\|_2^2}\notag\\
&\le \eta\sqrt{kc_1^2L^4M\sum_{t=0}^{k-1}\bigg(\frac{1}{n}\sum_{i=1}^n\ell'(y_i\hat y_i^{(t)})\bigg)^2}.
\end{align*}
According to \eqref{eq:decrease_onestep_gd_temp1}, it follows that
\begin{align*}
\|\Wb^{(k)}_l - \Wb_l^{(0)}\|_2&\le  \sqrt{\frac{2k\eta c_1^2L^4Mn^3}{c_0m\phi}\sum_{t=0}^{k-1}\big[L_S(\Wb^{(k)})-L_S(\Wb^{(k+1)})\big]}\\
&\le \sqrt{\frac{2k\eta c_1^2L^4Mn^3}{c_0m\phi}L_S(\Wb^{(0)})}.
\end{align*}
Thus if 
\begin{align*}
k\eta \le \frac{\tau^2 c_0m\phi}{2c_1^2L^4Mn^3L_S(\Wb^{(0)})} = O\big(\tau^2\phi L^{-4}n^{-3}\big),
\end{align*}
it follows that $\|\Wb^{(k)}-\Wb_l^{(0)}\|_2\le \tau$. This completes the proof.
\end{proof}

% \subsection{Proof of Lemma \ref{lemma:perturbation_SGD}}
% \begin{proof}[Proof of Lemma \ref{lemma:perturbation_SGD}]
% The proof is essentially similar to that of Lemma \ref{lemma:perturbation_GD}. It suffices to show that 
% \begin{align*}
% \|\tilde \Gb_l\|_F\le -\frac{4^LM}{B}\sum_{i\in\cB}\ell'(y_i\hat y_i^{(k)})\le 4^LM^{1/2}\rho,
% \end{align*}
% which is the same as the upper bound in \eqref{eq:zou12345}. Thus by following the identical proof technique applied for proving Lemma \ref{lemma:perturbation_SGD}, we are able to complete the proof.

% \end{proof}

\subsection{Proof of Lemma \ref{lemma:sgd_converge}}
The following three lemmas are necessary to prove Lemma \ref{lemma:sgd_converge}.
\begin{lemma}\label{lemma:bound_delta_i_sgd}
Let $\Wb_1^{0},\dots,\Wb_L^{(0)}$ be generated via Gaussian random initialization. Let $\Wb^{(k)} = \{\Wb_l^{(k)}\}_{l=1,\dots, L}$ be the $k$-th iterate in the stochastic gradient descent. Assume there exist two constants $\tau, s>0$ satisfying $\tau\le \kappa L^{-3}$ and $\sqrt{s\log(M)/m}\le \kappa L^{-3}$ for some small enough absolute constant $\kappa$ such that $\|\Wb_{l}^{(k)} - \Wb_l^{(0)}\|_2\le \tau$ and $\|\bSigma_{l,i}^{(k)}-\bSigma_{l,i}^{(0)}\|_0\le s$ hold for any $k\le K$ and $l \in[L]$, where $K$ is the maximum iteration number. If $M = \tilde \Omega (1)$, there exist absolute constants $\overline{C}$, $\underline{C}'$ and $\underline{C}''$ such that the following upper and lower bounds on $\tilde \Delta_i^{(k)} = y_i(\hat y_i^{(k+1)}-\hat y_i^{(k)})$ hold:
\begin{align}
&\tilde \Delta_{i}^{(k)}\le -\frac{\overline{C}L^4M\eta}{B}\sum_{i\in\cB^{(k)}}\ell'(y_i\hat y_i^{(k)});\label{eq:upper_delta_sgd} \\
&\tilde\Delta_{i}^{(k)}\ge \frac{\underline{C}'L^{17/3}\tau^{1/3}M\eta\cdot\sqrt{\log(M)}+\underline{C}''L^9M^2\eta^2}{B}\sum_{i\in\cB^{(k)}}\ell'(y_i\hat y_i^{(k)}) + \sum_{l=1}^L y_i u_{l,i}\label{eq:lower_delta_sgd},
\end{align}
where $\cB^{(k)}$ with $|\cB^{(k)}| = B$ denotes the set of minibatch for stochastic gradient calculation in the $k$-th iteration, and
\begin{align*}
u_{l,i} = -\eta \vb^\top\bigg(\prod_{r=l+1}^L\bSigma_{r,i}^{(k)}\Wb_{r}^{(k)\top}\bigg) \bSigma_{l,i}^{(k)}\Gb_l^{(k)\top}\xb_{l-1,i}^{(k)}.
\end{align*}
where $\Gb_{l}^{(k)}$ denotes the stochastic partial gradient with respect to $\Wb_l^{(k)}$.
\end{lemma}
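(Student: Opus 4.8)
The plan is to prove this by exactly the layer-by-layer telescoping argument used for the gradient descent version, Lemma~\ref{lemma:bound_delta_i}, with the stochastic partial gradient $\Gb_l^{(k)}$ playing the role of $\nabla_{\Wb_l}[L_S(\Wb^{(k)})]$ and the minibatch average $\frac1B\sum_{i\in\cB^{(k)}}$ replacing the full-batch average $\frac1n\sum_{i=1}^n$. Write $\hat y_i^{(k+1)}-\hat y_i^{(k)} = f_{\Wb^{(k+1)}}(\xb_i)-f_{\Wb^{(k)}}(\xb_i)$ and expand it as a sum over $l=1,\dots,L$ of the effect of updating a single layer $\Wb_l^{(k)}\to\Wb_l^{(k+1)}=\Wb_l^{(k)}-\eta\Gb_l^{(k)}$ (via the collections that agree with $\Wb^{(k+1)}$ on layers $1,\dots,l$ and with $\Wb^{(k)}$ on layers $l+1,\dots,L$). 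If the ReLU activation patterns and the lower-layer outputs were frozen at their iteration-$k$ values $\bSigma_{r,i}^{(k)}$ and $\xb_{l-1,i}^{(k)}$, this single-layer effect would be exactly the linear term $u_{l,i}$ in the statement; the remainder splits into \emph{sign-flip} terms, caused by coordinates of $\bSigma_{r,i}$ that change when the weights move, and \emph{higher-order} terms, in which two or more layers are perturbed simultaneously (this includes the error from replacing $\xb_{l-1,i}$ by $\xb_{l-1,i}^{(k)}$). Throughout we apply Theorem~\ref{thm:perturbation} to the pair $(\Wb^{(k)},\Wb^{(k+1)})$, both of which lie in the perturbation region of radius $\tau$ around $\Wb^{(0)}$ by hypothesis, so that items \ref{item:perturb_matrixnormbound}, \ref{item:perturbe_lip_sparse} and \ref{item:grad_upperbound} are available; the assumptions $\tau\le\kappa L^{-3}$ and $\|\bSigma_{l,i}^{(k)}-\bSigma_{l,i}^{(0)}\|_0\le s$ with $\sqrt{s\log(M)/m}\le\kappa L^{-3}$ guarantee we stay in the regime where these apply.

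For the upper bound \eqref{eq:upper_delta_sgd} a crude Lipschitz estimate suffices. By item \ref{item:difference_xli} of Theorem~\ref{thm:perturbation} (equivalently Lemma~\ref{lemma:perturbationoutputdifference}), $\|\hat\xb_{L,i}^{(k+1)}-\xb_{L,i}^{(k)}\|_2\le \overline{C}L\sum_{r=1}^L\|\Wb_r^{(k+1)}-\Wb_r^{(k)}\|_2=\overline{C}L\eta\sum_{r=1}^L\|\Gb_r^{(k)}\|_2$, and item \ref{item:grad_upperbound} bounds each $\|\Gb_r^{(k)}\|_2$ by $-\overline{C}L^2M^{1/2}B^{-1}\sum_{i\in\cB^{(k)}}\ell'(y_i\hat y_i^{(k)})$. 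Since $\hat y_i=\vb^\top\xb_{L,i}$ and $\|\vb\|_2=m_L^{1/2}\le M^{1/2}$, this yields $|\hat y_i^{(k+1)}-\hat y_i^{(k)}|\le \overline{C}L^4M\eta B^{-1}\sum_{i\in\cB^{(k)}}(-\ell'(y_i\hat y_i^{(k)}))$; multiplying by $y_i\in\{-1,1\}$ gives \eqref{eq:upper_delta_sgd}.

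For the lower bound \eqref{eq:lower_delta_sgd} I would carry out the telescoping sum keeping both error types. Each sign-flip error is controlled using items \ref{item:difference_sigmali}--\ref{item:difference_sigmali_forall} of Theorem~\ref{thm:perturbation}: the number of coordinates where $\bSigma_{r,i}$ differs from $\bSigma_{r,i}^{(k)}$ is $O(L^{4/3}\tau^{2/3}m_r)$, and the contribution of the upper-layer product applied to the sparse vector supported on those coordinates is bounded by item \ref{item:perturbe_lip_sparse}, which contributes a factor $L^{5/3}\tau^{1/3}\sqrt{M\log(M)}$; combined with the gradient bound this produces the $L^{17/3}\tau^{1/3}M\eta\sqrt{\log(M)}$ term. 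Each higher-order term carries at least two factors $\|\Gb_l^{(k)}\|_2\le -\overline{C}L^2M^{1/2}B^{-1}\sum_{i\in\cB^{(k)}}\ell'$, while the remaining matrix products are bounded by powers of $L$ via item \ref{item:perturb_matrixnormbound}, which after a geometric-series summation over the set of perturbed layers gives the $L^9M^2\eta^2$ term. Collecting constants, pulling out the $L$ linear terms $y_iu_{l,i}$, and using $\ell'<0$ to combine the flip and higher-order contributions into one nonpositive coefficient times $\sum_{i\in\cB^{(k)}}\ell'(y_i\hat y_i^{(k)})$ yields \eqref{eq:lower_delta_sgd}.

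The main obstacle is the bookkeeping for the sign-flip terms: one has to check that when only a prefix of the layers has been updated, the activation pattern at the next layer still differs from $\bSigma_{r,i}^{(k)}$ in only $O(L^{4/3}\tau^{2/3}m_r)$ coordinates — this is where the smallness hypotheses on $\tau$ and $s$ enter, since they keep the effective sparsity within the range of validity of item \ref{item:perturbe_lip_sparse} and of Lemma~\ref{lemma:perturbationmatrixnormbound} — and that the propagated effect of each flipped coordinate is genuinely of lower order. A minor additional subtlety is that $\Gb_l^{(k)}$ depends on the random minibatch $\cB^{(k)}$, so every estimate must be phrased in terms of $B^{-1}\sum_{i\in\cB^{(k)}}\ell'(y_i\hat y_i^{(k)})$; but since the stochastic-gradient norm bound in item \ref{item:grad_upperbound} of Theorem~\ref{thm:perturbation} is already in this form, this is purely notational and requires no new probabilistic argument.
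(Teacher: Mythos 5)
Your proposal follows the same approach as the paper: the paper's own proof of this lemma is essentially a one-line reference to the gradient-descent analogue (Lemma~\ref{lemma:bound_delta_i}), instructing the reader to swap the full gradient for the stochastic gradient $\Gb_l^{(k)}$ and to invoke the stochastic-gradient bound in item \ref{item:grad_upperbound} of Theorem~\ref{thm:perturbation}, which is exactly what you propose. Your more detailed sketch — telescoping over layers, isolating the linear terms $u_{l,i}$, and controlling the sign-flip and second-order remainders via items \ref{item:difference_xli}, \ref{item:difference_sigmali}, \ref{item:perturb_matrixnormbound}, \ref{item:perturbe_lip_sparse}, \ref{item:grad_upperbound} — is consistent with the structure of the paper's GD proof.
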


\begin{lemma}\label{lemma:support2}
Regarding $n$ random variables $u_1,\dots,u_n$ satisfying $\sum_{i=1}^nu_i = 0$. Let $\cB\in[n]$ denote a subset of $[n]$ and $|\cB| = B\le n$, the following holds,
\begin{align*}
\EE\bigg[\bigg(\frac{1}{B}\sum_{i\in\cB}u_i\bigg)^2\bigg]\le \frac{1}{B}\EE\big[u_i^2\big].
\end{align*}
\end{lemma}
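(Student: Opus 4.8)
The plan is to read $\cB$ as a minibatch drawn uniformly at random from $[n]$ without replacement (the standard SGD sampling scheme), to interpret the expectation on the left-hand side as being over this random draw, and to interpret $\EE[u_i^2]$ as the empirical average $\frac{1}{n}\sum_{i=1}^n u_i^2$. Under this reading the statement is precisely the classical finite-population bound on the second moment of a sample mean, and I would establish it by a direct computation of inclusion probabilities together with the mean-zero hypothesis $\sum_{i=1}^n u_i = 0$.

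First I would expand the square and apply linearity of expectation,
\[
\EE\Bigg[\Bigg(\frac{1}{B}\sum_{i\in\cB}u_i\Bigg)^2\Bigg] = \frac{1}{B^2}\sum_{i=1}^n u_i^2\,\PP(i\in\cB) + \frac{1}{B^2}\sum_{i\ne j}u_iu_j\,\PP(i\in\cB,\ j\in\cB).
\]
For uniform sampling without replacement one has $\PP(i\in\cB) = B/n$ and $\PP(i\in\cB,\ j\in\cB) = B(B-1)/[n(n-1)]$ for $i\ne j$. I would then use the hypothesis $\sum_{i=1}^n u_i = 0$, which yields $\sum_{i\ne j}u_iu_j = \big(\sum_{i=1}^n u_i\big)^2 - \sum_{i=1}^n u_i^2 = -\sum_{i=1}^n u_i^2$. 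Substituting the two inclusion probabilities and this identity, the cross term partially cancels the diagonal term and the whole expression collapses to
\[
\frac{1}{B^2}\Bigg(\frac{B}{n} - \frac{B(B-1)}{n(n-1)}\Bigg)\sum_{i=1}^n u_i^2 = \frac{n-B}{B\,n\,(n-1)}\sum_{i=1}^n u_i^2.
\]
Since $B\ge 1$ gives $(n-B)/(n-1)\le 1$, the right-hand side is at most $\frac{1}{Bn}\sum_{i=1}^n u_i^2 = \frac{1}{B}\EE[u_i^2]$, which is the desired bound.

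I would also record, as a one-line remark, that if instead the minibatch indices were drawn i.i.d. uniformly from $[n]$, then $\sum_{i\in\cB}u_i$ is a sum of $B$ i.i.d. mean-zero terms and the identity $\EE[(\frac{1}{B}\sum_{i\in\cB}u_i)^2] = \frac{1}{B}\EE[u_i^2]$ holds with equality, so the claimed inequality is valid under either sampling convention. There is no real obstacle in this lemma; the only points deserving care are making the interpretation of the two expectation symbols explicit and noting that it is the finite-population correction factor $(n-B)/(n-1)\le 1$ that turns the exact identity into the stated inequality.
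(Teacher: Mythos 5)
Your proof is correct and follows essentially the same route as the paper's: expand the square, use the inclusion probabilities $\PP(i\in\cB)=B/n$ and $\PP(i,j\in\cB)=B(B-1)/[n(n-1)]$ for sampling without replacement, invoke $\sum_i u_i=0$ to turn the cross term into $-\sum_i u_i^2$, and observe that the finite-population factor $(n-B)/(n-1)\le 1$ gives the inequality. The only difference is that you spell out the inclusion probabilities and the reading of $\EE[u_i^2]$ as $\frac{1}{n}\sum_i u_i^2$ explicitly, which the paper leaves implicit.
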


\begin{lemma}\label{lemma:azuma}
Under Assumptions \ref{assump:derivative_loss_upper}, with probability at least $1-\delta$, there exists an absolute constant $C$ such that the output of the stochastic gradient descent, i.e., $\Wb^{(k)}$, satisfies
\begin{align*}
\frac{L_S^{1-2p}(\Wb^{(k)})}{1-2p}\le \frac{\EE\big[L_S^{1-2p}(\Wb^{(k)})\big]}{1-2p}+\sqrt{2k\log(1/\delta)}\frac{CnL^4M\eta}{B}.
\end{align*}
when $p \neq 1/2$ and 
\begin{align*}
\log(L_S(\Wb^{(k)})) \le  \EE\big[\log(L_S(\Wb^{(k)}))\big]+\sqrt{2k\log(1/\delta)}\frac{CnL^4M\eta}{B}. 
\end{align*}
\end{lemma}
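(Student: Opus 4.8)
The plan is to prove Lemma~\ref{lemma:azuma} by an Azuma--Hoeffding martingale concentration argument on a suitable potential function of the loss. Fix the iteration index $k$, work conditionally on the Gaussian initialization $\Wb^{(0)}$ and on the high-probability events of Theorems~\ref{thm:randinit} and~\ref{thm:perturbation}, and, as in the standing hypothesis of the surrounding Lemma~\ref{lemma:sgd_converge}, assume every iterate $\Wb^{(0)},\dots,\Wb^{(k)}$ stays inside the radius-$\tau$ perturbation region, so that Lemma~\ref{lemma:bound_delta_i_sgd} and the spectral-norm/gradient bounds of Theorem~\ref{thm:perturbation} apply at every step. Let $\mathcal{F}_t=\sigma(\cB^{(1)},\dots,\cB^{(t)})$ be the filtration generated by the minibatches, and introduce the potential $X_t=\frac{L_S^{1-2p}(\Wb^{(t)})}{1-2p}$ when $p\neq\frac12$ (resp. $X_t=\log L_S(\Wb^{(t)})$ when $p=\frac12$); in both cases $X_t$ is a monotone increasing function of $L_S(\Wb^{(t)})$ with derivative $L_S^{-2p}$ (resp. $L_S^{-1}$). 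Consider the Doob martingale $Y_t=\EE[X_k\mid\mathcal{F}_t]$, $t=0,\dots,k$, which satisfies $Y_0=\EE[X_k]$ and $Y_k=X_k$, so that $X_k-\EE[X_k]=\sum_{t=1}^k(Y_t-Y_{t-1})$.

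The core of the argument is a uniform bound on the martingale increments, $|Y_t-Y_{t-1}|\le c:=CnL^4M\eta/B$ for an absolute constant $C$, which by the bounded-differences principle reduces to showing that perturbing only the $t$-th minibatch $\cB^{(t)}$ changes $X_k$ by at most $c$. First I would invoke Lemma~\ref{lemma:bound_delta_i_sgd} to control the one-step output change $|\tilde\Delta_i^{(t)}|=|y_i(\hat y_i^{(t+1)}-\hat y_i^{(t)})|$ by $\tfrac{CL^4M\eta}{B}\sum_{j\in\cB^{(t)}}|\ell'(y_j\hat y_j^{(t)})|$; then, via $\lambda$-smoothness of $\ell$ (Assumption~\ref{assump:smooth}) applied to $L_S(\Wb^{(t+1)})-L_S(\Wb^{(t)})=\tfrac1n\sum_i[\ell(y_i\hat y_i^{(t+1)})-\ell(y_i\hat y_i^{(t)})]$, this yields a bound on the one-step change of $L_S$; finally a mean-value-theorem step (multiplying by the derivative $L_S^{-2p}$, resp. $L_S^{-1}$) converts it to a bound on $X_{t+1}-X_t$. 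The potentially unbounded factor $L_S^{-2p}$ is precisely cancelled by the $\ell^p$-type \emph{upper} bound $|\ell'(x)|\le\min\{\rho_0,\rho_1\ell^p(x)\}$ of Assumption~\ref{assump:derivative_loss_upper}, together with $\ell(y_j\hat y_j^{(t)})\le nL_S(\Wb^{(t)})$ (valid since $\ell\ge0$ under Assumption~\ref{assump:loss}); this is exactly where Assumption~\ref{assump:derivative_loss_upper} — rather than just Assumption~\ref{assump:derivative_loss} — is needed. For the version of bounded differences that freezes all other minibatches, one additionally propagates the perturbation through the remaining $k-t$ steps, whose non-blow-up is controlled by the in-region Lipschitz/spectral estimates of Theorem~\ref{thm:perturbation}.

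With the increment bound in hand, Azuma--Hoeffding applied to $\{Y_t\}_{t=0}^k$ gives $\PP\big(X_k-\EE[X_k]\ge\lambda\big)\le\exp\!\big(-\lambda^2/(2kc^2)\big)$; choosing $\lambda=\sqrt{2k\log(1/\delta)}\,c$ produces $X_k\le\EE[X_k]+\sqrt{2k\log(1/\delta)}\cdot\frac{CnL^4M\eta}{B}$ with probability at least $1-\delta$, which is exactly the claimed inequality for $p\neq\frac12$; the $p=\frac12$ case is identical with $\log L_S$ in place of $L_S^{1-2p}/(1-2p)$. I expect the main obstacle to be the increment estimate of the second paragraph — specifically, making the per-step change of the potential genuinely uniform no matter how small $L_S(\Wb^{(t)})$ has become, which hinges on matching the $\ell^p$ lower and upper bounds of Assumption~\ref{assump:derivative_loss_upper} against the $L_S^{-2p}$ blow-up, and on carrying the bound through the subsequent iterations using the perturbation-region guarantees of Theorem~\ref{thm:perturbation}.
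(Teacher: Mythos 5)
Your per-step potential estimate matches the paper's: combine the $\lambda$-smoothness expansion of $L_S(\Wb^{(t+1)})-L_S(\Wb^{(t)})$, the $\tilde\Delta_i^{(t)}$ bound of Lemma~\ref{lemma:bound_delta_i_sgd}, and the upper branch $|\ell'(x)|\le\rho_1\ell^p(x)$ of Assumption~\ref{assump:derivative_loss_upper} (together with $\tfrac1n\sum z_i^p\le(\tfrac1n\sum z_i)^p$) to obtain $L_S(\Wb^{(t+1)})-L_S(\Wb^{(t)})\le\frac{CnL^4M\eta}{B}L_S^{2p}(\Wb^{(t)})$, then divide out $L_S^{2p}$ via Lemma~\ref{lemma:support1} (resp.\ $\log(1+x)\le x$) to get the one-step bound $X_{t+1}-X_t\le c$ on the potential $X_t=L_S^{1-2p}(\Wb^{(t)})/(1-2p)$ (resp.\ $\log L_S(\Wb^{(t)})$). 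You also correctly flag that the cancellation against $L_S^{-2p}$ is exactly where the upper bound of Assumption~\ref{assump:derivative_loss_upper} is needed.

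Where you diverge from the paper, and where the gap is, is the concentration step. The paper derives the pathwise one-step increment bound $X_{t+1}-X_t\le c$ and then applies Azuma to the sequence $X_t$; it never needs to compare two whole trajectories. You instead set up the Doob martingale $Y_t=\EE[X_k\mid\mathcal F_t]$ and invoke the McDiarmid bounded-differences principle, which requires showing that replacing the single minibatch $\cB^{(t)}$ changes the \emph{final} value $X_k$ by at most $c$. The one-step bound is not enough for this: once $\cB^{(t)}$ is swapped, the two trajectories $\Wb^{(t+1)},\dots,\Wb^{(k)}$ diverge, and the perturbation must be propagated through the remaining $k-t$ SGD updates. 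Your proposal addresses this only with the remark that the ``non-blow-up is controlled by the in-region Lipschitz/spectral estimates of Theorem~\ref{thm:perturbation},'' but that theorem controls the network output and gradients for \emph{fixed} weight matrices inside the $\tau$-ball; it provides no contraction or non-expansion estimate for the SGD map $\Wb\mapsto\Wb-\eta\Gb(\Wb)$. Without such a dynamics-stability bound, an initial perturbation of size $\sim\eta\|\Gb_l\|$ can compound multiplicatively over $k-t$ steps (heuristically by a factor like $(1+\eta\cdot\|\text{Hessian}\|)^{k-t}$) and there is no reason it remains $O(nL^4M\eta/B)$. This propagation step is a genuine unfilled gap in the McDiarmid framing; it is precisely the step the paper's leaner ``one-step increment plus Azuma'' argument does not have to undertake.
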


\begin{proof}[Proof of Lemma \ref{lemma:sgd_converge}]
By Assumption \ref{assump:smooth}, $L_S(\Wb^{(k+1)}) - L_S(\Wb^{(k)})$ can be upper bounded as follows,
\begin{align}\label{eq:loss_decrease_onestep_sgd}
L_S(\Wb^{(k+1)}) - L_S(\Wb^{(k)}) &= \frac{1}{n}\sum_{i=1}^n \Big[ \ell\big(y_i\hat y_i^{(k+1)}\big) - \ell\big(y_i\hat y_i^{(k)}\big)\Big]  \notag\\
&\le \frac{1}{n}\sum_{i=1}^n \Big[ \ell'(y_i\hat y_i^{(k)})\tilde \Delta_i^{(k)} + \frac{\lambda}{2}(\tilde \Delta_i^{(k)})^2\Big],
\end{align}
where $\tilde \Delta_i^{(k)} = y_i\big(\hat y_i^{(k+1)} -\hat y_i^{(k)} \big)$.
Then taking expectation conditioning on $\Wb^{(k)}$ gives
\begin{align*}
\EE\big[L_S(\Wb^{(k+1)})|\Wb^{(k)}\big] - L_S(\Wb^{(k)}) \le \frac{1}{n}\sum_{i=1}^n \Big[ \ell'(y_i\hat y_i^{(k)})\EE\big[\tilde \Delta_i^{(k)}|\Wb^{(k)}\big] + \frac{\lambda}{2}\EE\big[(\tilde \Delta_i^{(k)})^2|\Wb^{(k)}\big]\Big].
\end{align*}
% Note that given the iterate $\Wb^{(k)}$, the randomness of $\tilde \Delta_{i}^{(k)}$ only comes from the choice of samples for computing the stochastic gradient. Therefore, we have
% \begin{align*}
% \EE\big[\tilde \Delta_i^{(k)}|\Wb^{(k)}\big]  = y_i\big(\tilde y_i^{(k)} - y_i^{(k)}\big)\triangleq \Delta_i^{(k)},
% \end{align*}
% where $\tilde y_i^{(k)}$ denotes the output of neural network after one-step gradient descent on $\Wb^{(k)}$. Thus, similar to \eqref{eq:upperbound_delta_1_gd}, it follows that
Note that  the lower bound of $\EE[\tilde \Delta_i^{(k)}|\Wb^{(k)}]$ provided in Lemma \ref{lemma:bound_delta_i_sgd} is identical to \eqref{eq:lowerbound_delta} in Lemma \ref{lemma:bound_delta_i}. Therefore, there exist absolute constants $C_1$ and $C_2$ such that
\begin{align}\label{eq:upperbound_delta_1_sgd}
&\frac{1}{n}\sum_{i=1}^n\ell'(y_i\hat y_i^{(k)})\EE\big[\tilde\Delta_i^{(k)}|\Wb^{(k)}\big]\notag\\
&\le \frac{1}{n}\sum_{l=1}^L\sum_{i=1}^n\ell'(y_i\hat y_i^{(k)})y_iu_{l,i}+\frac{
C_1L^{17/3}\tau^{1/3}M\sqrt{\log(M)}\eta+C_2L^9M^2\eta^2}{n^2}\bigg(\sum_{i=1}^n \ell'(y_i\hat y_i^{(k)})\bigg)^2\notag\\
&\le -\eta\|\nabla_{\Wb_L}[L_S(\Wb^{(k)})]\|_F^2+\frac{
C_1L^{17/3}\tau^{1/3}M\sqrt{\log(M)}\eta+C_2L^9M^2\eta^2}{n^2}\bigg(\sum_{i=1}^n \ell'(y_i\hat y_i^{(k)})\bigg)^2.
\end{align}
In terms of $\EE\big[(\tilde \Delta_i^{(k)})^2|\Wb^{(k)}\big]$, we bound it as follows based on \eqref{eq:upper_delta_sgd} in Lemma \ref{lemma:bound_delta_i_sgd}
\begin{align}\label{eq:zou123}
\EE\big[(\tilde \Delta_i^{(k)})^2|\Wb^{(k)}\big] \le C_2L^{8}M^2\eta^2\EE\bigg[\bigg(\frac{1}{B}\sum_{i\in\cB^{(k)}}\ell'(y_i\hat y_i^{(k)})\bigg)^2|\Wb^{(k)}\bigg],
\end{align}
where $C_2$ is an absolute constant.
By Lemma \ref{lemma:support2}, we have
\begin{align*}
\EE\bigg[\bigg(\frac{1}{B}\sum_{i\in\cB^{(k)}}\ell'(y_i\hat y_i^{(k)})\bigg)|\Wb^{(k)}\bigg] &= \EE\bigg[\bigg(\frac{1}{B}\sum_{i\in\cB^{(k)}}\ell'(y_i\hat y_i^{(k)}) - \frac{1}{n}\sum_{i=1}^n\ell'(y_i\hat y_i^{(k)})\bigg)^2|\Wb^{(k)}\bigg] \\
&\qquad+ \bigg(\frac{1}{n}\sum_{i=1}^n\ell'(y_i\hat y_i^{(k)})\bigg)^2\\
&\le \frac{1}{nB}\sum_{i=1}^n\big[\ell'(y_i\hat y_i^{(k)})\big]^2 +  \bigg(\frac{1}{n}\sum_{i=1}^n\ell'(y_i\hat y_i^{(k)})\bigg)^2\\
&\le \frac{2}{nB}\bigg(\sum_{i=1}^n\ell'(y_i\hat y_i^{(k)})\bigg)^2,
\end{align*}
where the last inequality holds since $(\sum_{i=1}^nz_i)^2\ge \sum_{i=1}^nz_i^2$ for any $z_1,\dots, z_n\ge 0$.
Plugging this into \eqref{eq:zou123}, we obtain
\begin{align}\label{eq:bound_delta_square_sgd}
\EE\big[(\tilde \Delta_i^{(k)})^2|\Wb^{(k)}\big] \le \frac{2C_2L^8M^2\eta^2}{nB}\bigg(\sum_{i=1}^n\ell'(y_i\hat y_i^{(k)})\bigg)^2.
\end{align}
Combining \eqref{eq:bound_delta_square_sgd} and \eqref{eq:upperbound_delta_1_sgd} and then plugging into \eqref{eq:loss_decrease_onestep_sgd}, we have
\begin{align}\label{eq:decrease_onestep2_sgd}
&\EE\big[L_S(\Wb^{(k+1)})|\Wb^{(k)}\big]-L_S(\Wb^{(k)})\notag\\
% &\le -\eta\sum_{l=1}^L\|\nabla_{\Wb_l}L_S(\Wb^{(k)})\|_F^2+\bigg(\frac{2^{9L+3}LC_0\tau^{1/3}M\sqrt{\log(M)}\eta}{n^2}+\frac{2^{10L}M^{2}\eta^2(\rho+\lambda)}{7nB}\bigg) \bigg(\sum_{i=1}^n\ell'(y_i\hat y_i^{(k)})\bigg)^2\notag\\
&\le -\eta\|\nabla_{\Wb_L}L_S(\Wb^{(k)})\|_F^2+\frac{
C_1L^{17/3}\tau^{1/3}M\sqrt{\log(M)}\eta}{n^2}+\frac{C_3L^9M^2\eta^2}{nB} \bigg(\sum_{i=1}^n\ell'(y_i\hat y_i^{(k)})\bigg)^2,
\end{align}
where $C_3$ is an absolute constant, and we use the fact that $B\le n$.
By Lemma \ref{lemma:grad_norm_nonlinear}, there exists a positive constant $c_0$ such that
\begin{align*}
\|\nabla_{\Wb_L}L_S(\Wb^{(k)})\|_F^2\ge \frac{c_0 m_L\phi}{n^5}\bigg(\sum_{i=1}^n\ell'(y_i\hat y_i^{(k)})\bigg)^2.
\end{align*}
Substituting this into \eqref{eq:decrease_onestep2_sgd}, we obtain
\begin{align*}
&\EE\big[L_S(\Wb^{(k+1)})|\Wb^{(k)}\big]-L_S(\Wb^{(k)})\\
&\le \bigg[-\eta\bigg(\frac{c_0m\phi}{n^5}-\frac{C_1L^{17/3}\tau^{1/3}M\sqrt{\log(M)}}{n^2}\bigg)+\eta^2\frac{C_3L^9M^2}{nB}\bigg]\bigg(\sum_{i=1}^n\ell'(y_i\hat y_i^{(k)})\bigg)^2.
\end{align*}
Then we set
\begin{align*}
\tau \le \bigg(\frac{c_0m\phi}{4C_1L^{17/3}n^{3}M\sqrt{\log(M)}}\bigg)^3 = \tilde O(n^{-9}L^{-17}\phi^3),
\end{align*}
and 
\begin{align*}
\eta\le \frac{c_0m\phi B}{4C_3n^4L^9M^2} = O(Bn^{-4}L^{-9}M^{-1}\phi).
\end{align*}
Therefore, after one-step stochastic gradient descent, we have
\begin{align}\label{eq:decrease_onestep3_sgd}
\EE\big[L_S(\Wb^{(k+1)})|\Wb^{(k)}\big]-L_S(\Wb^{(k)})\le-\eta\frac{c_0m\phi}{2n^5} \bigg(\sum_{i=1}^n\ell'(y_i\hat y_i^{(k)})\bigg)^2.   
\end{align}
Similar to the analysis for gradient descent, we have 
\begin{align*}
-\sum_{i=1}^n\ell'(y_i\hat y_i^{(k)})\ge \min\big\{\alpha_0, n^pL_S^p(\Wb^{(k)})\big\},  
\end{align*}
which yields
\begin{align*}
\EE\big[L_S(\Wb^{(k+1)})|\Wb^{(k)}\big]-L_S(\Wb^{(k)})\le-\eta\bigg(\frac{2n^5}{c_0m\phi\alpha_0^2}+\frac{2n^{5-2p}}{c_0m\phi\alpha_1^2L_S^{2p}(\Wb^{(k)})}\bigg)^{-1},
\end{align*}
where we use the fact that $\min\{a,b\}\ge 1/(1/a+1/b)$.
When $p\neq 1/2$, rearranging terms and applying Lemma \ref{lemma:support1} give
\begin{align*}
\frac{2n^5}{c_0m\phi\alpha_0^2}\big(\EE\big[L_S(\Wb^{(k+1)})|\Wb^{(k)}\big] - L_S(\Wb^{(k)})\big)+\frac{2n^{5-2p}\big(\EE\big[L_S(\Wb^{(k+1)})|\Wb^{(k)}\big]^{1-2p} -L_S^{1-2p}(\Wb^{(k)})\big)}{(1-2p)c_0m\phi\alpha_1^2}\le -\eta.
\end{align*}
Using Jensen's inequality $\EE[x]^{1-2p}/(1-2p)\ge \EE[x^{1-2p}]/(1-2p)$ and further taking expectation over $\Wb^{(k)}$, we have
\begin{align*}
\frac{2n^5}{c_0m\phi\alpha_0^2}\big(\EE\big[L_S(\Wb^{(k+1)})\big] - \EE\big[L_S(\Wb^{(k)})\big]\big)+\frac{2n^{5-2p}\big(\EE\big[L_S^{1-2p}(\Wb^{(k+1)})\big] -\EE\big[L_S^{1-2p}(\Wb^{(k)})\big]\big)}{(1-2p)c_0m\phi\alpha_1^2}\le -\eta. 
\end{align*}
Then we take telescope sum over $k$ on both sides, and obtain
\begin{align}\label{eq:pneq1/2_sgd}
k\eta &\le \frac{2n^5}{c_0m\phi\alpha_0^2}L_S(\Wb^{(0)})+   \frac{2n^{5-2p}\big(L_S^{1-2p}(\Wb^{(0)}) -\EE\big[L_S^{1-2p}(\Wb^{(k)})\big]\big)}{(1-2p)c_0m\phi\alpha_1^2}. 
\end{align}
Similar to the proof for gradient descent, when $p=1/2$, it is easy to show that
\begin{align}\label{eq:peq1/2_sgd}
k\eta &\le \frac{2n^5}{c_0m\phi\alpha_0^2}L_S(\Wb^{(0)})+   \frac{2n^{5-2p}\big(\log\big(L_S(\Wb^{(0)})\big) -\EE\big[\log\big(L_S(\Wb^{(k)})\big)\big]\big)}{c_0m\phi\alpha_1^2}.
\end{align}
Set the step size $\eta=O(\phi M^{-1}n^{2p-5})$ and plug the results in Lemma \ref{lemma:azuma} into \eqref{eq:pneq1/2_sgd}, with probability at least $1-\delta$, there exists an absolute constant $c_1$ such that 
\begin{align*}
k\eta - \frac{\sqrt{k\log(1/\delta)}c_1n^{6-2p}L^4\eta}{c_0B\phi}\le \frac{2n^5}{c_0m\phi\alpha_0^2}L_S(\Wb^{(0)})+   \frac{2n^{5-2p}\big(L_S^{1-2p}(\Wb^{(0)}) -L_S^{1-2p}(\Wb^{(k)})\big)}{(1-2p)c_0m\phi\alpha_1^2}
\end{align*}
when $p\neq 1/2$ and 
\begin{align*}
k\eta - \frac{\sqrt{k\log(1/\delta)}c_1n^{6-2p}L^4\eta}{c_0B\phi}\le \frac{2n^5}{c_0m\phi\alpha_0^2}L_S(\Wb^{(0)})+   \frac{2n^{5-2p}\big(\log\big(L_S(\Wb^{(0)})\big) -\log\big(L_S(\Wb^{(k)})\big)\big)}{c_0m\phi\alpha_1^2}, 
\end{align*}
when $p = 1/2$. Then it requires to set $k = \tilde \Omega(n^{12-4p}B^{-2}\phi^{-2})$ to make the L.H.S. of the above two inequalities be lower bounded by $c_2k\eta$, where $c_2\in(0,1)$ is an absolute constant. Moreover, in order to guarantee that $L_S(\Wb^{(k)})\le \epsilon$, it suffices to set the quantity $k\eta$ as follows,
\begin{align}\label{eq:convergence_sgd_keta}
k\eta=\left\{\begin{array}{ll}
\tilde \Omega\big(n^5/(m\phi\alpha_0^2)+n^{5-2p}/(m\phi)\big)& 0\le p <\frac{1}{2}\\
\tilde \Omega\big(n^5/(m\phi\alpha_0^2)\big)+\tilde \Omega(n^4/(m\phi)\big)\cdot \Omega\big(\log(1/\epsilon))& p=\frac{1}{2}\\
\tilde \Omega\big(n^5/(m\phi\alpha_0^2)+n^{5-2p}\epsilon^{1-2p}/(m\phi)\big)& \frac{1}{2}<p \le 1.
\end{array} \right.  
\end{align}
Similar to the proof of Lemma \ref{lemma:gd_converge}, by setting $\alpha_0 = \infty$ or $\alpha_0 = O(1)$, we are able to complete the proof.

\subsection{Proof of Lemma \ref{lemma:upperbound_tau_sgd}}
\begin{proof}[Proof of Lemma \ref{lemma:upperbound_tau_sgd}]
We first prove that during the stochastic gradient descent, the derivative $|\ell'(y_i\hat y_i^{(k)})|$ is upper bounded by some constant for all $i$. According to Assumption \ref{assump:derivative_loss_upper}, it is clear that when $\rho_0<\infty$, $|\ell'(y_i\hat y_i^{(k)})|$ is spontaneously upper bounded by the constant $\rho_0$. Therefore, the rest proofs are established by assuming $\alpha_0 = \rho_0 = \infty$. Then by \eqref{eq:pneq1/2_sgd} and \eqref{eq:peq1/2_sgd}, we have
\begin{align}\label{eq:decrease_kstep_neq}
\frac{\EE\big[L_S^{1-2p}(\Wb^{(k)})\big]}{1-2p} \le \frac{L_S^{1-2p}(\Wb^{(0)})}{1-2p} - \frac{c_0m\phi\alpha_1^2k\eta}{2n^{5-2p}},
\end{align}
when $p\neq 1/2$ and 
\begin{align}\label{eq:decrease_kstep_eq}
\EE\big[\log\big(L_S(\Wb^{(k)})\big)\big] \le \log\big(L_S(\Wb^{(0)})\big) - \frac{c_0m\phi\alpha_1^2k\eta}{2n^{5-2p}},
\end{align}
when $p = 1/2$.
We tackle these two cases separately.

\textbf{Case $p> 1/2$: }
Then by Lemma \ref{lemma:azuma}, with probability at least $1-\delta$, there exists an absolute constant $c_1$ such that the following holds
\begin{align*}
\frac{L_S^{1-2p}(\Wb^{(k)})}{1-2p}\le \frac{\EE\big[L_S^{1-2p}(\Wb^{(k)})\big]}{1-2p}+\sqrt{2k\log(1/\delta)}\frac{c_1nL^4m\eta}{B}.
\end{align*}
Combining with \eqref{eq:decrease_kstep_neq}, the following holds with probability at least $1-\delta$
\begin{align}\label{eq:bound_grad_sgd1}
\frac{L_S^{1-2p}(\Wb^{(k)})}{1-2p}&\le \frac{L_S^{1-2p}(\Wb^{(0)})}{1-2p}- \frac{c_0m\alpha_0^2\phi k\eta}{2n^{5-2p}}+\sqrt{2k\log(1/\delta)}\frac{C_2nL^4m\eta}{B}\notag\\
&\le \frac{L_S^{1-2p}(\Wb^{(0)})}{1-2p} + \frac{\log(1/\delta)c_1^2L^8mn^{7-2p}\eta}{c_0B^2\alpha_0^2\phi}.
\end{align}
Note that $L_S(\Wb^{(0)}) = \tilde O(1)$, then we can choose the step size $\eta = O(\phi m^{-1}n^{2p-7}L^{-8}B^{2})$ such that the second term on the R.H.S. of \eqref{eq:bound_grad_sgd1} is upper bounded by $|L_S^{1-2p}(\Wb^{(0)})/(2-4p)|$.  Therefore, with probability at least $1-\delta$,
\begin{align*}
\frac{L_S^{1-2p}(\Wb^{(k)})}{1-2p}\le \max\bigg\{\frac{L_S^{1-2p}(\Wb^{(0)})}{2-4p},\frac{3L_S^{1-2p}(\Wb^{(0)})}{2-4p}\bigg\},
\end{align*}
which is equivalent to
\begin{align*}
L_S(\Wb^{(k)})\le \max\bigg\{\frac{1}{2^{1/(1-2p)}},\bigg(\frac{3}{2}\bigg)^{1/(1-2p)}\bigg\} L_S(\Wb^{(0)}) .
\end{align*}
Since we have $p\in [0,1/2)\cup(1/2,1]$,  and $L_S(\Wb^{(0)}) = \tilde O(1)$, $L_S(\Wb^{(k)})=\tilde O(1)$ holds with probability at least $1-\delta$.

\textbf{Case $p = 1/2$: }
By Lemma \ref{lemma:azuma}, with probability at least $1-\delta$, 
\begin{align}
\log(L_S(\Wb^{(k)})) \le  \EE\big[\log(L_S(\Wb^{(k)}))\big]+\sqrt{2k\log(1/\delta)}\frac{c_1nL^4m\eta}{B}
\end{align}
Combining with \eqref{eq:decrease_kstep_eq}, we can set $\eta = O(\phi m^{-1}n^{2p-7}L^{-8}B^{2})$, and thus 
\begin{align*}
\log(L_S(\Wb^{(k)})) \le \max\{1/2\log(L_S(\Wb^{(0)})),2\log(L_S(\Wb^{(0)}))\}
\end{align*}
holds with probability at least $1-\delta$. Therefore, with probability at least $1-\delta$, we have
\begin{align*}
L_S(\Wb^{(k)})\le   \max\big\{ L_S^2(\Wb^{(0)}), L_S^{1/2}(\Wb^{(0)}) \big\}  = \tilde O(1).
\end{align*}
Now we have proved that $L_S(\Wb^{(k)}) = \tilde O(1)$ with probability at least $1-\delta$ for one particular $k$. Then take union bound, we have $L_S(\Wb^{(k)}) = \tilde O(1)$ for all $k\le K$ with probability at least $1-K\delta$.

Moreover, when $\rho_0 = \infty$, we have
\begin{align*}
-\sum_{i\in\cB^{(k)}}\ell'(y_i\hat y_i^{(k)})\le \rho_1\sum_{i\in\cB^{(k)}}\ell^p(y_i\hat y_i^{(k)})\le \rho_1 n L_S^p(\Wb^{(k)}) = \tilde O(n),
\end{align*}
where the second inequality is due to $ \sum_{i=1}^n \ell^p(x)\le n \big(1/n\sum_{i=1}^n\ell(x)\big)^p$ when $p\le 1$. By Lemma \ref{lemma:upper_grad},  we first assume that $\Wb^{(k)}$ is in the perturbation region centering at $\Wb^{(0)}$ with radius $\tau$, and obtain the following upper bound for the stochastic gradient $\Gb_l^{(k)}$, 
\begin{align*}
\|\Gb_l^{(k)}\|_2\le -\frac{CL^2M^{1/2}}{B}\sum_{i\in\cB^{(k)}}\ell'(y_i\hat y_i^{(k)}) = \tilde O(L^2M^{1/2}n),
\end{align*}
where $C$ is an absolute constant.
Then we finalize the proof based on two cases: $\alpha_0=\infty$ and $\alpha_0<\infty$. When $\alpha_0<\infty$,
since 
\begin{align*}
\|\Wb_l^{(k)} - \Wb_l^{(0)}\|_2\le k\eta \sum_{t=0}^{k-1}\|\Gb_l^{(k)}\|_2,
\end{align*}
by induction it is easy to show that there exists $T =  \tilde O(\tau L^{-2}M^{-1/2}n^{-1})$ such that for any $k$ and $\eta = O(\phi m^{-1}n^{2p-7}L^{-8}B^{2})$ satisfying $k\eta \le T$, it holds that $\|\Wb_l^{(k)}-\Wb^{(0)}\|_2\le \tau$ for any $l$, i.e., $\Wb^{(k)}$ is in the preset perturbation region with radius $\tau$. When $\alpha_0<\infty$, we have $\|\Gb_l^{(k)}\|_2\le -C_1L^2M^{1/2}\rho_0 = O(L^2M^{1/2})$, where $C_1$ is an absolute constant. Thus, it can be show that for any $k$ and $\eta$ satisfying $k\eta\le T = O(\tau L^{-2}M^{-1/2})$, we have $\|\Wb_l^{(k)}-\Wb_l^{(0)}\|_2\le \tau$. This completes the proof.
\end{proof}

 \end{proof}

\section{Proof of Auxiliary Lemmas}

\subsection{Proof of Lemma \ref{lemma:support1}}
\begin{proof}[Proof of Lemma \ref{lemma:support1}]
It is simple to verify that function $f(x) = x^{1-2p}/(1-2p)$ is concave when $p\in[0,1/2)\cup(1/2,1]$. Thus we have
\begin{align*}
\frac{a^{1-2p}-b^{1-2p}}{1-2p} = f(a) - f(b)\le f'(b)(a-b) = b^{-2p}(a-b).
\end{align*}
This completes the proof.
% Note that we aim to prove
% \begin{align}\label{eq:temp0001}
% \frac{a-b}{b^{2p}}\ge \frac{a^{1-2p}-b^{1-2p}}{1-2p},
% \end{align}
% which is equivalent to the following when $p>1/2$
% \begin{align*}
% (2p-1)(b-a)\le  a^{1-2p}b^{2p} - b.
% \end{align*}
% Note that $f(x) = x^{2p-1}$ is a concave function, thus it follows that $b^{2p-1}-a^{2p-1}\ge (2p-1)b^{2p-2}(b-a)$, which yields
% \begin{align*}
% (2p-1)(b-a)\le b-a^{2p-1}b^{2-2p} \le \frac{b^{2p-1}}{a^{2p-1}}\big(b-a^{2p-1}b^{2-2p}\big) = a^{1-2p}b^{2p} - b,
% \end{align*}
% where the second inequality is due to $b\ge a$ and $2p-1>0$. 
% When $p < 1/2$, \eqref{eq:temp0001} is equivalent to 
% \begin{align*}
%  (2p-1)(b-a)\ge  a^{1-2p}b^{2p} - b.  
% \end{align*}
% Based on the convexity of function $f(x) = x^{2p-1}$, we have
% \begin{align*}
% (2p-1)(b-a)\le b^{2p-1}a^{2-2p}-a \ge b-a^{2p-1}b^{2-2p}\ge \frac{b^{2p-1}}{a^{2p-1}}\big(b^{2p-1}a^{2-2p}-a\big) = a^{1-2p}b^{2p} - b ,   
% \end{align*}
% where the second inequality is due to $b\ge a$ and $2p-1<0$. Thus we complete the proof.
\end{proof}

\subsection{Proof of Lemma \ref{lemma:bound_delta_i}}
\begin{proof}[Proof of Lemma \ref{lemma:bound_delta_i}]
The upper bound of $|\Delta_i^{(k)}|$ can be derived straightforwardly. By \ref{item:difference_xli} in Theorem \ref{thm:perturbation}, we know that there exists a constant $C_1$ such that
\begin{align*}
\|\xb_{L,i}^{(k+1)} - \xb_{L,i}^{(k)}\|_2\le C_1L\cdot\sum_{l=1}^L\|\Wb_l^{(k+1)} - \Wb_l^{(k)}\|_2 = C_1L\eta \sum_{l=1}^L \big\|\nabla_{\Wb_l}[L_S(\Wb^{(k)})]\big\|_2.
\end{align*}
By \ref{item:grad_upperbound} in Theorem \ref{thm:perturbation}, we further have
\begin{align}\label{eq:upper_iteratedifference_output_layer}
\|\xb_{L,i}^{(k+1)} - \xb_{L,i}^{(k)}\|_2\le -\frac{C_2L^4M^{1/2}\eta}{n}\sum_{i=1}^n\ell'(y_i\hat y_i^{(k)}),    
\end{align}
where $C_2$ is an absolute constant. Therefore, it follows that
\begin{align*}
|\Delta_i^{(k)}| = |y_i\vb^\top(\xb_{L,i}^{(k+1)} - \xb_{L,i}^{(k)})|\le \|\vb\|_2\|\xb_{L,i}^{(k+1)} - \xb_{L,i}^{(k)}\|_2\le -\frac{C_2L^4M\eta}{n}\sum_{i=1}^n\ell'(y_i\hat y_i^{(k)}),
\end{align*}
where we use the fact that $\|\vb\|_2 \le M^{1/2}$.

In what follows we are going to prove the lower bound of $\Delta_i^{(k)}$. Based on the definition of $\Delta_i^{(k)}$, we have
\begin{align}\label{eq:lowerbound_delta}
\Delta_i^{(k)} &= y_i\vb^\top\bigg( \prod_{l=1}^L \bSigma_{l,i}^{(k+1)}\Wb_{l}^{(k+1)\top}\bigg)\xb_i -y_i\vb^\top\bigg( \prod_{l=1}^L \bSigma_{l,i}^{(k)}\Wb_{l}^{(k)\top}\bigg)\xb_i\notag\\
&=y_i\vb^\top \sum_{l=1}^L \bigg(\prod_{r=l+1}^L\bSigma_{r,i}^{(k)}\Wb_{r}^{(k)\top}\bigg)\big(\bSigma_{l,i}^{(k+1)}\Wb_l^{k+1} - \bSigma_{l,i}^{(k)}\Wb_{l}^{(k)\top}\big)\xb_{l-1,i}^{(k+1)}\notag\\
% &\qquad+y_i\vb^\top \sum_{l=1}^L \bigg(\prod_{r=l+1}^L\bSigma_{r,i}^{(k)}\Wb_{r}^{(k)\top}\bigg) \bSigma_{l,i}^{(k)}(\Wb_{l}^{(k+1)\top}-\Wb_{l}^{(k)\top})\bigg(\prod_{r=1}^{l-1}\bSigma_{r,i}^{(k+1)}\Wb_r^{(k+1)\top}\bigg)\xb_i\notag\\
&=y_i \sum_{l=1}^L \underbrace{\vb^\top\bigg(\prod_{r=l+1}^L\bSigma_{r,i}^{(k)}\Wb_{r}^{(k)\top}\bigg)(\bSigma_{l,i}^{(k+1)} - \bSigma_{l,i}^{(k)})\Wb_{l}^{(k+1)\top}\xb_{l-1,i}^{(k+1)}}_{I_{l,i}^1}\notag\\
&\qquad+y_i \sum_{l=1}^L \underbrace{\vb^\top\bigg(\prod_{r=l+1}^L\bSigma_{r,i}^{(k)}\Wb_{r}^{(k)\top}\bigg) \bSigma_{l,i}^{(k)}(\Wb_{l}^{(k+1)\top}-\Wb_{l}^{(k)\top})\xb_{l-1,i}^{(k+1)}}_{I_{l,i}^2}.
\end{align}
We first tackle $I_{l,i}^1$. Note that
% \begin{align*}
% (\bSigma_{l,i}^{(k+1)} - \bSigma_{l,i}^{(k)})\Wb_{l}^{(k+1)\top}\xb_{l-1,i}^{(k+1)}=  \tilde \bSigma_{l,i}^{(k+1)}(\bSigma_{l,i}^{(k+1)} - \bSigma_{l,i}^{(k)})\Wb_{l}^{(k+1)\top}\xb_{l-1,i}^{(k+1)},
% \end{align*}
% where $\tilde \bSigma_{l,i}^{k+1}$ denotes a diagonal matrix whose entries are the absolute value of the corresponding entries in $(\bSigma_{l,i}^{(k+1)} - \bSigma_{l,i}^{(k)})$.
% Moreover, note that the nonzero entries in $(\bSigma_{l,i}^{(k+1)} - \bSigma_{l,i}^{(k)})$ represent the nodes in the $l$-th layer whose signs are changed in the $k$-th iteration. 
% %Without loss of generality, we assume the $j$-th node whose sign is changed in the $k$-th iteration, thus 
% If the sign of the $j$-th node is changed in the $k$-th iteration, then the corresponding output %of the $j$-th node after iteration 
% must be smaller than $|\big(\Wb_{l}^{(k+1)\top}\xb_{l,i}^{(k+1)}-\Wb_{l}^{(k)\top}\xb_{l,i}^{(k)}\big)_j|$ before passing through the ReLU activation. Therefore 
\begin{align}\label{eq:zou111}
\big\|(\bSigma_{l,i}^{(k+1)} - \bSigma_{l,i}^{(k)})\Wb_{l}^{(k+1)\top}\xb_{l-1,i}^{(k+1)}\big\|_2&\le \|\Wb_{l}^{(k+1)\top}\xb_{l-1,i}^{(k+1)}-\Wb_{l}^{(k)\top}\xb_{l-1,i}^{(k)}\|_2\notag\\
&\le \|\Wb_l^{(k+1)\top} - \Wb_l^{(k)\top}\|_2\|\xb_{l-1,i}^{(k+1)}\|_2 +\|\Wb_{l}^{(k)}\|_2\|\xb_{l-1,i}^{(k+1)} - \xb_{l-1,i}^{(k)}\|_2\notag\\
&= \eta\|\nabla_{\Wb_l}[L_S(\Wb^{(k)})]\|_2\|\xb_{l-1,i}^{(k+1)}\|_2 +\|\Wb_{l}^{(k)}\|_2\|\xb_{l-1,i}^{(k+1)} - \xb_{l-1,i}^{(k)}\|_2\notag\\
&\le -\frac{C_3L^4M^{1/2}\eta}{n}\sum_{i=1}^n \ell'(y_i\hat y_i^{(k)}),
\end{align}
where $C_3$ is an absolute constant and the last inequality follows from \ref{item:grad_upperbound} in Theorem \ref{thm:perturbation} and \eqref{eq:upper_iteratedifference_output_layer}.
Moreover, according to Lemma \ref{item:perturbe_lip_sparse}, for any  vector $\ab$ with $\|\ab\|_0 \le s$, the following holds, 
\begin{align*}
\bigg\|\vb^\top\bigg(\prod_{r=l+1}^L\bSigma_{r,i}^{(k)}\Wb_{r}^{(k)\top}\bigg)\ab\bigg\|_2\le C_4L^{5/3}\tau^{1/3}\cdot\sqrt{M\log(M)}\cdot\|\ab\|_2,
\end{align*}
where $C_4$ is an absolute constant. Then let $\ab = (\bSigma_{l,i}^{(k+1)} - \bSigma_{l,i}^{(k)})\Wb_{l}^{(k+1)\top}\xb_{l-1,i}^{(k+1)}$ and apply \eqref{eq:zou111}, we get the following bound of $| I_{l,i}^1|$
\begin{align}\label{eq:bound_I1}
| I_{l,1}^1|\le-\frac{ C_5L^{17/3}\tau^{1/3}\eta\cdot\sqrt{M^2\log(M)}}{n}\sum_{i=1}^n\ell'(y_i\hat y_i^{(k)}),
\end{align}
where $C_5 = C_3\cdot C_4$ is an absolute constant. Then we pay attention to $I_{l,i}^2$. Using $\Wb_l^{(k+1)} - \Wb_l^{(k)} = -\eta \nabla_{\Wb_l}[L_S(\Wb^{(k)})]$, we have
% $\xb_{l-1,i}^{(k+1)}$ by $\xb_{l-1,i}^{(k+1)} = \xb_{l-1,i}^{(k)}+(\xb_{l-1,i}^{(k+1)}-\xb_{l-1,i}^{(k)})$, and obtain
\begin{align}\label{eq:bound_I2}
I_{l,i}^2 &= 
\underbrace{-\eta\vb^\top\bigg(\prod_{r=l+1}^L\bSigma_{r,i}^{(k)}\Wb_{r}^{(k)\top}\bigg) \bSigma_{l,i}^{(k)}\big(\nabla_{\Wb_l}[L_S(\Wb^{(k)})]\big)^\top(\xb_{l-1,i}^{(k+1)}-\xb_{l-1,i}^{(k)})}_{I_{l,i}^3}, \notag\\
&\qquad-  \underbrace{\eta\vb^\top \bigg(\prod_{r=l+1}^L\bSigma_{r,i}^{(k)}\Wb_{r}^{(k)\top}\bigg) \bSigma_{l,i}^{(k)}\big(\nabla_{\Wb_l}[L_S(\Wb^{(k)})]\big)^\top\xb_{l-1,i}^{(k)}}_{I_{l,i}^4}。 
\end{align}
% where we use the fact that $\Wb_l^{(k+1)} - \Wb_l^{(k)} = -\eta\nabla_{\Wb_l}L_S(\Wb^{(k)})$.
%In what follows, we first pay attention to 
We now proceed to bound $\| I_{l,i}^3 \|_2$. Based on Lemma \ref{lemma:upper_grad} and \eqref{eq:upper_iteratedifference_output_layer}, we have
\begin{align}\label{eq:bound_I3}
|I_{l,i}^3|&\le \eta\|\vb\|_2\bigg\|\prod_{r=l+1}^L\bSigma_{r,i}^{(k)}\Wb_{r}^{(k)\top}\bigg\|_2\|\nabla_{\Wb_l}[L_S(\Wb^{(k)})]\|_2\|\xb_{l-1,i}^{(k+1)}-\xb_{l-1,i}^{(k+1)}\|_2\notag\\
&\le \frac{C_8L^7M^{3/2}\eta^2}{n^2}\bigg(\sum_{i=1}^n\ell'(y_i\hat y_i^{(k)})\bigg)^2,
% &\le \frac{2^{9L}M\eta^2}{7n^2}\bigg(\sum_{i=1}^n\ell'(y_i\hat y_i^{(k)})\bigg)^2\notag\\
% &\le -\frac{2^{9L}M \rho\eta^2}{7n}\sum_{i=1}^n\ell'(y_i\hat y_i^{(k)}),
\end{align}
where $C_6$ is an absolute constant.
Moreover, note that $\|\Wb_l^{(k)} - \Wb_l^{(0)}\|_2\le \tau$ holds for all $l$, by \ref{item:difference_xli} in Theorem \ref{thm:perturbation}, we have
\begin{align*}
|\hat y_i^{(k)}| \le \|\vb\|_2\|\xb_{L,i}^{(k)}-\xb_{L,i}^{(0)}\|_2+|\hat y_i^{(0)}|\le C_7\big(L^2M^{1/2}\tau + \sqrt{\log(n/\delta)}\big),
\end{align*}
where the last inequality follows from \ref{ThmResult:randinit_outputbound} in Theorem \ref{thm:randinit}, and $C_7$ is an absolute constant.
Therefore, by Assumption \ref{assump:smooth}, we have
\begin{align*}
|\ell'(y_i\hat y_i^{(k)})|\le|\ell'(0)|+\lambda |\hat y_i^{(k)}| \le C_7\lambda\big(L^2M^{1/2}\tau+\sqrt{\log(n/\delta)}\big) + |\ell'(0)|.
\end{align*}
Assume $M^{1/2}\ge \sqrt{\log{n/\delta}}$ and $\tau \le 1$, plugging the above inequality into \eqref{eq:bound_I3}, we obtain
\begin{align}\label{eq:bound_I3_2}
|I_{l,i}^3|\le -\frac{C_{8}L^9M^2\eta^2}{n}\sum_{i=1}^n\ell'(y_i\hat y_i^{(k)}),
\end{align}
where $C_{10}$ is an absolute constant.
%We keep the term $I_{l,i}^4$ in the final result since it is important to make the loss function decrease in each iteration of gradient descent. Therefore, 
Finally, plugging \eqref{eq:bound_I1}, \eqref{eq:bound_I2} and \eqref{eq:bound_I3_2} into \eqref{eq:lowerbound_delta}, we have
\begin{align*}
\Delta_i^{(k)} &= y_i\vb^\top\sum_{i=1}^L(I_{l,i}^1+I_{l,i}^3+I_{l,i}^4)\\
&\ge \sum_{i=1}^L \big(-|\vb^\top I_{l,i}^1|-\|\vb\|_2\|I_{l,3}\|_2+y_i\vb^\top I_{l,i}^4\big)\\
&\ge \frac{C_7L^{17/3}\tau^{1/3}M\eta\cdot\sqrt{\log(M)}+C_8L^9M^2\eta^2}{n}\sum_{i=1}^n\ell'(y_i\hat y_i^{(k)}) + y_i I_{l,i}^4.
% &\ge -\frac{2^{9L+3}LC_0\tau^{1/3}M\sqrt{\log(M)}\eta}{n}\sum_{i=1}^n\ell'(y_i
% \hat y_i^{(k)})+\frac{2^{9L}LM^{3/2} \rho\eta^2}{7n}\sum_{i=1}^n\ell'(y_i\hat y_i^{(k)})+\sum_{i=1}^Ly_i\vb^\top I_{l,i}^4\\
% &\ge \bigg(\frac{2^{9L+3}LC_0\tau^{1/3}M\sqrt{\log(M)}\eta}{n}+\frac{2^{9L}LM^{3/2} \rho\eta^2}{7n}\bigg)\sum_{i=1}^n\ell'(y_i\hat{ y}_i^{(k)})+\sum_{i=1}^Ly_i\vb^\top I_{l,i}^4.
\end{align*}
Let $\ub_{l,i} = I_{l,i}^4$ we complete the proof.
\end{proof}

\subsection{Proof of Lemma \ref{lemma:bound_delta_i_sgd}}
\begin{proof}[Proof of Lemma \ref{lemma:bound_delta_i_sgd}]
This lemma can be proved by following the same technique for proving Lemma \ref{lemma:bound_delta_i}, while we only need to replace the upper bound of $\|\nabla_{\Wb_l}[L_S(\Wb^{(k)})]\|_2$ with the stochastic gradient $\|\Gb_l^{(k)}\|$ based on \ref{item:grad_upperbound} in Theorem \ref{thm:perturbation}. Since the proof technique of this lemma is essentially identical to that of Lemma \ref{lemma:bound_delta_i}, we omit the detail here.

\end{proof}
% \subsection{Proof of Lemma \ref{lemma:bound_delta_i_sgd}}
% \begin{proof}[Proof of Lemma \ref{lemma:bound_delta_i_sgd}]
% Similar to the proof of Lemma \ref{lemma:boun d_difference_xl}, it is easy to derive the following upper bound of $\|\xb_{l,i}^{(k+1)} - \xb_{l,i}^{(k)}\|_2$ when performing stochastic gradient,
% \begin{align}\label{eq:bound:xl_sgd}
% \|\xb_{l,i}^{(k+1)} - \xb_{l,i}^{(k)}\|_2\le -\frac{2^{5L}M^{1/2}\eta}{7B}\sum_{i\in\cB^{(k)}}\ell'(y_i\hat y_i^{(k)}),    
% \end{align}
% where $B$ denotes the minibatch size and $\cB^{(k)}$ denotes the set of queried indices for computing the stochastic gradient in the $k$-th iteration.
% Then we have the following upper bound of $|\Delta_{i}^{(k)}|$,
% \begin{align*}
% |\tilde \Delta_{i}^{(k)}| = \big\vert y_i\vb^\top (\xb_{L,i}^{(k+1)} - \xb_{L,i}^{(k)})\big\vert \le -\frac{2^{5L}M\eta }{7B}\sum_{i\in\cB^{(k)}}\ell'(y_i\hat y_i^{(k)}).
% \end{align*}
% For the lower bound of $\tilde \Delta_i^{(k)}$, we can prove the results in this lemma by following the same way as the proof of Lemma \ref{lemma:boun d_difference_xl}, while the only difference is that we need to use \eqref{eq:bound:xl_sgd} to replace the previous upper bound of $\|\xb_{l,i}^{(k+1)} - \xb_{l,i}^{(k)}\|_2$. Since the rest proofs are essentially identical, we omit the detail here.  
% \end{proof}
\subsection{Proof of Lemma \ref{lemma:support2}}
\begin{proof}[Proof of Lemma \ref{lemma:support2}]For random variables $u_1,\dots,u_n$, we have
\begin{align*}
    \EE \bigg[\bigg( \frac{1}{B}\sum_{i \in \cB} u_i \bigg)^2\bigg] &= \frac{1}{B^2} \EE\bigg[ \sum_{ i\neq i',\{i,i'\} \in \cB} u_iu_{i'}\bigg] + \frac{1}{B}\EE\big[u_i^2\big]\notag\\
    &= \frac{B-1}{Bn(n-1)} \EE\bigg[\sum_{i\neq i'} u_iu_{i'}\bigg] + \frac{1}{B}\EE\big[u_i^2\big]\notag\\
    &= \frac{B-1}{Bn(n-1)} \EE\bigg[\bigg(\sum_{i=1}^n u_i\bigg)^2\bigg] -\frac{B-1}{B(n-1)}\EE\big[u_i^2\big] + \frac{1}{B}\EE\big[u_i^2\big]\notag\\
    &= \frac{n-B}{B(n-1)}\EE\big[u_i^2\big]\\
    &\le \frac{1}{B}\EE\big[u_i^2\big],
\end{align*}
where the last equality is due to the fact that $\frac{1}{n}\sum_{i=1}^n u_i  = 0$. 
\end{proof}

\subsection{Proof of Lemma \ref{lemma:azuma}}

\begin{proof}[Proof of Lemma \ref{lemma:azuma}]
We prove this Lemma by two cases: $p\neq 1/2$ and $p=1/2$. 

\textbf{Case $p\neq 1/2$:} By \eqref{eq:loss_decrease_onestep_sgd}, we have
\begin{align*}
L_S(\Wb^{(k+1)}) - L_S(\Wb^{(k)})\le \frac{1}{n}\sum_{i=1}^n\bigg[\ell'(y_i\hat y_i^{(k)})\tilde \Delta_i^{(k)}+\frac{\lambda}{2}(\tilde \Delta_i^{(k)})^2\bigg].
\end{align*}
Using the upper bound of $\tilde \Delta_i^{(k)}$ provided in Lemma \ref{lemma:bound_delta_i_sgd}, we have
\begin{align}\label{eq:upperbound_loss_diff}
L_S(\Wb^{(k+1)}) - L_S(\Wb^{(k)})\le \frac{ C_1L^4m\eta}{nB}\bigg(\sum_{i=1}^n\ell'(y_i\hat y_i^{(k)})\bigg)^2\le \frac{C_2nL^4m\eta}{B} L_S^{2p}(\Wb^{(k)}),
\end{align}
where $ C_1$ and $C_2$ are absolute constant and the last inequality follows from the fact that $1/n\sum_{i=1}^nz_i^p\le(1/n\sum_{i=1}^nz_i)^p$ for any $z_1,\dots,z_n\ge0$.
Dividing by $L_S^{2p}(\Wb^{(k)})$ on both sides and applying Lemma \ref{lemma:support1}, 
\begin{align*}
\frac{L_S^{1-2p}(\Wb^{(k+1)})}{1-2p} - \frac{L_S^{1-2p}(\Wb^{(k)})}{1-2p}\le  \frac{C_2nL^4m\eta}{B}.  
\end{align*}
Then by Azuma's inequality, we have with probability at least $1-\delta$
\begin{align*}
\frac{L_S^{1-2p}(\Wb^{(k)})}{1-2p}\le \frac{\EE\big[L_S^{1-2p}(\Wb^{(k)})\big]}{1-2p}+\sqrt{2k\log(1/\delta)}\frac{C_2nL^4M \eta}{B}.
\end{align*}

\textbf{Case $p=1/2$:} 
Similar to \eqref{eq:upperbound_loss_diff}, we can derive the following upper bound on the difference $L_S(\Wb^{(k+1)}) - L_S(\Wb^{(k)})$,
\begin{align*}
L_S(\Wb^{(k+1)}) - L_S(\Wb^{(k)})\le \frac{C_2nL^4m\eta }{B}L_S(\Wb^{(k)}) ,
\end{align*}
which leads to following by using inequality $\log(x)\le x-1$,
\begin{align*}
\log(L_S(\Wb^{(k+1)})) - \log(L_S(\Wb^{(k)}))\le \log\bigg(1 + \frac{C_2nL^4m\eta}{B}\bigg)\le \frac{C_2nL^4m\eta}{B}.
\end{align*}
Then by Azuma's inequality, we have with probability at least $1-\delta$
\begin{align*}
\log(L_S(\Wb^{(k)})) \le  \EE\big[\log(L_S(\Wb^{(k)}))\big]+\sqrt{2k\log(1/\delta)}\frac{C_2nL^4m\eta}{B}.
\end{align*}
This completes the proof.
\end{proof}

\bibliography{relu}

\begin{thebibliography}{48}
\expandafter\ifx\csname natexlab\endcsname\relax\def\natexlab#1{#1}\fi
\expandafter\ifx\csname url\endcsname\relax
  \def\url#1{\texttt{#1}}\fi
\expandafter\ifx\csname urlprefix\endcsname\relax\def\urlprefix{URL }\fi

\bibitem[{Allen-Zhu et~al.(2018{\natexlab{a}})Allen-Zhu, Li and
  Liang}]{allen2018generalization}
\textsc{Allen-Zhu, Z.}, \textsc{Li, Y.} and \textsc{Liang, Y.}
  (2018{\natexlab{a}}).
\newblock Learning and generalization in overparameterized neural networks,
  going beyond two layers.
\newblock \textit{arXiv preprint arXiv:1811.04918} .

\bibitem[{Allen-Zhu et~al.(2018{\natexlab{b}})Allen-Zhu, Li and
  Song}]{allen2018convergence}
\textsc{Allen-Zhu, Z.}, \textsc{Li, Y.} and \textsc{Song, Z.}
  (2018{\natexlab{b}}).
\newblock A convergence theory for deep learning via over-parameterization.
\newblock \textit{arXiv preprint arXiv:1811.03962} .

\bibitem[{Allen-Zhu et~al.(2018{\natexlab{c}})Allen-Zhu, Li and
  Song}]{allen2018rnn}
\textsc{Allen-Zhu, Z.}, \textsc{Li, Y.} and \textsc{Song, Z.}
  (2018{\natexlab{c}}).
\newblock On the convergence rate of training recurrent neural networks.
\newblock \textit{arXiv preprint arXiv:1810.12065} .

\bibitem[{Anthony and Bartlett(2009)}]{anthony2009neural}
\textsc{Anthony, M.} and \textsc{Bartlett, P.~L.} (2009).
\newblock \textit{Neural network learning: Theoretical foundations}.
\newblock cambridge university press.

\bibitem[{Arora et~al.(2018{\natexlab{a}})Arora, Cohen, Golowich and
  Hu}]{arora2018convergence}
\textsc{Arora, S.}, \textsc{Cohen, N.}, \textsc{Golowich, N.} and \textsc{Hu,
  W.} (2018{\natexlab{a}}).
\newblock A convergence analysis of gradient descent for deep linear neural
  networks.
\newblock \textit{arXiv preprint arXiv:1810.02281} .

\bibitem[{Arora et~al.(2018{\natexlab{b}})Arora, Cohen and
  Hazan}]{arora2018optimization}
\textsc{Arora, S.}, \textsc{Cohen, N.} and \textsc{Hazan, E.}
  (2018{\natexlab{b}}).
\newblock On the optimization of deep networks: Implicit acceleration by
  overparameterization.
\newblock \textit{arXiv preprint arXiv:1802.06509} .

\bibitem[{Arora et~al.(2018{\natexlab{c}})Arora, Ge, Neyshabur and
  Zhang}]{arora2018stronger}
\textsc{Arora, S.}, \textsc{Ge, R.}, \textsc{Neyshabur, B.} and \textsc{Zhang,
  Y.} (2018{\natexlab{c}}).
\newblock Stronger generalization bounds for deep nets via a compression
  approach.
\newblock \textit{arXiv preprint arXiv:1802.05296} .

\bibitem[{Bartlett et~al.(2018)Bartlett, Helmbold and
  Long}]{bartlett2018gradient}
\textsc{Bartlett, P.}, \textsc{Helmbold, D.} and \textsc{Long, P.} (2018).
\newblock Gradient descent with identity initialization efficiently learns
  positive definite linear transformations.
\newblock In \textit{International Conference on Machine Learning}.

\bibitem[{Bartlett et~al.(2017)Bartlett, Foster and
  Telgarsky}]{bartlett2017spectrally}
\textsc{Bartlett, P.~L.}, \textsc{Foster, D.~J.} and \textsc{Telgarsky, M.~J.}
  (2017).
\newblock Spectrally-normalized margin bounds for neural networks.
\newblock In \textit{Advances in Neural Information Processing Systems}.

\bibitem[{Blum and Rivest(1989)}]{blum1989training}
\textsc{Blum, A.} and \textsc{Rivest, R.~L.} (1989).
\newblock Training a 3-node neural network is np-complete.
\newblock In \textit{Advances in neural information processing systems}.

\bibitem[{Brutzkus and Globerson(2017)}]{brutzkus2017globally}
\textsc{Brutzkus, A.} and \textsc{Globerson, A.} (2017).
\newblock Globally optimal gradient descent for a convnet with gaussian inputs.
\newblock \textit{arXiv preprint arXiv:1702.07966} .

\bibitem[{Brutzkus et~al.(2017)Brutzkus, Globerson, Malach and
  Shalev-Shwartz}]{brutzkus2017sgd}
\textsc{Brutzkus, A.}, \textsc{Globerson, A.}, \textsc{Malach, E.} and
  \textsc{Shalev-Shwartz, S.} (2017).
\newblock {SGD} learns over-parameterized networks that provably generalize on
  linearly separable data.
\newblock \textit{arXiv preprint arXiv:1710.10174} .

\bibitem[{Choromanska et~al.(2015)Choromanska, Henaff, Mathieu, Arous and
  LeCun}]{choromanska2015loss}
\textsc{Choromanska, A.}, \textsc{Henaff, M.}, \textsc{Mathieu, M.},
  \textsc{Arous, G.~B.} and \textsc{LeCun, Y.} (2015).
\newblock The loss surfaces of multilayer networks.
\newblock In \textit{Artificial Intelligence and Statistics}.

\bibitem[{Du and Lee(2018)}]{du2018power}
\textsc{Du, S.~S.} and \textsc{Lee, J.~D.} (2018).
\newblock On the power of over-parametrization in neural networks with
  quadratic activation.
\newblock \textit{arXiv preprint arXiv:1803.01206} .

\bibitem[{Du et~al.(2018{\natexlab{a}})Du, Lee, Li, Wang and
  Zhai}]{du2018gradientdeep}
\textsc{Du, S.~S.}, \textsc{Lee, J.~D.}, \textsc{Li, H.}, \textsc{Wang, L.} and
  \textsc{Zhai, X.} (2018{\natexlab{a}}).
\newblock Gradient descent finds global minima of deep neural networks.
\newblock \textit{arXiv preprint arXiv:1811.03804} .

\bibitem[{Du et~al.(2017{\natexlab{a}})Du, Lee and Tian}]{du2017convolutional}
\textsc{Du, S.~S.}, \textsc{Lee, J.~D.} and \textsc{Tian, Y.}
  (2017{\natexlab{a}}).
\newblock When is a convolutional filter easy to learn?
\newblock \textit{arXiv preprint arXiv:1709.06129} .

\bibitem[{Du et~al.(2017{\natexlab{b}})Du, Lee, Tian, Poczos and
  Singh}]{du2017gradient}
\textsc{Du, S.~S.}, \textsc{Lee, J.~D.}, \textsc{Tian, Y.}, \textsc{Poczos, B.}
  and \textsc{Singh, A.} (2017{\natexlab{b}}).
\newblock Gradient descent learns one-hidden-layer cnn: Don't be afraid of
  spurious local minima.
\newblock \textit{arXiv preprint arXiv:1712.00779} .

\bibitem[{Du et~al.(2018{\natexlab{b}})Du, Zhai, Poczos and
  Singh}]{du2018gradient}
\textsc{Du, S.~S.}, \textsc{Zhai, X.}, \textsc{Poczos, B.} and \textsc{Singh,
  A.} (2018{\natexlab{b}}).
\newblock Gradient descent provably optimizes over-parameterized neural
  networks.
\newblock \textit{arXiv preprint arXiv:1810.02054} .

\bibitem[{Golowich et~al.(2017)Golowich, Rakhlin and Shamir}]{golowich2017size}
\textsc{Golowich, N.}, \textsc{Rakhlin, A.} and \textsc{Shamir, O.} (2017).
\newblock Size-independent sample complexity of neural networks.
\newblock \textit{arXiv preprint arXiv:1712.06541} .

\bibitem[{Hanin(2017)}]{hanin2017universal}
\textsc{Hanin, B.} (2017).
\newblock Universal function approximation by deep neural nets with bounded
  width and {ReLU} activations.
\newblock \textit{arXiv preprint arXiv:1708.02691} .

\bibitem[{Hanin and Sellke(2017)}]{hanin2017approximating}
\textsc{Hanin, B.} and \textsc{Sellke, M.} (2017).
\newblock Approximating continuous functions by {ReLU} nets of minimal width.
\newblock \textit{arXiv preprint arXiv:1710.11278} .

\bibitem[{Hardt and Ma(2016)}]{hardt2016identity}
\textsc{Hardt, M.} and \textsc{Ma, T.} (2016).
\newblock Identity matters in deep learning.
\newblock \textit{arXiv preprint arXiv:1611.04231} .

\bibitem[{Hinton et~al.(2012)Hinton, Deng, Yu, Dahl, Mohamed, Jaitly, Senior,
  Vanhoucke, Nguyen, Sainath et~al.}]{hinton2012deep}
\textsc{Hinton, G.}, \textsc{Deng, L.}, \textsc{Yu, D.}, \textsc{Dahl, G.~E.},
  \textsc{Mohamed, A.-r.}, \textsc{Jaitly, N.}, \textsc{Senior, A.},
  \textsc{Vanhoucke, V.}, \textsc{Nguyen, P.}, \textsc{Sainath, T.~N.}
  \textsc{et~al.} (2012).
\newblock Deep neural networks for acoustic modeling in speech recognition: The
  shared views of four research groups.
\newblock \textit{IEEE Signal Processing Magazine} \textbf{29} 82--97.

\bibitem[{Ji and Telgarsky(2018)}]{ji2018gradient}
\textsc{Ji, Z.} and \textsc{Telgarsky, M.} (2018).
\newblock Gradient descent aligns the layers of deep linear networks.
\newblock \textit{arXiv preprint arXiv:1810.02032} .

\bibitem[{Kawaguchi(2016)}]{kawaguchi2016deep}
\textsc{Kawaguchi, K.} (2016).
\newblock Deep learning without poor local minima.
\newblock In \textit{Advances in Neural Information Processing Systems}.

\bibitem[{Krizhevsky et~al.(2012)Krizhevsky, Sutskever and
  Hinton}]{krizhevsky2012imagenet}
\textsc{Krizhevsky, A.}, \textsc{Sutskever, I.} and \textsc{Hinton, G.~E.}
  (2012).
\newblock Imagenet classification with deep convolutional neural networks.
\newblock In \textit{Advances in neural information processing systems}.

\bibitem[{Langford and Caruana(2002)}]{langford2002not}
\textsc{Langford, J.} and \textsc{Caruana, R.} (2002).
\newblock (not) bounding the true error.
\newblock In \textit{Advances in Neural Information Processing Systems}.

\bibitem[{Li and Liang(2018)}]{li2018learning}
\textsc{Li, Y.} and \textsc{Liang, Y.} (2018).
\newblock Learning overparameterized neural networks via stochastic gradient
  descent on structured data.
\newblock \textit{arXiv preprint arXiv:1808.01204} .

\bibitem[{Li and Yuan(2017)}]{li2017convergence}
\textsc{Li, Y.} and \textsc{Yuan, Y.} (2017).
\newblock Convergence analysis of two-layer neural networks with {ReLU}
  activation.
\newblock \textit{arXiv preprint arXiv:1705.09886} .

\bibitem[{Liang and Srikant(2016)}]{liang2016deep}
\textsc{Liang, S.} and \textsc{Srikant, R.} (2016).
\newblock Why deep neural networks for function approximation?
\newblock \textit{arXiv preprint arXiv:1610.04161} .

\bibitem[{Lu et~al.(2017)Lu, Pu, Wang, Hu and Wang}]{lu2017expressive}
\textsc{Lu, Z.}, \textsc{Pu, H.}, \textsc{Wang, F.}, \textsc{Hu, Z.} and
  \textsc{Wang, L.} (2017).
\newblock The expressive power of neural networks: A view from the width.
\newblock \textit{arXiv preprint arXiv:1709.02540} .

\bibitem[{Nacson et~al.(2018)Nacson, Srebro and Soudry}]{nacson2018stochastic}
\textsc{Nacson, M.~S.}, \textsc{Srebro, N.} and \textsc{Soudry, D.} (2018).
\newblock Stochastic gradient descent on separable data: Exact convergence with
  a fixed learning rate.
\newblock \textit{arXiv preprint arXiv:1806.01796} .

\bibitem[{Neyshabur et~al.(2017{\natexlab{a}})Neyshabur, Bhojanapalli,
  McAllester and Srebro}]{neyshabur2017exploring}
\textsc{Neyshabur, B.}, \textsc{Bhojanapalli, S.}, \textsc{McAllester, D.} and
  \textsc{Srebro, N.} (2017{\natexlab{a}}).
\newblock Exploring generalization in deep learning.
\newblock In \textit{Advances in Neural Information Processing Systems}.

\bibitem[{Neyshabur et~al.(2017{\natexlab{b}})Neyshabur, Bhojanapalli,
  McAllester and Srebro}]{neyshabur2017pac}
\textsc{Neyshabur, B.}, \textsc{Bhojanapalli, S.}, \textsc{McAllester, D.} and
  \textsc{Srebro, N.} (2017{\natexlab{b}}).
\newblock A pac-bayesian approach to spectrally-normalized margin bounds for
  neural networks.
\newblock \textit{arXiv preprint arXiv:1707.09564} .

\bibitem[{Neyshabur et~al.(2015)Neyshabur, Tomioka and
  Srebro}]{neyshabur2015norm}
\textsc{Neyshabur, B.}, \textsc{Tomioka, R.} and \textsc{Srebro, N.} (2015).
\newblock Norm-based capacity control in neural networks.
\newblock In \textit{Conference on Learning Theory}.

\bibitem[{Polyak(1963)}]{polyak1963gradient}
\textsc{Polyak, B.~T.} (1963).
\newblock Gradient methods for minimizing functionals.
\newblock \textit{Zhurnal Vychislitel'noi Matematiki i Matematicheskoi Fiziki}
  \textbf{3} 643--653.

\bibitem[{Safran and Shamir(2017)}]{safran2017spurious}
\textsc{Safran, I.} and \textsc{Shamir, O.} (2017).
\newblock Spurious local minima are common in two-layer {ReLU} neural networks.
\newblock \textit{arXiv preprint arXiv:1712.08968} .

\bibitem[{Silver et~al.(2016)Silver, Huang, Maddison, Guez, Sifre, Van
  Den~Driessche, Schrittwieser, Antonoglou, Panneershelvam, Lanctot
  et~al.}]{silver2016mastering}
\textsc{Silver, D.}, \textsc{Huang, A.}, \textsc{Maddison, C.~J.},
  \textsc{Guez, A.}, \textsc{Sifre, L.}, \textsc{Van Den~Driessche, G.},
  \textsc{Schrittwieser, J.}, \textsc{Antonoglou, I.}, \textsc{Panneershelvam,
  V.}, \textsc{Lanctot, M.} \textsc{et~al.} (2016).
\newblock Mastering the game of go with deep neural networks and tree search.
\newblock \textit{Nature} \textbf{529} 484--489.

\bibitem[{Soudry et~al.(2017)Soudry, Hoffer and Srebro}]{soudry2017implicit}
\textsc{Soudry, D.}, \textsc{Hoffer, E.} and \textsc{Srebro, N.} (2017).
\newblock The implicit bias of gradient descent on separable data.
\newblock \textit{arXiv preprint arXiv:1710.10345} .

\bibitem[{Telgarsky(2015)}]{telgarsky2015representation}
\textsc{Telgarsky, M.} (2015).
\newblock Representation benefits of deep feedforward networks.
\newblock \textit{arXiv preprint arXiv:1509.08101} .

\bibitem[{Telgarsky(2016)}]{telgarsky2016benefits}
\textsc{Telgarsky, M.} (2016).
\newblock Benefits of depth in neural networks.
\newblock \textit{arXiv preprint arXiv:1602.04485} .

\bibitem[{Tian(2017)}]{tian2017analytical}
\textsc{Tian, Y.} (2017).
\newblock An analytical formula of population gradient for two-layered {ReLU}
  network and its applications in convergence and critical point analysis.
\newblock \textit{arXiv preprint arXiv:1703.00560} .

\bibitem[{Vapnik(2013)}]{vapnik2013nature}
\textsc{Vapnik, V.} (2013).
\newblock \textit{The nature of statistical learning theory}.
\newblock Springer science \& business media.

\bibitem[{Vershynin(2010)}]{vershynin2010introduction}
\textsc{Vershynin, R.} (2010).
\newblock Introduction to the non-asymptotic analysis of random matrices.
\newblock \textit{arXiv preprint arXiv:1011.3027} .

\bibitem[{Yarotsky(2017)}]{yarotsky2017error}
\textsc{Yarotsky, D.} (2017).
\newblock Error bounds for approximations with deep {ReLU} networks.
\newblock \textit{Neural Networks} \textbf{94} 103--114.

\bibitem[{Yarotsky(2018)}]{yarotsky2018optimal}
\textsc{Yarotsky, D.} (2018).
\newblock Optimal approximation of continuous functions by very deep {ReLU}
  networks.
\newblock \textit{arXiv preprint arXiv:1802.03620} .

\bibitem[{Zhang et~al.(2018)Zhang, Yu, Wang and Gu}]{zhang2018learning}
\textsc{Zhang, X.}, \textsc{Yu, Y.}, \textsc{Wang, L.} and \textsc{Gu, Q.}
  (2018).
\newblock Learning one-hidden-layer {ReLU} networks via gradient descent.
\newblock \textit{arXiv preprint arXiv:1806.07808} .

\bibitem[{Zhong et~al.(2017)Zhong, Song and Dhillon}]{zhong2017learning}
\textsc{Zhong, K.}, \textsc{Song, Z.} and \textsc{Dhillon, I.~S.} (2017).
\newblock Learning non-overlapping convolutional neural networks with multiple
  kernels.
\newblock \textit{arXiv preprint arXiv:1711.03440} .

\end{thebibliography}
\bibliographystyle{ims}

\end{document}